\documentclass[11pt]{article}
\usepackage{fullpage}
\usepackage[T1]{fontenc}
\usepackage[utf8]{inputenc} 

\usepackage{microtype}
\usepackage{graphicx}
\usepackage{placeins}    
\usepackage{dblfloatfix} 
\usepackage{booktabs}

\usepackage{amsmath,amssymb,mathtools,amsthm,dsfont}

\usepackage{algorithm}
\usepackage{algpseudocode}

\usepackage{subcaption}

\usepackage[numbers]{natbib}
\usepackage{url}

\usepackage{xcolor}
\usepackage[textsize=tiny]{todonotes}

\usepackage{authblk}

\usepackage[hidelinks]{hyperref}
\usepackage[nameinlink,capitalize,noabbrev]{cleveref}

\usepackage{tabularx}

\newcommand{\E}{\mathbb E}
\newcommand{\R}{\mathbb R}
\newcommand{\N}{\mathbb N}
\newcommand{\A}{\mathbb A}

\newcommand{\Var}{\operatorname{Var}}
\newcommand{\Cov}{\operatorname{Cov}}

\newcommand{\1}{\mathds{1}}

\theoremstyle{plain}
\newtheorem{theorem}{Theorem}[section]
\newtheorem{proposition}[theorem]{Proposition}
\newtheorem{lemma}[theorem]{Lemma}

\theoremstyle{definition}
\newtheorem{definition}[theorem]{Definition}
\newtheorem{assumption}[theorem]{Assumption}
\theoremstyle{remark}
\newtheorem{remark}[theorem]{Remark}
\renewcommand{\P}{\mathbb P}

\makeatletter
\newcommand*\bigcdot{\mathpalette\bigcdot@{.9}}
\newcommand*\bigcdot@[2]{\mathbin{\vcenter{\hbox{\scalebox{#2}{$\m@th#1\bullet$}}}}}
\makeatother

\title{An Approximate Ascent Approach To Prove Convergence of PPO}

\author[1]{Leif D\"{o}ring}
\author[1]{Daniel Schmidt}
\author[1]{Moritz Melcher}
\author[2]{Sebastian Kassing}
\author[1]{Benedikt Wille}
\author[1]{Tilman Aach}
\author[1]{Simon Weissmann}

\date{}

\affil[1]{\normalsize
  Institute of Mathematics,  
  University of Mannheim, 
    68138 Mannheim, Germany\\

    \texttt{\{leif.doering, daniel.schmidt, moritz.melcher, benedikt.wille, tilman.aach,  simon.weissmann\}@uni-mannheim.de}
}
\affil[2]{\normalsize
  Department of Mathematics \& Informatics,  
  University of Wuppertal,
    42119 Wuppertal, Germany\\
  \texttt{\{kassing\}@uni-wuppertal.de}
}

\begin{document}

\maketitle

\begin{abstract}
Proximal Policy Optimization (PPO) is among the most widely used deep reinforcement learning algorithms, yet its theoretical foundations remain incomplete. Most importantly, convergence and understanding of fundamental PPO advantages remain widely open. Under standard theory assumptions we show how PPO's policy update scheme (performing multiple epochs of minibatch updates on multi-use rollouts with a surrogate gradient) can be interpreted as approximated policy gradient ascent. We show how to control the bias accumulated by the surrogate gradients and use techniques from random reshuffling to prove a convergence theorem for PPO that sheds light on PPO's success. Additionally, we identify a previously overlooked issue in truncated Generalized Advantage Estimation commonly used in PPO. The geometric weighting scheme induces infinite mass collapse onto the longest $k$-step advantage estimator at episode boundaries. Empirical evaluations show that a simple weight correction can yield substantial improvements in environments with strong terminal signal, such as Lunar Lander.
\end{abstract}

\section{Introduction}
Reinforcement learning (RL) has emerged as a powerful paradigm for training autonomous agents to make sequential decisions by interacting with their environment~\citep{sutton2018reinforcement}. In recent years, policy gradient methods have become the foundation for many successful applications, ranging from game playing~\citep{silver2016mastering, berner2019dota} to robotics~\citep{levine2016end, openai2019solving} and large language model alignment~\citep{ouyang2022training, christiano2017deep}. 
At its core, policy gradient methods aim to optimize a policy $\pi_\theta$ by following the gradient of the parametrized expected total return $J(\theta)$. The policy gradient theorem~\citep{sutton1999policy, williams1992simple} provides an unbiased estimator of this gradient, which can be computed using samples from the current policy. 
Actor-critic methods leverage this idea by alternating between collecting rollouts, estimating a value function (critic), and updating the policy. A canonical example is A2C~\cite{A2C}, which performs a single update per batch before collecting fresh on-policy data.
Among these methods, Proximal Policy Optimization (PPO) \cite{PPO} has become one of the most widely adopted algorithms due to its simplicity, stability, and empirical performance. PPO was introduced as a practical, first-order approximation of Trust Region Policy Optimization (TRPO) \cite{TRPO}. TRPO proposes to improve a reference policy $\pi_{\theta_{\text{old}}}$ by maximizing a surrogate objective subject to a trust-region constraint that limits the change of the policy:
\begin{align*}
&\mathrm{maximize}_\theta \,\hat{\mathbb{E}}_t\big[r_t(\theta)\hat{\mathbb A}^{\pi_\text{old}}_t\big],
\\ \text{ subject to } \quad&\hat{\mathbb{E}}_t\big[\mathrm{KL}\!\left(\pi_{\theta_{\text{old}}}(\cdot \,;\, s_t)\,\|\,\pi_{\theta}(\cdot \,;\, s_t)\right)\big]\ \le\ \delta,
\end{align*}
where $r_t(\theta) = \frac{ \pi_\theta(a_t;s_t)}{\pi_{\theta_{\text{old}}}(a_t;s_t)}$ and $\hat{\A}^{\pi_\text{old}}_t$ is an advantage estimate. 
Solving the constrained problem, however, requires second-order information. 
PPO was introduced as an implementable relaxation of the trust-region principle. 
In the clipped variant from \citep{PPO}, the objective is
\begin{align*}
L(\theta) = \hat{\mathbb{E}}_t\big[\min\big(r_t(\theta)\hat{\mathbb A}_t, \text{clip}(r_t(\theta), 1-\epsilon, 1+\epsilon)\hat{\mathbb A}_t\big)\big],
\end{align*}
where $\hat{\mathbb A}_t$ is obtained from (truncated) Generalized Advantage Estimation (GAE) \cite{GAE}, computed under the old policy $\pi_{\theta_{\text{old}}}$. While the practical success of PPO speaks for itself, the theoretical understanding of PPO remains largely open and even the decisive advantages in practice are hard to identify \cite{engstrom2020implementation}. For instance, there seems to be no convergence result that takes into account the sample reuse with a transition buffer which is shuffled randomly in each epoch. 
The main reason for lack of theory might be that the connection to TRPO is rather heuristic and thus hard to use as a basis for theorems. We contribute to the fundamental questions:\\

\emph{What is a good theoretical grounding of PPO and what can be learned from theory for practical applications?}\\

Our paper changes perspective. We ignore the connection to TRPO and solely rethink PPO as policy gradient with well-organized sample reuse. PPO has a cyclic structure, with one A2C update step followed by a number of surrogate gradients steps, see Figure \ref{fig:pfeile}. The relation of A2C and first cycle steps of PPO was observed earlier in  \cite{A2CisPPO}. 
Blue arrows in the visualization represent A2C gradient steps, orange arrows additional PPO surrogate gradient steps which become less trustable as cycles progress.\\
 \begin{figure}[t]
 \vspace{-2mm}
    \centering
    \includegraphics[scale=0.22]{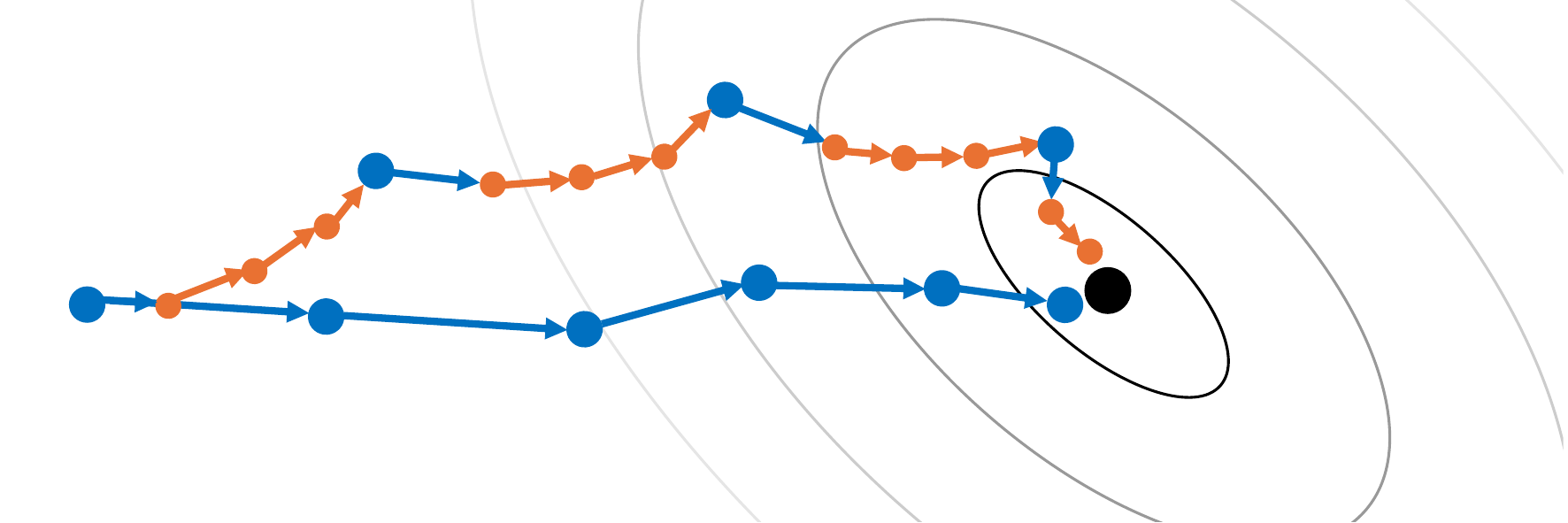}
   \vspace{-2mm}
\caption{Schematic view of PPO vs. A2C policy parameter updates. Blue arrows are policy gradient $\nabla_\theta J(\theta)$ steps, orange arrows cycles of increasingly biased surrogate gradient $g_\text{PPO}^\text{clip}(\theta,\theta_\text{old})$ steps. For stochastic approximations blue dots are resampling times.
} 
 \label{fig:pfeile}
 \end{figure}

\emph{Main contributions} to PPO theory and practice:
\vspace{-3mm}
\begin{itemize}\setlength{\itemsep}{0.2em}
    \setlength{\parskip}{0pt}
    \setlength{\parsep}{0pt}
    \item A formalization through gradient surrogates $g_\text{old}^\text{clip}$ of $\nabla_\theta J$   
    is provided (Section \ref{sec:4}) so that their stochastic approximations are close to practical PPO implementations, using most features of PPO's update mechanism (we skip KL-regularization and asymmetric clipping). 
    \item Bias estimates of the form $|\nabla_\theta J(\theta)-g_\text{ppo}^\text{clip}(\theta,\theta_\text{old})|\leq C|\theta-\theta_\text{old}|$ for exact gradients are derived in Theorem~\ref{Thm:1}. This shows how trustworthy the orange arrows in Figure \ref{fig:pfeile} can be.
    \item Convergence proofs are presented in Theorem~\ref{thm:convdeterministc} and \ref{Thm:3} that show the effect of  additional biased surrogate gradients on stochastic and deterministic policy gradient. We connect PPO to random reshuffling (RR) theory. The analysis shows that PPO’s cycle-based update structure implicitly  controls the effective step length through aggregation of clipped gradient estimates.
    \item Practical application: Our consequent finite-time modeling of PPO highlights a side-effect of PPO's truncating of GAE at finite horizons. We call the effect tail-mass collapse and suggest a simple fix. Experiments show  significant improvement on Lunar Lander.
\end{itemize}
\section{Policy Gradient Basics}
While many policy gradient results are stated in the infinite-horizon discounted setting, we directly work with the finite-horizon truncation, to stay close to actual PPO implementations. We assume finite state and action spaces $\mathcal S$ and $\mathcal A$ and a fixed initial distribution $\mu$. Value functions are denoted by $V^\pi_t(s)=\E^\pi[\sum_{i=t}^{T-1}\gamma^{i-t} R_i\,|\,S_t=s]$ and $Q^\pi_t(s,a)=\E^\pi[\sum_{i=t}^{T-1}\gamma^{i-t} R_i\,|\,S_t=s,A_t=a]$. Furthermore, we work with a differentiable parametrized policy class $\{\pi_\theta\}_{\theta \in \R^d}$ with so-called score function $\nabla_\theta \log \pi_\theta(a\, ;\,s)$. The optimization goal is to maximize the parametrized value function $J(\theta):=V^{\pi_\theta}_0(\mu)=\sum_{s\in \mathcal S} V^{\pi_\theta}_0(s)\mu(s)$. If rewards are assumed bounded, then $J_\ast := \sup_{\theta}\ J(\theta)$ exists. By the likelihood-ratio identity, the policy gradient admits the stochastic gradient representation $\nabla_\theta J(\theta)=\sum_{t=0}^{T-1}\gamma ^t\,  \E^{\pi_\theta}[\nabla_\theta \log\pi_\theta(A_t\,;\,S_t)\,  R^T_t]$ with rewards-to-go defined as $R_t^T\coloneq\sum_{i=t}^{T-1}\gamma^{i-t}R_i$. 
The resulting simple policy gradient estimator is well-known to be too noisy for practical optimization due to high variance.  To reduce variances, the commonly used policy gradient representation uses averaged rewards-to-go and subtracts a baseline. In the discounted finite-time setting this is $\nabla_\theta J(\theta)=\sum_{t=0}^{T-1} \gamma^t \, \E^{\pi_\theta}[\nabla_\theta \log\pi_\theta(A_t\,;\,S_t) \, \mathbb A_{t}^{\pi_\theta}(S_t,A_t)]$, where $\mathbb A^{\pi_\theta}_t=Q_t^{\pi_\theta}-V_t^{\pi_\theta}$ is called the advantage function. Algorithmically, the advantage policy gradient creates a structural difficulty. In order to improve the actor using gradient ascent, the current policy needs to be evaluated, i.e. $\mathbb A_t^{\pi_\theta}$ needs to be computed/estimated. The algorithmic solution is what is called actor-critic. Advantage actor-critic algorithms alternate between gradient steps to improve the policy and estimation steps to create estimates $\hat{\mathbb A}_t^{\pi_\theta}$. A2C \cite{A2C} implements actor-critic using neural networks for advantage modeling and GAE for advantage estimation.
\begin{remark}
    It is known that implementations of actor-critic algorithms usually ignore the discount factor $\gamma^t$, see for instance \cite{Thomas} for theoretical considerations and \cite{zhang2022deeperlookdiscountingmismatch} and \cite{che2023correctingdiscountfactormismatchonpolicy} for experiments. Since the factor is a mathematical necessity for the approximated infinite-horizon problems, we keep $\gamma^t$ and acknowledge that omitting $\gamma$ is not  harmful. This is in line with the formalism-implementation mismatch discussed in \cite{formalismimplementationgap}, who suggested to focus on structural understanding of RL algorithms than artifacts that improve benchmarks.
\end{remark}

\section{Related Work}
The present article continues a long line of theory papers proving convergence of policy gradient algorithms, but to the best of our knowledge there is little work on PPO style policy update mechanisms. The analysis of policy gradient is generally challenging due to the non-convex optimization landscape, one typically relies on $L$-smoothness of the objective, which holds under reasonably strong assumptions on the policy class. Some works concern convergence to stationary points \citep{Zhang, Papini}. Under strong policy assumptions, such as a tabular softmax parametrization, one can prove additional structure conditions including gradient domination, and deduce convergence to global optima and rates (e.g.  \citep{agarwal, mei2022global, samu}). For theoretical results concerning actor-critic algorithms we refer, for instance, to \cite{Kumar} and references therein. Most theory articles analyze convergence in the discounted infinite-time MDP setting. Since implementations force truncation for PPO, we decided to work in finite time. In finite time, optimal policies are not necessarily stationary, a policy gradient algorithm to find non-stationary policies was developed in \cite{sara1}. In the spirit of PPO the present article analyzes the search for optimal stationary policies.

In contrast to the vast literature on vanilla policy gradient convergence theory on PPO is more limited. Reasons are the clipping mechanism and, most importantly, surrogate bias and reuse of data.  \cite{liu} gave a convergence proof of a neural PPO variant, using infinite dimensional mirror descent. For two recent convergence results we refer to \cite{PPO2} and \cite{PPO1} noting that both do not allow for reuse of reshuffled data. \cite{PPO2} essentially proves that surrogate gradient steps do not harm the original policy gradient scheme, while \cite{PPO1} work in a specific policy setting (in particular probabilities bounded away from $0$ that allow to show gradient domination properties. We are not aware of results incorporating sample reuse. To incorporate multi-sample use our work is build on previous results on random reshuffling in the finite-sum setting relevant for supervised learning, e.g. \cite{Mishenko}. We also refer to \cite{RR} and references therein.

Finally, since we also contribute to the practical use of GAE estimators, let us mention some related work. While GAE was introduced for infinite-time MDPs, it was suggested in \cite{PPO} to be used for finite time by direct truncation at $T$. Truncation of GAE to subsets of trajectories was recently used in the context of LLMs \cite{bytedanceseed2025truncated} but also for classical environments \cite{Song}. 
To our knowledge, both our observation of tail-mass collapse and the reweighting of the collapsed mass are novel.\\

Typical assumptions in the mentioned theory articles are bounded rewards and bounded/Lipschitz score functions. We will work under these assumptions as they allow us to use ascent inequalities. Additionally we assume access to an well-behaved critic.
\begin{assumption}\label{Ass}
    \begin{itemize}\setlength{\itemsep}{0.2em}
    \setlength{\parskip}{0pt}
    \setlength{\parsep}{0pt}
        \item Bounded rewards: $|R_t|\leq R_\ast$.
        \item Bounded score function: $\forall \theta\in\R^d, s\in \mathcal S, a\in \mathcal A$:
        $$\big|\nabla_\theta \log \pi_\theta(a;s)\big|\leq \Pi_\ast$$
        \item Lipschitz score function: $\forall \theta\in\R^d, s\in \mathcal S, a\in \mathcal A$: $$\big|\nabla_\theta \log \pi_\theta(a;s)-\nabla_{\theta'} \log \pi_{\theta'}(a;s)|\leq L_s\,\big|\theta-\theta'|$$ 
        \item Access to advantage estimates $\hat{\mathbb A}_t$ that are bounded by $A_\ast$ with uniform estimation bias $\delta$.
    \end{itemize}
\end{assumption}

\section{Rethinking PPO}\label{sec:4}
In contrast to the origins of policy gradient schemes, PPO was not introduced as a rollout based stochastic gradient approximation, but rather as a direct algorithm with (a very successful) focus on implementation details. 
For our analysis, we introduce PPO differently by deriving surrogates $g_\text{PPO}^\text{clip}$ of the exact policy gradients $\nabla_\theta J$ for which PPO is the natural stochastic approximation. We first motivate why adding   $g_\text{PPO}^\text{clip}$ surrogate gradient steps to policy gradient is reasonable and then show that biased gradient ascent with cyclic use of $\nabla_\theta J$ and  $g_\text{PPO}^\text{clip}$ (see Figure \ref{fig:pfeile}) indeed has theoretical advantages (Section \ref{sec:deterministic}). Finally, we give a convergence result for PPO (Section \ref{sec:6}).

Here is a starting point. It is well-known that multi-use of data is a successful convergence speedup in supervised learning, in particular in combination with random reshuffling. In random reshuffling (RR), mini-batches of samples are reused for multiple SGD steps, with reshuffling between entire passes over the data (called  epochs). In online RL this is problematic because gradient steps depend on the current sampling policy. A cyclic variant is required, where the inner loop performs RR policy updates on data collected at the beginning of a loop. A principled way to decouple sampling and updating is importance sampling (IS):
\begin{align*}
         \nabla_\theta J(\theta)
        &=\sum_{t=0}^{T-1} \gamma^t \E^{\pi_{\theta_\text{old}}}\Big[\prod_{i=0}^t
        \frac{\pi_{\theta}(A_i\,;\,S_i))}{\pi_{\theta_\text{old}}(A_i\,;\,S_i))}
         \nabla_\theta \log\pi_\theta(A_t\,;\,S_t) \,  \mathbb A_{t}^{\pi_\theta}(S_t,A_t)\Big], 
    \end{align*}
where $\pi_{\theta_\text{old}}$ is a policy used for sampling rollouts. In principle one could try to implement data reuse as follows: sample rollouts at some parameter instance $\theta_\text{old}$, use mini-batches to estimate gradients and perform IS-corrected gradient steps for a few passes over the rollout data. Then denote the current parameter as $\theta_\text{old}$ and start the next cycle. While policy gradient (or A2C) performs a sampled gradient step with fresh rollout data at $\theta_\text{old}$, policy gradient with sample reuse performs one sampled policy gradient step at $\theta_\text{old}$ and additionally a cycle of IS-gradient steps using fixed rollouts.\\

Problems: (i) Importance sampling weights might pile up over $t$ and force huge variances when $\pi_\theta$ strides away from the sampling distribution $\pi_{\theta_\text{old}}$. (ii) Policy gradients involve $\mathbb A^{\pi_\theta}_t$ although data comes from $\pi_{\theta_\text{old}}$, which can not be estimated using GAE from $\pi_{\theta_\text{old}}$ rollouts. (iii) Importance ratios force rollout-based estimators while transition-based estimators have smaller variances (reduced time-correlations).\\

We now argue that PPO can be understood as an  implementable response to (i)-(iii). To address (i)-(iii) one
\begin{itemize}\setlength{\itemsep}{0.2em}
    \setlength{\parskip}{0pt}
    \setlength{\parsep}{0pt}
    \item drops all but one IS ratio and clip the remaining ratio,
    \item replaces $\mathbb{A}^\pi_t$ by $\mathbb{A}^{\pi_{\theta_\text{old}}}_t$ to allow GAE  from $\pi_{\theta_\text{old}}$ rollouts,
    \item interprets $\frac{1}{T}\sum_{t=0}^T$ as an expectation and estimate it by sampling uniformly from a transition buffer obtained by flattening the rollout buffer (requires the first step).
\end{itemize}
The first two lead to the approximation
  \begin{align*}
        &\quad\sum_{t=0}^{T-1} \gamma^t \E^{\pi_{\theta_{\text{old}}}}\Big[
        \frac{\pi_{\theta}(A_t\,;\,S_t)}{\pi_{\theta_{\text{old}}}(A_t\,;\,S_t)}\mathds{1}_{|\frac{\pi_\theta(A_t;S_t)}{\pi_\text{old}(A_t;S_t)}-1|\leq \epsilon}
        \times \nabla_\theta \log\pi_\theta(A_t\,;\,S_t) \, \mathbb A_{t}^{\pi_{\theta_{\text{old}}}}(S_t,A_t)\Big]\\
         &=\sum_{t=0}^{T-1} \gamma^t \E^{\pi_{\theta_\text{old}}}\Big[
        \frac{\nabla_\theta\pi_{\theta}(A_t\,;\,S_t)}{\pi_{\theta_{\text{old}}}(A_t\,;\,S_t)}
     \times \mathds{1}_{|\frac{\pi_\theta(A_t;S_t)}{\pi_\text{old}(A_t;S_t)}-1|\leq \epsilon}
        \mathbb A_{t}^{\pi_{\theta_\text{old}}}(S_t,A_t)\Big]\\
     &=:g^\text{clip}_{\text{PPO}}(\theta,{\theta_{\text{old}}})
    \end{align*}
of $\nabla_\theta J(\theta)$, where we used that $\pi_\theta(a;s)\nabla_\theta \log\pi_\theta(a;s)= \nabla_\theta \pi_\theta(a;s)$. This is exactly a formal expression for the expected PPO gradient surrogate. The uniform sampling view will lead to the true PPO sampler in Section \ref{sec:6}. Compared to PPO there are two minor changes.
\begin{remark}
    Discounting by $\gamma^{t}$ is ignored in PPO, not here. Our theory can be equally developed with $\gamma=1$. Next, we clip IS-ratios independently of the advantage, while PPO uses asymmetric clipping. We do not get into asymmetric clipping because the choice is very much problem dependent (see for instance \cite{tapered} in the LLM context).
\end{remark}
Let us emphasize that delayed advantages and  dropping/clipping IS ratios introduce bias. In order to not prevent convergence, the bias should not grow too fast during update cycles. While the fact is known, one main result of this paper provides bounds on the surrogate gradient bias:
\begin{theorem}[Surrogate gradient bias control]\label{Thm:1}
    Under Assumption \ref{Ass}, one has
    \begin{align*}
        \big|\nabla_\theta J(\theta)-g^\text{clip}_{\text{PPO}}(\theta,{\theta_{\text{old}}})\big|\leq R\,|\theta-\theta_\text{old}|,
    \end{align*}
    with a constant $R$ that is detailed in  Theorem \ref{Thm:1'}.
\end{theorem}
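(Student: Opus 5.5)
The plan is to insert intermediate quantities between $\nabla_\theta J(\theta)$ and $g^\text{clip}_{\text{PPO}}(\theta,\theta_\text{old})$, one for each approximation made in Section \ref{sec:4}, and to bound each difference linearly in $|\Delta|:=|\theta-\theta_\text{old}|$. Throughout I use the elementary estimate
\[
\bigl|\pi_\theta(a;s)-\pi_{\theta_\text{old}}(a;s)\bigr|\le \Pi_\ast e^{\Pi_\ast|\Delta|}\,\pi_{\theta_\text{old}}(a;s)\,|\Delta| .
\]
It follows from the mean value theorem applied to $\log\pi$, together with $\nabla_\theta\pi=\pi\nabla_\theta\log\pi$ and the bounded score. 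It also gives $e^{-\Pi_\ast|\Delta|}\le r\le e^{\Pi_\ast|\Delta|}$ for the ratio $r(a;s)=\pi_\theta(a;s)/\pi_{\theta_\text{old}}(a;s)$.

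I first dispose of large steps. Both $\nabla_\theta J(\theta)$ and $g^\text{clip}_{\text{PPO}}$ are bounded by a constant $K$. For the surrogate, the indicator forces $r\le 1+\epsilon$, so $K$ is of order $(1+\epsilon)T\Pi_\ast$ times a bound on the advantages, which are at most $2TR_\ast$. Hence for $|\Delta|\ge 1$ the claim holds with $R\ge 2K$. It remains to treat $|\Delta|\le 1$, where all exponential factors are bounded by $e^{\Pi_\ast}$.

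For $|\Delta|\le 1$ I write, for each $t$, the difference of the $t$-th summands as $T_1+T_2+T_3$:
\begin{align*}
T_1&=\E^{\pi_\theta}\bigl[\nabla\log\pi_\theta(A_t;S_t)\,(\mathbb A^{\pi_\theta}_t-\mathbb A^{\pi_{\theta_\text{old}}}_t)(S_t,A_t)\bigr],\\
T_2&=\E^{\pi_\theta}\bigl[f_t(S_t,A_t)\bigr]-\E^{\pi_{\theta_\text{old}}}\bigl[r(A_t;S_t)\,f_t(S_t,A_t)\bigr],\qquad f_t:=\nabla\log\pi_\theta\,\mathbb A^{\pi_{\theta_\text{old}}}_t,\\
T_3&=\E^{\pi_{\theta_\text{old}}}\bigl[r(A_t;S_t)\,\mathds 1_{|r-1|>\epsilon}\,f_t(S_t,A_t)\bigr].
\end{align*}

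\emph{Term $T_2$.} Conditioning on $S_t$, the single ratio turns the action law $\pi_{\theta_\text{old}}(\cdot;S_t)$ into $\pi_\theta(\cdot;S_t)$. So $T_2$ is the difference of the expectations of the bounded function $s\mapsto\sum_a\pi_\theta(a;s)f_t(s,a)$ under the state marginals $d^\theta_t$ and $d^{\theta_\text{old}}_t$. The key lemma bounds the total variation between these marginals. I would do this by a telescoping/hybrid argument over the first $t$ action choices: the trajectory laws differ only through the policy factors, and each swap of one factor costs at most $\sup_s\|\pi_\theta(\cdot;s)-\pi_{\theta_\text{old}}(\cdot;s)\|_1\le \Pi_\ast e^{\Pi_\ast}|\Delta|$. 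This gives $\|d^\theta_t-d^{\theta_\text{old}}_t\|_{TV}\lesssim t\,\Pi_\ast e^{\Pi_\ast}|\Delta|$.

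\emph{Term $T_1$.} The same lemma applied to the law of the future trajectory after $(S_t,A_t)$ gives
\[
|Q^{\pi_\theta}_t-Q^{\pi_{\theta_\text{old}}}_t|\le R_\ast\sum_{i>t}\gamma^{i-t}(i-t)\,\Pi_\ast e^{\Pi_\ast}|\Delta| ,
\]
and similarly for $V$. Hence the advantage is Lipschitz in $\theta$ with a constant of order $T^2R_\ast\Pi_\ast$. Multiplying by $\Pi_\ast$ bounds $T_1$.

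\emph{Term $T_3$.} The clipping bias is a small-step phenomenon, so I use Markov's trick on the event $\{|r-1|>\epsilon\}$: there $1<|r-1|/\epsilon$, so
\[
|T_3|\le \epsilon^{-1}\,e^{\Pi_\ast}\,\Pi_\ast\,\sup|\mathbb A^{\pi_{\theta_\text{old}}}_t|\,\sup|r-1| ,
\]
and $\sup|r-1|\le \Pi_\ast e^{\Pi_\ast}|\Delta|$.

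Summing the three bounds over $t$ with weights $\gamma^t$ yields the explicit constant $R$ of Theorem \ref{Thm:1'}. It is polynomial in $T$, $\Pi_\ast$, $R_\ast$ and $\epsilon^{-1}$, with an $e^{\Pi_\ast}$ factor. The Lipschitz-score assumption is not needed for this statement; only the bounded score is used. I expect the main obstacle to be the clean total-variation bound for the time-$t$ marginals and for the conditional future-trajectory laws, which drives both $T_1$ and $T_2$. The telescoping over policy factors should handle it, but the bookkeeping of the $t$-dependence (linear, hence $T^2$ overall) needs care.
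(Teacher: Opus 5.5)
Your three-term decomposition is sound: each of $T_1,T_2,T_3$ is $O(|\theta-\theta_\text{old}|)$ as you argue, so you do prove a linear bias bound. Your last paragraph is wrong, though. Summing your bounds does not give the $R$ of Theorem~\ref{Thm:1'}.

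\textbf{Why your constant differs.} Your constant contains $e^{2\Pi_\ast}$, coming from $r\le e^{\Pi_\ast}$ and $\sup|r-1|\le\Pi_\ast e^{\Pi_\ast}|\Delta|$ in $T_3$. Because of the case split at $|\Delta|=1$, it is also at least $2K$, which is only linear in $\Pi_\ast$. The paper's $R=\Pi_\ast^2R_\ast\bigl(\frac{8T\gamma}{(1-\gamma)^3}+\frac{2T}{1-\gamma}(1+\frac1\epsilon)\bigr)$ is quadratic in $\Pi_\ast$, has no exponential, and holds for all $\theta$. So your constant is neither equal to nor dominated by it.

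\textbf{How to remove the artifacts.} Both come from controlling the ratio pointwise, and the paper's two devices remove them.
First, for every $\theta$ one has $\sum_a|\pi_\theta(a;s)-\pi_{\theta_\text{old}}(a;s)|\le\Pi_\ast|\theta-\theta_\text{old}|$. This follows by integrating $\nabla\pi_\varphi=\pi_\varphi\nabla\log\pi_\varphi$ along the segment and using $\sum_a\pi_\varphi(a;s)=1$ (Lemma~\ref{lem:TVPOLICY}). It is all that $T_1$ and $T_2$ need.
Second, apply Markov's inequality in expectation rather than pointwise. Conditionally on $S_t=s$, $\E_{A\sim\pi_{\theta_\text{old}}(\cdot;s)}[r\mathds 1_{|r-1|>\epsilon}]\le\E|r-1|+\P(|r-1|>\epsilon)\le(1+\epsilon^{-1})\E|r-1|=(2+2\epsilon^{-1})\,\mathrm{TV}$, since $\E_{\pi_{\theta_\text{old}}}|r-1|=2\,\mathrm{TV}$.
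With these the restriction $|\Delta|\le1$ disappears and every constant is polynomial. It is still a different constant from the paper's, because your decomposition is different.

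\textbf{Comparison with the paper's route.} The paper never estimates $\mathbb A^{\pi_\theta}_t-\mathbb A^{\pi_{\theta_\text{old}}}_t$. By the performance-difference identity (Proposition~\ref{prop:performancedifference}) it writes $\nabla J(\theta)=\nabla_\theta\sum_t\gamma^t\E^{\pi_\theta}[\mathbb A^{\pi_{\theta_\text{old}}}_t(S_t,A_t)]$. The unclipped bias at time $t$ then becomes $\sum_s\nabla_\theta d^{\pi_\theta}_t(s)\,g^\theta_t(s)+\sum_s\bigl(d^{\pi_\theta}_t-d^{\pi_{\theta_\text{old}}}_t\bigr)(s)\,\nabla_\theta g^\theta_t(s)$, with $g^\theta_t(s)=\E_{A\sim\pi_\theta(\cdot;s)}[\mathbb A^{\pi_{\theta_\text{old}}}_t(s,A)]$.

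The second piece is exactly your $T_2$. The first plays the role of your $T_1$; their $\gamma^t$-weighted sums over $t$ coincide. It is small because $\mathbb A^{\pi_{\theta_\text{old}}}_t$ is centred under $\pi_{\theta_\text{old}}$, so $|g^\theta_t(s)|\le2\,\mathrm{TV}\bigl(\pi_\theta(\cdot;s),\pi_{\theta_\text{old}}(\cdot;s)\bigr)\sup|\mathbb A^{\pi_{\theta_\text{old}}}_t|$. Differentiating $d^{\pi_\theta}_t$ costs a factor $t\Pi_\ast$. Your $T_3$ is the paper's clipping lemma done pointwise.

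The paper's route buys the intermediate Theorem~\ref{thm:grad-approx-mean-tv}. There the bias is controlled by the policy TV averaged over $\pi_{\theta_\text{old}}$-states, a trust-region-type quantity. Your $T_1$ instead needs Lipschitz continuity of $Q_t,V_t$ uniformly in $(s,a)$, as in Lemma~\ref{lem:value} and the smoothness proof of Proposition~\ref{lem:smooth}. So it only yields a bound in parameter distance.

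In exchange your route is more elementary, with no performance-difference identity and no differentiation of state marginals. Once the $\ell^1$ bound is used, it gives an unclipped constant of order $\Pi_\ast^2R_\ast(1-\gamma)^{-3}$ for $\gamma<1$. That avoids the factor $T$ in the paper's $R$, which is the price of first bounding the bias by $T\cdot\mathrm{M_{ean}TV}$ and only then by $|\theta-\theta_\text{old}|$.
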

We detailed constants so that the interested reader can readily use the bound for $\gamma=1$ or $T=\infty$. 
\begin{proof}[Sketch of proof]
We use the performance difference lemma to write 
\begin{align*}
    \nabla_\theta J(\theta)
    &=\nabla_\theta \big( V(\pi_\theta)-V(\pi_{\theta_\text{old}})\big)
    = \nabla_\theta \sum_{t=0}^{T-1}\gamma^t \E^{\pi_\theta}\big[\mathbb A_t^{\pi_{\theta_\mathrm{old}}}(S_t,A_t)\big],
\end{align*}
from which it follows that 
\begin{align*}
  \nabla_\theta J(\theta) - g_{\text{PPO}}(\theta,\theta_\text{old})
  = \sum_{t=0}^{T-1} \gamma^t \,\nabla_\theta \left(\E^{\pi_\theta}[g_t^\theta(S_t)] - \E^{\pi_{\theta_\mathrm{old}}}[g_t^\theta(S_t)]\right),
\end{align*}
with $g_t^\theta(s)\coloneq \E_{A\sim\pi_\theta(\,\cdot\,;\,s)}[\mathbb A_t^{\pi_{\theta_\mathrm{old}}}(s,A)]$. Here $g_\text{PPO}$ denotes the surrogate without clipping. The righthand side can be estimated with the total variation distance between $\pi_{\theta_\text{old}}$ and $\pi_\theta$ which for bounded score functions is linearly bounded in $|\theta-\theta_\text{old}|$. Finally, some importance ratio computations are used to include the clipping to the estimate. For the full details we refer to Appendix \ref{appendix:C}.
\end{proof}
Another view at Figure \ref{fig:pfeile} now better explains the intention of the figure. Per cycle exact gradient PPO performs one A2C step with additional (orange) surrogate gradients that become more biased (less aligned towards the optimum) as the scheme departs from the resampling points.

The theorem shows that as long as parameters remain in proximity (trust region) to the sampling parameters, the bias is small and policy gradient will not be harmed by additional surrogate gradient steps. Thus, adding sampled PPO-type surrogate gradient steps to A2C is sample-free, not necessarily dangerous, but has a number of advantages. These include variance reduction (less time-correlations) by using mini-batches of transitions instead of full rollouts  and more value network updates (e.g.  \cite{wang2025improvingvalueestimationcritically}).

\section{Deterministic Convergence}\label{sec:deterministic}
Before we turn to PPO in the light of cyclic RR SGD, let us discuss the simpler exact gradient situation. Suppose we have explicit access to gradients $\nabla_\theta J$ and additionally to exact surrogate gradients $g^\text{clip}_{\text{PPO}}$. We ask if there can be advantages to use the surrogate gradients when trying to optimize $J$. To mimic the situation of PPO later, we assume that the surrogate gradients can be used for free, i.e., we only count the number $C$ of   gradient steps using true policy gradients $\nabla_\theta J$. We assume that $J$ is $L$-smooth (see Proposition \ref{lem:smooth}) and compare the method to a standard gradient ascent method where the optimal step-size is known to be $\eta=\frac{1}{L}$. It turns out that this question is highly dependent on the problem parameters, such as the Lipschitz constant of the gradient and the error at initialization.
As an example take $f(x)=-x^2$, then gradient ascent converges in one step. Additional biased gradient step worsen the convergence.\\

In practice, the smoothness constant $L$ is unknown, and for $\eta\ll\frac{1}{L}$ the situation is much clearer. In this regime, additional biased gradient steps can, in fact, be beneficial. Suppose that there are cycles $c=0,...,C-1$ of length $K$. The update rule is
\begin{align*}
    \theta_{c,e+1}&=\theta_{c,e}+\eta\, g_\text{PPO}^\text{clip}(\theta_{c,e},\theta_{c,0}), \quad e =0, \dots, K-1 \\
    \theta_{c+1,0}&=\theta_{c,K}
\end{align*}
Since $g_\text{PPO}^\text{clip}(\theta_{c,0},\theta_{c,0})=\nabla_\theta J(\theta_{c,0})$, the cyclic surrogate gradient ascent method performs correct gradient steps followed by increasingly biased surrogate gradient steps  (compare Figure \ref{fig:pfeile} with $K=4$). With the ascent lemma and Theorem~\ref{Thm:1}, we can derive the following convergence of $J$ along the parameter sequence.
\begin{theorem}[Deterministic PPO convergence]
\label{thm:convdeterministc}
    Suppose $C$ is the number of cycles, $K$ the cycle length, $R$ is from Theorem \ref{Thm:1}, $G$ is from Proposition \ref{prop:surrogategradbound}, $\Delta_0:=J_\ast-J(\theta_{0,0})$ is the initial optimality gap, and $L$ is the Lipschitz constant of $\nabla_\theta J$ ($J$ is $L$-smooth, see Proposition~\ref{lem:smooth}). If the learning rate $\eta$ is smaller than $\frac1L$, then
 \begin{align*}  
 \min_{c=0,\dots, C-1,\ e=0,\dots, K-1} |\nabla_\theta J(\theta_{c,e})|^2\le \frac{2\Delta_0}{\eta CK} + \frac 16 \eta^2(K-1)(2K-1) R^2G^2.
 \end{align*}
\end{theorem}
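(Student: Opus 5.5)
The plan is to run the standard biased-gradient ascent argument on each step of the cycle, and then sum. Write $g_{c,e}:=g_\text{PPO}^\text{clip}(\theta_{c,e},\theta_{c,0})$ and $\nabla_{c,e}:=\nabla_\theta J(\theta_{c,e})$.

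\emph{One-step ascent inequality.} By $L$-smoothness (Proposition~\ref{lem:smooth}) and the update $\theta_{c,e+1}=\theta_{c,e}+\eta g_{c,e}$,
\begin{align*}
J(\theta_{c,e+1})\ge J(\theta_{c,e})+\eta\langle \nabla_{c,e},g_{c,e}\rangle-\tfrac{L\eta^2}{2}|g_{c,e}|^2 .
\end{align*}
Use the polarization identity $\langle a,b\rangle=\tfrac12(|a|^2+|b|^2-|a-b|^2)$ and $\eta\le 1/L$. The $|g_{c,e}|^2$ terms then combine to $\tfrac{\eta}{2}(1-L\eta)|g_{c,e}|^2\ge0$, which can be dropped, leaving
\begin{align*}
J(\theta_{c,e+1})\ge J(\theta_{c,e})+\tfrac{\eta}{2}|\nabla_{c,e}|^2-\tfrac{\eta}{2}|\nabla_{c,e}-g_{c,e}|^2 .
\end{align*}

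\emph{Bias control.} By Theorem~\ref{Thm:1}, $|\nabla_{c,e}-g_{c,e}|\le R|\theta_{c,e}-\theta_{c,0}|$. Since $\theta_{c,e}-\theta_{c,0}=\eta\sum_{j<e}g_{c,j}$ and $|g|\le G$ by Proposition~\ref{prop:surrogategradbound}, we get $|\theta_{c,e}-\theta_{c,0}|\le \eta eG$. Hence the bias term is at most $\tfrac{\eta}{2}\eta^2e^2R^2G^2$. It vanishes for $e=0$, consistent with $g_{c,0}=\nabla_{c,0}$.

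\emph{Telescoping.} Sum over $e=0,\dots,K-1$ and $c=0,\dots,C-1$, using $\theta_{c+1,0}=\theta_{c,K}$ and $J(\theta_{C,0})\le J_\ast$. This gives
\begin{align*}
\tfrac{\eta}{2}\sum_{c,e}|\nabla_{c,e}|^2\le \Delta_0+\tfrac{\eta^3}{2}R^2G^2\, C\sum_{e=0}^{K-1}e^2 .
\end{align*}
We have $\sum_{e=0}^{K-1}e^2=\tfrac{(K-1)K(2K-1)}{6}$. Dividing by $\tfrac{\eta}{2}CK$ and bounding the minimum by the average yields
\begin{align*}
\min|\nabla_{c,e}|^2\le \tfrac{2\Delta_0}{\eta CK}+\tfrac16\eta^2(K-1)(2K-1)R^2G^2,
\end{align*}
which is exactly the claimed bound.

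The main point to check is the drift bound $|\theta_{c,e}-\theta_{c,0}|\le \eta e G$. It requires $G$ to bound the clipped surrogate gradient uniformly in both arguments, which is what Proposition~\ref{prop:surrogategradbound} should provide. It also requires that Theorem~\ref{Thm:1} applies with the anchor $\theta_\text{old}=\theta_{c,0}$ for arbitrary $\theta$, which it does because the statement is global. The remaining steps are routine.
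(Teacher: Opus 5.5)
Your proposal is correct and follows the paper's proof essentially step for step: the one-step inequality $J(\theta_{c,e+1})\ge J(\theta_{c,e})+\frac{\eta}{2}|\nabla_{c,e}|^2-\frac{\eta}{2}|\nabla_{c,e}-g_{c,e}|^2$, then the drift bound $|\nabla_{c,e}-g_{c,e}|\le \eta e RG$ from Theorem~\ref{Thm:1} and Proposition~\ref{prop:surrogategradbound}, then telescoping with $\sum_{e=0}^{K-1}e^2=\frac{(K-1)K(2K-1)}{6}$. The only cosmetic difference is how you reach the one-step inequality: you use the polarization identity (as the paper does in its stochastic per-epoch lemma), whereas the deterministic proof expands $|\nabla J+b|^2$ and applies Young's inequality with parameter $1$; both give the same constants.
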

The proof is given in Appendix \ref{app:proofdeterministic}. Choosing $K=1$ recovers the standard convergence rate of gradient ascent for $L$-smooth functions, where the optimal step-size is given by $\eta=\frac{1}{L}$. For $K>1$ let us consider the looser upper bound $\frac{2\Delta_0}{\eta CK} + \frac 13 \eta^2K^2 R^2G^2$. Optimizing for fixed cycle length $K$ yields an optimal step size $\eta_\ast =\min(\frac{c}{K},\frac1L)$ and optimizing for fixed $\eta\le \frac 1L$ yields an optimal cycle length\footnote{For simplicity, we allow $K_*$ to be a non-integer here.} $K_\ast = \frac{c}{\eta}$ with $c= (\frac{3\,\Delta_0}{C R^2 G^2})^{\frac{1}{3}}$. If $\eta^* < \frac 1L$, both cases result in 
    \begin{align*}
        \min_{0\le c\le C-1,\ 0\le e\le K-1} |\nabla_\theta J(\theta_{c,e})|^2\le \left(\frac{3\Delta_0 RG}{C}\right)^{\frac23}\,.
    \end{align*}
    In conclusion, for a \emph{fixed} budget of $C$ exact gradient steps, additional biased gradient step improve the convergence if $\eta \ll \frac{1}{L}$ or, in the case of optimal (but typically unknown) step-size $\eta=\frac{1}{L}$, if $\Delta_0$ and $L$ are large compared to $R$ and $G$.    
    As we discuss in the next section, this is exactly the advantage of PPO. Estimated surrogate gradients can help compensate overly small learning rates but come at no additional sampling cost, they only use rollouts generated for the first gradient step of a cycle (blue dots in Figure \ref{fig:pfeile}).

\section{Reshuffling Analysis: Convergence of PPO}\label{sec:6}

\begin{algorithm}[t]
\caption{Cyclic Reshuffled PPO Surrogate Ascent}
\label{algo:cyclic}
\begin{algorithmic}[1] 
\Require Initial $\theta$, stepsize $\eta$, cycles $C$, epochs $K$, batch size $B$, $m := N/B$.
\For{$c = 0,\ldots, C-1$} \Comment{cycles}
    \State $\theta_{\mathrm{old}} \gets \theta$ \Comment{fixed sampling parameter}
    \State Sample $n$ rollouts under $\pi_{\theta_{\mathrm{old}}}$
    \State Estimate advantages $\hat{\mathbb A}$ \Comment{critic step, e.g. using GAE}
    \State Fill buffer $\{(s^i,a^i,r^i,\hat{\mathbb A}^i,t^i)\}_{i=0}^{N-1}$
    \For{$e = 0,\ldots, K-1$} \Comment{epoch}
        \State Draw a random permutation $\sigma=(\sigma_0,\ldots,\sigma_{N-1})$
        \For{$k = 0,\ldots, m-1$} \Comment{minibatch updates}
            \State $\mathcal B_k \gets \{\sigma_{kB},\ldots,\sigma_{(k+1)B-1}\}$
            \State Compute surrogate $\hat g^{\mathrm{clip}}(\theta,\theta_{\mathrm{old}};\mathcal B_k)$ as in Eq.~\eqref{eq:minibatch-surrogate}
            \State $\theta \gets \theta + \eta\,\hat g^{\mathrm{clip}}(\theta,\theta_{\mathrm{old}};\mathcal B_k)$ \Comment{this is $\theta_{c,e,k+1}$}
        \EndFor
    \EndFor
\EndFor
\end{algorithmic}
\end{algorithm}

We now turn towards PPO, replacing the exact surrogate gradients in the deterministic analysis with sampled surrogate gradients using transition buffer samples. We emphasize that our contribution is primarily conceptual. Apart from the minor modifications (discounting in gradients, and symmetrically clipping around~$1$) this is a formalization of the standard PPO policy update mechanism. Based on this formalization, our main contribution is Theorem \ref{Thm:3} below.
\begin{remark}
    For the analysis, we assume access to reasonably well behaved advantage estimators. The abstract condition is for instance fulfilled in the toy assumption of an exact critic. The assumption made is as weak as possible for mathematical tractability and uniformly controls the estimation error. While the assumption is undesirable, the current state of deep learning theory (in value prediction) makes it unavoidable.
\end{remark}
We start by rewriting $g^\text{clip}_{\text{PPO}}(\theta,\theta_{\text{old}})$.
Choose time-steps $U\sim\mathcal{U}\{0,\dots,T-1\}$ uniformly and independent of the process.
Then, the surrogate time-sum can be written as a uniform expectation,
\begin{align*}
g^\text{clip}_{\text{PPO}}(\theta,\theta_{\text{old}})
&= T\;\E_{U\sim \mathcal U}\Big[\E^{\pi_{\theta_\text{old}}}\Big[\gamma^{U}\,
\frac{\nabla_\theta\pi_{\theta}(A_{U};S_{U})} {\pi_{\theta_\text{old}}(A_{U};S_{U})}
\mathds{1}_{\big|\frac{\pi_\theta(A_{U};S_{U})}{\pi_{\theta_\text{old}}(A_{U};S_{U})}-1\big|\le \epsilon}\;
\mathbb A_{U}^{\pi_{\theta_\text{old}}}(S_{U},A_{U})\Big]\Big],
\end{align*}
i.e., as a double expectation over a uniformly sampled time index and the MDP under $\pi_{\theta_{\text{old}}}$. In practice, this joint expectation is approximated cycle-wise from sampled transitions. Within a cycle, one fixes a sampling parameter $\theta_{\text{old}}$, collects $n$ rollouts of length $T$ under $\pi_{\theta_{\text{old}}}$, and computes all advantages using truncated (or finite-time) GAE (see Section \ref{sec:fGAE}). Next, one flattens the resulting data into a transition buffer $\{(s^i,a^i,r^i,\hat{\mathbb A}^i,t^i)\}_{i=0}^{N-1}$ of size $N:=nT,$  where $(s^i,a^i,r^i,t^i)$ range over all state-action-reward-time tuples  encountered in the rollouts and $\hat{\mathbb A}_i$ denotes the advantage estimate for $\mathbb A_{t_i}^{\pi_{\theta_{\text{old}}}}(s^i,a^i)$ computed from the rollout (e.g. in practice with GAE). Note that we append the standard transition buffers with the time-index of transitions in order to allow discounting of the gradient. This does not pose any practical difficulty.

Within a cycle, PPO implementations perform multiple passes over the transition buffer using reshuffled minibatches. We formalize this mechanism as follows. Let $\sigma=(\sigma_0,\dots,\sigma_{N-1})$ be a random permutation of $\{0,\dots,N-1\}$ (reshuffling), and partition the permuted indices into consecutive minibatches of size~$B$: for $k=0,\dots,m-1$ with $m:=\frac NB$, define $\mathcal B_k := \{\sigma_{kB},\dots,\sigma_{(k+1)B-1}\} \subseteq\{1,\dots,N\}$.
A single PPO update step uses the minibatch sampled surrogate gradient
\begin{align}\label{eq:minibatch-surrogate}
\hat g^{\text{clip}}(\theta,\theta_{\text{old}};\mathcal B_k)= \frac{1}{B}\sum_{i\in \mathcal B_k} g_{\text{PPO}}^{(i),\text{clip}}(\theta,\theta_{\text{old}}),
\end{align}
where the per-transition contribution is
\[
g_{\text{PPO}}^{(i),\text{clip}}(\theta,\theta_{\text{old}})
:= T\gamma^{t^i}
\frac{\nabla_\theta \pi_\theta(a^i;s^i)}{\pi_{\theta_\text{old}}(a^i;s^i)}\,
\mathds{1}_{\big|\frac{\pi_\theta(a^i;s^i)}{\pi_{\theta_\text{old}}(a^i;s^i)}-1\big|\le \epsilon}
\hat{\mathbb A}^i.
\]
An epoch corresponds to one pass over the buffer, i.e., iterating with $m$ steps once over the index batches $\mathcal B_0,\dots,\mathcal B_{m-1}$ generated by $\sigma$. PPO repeats this for a number $K$ epochs within the same cycle, drawing a fresh permutation at the beginning of each epoch and then passes through the data. For the convenience of the reader, we give pseudocode of our interpretation of the PPO policy update in Algorithm~\ref{algo:cyclic}. 

The above procedure generates a sequence of parameters $\theta_{c,e,k}$ indexed by cycle $c$, epoch $e$, and $k$th mini-batch update within the epoch. In the following, we denote by $\theta_{c,e,k}$ the parameter before the $(k+1)$st minibatch update within epoch $e$ of cycle $c$. In particular, $\theta_{c,0,0}$ is the initialization of cycle $c$ and plays the role of $\theta_{\text{old}}$ for that cycle, $\theta_{c,e,0}$ is the epoch start-point and $\theta_{c,e,m}$ is the epoch end-point. PPO can thus be seen as a cyclic RR method. Recall that RR is an SGD-style method for finite-sum objectives where, at the start of each epoch, one  permutes the data points and then takes mini-batch gradient steps using each data points once. Unlike SGD, which samples indices independently with replacement in each step, RR samples without replacement within an epoch, which often reduces redundancy and improves convergence. Regarding convergence results for RR in supervised learning that motivated our convergence proof, see \cite{Mishenko} and references therein. Here is our main convergence result for PPO:
\begin{theorem}\label{Thm:3}
    Assume Assumption \ref{Ass} 
    and suppose the learning rate $\eta$ is smaller than $\frac{1}{Lm}$. Then, for arbitrary $p,q \in (0,1)$, it holds that
\begin{equation} \label{eq:Thm3}
\begin{split}
 \min_{c=0,\dots, C-1,\ e=0,\dots, K-1} \E\big[|\nabla J(\theta_{c,e,0})|^2\big]
&\le \frac{2\Delta_0}{\eta CKm}+ 6\eta^2(7B_1^2+L^2)K^2m^2G^2 \\
&\quad + 42 \eta^{2p}B_2^2\epsilon^{2(1-p)}|\mathcal A|^{2p}\Pi_\ast^{2p} K^{2p}m^{2p}  G^{2p}\\ 
&\quad + \frac{42\eta^q B_3^2|\mathcal A|^q \Pi_\ast^q K^q m^q  G^q }{\epsilon^q}+ \frac{12\sigma^2}{N}+12T^2\Pi_\ast^2\delta^2\,,
\end{split}
\end{equation}
with constants from Theorem \ref{thm:convdeterministc} and additional constants given in Appendix \ref{app:PPOconv}.
\end{theorem}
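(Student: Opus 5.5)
The plan is to treat one full epoch of Algorithm~\ref{algo:cyclic} as a single perturbed gradient step and apply the ascent lemma at epoch granularity, in the style of the random reshuffling analysis of \cite{Mishenko}.

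\emph{Step 1 (epoch update as a gradient step).} Since each buffer index is used exactly once per epoch, the epoch increment is
\[
\theta_{c,e,m}-\theta_{c,e,0}=\frac{\eta}{B}\sum_{k=0}^{m-1}\sum_{i\in\mathcal B_k} g^{(i),\text{clip}}_{\text{PPO}}(\theta_{c,e,k},\theta_{c,0,0}).
\]
We compare this with $\eta m\,\nabla J(\theta_{c,e,0})$ and write the increment as $\eta m(\nabla J(\theta_{c,e,0})+E_{c,e})$.

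\emph{Step 2 (ascent inequality).} By $L$-smoothness (Proposition~\ref{lem:smooth}) and the assumption $\eta m L\le 1$, the ascent lemma together with $\langle a,a+b\rangle\ge \frac12|a|^2-\frac12|b|^2$ gives
\[
J(\theta_{c,e+1,0})\ge J(\theta_{c,e,0})+\frac{\eta m}{2}|\nabla J(\theta_{c,e,0})|^2-\frac{\eta m}{2}|E_{c,e}|^2 .
\]
Taking expectations, telescoping over all $CK$ epochs and using $J\le J_\ast$ produces the term $\frac{2\Delta_0}{\eta CKm}$ plus the average of $\E|E_{c,e}|^2$. It therefore suffices to bound $\E|E_{c,e}|^2$ uniformly.

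\emph{Step 3 (splitting the error).} We decompose $E_{c,e}$ into six pieces and apply $|\sum_{j=1}^6 x_j|^2\le 6\sum_j|x_j|^2$, which accounts for the factors $6$, $12$ and $42$ in the statement.

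\begin{itemize}
\item \textbf{Drift within the epoch.} Replace $\theta_{c,e,k}$ by $\theta_{c,e,0}$ inside the per-sample terms, without the indicator. The unclipped per-sample gradient is Lipschitz in $\theta$ with some constant $B_1$, using Lipschitz and bounded score functions and bounded $\hat{\mathbb A}$. Each step has norm at most $\eta G$ (Proposition~\ref{prop:surrogategradbound}), so $|\theta_{c,e,k}-\theta_{c,e,0}|\le \eta m G$. This yields a term of order $\eta^2 B_1^2 m^2G^2$.
\item \textbf{Bias of the exact surrogate.} Compare the exact surrogate $g^\text{clip}_{\text{PPO}}(\theta_{c,e,0},\theta_{c,0,0})$ with $\nabla J(\theta_{c,e,0})$. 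Theorem~\ref{Thm:1} bounds this by $R\,|\theta_{c,e,0}-\theta_{c,0,0}|\le R\,\eta K m G$. This gives the $K^2m^2$ contribution; together with the $L$-term from smoothness it can be absorbed into $6\eta^2(7B_1^2+L^2)K^2m^2G^2$.
\item \textbf{Indicator switching (main obstacle).} The indicator $\mathds 1_{|r-1|\le\epsilon}$ is discontinuous, so the clipped per-sample gradient is not Lipschitz in $\theta$. What we can use is that the ratio moves by at most $|\mathcal A|\Pi_\ast|\theta-\theta_{c,0,0}|\le |\mathcal A|\Pi_\ast\eta KmG$. Two events must then be controlled.
  \begin{itemize}
  \item If the indicator is currently on, the unclipped ratio factor differs from $1$ by at most $\epsilon$. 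Interpolating between the bounds $\epsilon$ and $|\mathcal A|\Pi_\ast\eta KmG$ via $\min(a,b)\le a^{1-p}b^p$ gives the $\epsilon^{2(1-p)}(\eta\cdots)^{2p}$ term.
  \item The indicator can flip only if the drift exceeds a threshold of order $\epsilon$. Bounding the flip probability by Markov's inequality with power $q$, $\P(\text{drift}>\epsilon)\le(\text{drift}/\epsilon)^q$, gives the $\epsilon^{-q}$ term.
  \end{itemize}
  I expect handling these discontinuities cleanly, with explicit constants $B_2,B_3$, to be the hardest step.
\item \textbf{Sampling noise.} Replace the buffer average at $\theta_{c,e,0}$ with its expectation. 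Conditionally on $\theta_{c,0,0}$, the $n$ rollouts are i.i.d., which gives a variance of order $\sigma^2/N$. The dependence of $\theta_{c,e,0}$ on the buffer for $e>0$ requires care. I would first evaluate the noise at $\theta_{c,0,0}$, where independence holds, and push the difference into the drift terms above.
\item \textbf{Advantage estimation bias.} Replacing $\hat{\mathbb A}$ by $\mathbb A^{\pi_{\theta_{c,0,0}}}$ costs at most $T\Pi_\ast\delta$ per sample. Here we use $\gamma^t\le1$ and $|\nabla\pi/\pi_{\text{old}}|\le(1+\epsilon)\Pi_\ast$ on the clipping event, which gives $12T^2\Pi_\ast^2\delta^2$ after adjusting constants.
\end{itemize}

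Finally, we insert all six bounds into the telescoped inequality and use that the minimum over epochs is at most the average, which yields \eqref{eq:Thm3}.
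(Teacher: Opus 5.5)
Your epoch-level ascent step is exactly Lemma~\ref{lem:perepochdescent}, but the error decomposition has genuine gaps.

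\textbf{Theorem 1 term and anchoring.} The Theorem~\ref{Thm:1} term cannot be ``absorbed''. Its constant $R=\Pi_\ast^2R_\ast\bigl(\frac{8T\gamma}{(1-\gamma)^3}+\frac{2T}{1-\gamma}(1+\frac1\epsilon)\bigr)$ carries a factor $1/\epsilon$. It is therefore not dominated by $7B_1^2+L^2$ with $B_1=TA_\ast L_s$, and your route leaves an extra $R^2\eta^2K^2m^2G^2$ term that is not in \eqref{eq:Thm3}. The term is also unnecessary: the paper never uses Theorem~\ref{Thm:1} in the stochastic proof.
\begin{itemize}
\item The paper anchors every comparison at the cycle start. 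Sampled per-transition gradients at $\theta_{c,e,k}$ or $\theta_{c,e,0}$ are compared with $g^{(i),\text{clip}}_{\text{PPO}}(\theta_{c,0,0},\theta_{c,0,0})$. There the ratio is exactly $1$ and the indicator is on, which is why Lemma~\ref{lem:stability} needs only $B_1=TA_\ast L_s$ and $B_2=B_3=TA_\ast\Pi_\ast$.
\item Your within-epoch anchor $\theta_{c,e,0}$ does not have this property. Between $\theta_{c,e,k}$ and $\theta_{c,e,0}$ the ratio is not $1$, so the unclipped per-sample gradient is not $B_1$-Lipschitz.
\item At $\theta_{c,0,0}$ the score weights depend on $(s^i,a^i)$ only. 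Hence the buffer average estimates $\nabla J(\theta_{c,0,0})$ up to $\sigma^2/N$ plus the $\delta$-bias (Lemma~\ref{lem:variance}). $L$-smoothness then moves $\nabla J(\theta_{c,0,0})$ to $\nabla J(\theta_{c,e,0})$, which is the source of the $L^2$ term.
\item If you route the noise through $\theta_{c,0,0}$ as you suggest, your exact-surrogate terms telescope into exactly this difference, because $g^{\text{clip}}_{\text{PPO}}(\theta_{c,0,0},\theta_{c,0,0})=\nabla J(\theta_{c,0,0})$. Theorem~\ref{Thm:1} then drops out.
\item Assumption~\ref{ass:unbiased-bounded-adv} controls only the conditional bias $\E[\hat{\mathbb A}_t\mid S_t,A_t]-\mathbb A_t$ in $L^2$, not $|\hat{\mathbb A}-\mathbb A|$ per sample. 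The $\delta$-term can therefore only be extracted at the cycle start, not ``on the clipping event'' at a later, buffer-dependent iterate.
\end{itemize}

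\textbf{Ratio control.} Your claim that the ratio moves by at most $|\mathcal A|\Pi_\ast|\theta-\theta_{c,0,0}|$ is not available pathwise. Lemma~\ref{lemma:lipschitz} gives only $|r-1|\le\Pi_\ast|\theta-\theta_{c,0,0}|/\pi_{\theta_{c,0,0}}(a;s)$, and $\pi_{\theta_{c,0,0}}$ has no lower bound.

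In the paper $|\mathcal A|$ appears only in expectation, via $\E^{\pi_{\theta_{c,0,0}}}[1/\pi_{\theta_{c,0,0}}(A_t;S_t)\mid S_t]=|\mathcal A|$. Since $\theta_{c,e,k}$ is built from the same buffer, Lemma~\ref{lem:bdd_weights} separates the drift from $1/\pi_{\theta_{c,0,0}}(a^i;s^i)$ by H\"older's inequality. The exponents are chosen so that $1/\pi_{\theta_{c,0,0}}$ enters only to the first power. That is where the exponents $p,q<1$ and the factor $\E[|\theta-\theta_{c,0,0}|^{q/(1-q)}\mid\mathcal F_c]^{1-q}$ come from. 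Your Markov step reproduces the shape of the bound without this mechanism.

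A genuinely pathwise argument exists, but you have to supply it. Bounded score gives $|\log r|\le\Pi_\ast|\theta-\theta_{c,0,0}|=:x$, so $|r-1|\le e^x-1$. For $\epsilon\le1$ this implies $\min(\epsilon,|r-1|)\le\min(\epsilon,2x)$ and $\mathds 1_{|r-1|>\epsilon}\le(2x/\epsilon)^q$. These are your bounds with $2$ in place of $|\mathcal A|$. As proposed, however, the step is unsupported.

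\textbf{Smaller points.}
\begin{itemize}
\item The constants $6,12,42$ do not come from a flat six-term split. They arise from the nested splits in \eqref{eq:decomposition}, Lemma~\ref{lem:bound1} and Lemma~\ref{lem:bound2}, combined with the factor $3$ from squaring Lemma~\ref{lem:stability} (e.g.\ $42=2\cdot 9+2\cdot 12$).
\item The $G$ in the statement is the sampled-gradient bound $T(1+\epsilon)\Pi_\ast A_\ast$ of Lemma~\ref{lem:bdd_drift}, not Proposition~\ref{prop:surrogategradbound}.
\end{itemize}
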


In contrast to Section \ref{sec:deterministic}, the stochastic setting presents challenges that require more careful treatment. Most notably, the iterates are random and  stochastically dependent on the samples collected at the beginning of each cycle. As a consequence, the bias term used in the deterministic analysis developed in Theorem~\ref{Thm:1} can no longer be handled directly, since both quantities are now random and coupled through the sampling process. To overcome this issue, our analysis instead works directly with the sampled gradients generated within each cycle. Rather than comparing exact surrogate gradients evaluated at random iterates to the true gradient, we instead compare sampled gradients at intermediate steps to the sampled gradient at the beginning of the cycle. This path-level bias decomposition (see Lemma ~\ref{lem:stability}) allows us to control the dependence introduced by fresh sampling while still retaining a meaningful notion of gradient consistency. 

Setting $p=q=1$ and assuming $\eta \le \frac{1}{Lm}$, we can rewrite the upper bound in \eqref{eq:Thm3} as
$$
    \frac{2\Delta_0}{\eta CKm}+ c_1 \eta^{2} K^{2}m^{2}+ \frac{c_2 \eta  K m  }{\epsilon}+ \frac{12\sigma^2}{N}+12T^2\Pi_\ast^2\delta^2
$$
for suitable constants $c_1,c_2$. To better quantify this bound for small step-sizes we balance the terms for $\frac{1}{\eta}$ and $\eta$ (suppressing $\eta^2$) which yields the suitable cycle size 
\[ K = \frac{1}{\eta m} \sqrt{\frac{2 \Delta_0 \epsilon}{C c_2}}
\]
with corresponding upper bound
\begin{align*} 
\min_{c=0,\dots, C-1,\ e=0,\dots, K-1} \E\big[|\nabla J(\theta_{c,e,0})|^2\big] 
\le 2\sqrt{\frac{2\Delta_0 c_2}{C\epsilon}}
+
\frac{2 c_1 \Delta_0 \epsilon}{C c_2}
+
\frac{12\sigma^2}{N}+12T^2\Pi_\ast^2\delta^2\,.
\end{align*}
As in the deterministic situation, the results indicate that  cycle-based update schemes mitigate sensitivity to step-size selection. Small learning rates can be offset by additional updates reusing the same rollouts, without degrading the convergence guarantee. This behavior can be interpreted as an implicit trust-region mechanism, where many small clipped updates adaptively control the effective step length.
\begin{figure*}[t]
    \centering
    \includegraphics[width=\linewidth]{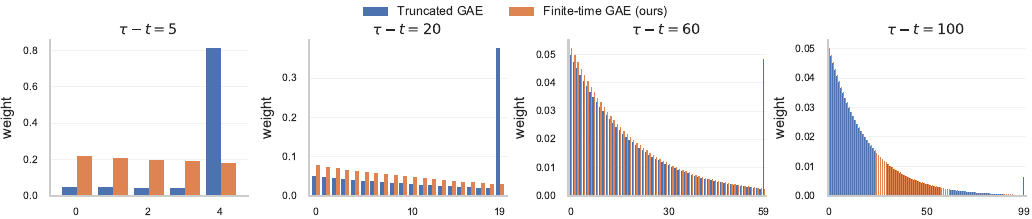}
    \vspace{-3mm}
    \caption{
    \textbf{GAE tail-mass collapse.}
    Comparison of the weights assigned to the $k$-step advantage estimators $\hat \A_t^{(k)}$ by truncated GAE and our finite-time (renormalized) estimator.
    Each panel corresponds to a different remaining horizon $\tau-t$ and shows how, for truncated GAE, the geometric tail mass collapses onto the last available estimator $\hat \A_t^{(\tau-t-1)}$, while the finite-time variant redistributes this mass over the observable $k$-step estimators via finite-horizon renormalization .
    }
    \label{fig:gae-tail-collapse}
\end{figure*}

\section{Finite-Time GAE}\label{sec:fGAE}
For our convergence theory we needed to work under abstract critic assumptions. In this section we reveal a theory-implementation gap that occurs in PPO (see (11), (12) in \cite{PPO}) when truncating the original GAE estimator. Details and proofs can be found in Appendix \ref{sec: finite-time-gae}.

Recall that original GAE for infinite MDPs works as follows. Motivated from the $k$-step Bellman expectation operator, the $k$-step forwards estimators $\mathbb A_t^{(k)}:=\sum_{t=0}^k \gamma^t R_t +\gamma^{k+1} V^\pi(S_{t+k+1})-V^\pi(S_t)$ are conditionally unbiased estimators of $\mathbb A^\pi(S_t,A_t)$. Replacing the true value function with a value network approximation $V$ the estimator is denoted by $\hat{\mathbb A}^{(k)}_t$. For large $k$ (close to the Monte Carlo advantage estimator) the estimation variance dominates, for small $k$ the value function approximation bias of bootstrapping dominates. Geometrical mixing of $k$-step estimators yields 
GAE $\hat{\mathbb A}_t^\infty:=(1-\lambda)\sum_{k=0}^{\infty} \lambda^k \hat{\mathbb A}_t^{(k)}$. There is a simple yet important trick that makes GAE particularly appealing. Using a telescopic sum cancellation shows $\hat \A_t^\infty
= \sum_{\ell=0}^{\infty}(\gamma\lambda)^\ell\,\delta_{t+\ell}$ with TD errors $\delta_t=R_t+\gamma V(S_{t+1})-V(S_t)$. For finite-time MDPs (or even terminated MDPs) the infinite time setting is not appropriate. 
In PPO (see (11) of \cite{PPO}) GAE is typically truncated by dropping TD errors after the rollout end $\tau$:
\begin{align*}
    \hat {\mathbb A}_t
:= \sum_{\ell=0}^{\tau-t-1}(\gamma\lambda)^\ell\,\delta_{t+\ell}.
\end{align*}
While in PPO $\tau=T$ is considered fixed, truncation can equally be applied at termination times. The truncated representation is particularly useful as it allows to backtrack $\hat{\mathbb A}_t=\delta_t+\gamma\lambda \hat{\mathbb A}_{t+1}$ using the terminal condition  $\hat{\mathbb A}_\tau:=0$. While the idea of GAE is a geometric mixture of $k$-step advantage estimators with weights $(1-\lambda)\lambda^k$, this breaks down when truncating. All mass of $k$-step estimators exceeding $\tau$ is collapsed onto the longest non-trivial estimator.
\begin{proposition}[Tail-mass collapse of truncated GAE]\label{prop:tail-collapse-kstep}
For $t\leq\tau-1$ the GAE estimator used in practice satisfies
\[
\hat \A_t
=
\sum_{k=0}^{\tau-t-2}\underbrace{(1-\lambda)\lambda^k}_{\text{GAE weights}}\,\hat \A_t^{(k)}
\;+\;
\underbrace{\lambda^{\tau-t-1}}_{\text{collapsed tail-mass}}\,\hat \A_t^{(\tau-t-1)}.
\]
\end{proposition}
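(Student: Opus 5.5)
The plan is to express every $k$-step estimator through TD errors and then regroup the coefficients. First, I will use the telescoping identity with the value network $V$: for $0\le k\le \tau-t-1$,
\[
\hat \A_t^{(k)}=\sum_{\ell=0}^{k}\gamma^\ell R_{t+\ell}+\gamma^{k+1}V(S_{t+k+1})-V(S_t)=\sum_{\ell=0}^{k}\gamma^\ell\delta_{t+\ell}.
\]
This holds because the intermediate terms $\gamma^{\ell+1}V(S_{t+\ell+1})$ cancel pairwise. Here the $k$-step estimator is read as using rewards $R_t,\dots,R_{t+k}$ and bootstrapping at $S_{t+k+1}$, which is consistent with the summation index in the definition above. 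For $k=\tau-t-1$ the estimator bootstraps at $S_\tau$, so it is the longest estimator observable within the rollout; this is why it carries index $\tau-t-1$ in the statement.

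Next, I will insert this into the right-hand side of Proposition~\ref{prop:tail-collapse-kstep} and swap the order of summation. For a fixed $\ell\in\{0,\dots,\tau-t-1\}$, the TD error $\gamma^\ell\delta_{t+\ell}$ appears in every $\hat\A_t^{(k)}$ with $k\ge \ell$. Its total coefficient is therefore
\[
\sum_{k=\ell}^{\tau-t-2}(1-\lambda)\lambda^k+\lambda^{\tau-t-1}.
\]
The finite geometric sum telescopes to $\lambda^\ell-\lambda^{\tau-t-1}$, and adding the collapsed tail-mass gives exactly $\lambda^\ell$. The case $\ell=\tau-t-1$ has an empty geometric sum, and the coefficient is just $\lambda^{\tau-t-1}$, which is again consistent. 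So the right-hand side equals $\sum_{\ell=0}^{\tau-t-1}(\gamma\lambda)^\ell\delta_{t+\ell}$, which is the truncated GAE $\hat\A_t$ by definition.

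As a consistency check, I will verify the edge case $t=\tau-1$. There the first sum is empty, and the claim reduces to $\hat\A_{\tau-1}=\hat\A^{(0)}_{\tau-1}=\delta_{\tau-1}$, which matches the definition. One could alternatively argue by backward induction using $\hat\A_t=\delta_t+\gamma\lambda\hat\A_{t+1}$ together with $\hat\A_t^{(k+1)}=\delta_t+\gamma\hat\A^{(k)}_{t+1}$, but the direct coefficient comparison is cleaner.

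The only real obstacle is bookkeeping. The index conventions for the $k$-step estimator need care, since the displayed definition reuses $t$ as a summation variable. The tail weight must also be correct, namely the geometric remainder $\sum_{k\ge\tau-t-1}(1-\lambda)\lambda^k=\lambda^{\tau-t-1}$. Once these conventions are fixed, the argument is a two-line interchange of finite sums.
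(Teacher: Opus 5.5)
Your proof is correct and takes essentially the same approach as the paper. Both write each $\hat\A_t^{(k)}$ as $\sum_{\ell\le k}\gamma^\ell\delta_{t+\ell}$, interchange the finite sums, and use $(1-\lambda)\sum_{k=\ell}^{\tau-t-2}\lambda^k=\lambda^\ell-\lambda^{\tau-t-1}$, so the collapsed weight $\lambda^{\tau-t-1}$ restores every coefficient to $\lambda^\ell$; the only cosmetic difference is that you track each TD error's full coefficient at once, whereas the paper first evaluates the geometric part and then adds the tail term.
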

We call this effect tail-mass collapse, see the blue bars of Figure \ref{fig:gae-tail-collapse}. Next, we suggest a new estimator that uses geometric weights normalized to fill only $\{0,....,\tau-t-1\}$.
\begin{definition}[Finite-time GAEs]
    We define the finite-time GAE estimators as
    \begin{align*}
        \hat \A_t^{\tau}
:=
\frac{1-\lambda}{1-\lambda^{\tau-t}}
\sum_{k=0}^{\tau-t-1}\lambda^k\,\hat \A_t^{(k)}.
    \end{align*}
    If $\tau=T$ the estimator is called fixed-time, otherwise termination-time GAE. 
\end{definition}
The orange bars in Figure \ref{fig:gae-tail-collapse} display the geometric weights of our finite-time GAE.
By renormalizing the geometric mass over the distinct $k$-step estimators supported by the available suffix $k\in\{0,\dots,\tau-t-1\}$, our estimator prevents the strong tail-mass collapse onto $\hat \A_t^{(\tau-t-1)}$ that occurs near the rollout end under truncated GAE (blue). 

Heuristically, the longest lookahead term $\hat \A_t^{(\tau-t-1)}$ is least affected by bootstrapping, hence it tends to incur smaller value-approximation (bootstrap) bias, but it typically has higher variance, since it aggregates the longest discounted sum of TD-errors.
Consequently, for fixed $\gamma$ and $\lambda$, our finite-time renormalization trades variance for bias and bootstrapping. At the same time, it restores the intended finite-horizon analogue of the geometric mixing interpretation of GAE, rather than implicitly collapsing the unobserved tail mass onto a single estimator. 
\begin{algorithm}[b]
\caption{Finite-time GAE (ours)}
\label{alg:gae}
\begin{algorithmic}[1]
\Require Policy $\pi_\theta$, value estimate $V$, discount $\gamma$, GAE parameter $\lambda$
\State Generate rollout $\{(s_t,a_t,r_t,v_t)\}_{t=0}^{\tau-1}$ with $v_t \gets V(s_t)$ for $t=0,\dots,\tau$
\State $\hat{\mathbb A}_\tau^\tau \gets 0$ \Comment{boundary condition}
\For{$t=\tau-1,\ldots,0$}
    \State $\delta_t \gets r_t + \gamma v_{t+1} - v_t$
    \State $\hat{\mathbb A}_t^\tau \gets \delta_t + \gamma\lambda\,
    \frac{1-\lambda^{\tau-t-1}}{1-\lambda^{\tau-t}}\,
    \hat{\mathbb A}_{t+1}^\tau$
\EndFor
\State \Return $\{\hat{\mathbb A}_t^\tau\}_{t=0}^{\tau-1}$
\end{algorithmic}
\end{algorithm}

As for the truncated GAE our finite-time GAE also satisfies a simple backwards recursion:\begin{figure*}[t]
    \includegraphics[width=\linewidth]{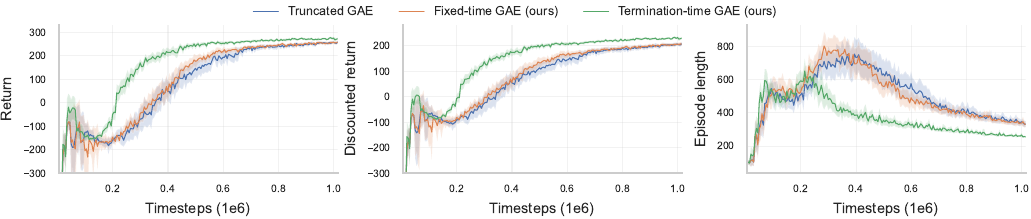}
    \caption{
    LunarLander-v3 learning curves with default PPO hyperparameters from Stable-Baselines3 Zoo. Left (middle): mean evaluation (discounted) return, i.e. sum of (discounted) rewards per episode. Right: mean evaluation episode length. Curves are averaged over 20 seeds with standard errors of seeds as the shaded regions. Both methods use identical default hyperparameters; only the advantage estimator differs (truncated GAE vs. our finite-time GAEs). Additional metrics and ablations are reported in Appendix \ref{sec: finite-time-gae}}
    \label{fig:lunar}
\end{figure*}
\begin{proposition}
 Using the terminal condition  $\hat \A_\tau^{\tau}:=0$, the finite-time estimator satisfies the recursion
\begin{align*}
\hat \A_t^{\tau}=\delta_t+\gamma\lambda\,
\frac{1-\lambda^{\tau-t-1}}{1-\lambda^{\tau-t}}\,
\hat \A_{t+1}^{\tau}, \quad t = \tau -1, \dots , 0.
\end{align*}
\end{proposition}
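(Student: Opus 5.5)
The plan is to reduce everything to the telescoped TD-error representation and then compare normalizations. First, following the same telescoping that gives $\hat\A_t^\infty=\sum_\ell(\gamma\lambda)^\ell\delta_{t+\ell}$, I will use that each $k$-step estimator with value network $V$ satisfies $\hat\A_t^{(k)}=\sum_{\ell=0}^{k}\gamma^\ell\delta_{t+\ell}$. Substituting this into the definition of $\hat\A_t^\tau$ and swapping the two finite sums gives
\begin{align*}
\hat\A_t^\tau=\frac{1-\lambda}{1-\lambda^{\tau-t}}\sum_{\ell=0}^{\tau-t-1}\gamma^\ell\delta_{t+\ell}\sum_{k=\ell}^{\tau-t-1}\lambda^k
=\sum_{\ell=0}^{\tau-t-1}(\gamma\lambda)^\ell\,\frac{1-\lambda^{\tau-t-\ell}}{1-\lambda^{\tau-t}}\,\delta_{t+\ell}.
\end{align*}
Here the inner sum is evaluated with the geometric series formula $\sum_{k=\ell}^{n-1}\lambda^k=\lambda^\ell(1-\lambda^{n-\ell})/(1-\lambda)$, where $n=\tau-t$.

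Second, I will verify the recursion directly from this closed form. In the sum for $\hat\A_t^\tau$, I split off the $\ell=0$ term. Its coefficient is $(1-\lambda^{\tau-t})/(1-\lambda^{\tau-t})=1$, so it is exactly $\delta_t$. For the remaining terms I reindex with $\ell=j+1$, which gives
\begin{align*}
\gamma\lambda\sum_{j=0}^{\tau-t-2}(\gamma\lambda)^j\frac{1-\lambda^{\tau-(t+1)-j}}{1-\lambda^{\tau-t}}\,\delta_{t+1+j}.
\end{align*}
Applying the closed form at $t+1$, this equals $\gamma\lambda\,\frac{1-\lambda^{\tau-t-1}}{1-\lambda^{\tau-t}}\,\hat\A_{t+1}^\tau$. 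The reason is that the two expressions differ only in their denominators, $1-\lambda^{\tau-t}$ versus $1-\lambda^{\tau-t-1}$, and multiplying by the prefactor $\frac{1-\lambda^{\tau-t-1}}{1-\lambda^{\tau-t}}$ swaps one for the other.

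Third, I will handle the boundary case $t=\tau-1$ separately. There the prefactor $\frac{1-\lambda^{0}}{1-\lambda}$ vanishes, so the recursion reduces to $\hat\A_{\tau-1}^\tau=\delta_{\tau-1}$. This matches the definition, since only $\hat\A_{\tau-1}^{(0)}=\delta_{\tau-1}$ appears and its normalized weight is $1$. It is also consistent with the terminal condition $\hat\A_\tau^\tau:=0$, because that term is multiplied by zero.

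I do not expect a genuine obstacle. The only places needing care are the sum interchange and keeping the index bookkeeping in the reindexing consistent. I also need to note the assumption $\lambda\in(0,1)$, which ensures the denominators $1-\lambda^{n}$ with $n\ge1$ are nonzero. For $\lambda=1$ the formulas would be read as limits, with weights $1/(\tau-t)$.
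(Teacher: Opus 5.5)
Your proposal is correct and follows the paper's proof essentially step for step. The paper proves the fixed-time case and transfers it pathwise to $\tau$, but the steps are yours: it interchanges the sums to obtain the TD-sum representation with weights $(\gamma\lambda)^\ell\frac{1-\lambda^{\tau-t-\ell}}{1-\lambda^{\tau-t}}$, splits off the $\ell=0$ term, reindexes, and identifies the remainder as $\gamma\lambda\frac{1-\lambda^{\tau-t-1}}{1-\lambda^{\tau-t}}\hat{\mathbb A}_{t+1}^{\tau}$, with $t=\tau-1$ handled separately as the base case.
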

To highlight the simple adaptation to truncated GAE we provide pseudocode in Algorithm \ref{alg:gae}. Further implementation details can be found in Appendix \ref{sec: finite-time-gae}. 

In Appendix \ref{sec: toy gae} we perform a simplified toy example computation to understand the variance effect of tail-mass collapse reweighting. 
It turns out that near the episode end covariances withing GAE reduce, see Figure \ref{fig:heatmaps} so that our finite-time GAE estimator should be beneficial in environments that crucially rely on the end of episodes, e.g. Lunar Lander, where actions shortly before landing are crucial.
\begin{figure}[t]
    \centering
     \includegraphics[]
    {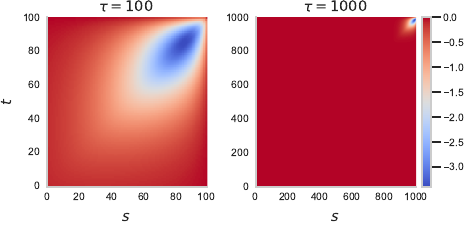}
    \vspace{-5mm}
    \caption{Covariance heatmaps differences for truncated GAE (PPO) vs.\ finite-time GAE (us) at fixed $\gamma = 0.999$, $\lambda=0.95$ and $\tau=200$ (left) and $\tau=1000$ (right) under simplified assumptions (see Appendix \ref{sec: finite-time-gae}). Color scale shows 
    $\Cov[\hat\A_t^\tau, \hat\A_s^\tau]-\Cov[\hat\A_t, \hat\A_s]$.}
    \label{fig:heatmaps}
\end{figure}

\textbf{Experiment:}
We evaluate this effect on LunarLander-v3, using the Stable-Baselines3 PPO implementation \cite{stablebaslines3} and modifying only the advantage estimation (as in Algorithm~\ref{alg:gae}).
In Figure \ref{fig:lunar} we report out-of-the-box results under a standard hyperparameter setting, comparing truncated GAE (blue), our fixed-time variant with $\tau=T=1000$ (green), and our termination-time variant with $\tau=\min\{T,\text{termination}\}$ (orange).
The learning curves show that the termination-time estimator learns substantially faster. It reaches high returns earlier and achieves shorter episode lengths (faster landing). 
A plausible explanation is that the termination-aware variant reduces the variance of the estimates precisely in the high-impact regime where $\tau-t$ is small, yielding more stable policy updates and faster learning.
In contrast, the fixed-horizon GAE performs similarly to truncated GAE, which is consistent with the theory. 
Appendix \ref{sec: finite-time-gae} provides robustness checks, including experiments with hyperparameter optimized separately per estimator.
As sanity check we ran a small experiment on Ant in Appendix \ref{sec: continuous control}; finite-time GAE also performs very well.
\section{Conclusion and Future Work}
This article contributes to the theory gap of PPO, under usual policy gradient assumptions. All appearing constants are huge and should be seen as giving structural understanding rather than direct practical insight (as always in policy gradient theory). We provided a bias analysis (Theorem \ref{Thm:1}) from which convergence statements can be derived, in the exact gradient setting (Theorem \ref{thm:convdeterministc}) and in the original PPO setting with RR (Theorem \ref{Thm:3}). The estimates shed light on the fact that additional biased PPO updates can improve the learning. PPO compensates small (safer) step-sizes by additional (free) biased gradient steps. While this is theoretical, we also identify a tail-mass collapse of truncated GAE used in practice. It is appealing that a tiny change in the GAE significantly improves e.g. Lunar Lander training (and Ant). Given the hardness of the problem and the length of our technical arguments we leave further steps to future work.

There is a lot of current interest in rigorous convergence for policy gradient algorithms. The biased policy gradient interpretation of PPO opens the door to optimization theory, but also a clean view on how to apply stochastic arguments, for instance, from random reshuffling. We believe that our paper might initiate interesting future work. (i) Regularization is a particularly active field. Since our interpretation of PPO is close to policy gradient theory, it sounds plausible that KL-regularization can be added to the analysis. (ii) Due to the increased interest in PPO variants without critic network, it would be interesting to see how our analysis applies to variants of GRPO. (iii) What kind of asymmetric clipping can be analysed formally, can one understand formal differences? (iv) We believe that our analysis from Theorem \ref{Thm:3} can be improved. First, by proving variance reduction effects of multi-rollout flattening and secondly, using less comparison in the RR analysis with the cycle start. 

For  finite-time GAE, next steps will contain a comprehensive experimental study to understand when our finite-time GAE performs better or worse than truncated GAE. On the theory side it would be interesting to see if in toy examples  one could quantify the bias-variance trade-offs in GAE.

\bibliographystyle{plain} 
\bibliography{bib} 

@INPROCEEDINGS{MuJoCo,
  author={Todorov, Emanuel and Erez, Tom and Tassa, Yuval},
  booktitle={2012 IEEE/RSJ International Conference on Intelligent Robots and Systems}, 
  title={MuJoCo: A physics engine for model-based control}, 
  year={2012},
  volume={},
  number={},
  pages={5026-5033},
  doi={10.1109/IROS.2012.6386109}}

@inproceedings{RR,
  title={How Good is SGD with Random Shuffling?},
  author={Itay Safran and Ohad Shamir},
  booktitle={Annual Conference Computational Learning Theory},
  year={2019},
  url={https://api.semanticscholar.org/CorpusID:199064321}
}

@InProceedings{Yuan,
  title = 	 { A general sample complexity analysis of vanilla policy gradient },
  author =       {Yuan, Rui and Gower, Robert M. and Lazaric, Alessandro},
  booktitle = 	 {Proceedings of The 25th International Conference on Artificial Intelligence and Statistics},
  pages = 	 {3332--3380},
  year = 	 {2022},
  editor = 	 {Camps-Valls, Gustau and Ruiz, Francisco J. R. and Valera, Isabel},
  volume = 	 {151},
  series = 	 {Proceedings of Machine Learning Research},
  month = 	 {28--30 Mar},
  publisher =    {PMLR},
  url = 	 {https://proceedings.mlr.press/v151/yuan22a.html},
}

@misc{bytedanceseed2025truncated,
  title         = {Truncated Proximal Policy Optimization},
  author        = {{ByteDance Seed}},
  year          = {2025},
  eprint        = {2506.15050},
  archivePrefix = {arXiv},
  primaryClass  = {cs.LG},
  note          = {arXiv:2506.15050}
}

@inproceedings{PPO2,
 author = {Jin, Ruinan and Li, Shuai and Wang, Baoxiang},
 booktitle = {International Conference on Learning Representations},
 editor = {B. Kim and Y. Yue and S. Chaudhuri and K. Fragkiadaki and M. Khan and Y. Sun},
 pages = {11594--11611},
 title = {On Stationary Point Convergence of PPO-Clip},
 volume = {2024},
 year = {2024}
}

@misc{PPO1,
  title         = {Non-Asymptotic Global Convergence of PPO-Clip},
  author        = {Liu, Yin and Dai, Qiming and Zhang, Junyu and Wen, Zaiwen},
  year          = {2025},
  eprint        = {2512.16565},
  archivePrefix = {arXiv},
  primaryClass  = {cs.LG},
  note          = {arXiv:2512.16565}
}

@article{sara1,
  title={Beyond Stationarity: Convergence Analysis of Stochastic Softmax Policy Gradient Methods},
  author={Sara Klein and Simon Weissmann and Leif D{\"o}ring},
  journal={ICLR},
  year={2024},
}

@article{Zhang,
author = {Zhang, Kaiqing and Koppel, Alec and Zhu, Hao and Ba\c{s}ar, Tamer},
title = {Global Convergence of Policy Gradient Methods to (Almost) Locally Optimal Policies},
journal = {SIAM Journal on Control and Optimization},
volume = {58},
number = {6},
pages = {3586-3612},
year = {2020},
doi = {10.1137/19M1288012},

URL = { 
    
        https://doi.org/10.1137/19M1288012
    
    

},
eprint = { 
    
        https://doi.org/10.1137/19M1288012
    
    

}
}

@InProceedings{mei2022global,
  title = 	 {On the Global Convergence Rates of Softmax Policy Gradient Methods},
  author =       {Mei, Jincheng and Xiao, Chenjun and Szepesvari, Csaba and Schuurmans, Dale},
  booktitle = 	 {Proceedings of the 37th International Conference on Machine Learning},
  pages = 	 {6820--6829},
  year = 	 {2020},
  volume = 	 {119},
  series = 	 {Proceedings of Machine Learning Research},
  month = 	 {13--18 Jul},
  publisher =    {PMLR},
  url = 	 {https://proceedings.mlr.press/v119/mei20b.html}
}

@misc{wang2025improvingvalueestimationcritically,
      title={Improving Value Estimation Critically Enhances Vanilla Policy Gradient}, 
      author={Tao Wang and Ruipeng Zhang and Sicun Gao},
      year={2025},
      eprint={2505.19247},
      archivePrefix={arXiv},
      primaryClass={cs.LG},
      url={https://arxiv.org/abs/2505.19247}, 
}

@misc{zhang2022deeperlookdiscountingmismatch,
      title={A Deeper Look at Discounting Mismatch in Actor-Critic Algorithms}, 
      author={Shangtong Zhang and Romain Laroche and Harm van Seijen and Shimon Whiteson and Remi Tachet des Combes},
      year={2022},
      eprint={2010.01069},
      archivePrefix={arXiv},
      primaryClass={cs.LG},
      url={https://arxiv.org/abs/2010.01069}, 
}

@inproceedings{che2023correctingdiscountfactormismatchonpolicy,
author = {Che, Fengdi and Vasan, Gautham and Mahmood, A. Rupam},
title = {Correcting discount-factor mismatch in on-policy policy gradient methods},
year = {2023},
articleno = {168},
numpages = {23},
location = {Honolulu, Hawaii, USA},
series = {ICML'23}
}

@article{sutton2018reinforcement,
  title={Reinforcement learning: An introduction},
  author={Sutton, Richard S and Barto, Andrew G},
  journal={MIT press},
  year={2018}
}

@article{silver2016mastering,
  title={Mastering the game of Go with deep neural networks and tree search},
  author={Silver, David and Huang, Aja and Maddison, Chris J and Guez, Arthur and Sifre, Laurent and Van Den Driessche, George and Schrittwieser, Julian and Antonoglou, Ioannis and Panneershelvam, Veda and Lanctot, Marc and others},
  journal={nature},
  volume={529},
  number={7587},
  pages={484--489},
  year={2016},
  publisher={Nature Publishing Group}
}

@misc{berner2019dota,
      title={Dota 2 with Large Scale Deep Reinforcement Learning}, 
      author={OpenAI and : and Christopher Berner and Greg Brockman and Brooke Chan and Vicki Cheung and Przemysław Dębiak and Christy Dennison and David Farhi and Quirin Fischer and Shariq Hashme and Chris Hesse and Rafal Józefowicz and Scott Gray and Catherine Olsson and Jakub Pachocki and Michael Petrov and Henrique P. d. O. Pinto and Jonathan Raiman and Tim Salimans and Jeremy Schlatter and Jonas Schneider and Szymon Sidor and Ilya Sutskever and Jie Tang and Filip Wolski and Susan Zhang},
      year={2019},
      eprint={1912.06680},
      archivePrefix={arXiv},
      primaryClass={cs.LG},
      url={https://arxiv.org/abs/1912.06680}, 
}

@article{levine2016end,
  title={End-to-end training of deep visuomotor policies},
  author={Levine, Sergey and Finn, Chelsea and Darrell, Trevor and Abbeel, Pieter},
  journal={The Journal of Machine Learning Research},
  volume={17},
  number={1},
  pages={1334--1373},
  year={2016},
  publisher={JMLR. org}
}

@article{openai2019solving,
  title={Solving rubik's cube with a robot hand},
  author={OpenAI and Akkaya, Ilge and Andrychowicz, Marcin and Chociej, Maciek and Litwin, Mateusz and McGrew, Bob and Petron, Alex and Paino, Alex and Plappert, Matthias and Powell, Glenn and others},
  journal={arXiv preprint arXiv:1910.07113},
  year={2019}
}

@article{ouyang2022training,
  title={Training language models to follow instructions with human feedback},
  author={Ouyang, Long and Wu, Jeffrey and Jiang, Xu and Almeida, Diogo and Wainwright, Carroll and Mishkin, Pamela and Zhang, Chong and Agarwal, Sandhini and Slama, Katarina and Ray, Alex and others},
  journal={Advances in Neural Information Processing Systems},
  volume={35},
  pages={27730--27744},
  year={2022}
}

@inproceedings{christiano2017deep,
  title={Deep reinforcement learning from human preferences},
  author={Christiano, Paul F and Leike, Jan and Brown, Tom and Martic, Miljan and Legg, Shane and Amodei, Dario},
  booktitle={Advances in neural information processing systems},
  pages={4299--4307},
  year={2017}
}

@article{PPO,
  title={Proximal policy optimization algorithms},
  author={Schulman, John and Wolski, Filip and Dhariwal, Prafulla and Radford, Alec and Klimov, Oleg},
  journal={arXiv preprint arXiv:1707.06347},
  year={2017}
}

@inproceedings{sutton1999policy,
  title={Policy gradient methods for reinforcement learning with function approximation},
  author={Sutton, Richard S and McAllester, David and Singh, Satinder and Mansour, Yishay},
  booktitle={Advances in neural information processing systems},
  pages={1057--1063},
  year={1999}
}

@article{williams1992simple,
  title={Simple statistical gradient-following algorithms for connectionist reinforcement learning},
  author={Williams, Ronald J},
  journal={Machine learning},
  volume={8},
  number={3},
  pages={229--256},
  year={1992},
  publisher={Springer}
}

@inproceedings{A2C,
  title={Asynchronous methods for deep reinforcement learning},
  author={Mnih, Volodymyr and Badia, Adria Puigdomenech and Mirza, Mehdi and Graves, Alex and Lillicrap, Timothy and Harley, Tim and Silver, David and Kavukcuoglu, Koray},
  booktitle={International conference on machine learning},
  pages={1928--1937},
  year={2016},
  organization={PMLR}
}

@inproceedings{engstrom2020implementation,
  title={Implementation matters in deep policy gradients: A case study on ppo and trpo},
  author={Engstrom, Logan and Ilyas, Andrew and Santurkar, Shibani and Tsipras, Dimitris and Janoos, Firdaus and Rudolph, Larry and Madry, Aleksander},
  booktitle={International Conference on Learning Representations},
  year={2020}
}

@article{Mishenko,
  title={Random reshuffling: Simple analysis with vast improvements},
  author={Mishchenko, Konstantin and Khaled, Ahmed and Richt{\'a}rik, Peter},
  journal={Advances in Neural Information Processing Systems},
  volume={33},
  pages={17309--17320},
  year={2020}
}

@misc{formalismimplementationgap,
      title={The Formalism-Implementation Gap in Reinforcement Learning Research}, 
      author={Pablo Samuel Castro},
      year={2025},
      eprint={2510.16175},
      archivePrefix={arXiv},
      primaryClass={cs.LG},
      url={https://arxiv.org/abs/2510.16175}, 
}

@misc{Song,
      title={Partial advantage estimator for proximal policy optimization}, 
      author={Xiulei Song and Yizhao Jin and Greg Slabaugh and Simon Lucas},
      year={2023},
      eprint={2301.10920},
      archivePrefix={arXiv},
      primaryClass={cs.LG},
      url={https://arxiv.org/abs/2301.10920}, 
}

@misc{GAE,
      title={High-Dimensional Continuous Control Using Generalized Advantage Estimation}, 
      author={John Schulman and Philipp Moritz and Sergey Levine and Michael Jordan and Pieter Abbeel},
      year={2018},
      eprint={1506.02438},
      archivePrefix={arXiv},
      primaryClass={cs.LG},
      url={https://arxiv.org/abs/1506.02438}, 
}

@article{Papini,
author = {Papini, Matteo and Pirotta, Matteo and Restelli, Marcello},
title = {Smoothing policies and safe policy gradients},
year = {2022},
issue_date = {Nov 2022},
publisher = {Kluwer Academic Publishers},
address = {USA},
volume = {111},
number = {11},
issn = {0885-6125},
url = {https://doi.org/10.1007/s10994-022-06232-6},
doi = {10.1007/s10994-022-06232-6},
journal = {Mach. Learn.},
month = nov,
pages = {4081–4137},
numpages = {57}
}

@inproceedings{samu,
  title     = {REINFORCE Converges to Optimal Policies with Any Learning Rate},
  author    = {Robertson, Samuel and Chu, Thang and Dai, Bo and Schuurmans, Dale and Szepesvari, Csaba and Mei, Jincheng},
  booktitle = {Advances in Neural Information Processing Systems (NeurIPS)},
  year      = {2025}
}

@article{Kumar,
author = {Kumar, Harshat and Koppel, Alec and Ribeiro, Alejandro},
title = {On the sample complexity of actor-critic method for reinforcement learning with function approximation},
year = {2023},
issue_date = {Jul 2023},
address = {USA},
volume = {112},
number = {7},
issn = {0885-6125},
journal = {Mach. Learn.},
month = feb,
pages = {2433–2467},
numpages = {35}
}

@article{agarwal,
  author  = {Alekh Agarwal and Sham M. Kakade and Jason D. Lee and Gaurav Mahajan},
  title   = {On the Theory of Policy Gradient Methods: Optimality, Approximation, and Distribution Shift},
  journal = {Journal of Machine Learning Research},
  year    = {2021},
  volume  = {22},
  number  = {98},
  pages   = {1--76},
  url     = {http://jmlr.org/papers/v22/19-736.html}
}

@article{stablebaslines3,
author = {Raffin, Antonin and Hill, Ashley and Gleave, Adam and Kanervisto, Anssi and Ernestus, Maximilian and Dormann, Noah},
title = {Stable-baselines3: reliable reinforcement learning implementations},
year = {2021},
issue_date = {January 2021},
publisher = {JMLR.org},
volume = {22},
number = {1},
issn = {1532-4435},
journal = {J. Mach. Learn. Res.},
month = jan,
articleno = {268},
numpages = {8}
}

@inproceedings{Thomas,
author = {Thomas, Philip S.},
title = {Bias in natural actor-critic algorithms},
year = {2014},
publisher = {JMLR.org},
booktitle = {Proceedings of the 31st International Conference on International Conference on Machine Learning - Volume 32},
pages = {I–441–I–448},
location = {Beijing, China},
series = {ICML'14}
}

@InProceedings{fisher,
  title = 	 {Stochastic Policy Gradient Methods: Improved Sample Complexity for {F}isher-non-degenerate Policies},
  author =       {Fatkhullin, Ilyas and Barakat, Anas and Kireeva, Anastasia and He, Niao},
  booktitle = 	 {Proceedings of the 40th International Conference on Machine Learning},
  pages = 	 {9827--9869},
  year = 	 {2023},
  volume = 	 {202},
  series = 	 {Proceedings of Machine Learning Research},
  month = 	 {23--29 Jul},
  publisher =    {PMLR},
}

@misc{tapered,
      title={Tapered Off-Policy REINFORCE: Stable and efficient reinforcement learning for LLMs}, 
      author={Nicolas Le Roux and Marc G. Bellemare and Jonathan Lebensold and Arnaud Bergeron and Joshua Greaves and Alex Fréchette and Carolyne Pelletier and Eric Thibodeau-Laufer and Sándor Toth and Sam Work},
      year={2025},
      eprint={2503.14286},
      archivePrefix={arXiv},
      primaryClass={cs.LG},
      url={https://arxiv.org/abs/2503.14286}, 
}

@misc{A2CisPPO,
      title={A2C is a special case of PPO}, 
      author={Shengyi Huang and Anssi Kanervisto and Antonin Raffin and Weixun Wang and Santiago Ontañón and Rousslan Fernand Julien Dossa},
      year={2022},
      eprint={2205.09123},
      archivePrefix={arXiv},
      primaryClass={cs.LG},
      url={https://arxiv.org/abs/2205.09123} 
}

@inproceedings{liu,
 author = {Liu, Boyi and Cai, Qi and Yang, Zhuoran and Wang, Zhaoran},
 booktitle = {Advances in Neural Information Processing Systems},
 editor = {H. Wallach and H. Larochelle and A. Beygelzimer and F. d\textquotesingle Alch\'{e}-Buc and E. Fox and R. Garnett},
 pages = {},
 publisher = {Curran Associates, Inc.},
 title = {Neural Trust Region/Proximal Policy Optimization Attains Globally Optimal Policy},
 url = {https://proceedings.neurips.cc/paper_files/paper/2019/file/227e072d131ba77451d8f27ab9afdfb7-Paper.pdf},
 volume = {32},
 year = {2019}
}

@book{mcmixtimes,
  series   = {ProQuest Ebook Central},
  isbn     = {9781470442323},
  year     = {2017},
  title    = {Markov chains and mixing times},
  edition  = {Second edition},
  language = {eng},
  address  = {Providence, Rhode Island},
  author   = {Levin, David Asher},
}

@inproceedings{TRPO,
author = {Schulman, John and Levine, Sergey and Moritz, Philipp and Jordan, Michael and Abbeel, Pieter},
title = {Trust region policy optimization},
year = {2015},
publisher = {JMLR.org},
booktitle = {Proceedings of the 32nd International Conference on International Conference on Machine Learning - Volume 37},
pages = {1889–1897},
numpages = {9},
location = {Lille, France},
series = {ICML'15}
}

@inproceedings{aoarl,
  author    = {Kakade, Sham and Langford, John},
  title     = {Approximately Optimal Approximate Reinforcement Learning},
  year      = {2002},
  isbn      = {1558608737},
  publisher = {Morgan Kaufmann Publishers Inc.},
  address   = {San Francisco, CA, USA},
  booktitle = {Proceedings of the Nineteenth International Conference on Machine Learning},
  pages     = {267–274},
  numpages  = {8},
  series    = {ICML '02}
}

\newpage
\appendix
\onecolumn


\section{Notation and preliminary results}
Let us fix some notation. We will denote by $|\cdot|$ the Euclidean norm, by $|\cdot|_\infty$ the maximum norm on $\R^d$, and by $\|\cdot\|_\infty$ the maximum norm over the state and/or action space. The gradient $\nabla_\theta \pi_\theta(s,a)$ refers to the derivative with respect to the policy parameter. We sometimes drop the identifier $\theta$ from the gradient to avoid confusion. Recall that we consider discounted finite-horizon MDPs, with value functions
\begin{align*}
    V^\pi(s)\coloneq\E^\pi_s\Big[\sum_{t=0}^{T-1}\gamma^t R_t \mid S_0=s\Big]
\end{align*} 
and
\begin{align*}
 V_t^\pi(s)\coloneq\E^\pi\Big[\sum_{i=t}^{T-1}\gamma^{i-t} R_i\mid S_t=s\Big]\quad\text{and}\quad
Q_t^\pi(s,a)\coloneq\E^\pi\Big[\sum_{i=t}^{T-1}\gamma^{i-t} R_i \mid S_t=s,A_t=a\Big].
\end{align*}
For completeness, we also set $V_T^\pi\equiv 0$ and $Q_T^\pi\equiv 0$. 
We also define the advantage $\mathbb A_t^\pi(s,a)\coloneq Q_t^\pi(s,a)-V_t^\pi(s)$ and denote the marginal state distribution by $d_t^{\pi,s'}$, i.e., 
\begin{align*}
d_t^{\pi,s'}(s)=\P^\pi_{s'}(S_t=s).    
\end{align*}
Throughout this section, we treat the advantage function as known, as if we had access to a perfect critic. We will always work with a continuously differentiable parametrized family of policies $\{\pi_\theta\}_{\theta \in \R^d}$, and we abbreviate $J(\theta)= V^{\pi_\theta}(\mu):=\sum_s V^{\pi_\theta}(s)\mu(s)$ and $d_t^\pi\coloneq d_t^{\pi,\mu}$ for some fixed initial state distribution $\mu$. We will always assume that
\begin{equation}\label{eq:full-support}
\pi_{\theta}(a\,;\, s) > 0 \quad \text{for all } \theta \in \R^d, (s,a)\in\mathcal S\times\mathcal A,
\end{equation}
to ensure that likelihood ratios are well-defined. This is typically fulfilled by a final application of a softmax normalization.

To prove convergence of PPO we will have to assume properties on the underlying Markov decision model (bounded rewards) and policy (bounded and Lipschitz continuous score function), which will imply $L$-smoothness of the parametrized value function, see Proposition~\ref{lem:smooth}. These assumptions are standard in the convergence analysis of policy gradient methods.
\begin{assumption}[Bounded rewards]\label{ass:boundedrewards}
    The rewards are uniformly bounded in absolute value by $R_\ast$.
\end{assumption}
Note that under Assumption \ref{ass:boundedrewards} the value function, the $Q$-function, and the advantage are also bounded. Most relevant for us, one has $||\mathbb A_t^\pi||_\infty\leq \frac{1-\gamma ^T}{1-\gamma} 2 R_\ast$ for all $t\leq T-1$. We use this bound in the deterministic setting. 
In the stochastic analysis we assume access to biased and bounded advantage estimates.
\begin{assumption}[Biased and bounded advantage estimates]\label{ass:unbiased-bounded-adv}
There exists constants $A_\ast<\infty$ and $\delta\ge0$ such that for any $\theta$ and every $t\in\{0,\dots,T-1\}$ we have access to
an advantage estimate $\hat{\A}_t$ satisfying
\[
\E^{\pi_\theta}[|\mathbb{E}^{\pi_{\theta}}\!\big[\hat{\A}_t \mid S_t, A_t\big]
- \A^{\pi_{\theta}}_t(S_t,A_t)|^2]\le \delta^2
\qquad\text{and}\qquad 
\big|\hat{\A}_t\big| \le A_\ast
\quad \text{a.s.}
\]
\end{assumption}
Assuming access to a theoretical critical trivially satisfies an unbiased and bounded advantage estimator assumption.

\begin{assumption}[Bounded score function]\label{ass:score}
    The score function is bounded, i.e. $$\Pi_\ast:= \sup_\theta ||\nabla_\theta \log(\pi_\theta)||_\infty<\infty.$$
\end{assumption}
Note that a bounded score function implies bounded gradients, since
\begin{align*}
    |\nabla_\theta \pi_\theta(a \,;\, s)|
    =\pi_\theta(a \,;\, s)|\nabla_\theta \log \pi_\theta(a \,;\, s)|\leq \Pi_\ast,
\end{align*}
and, using the mean-value theorem, Lipschitz continuity of the policies.
\begin{assumption}[Lipschitz score function]\label{ass:Lipschitz}
There exists $L>0$ such that for all $(s,a) \in \mathcal S \times \mathcal A$ and all $\theta,\theta' \in \R^d$,
\begin{align*}
|\nabla\log\pi_\theta(a\,;\, s)
      - \nabla\log\pi_{\theta'}(a\,;\, s)|
\le L_{\text{s}} |\theta-\theta'|.    
\end{align*}
\end{assumption}
We refer, for instance, to page 7 of \cite{Zhang} for a discussion of example policies that satisfy these assumptions.

In what follows, we denote by $\mathrm{TV}$ the total variation distance
\begin{align} \label{eq:TVdefinition}
    \mathrm{TV}(P,Q)=\frac{1}{2}\sum_{a\in \mathcal A}\big|P(a)-Q(a)\big|=\sup_{B\subseteq \mathcal A} \big|P(B)-Q(B)\big|=\frac{1}{2}\sup_{||f||_\infty\leq 1}\Big|\int_\mathcal A f dP-\int_\mathcal A f dQ\Big|
\end{align}
for probability measures $P$ and $Q$ on the finite action space $\mathcal A$.
\begin{lemma}\label{lem:TVPOLICY}
Under Assumption \ref{ass:score}, one has
\begin{align}
\mathrm{TV}\!\big(\pi_\theta(\,\cdot\,;\,s),\pi_{\theta'}(\,\cdot\,;\,s)\big)
&\le \tfrac{1}{2}\Pi_\ast\,|\theta-\theta'|, \quad \text{ for all } \theta, \theta' \in \R^d \text{ and } s \in \mathcal S. 
\end{align}
\end{lemma}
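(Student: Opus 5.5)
We prove Lemma \ref{lem:TVPOLICY} by a mean-value argument along the segment joining $\theta'$ to $\theta$, using the first representation of $\mathrm{TV}$ in \eqref{eq:TVdefinition}. Fix $s\in\mathcal S$ and set $\theta_u:=\theta'+u(\theta-\theta')$ for $u\in[0,1]$. Because the policy class is continuously differentiable in the parameter, the fundamental theorem of calculus gives, for each $a\in\mathcal A$,
\begin{align*}
\pi_\theta(a\,;\,s)-\pi_{\theta'}(a\,;\,s)=\int_0^1 \nabla_\theta\pi_{\theta_u}(a\,;\,s)\cdot(\theta-\theta')\,du .
\end{align*}

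Next we rewrite the gradient with the identity $\nabla_\theta \pi_\theta=\pi_\theta\nabla_\theta\log\pi_\theta$, which is valid by the full-support condition \eqref{eq:full-support}. Cauchy--Schwarz and Assumption \ref{ass:score} then give
\begin{align*}
|\pi_\theta(a\,;\,s)-\pi_{\theta'}(a\,;\,s)|\le \int_0^1 \pi_{\theta_u}(a\,;\,s)\,\Pi_\ast\,|\theta-\theta'|\,du .
\end{align*}

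Summing over $a\in\mathcal A$ and exchanging the finite sum with the integral, the factor $\sum_a \pi_{\theta_u}(a\,;\,s)=1$ appears for every $u$. Hence $\sum_a|\pi_\theta(a;s)-\pi_{\theta'}(a;s)|\le \Pi_\ast|\theta-\theta'|$. Multiplying by $\tfrac12$ gives the stated bound.

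The one delicate point is to keep the factor $\pi_{\theta_u}(a;s)$ inside the integral, so that summing over actions yields $1$. If we instead used the cruder bound $|\nabla\pi|\le\Pi_\ast$, we would pick up an extra factor $|\mathcal A|$ in the constant. No other obstacle is expected.
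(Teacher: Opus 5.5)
Your proposal is correct and is essentially the paper's proof. The paper likewise writes each $|\pi_\theta(a;s)-\pi_{\theta'}(a;s)|$ as an integral of $\nabla\pi$ along the segment, rewrites $\nabla\pi=\pi\,\nabla\log\pi$, bounds the score by $\Pi_\ast$, and uses $\sum_a\pi_{\varphi(t)}(a;s)=1$ inside the integral to obtain $\tfrac12\Pi_\ast|\theta-\theta'|$.
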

\begin{proof}
  Assumption~\ref{ass:score} together with $\nabla_\theta \pi_\theta(a \,;\, s)=\pi_\theta(s \,;\, a)\nabla_\theta \log \pi_\theta(s \,;\, a)$ implies for all $s \in \mathcal S$ that
\begin{align} \begin{split} \label{eq:lipschitzpolicy}
\mathrm{TV}\!\left(\pi_\theta(\,\cdot\,;\,s),\pi_{\theta'}(\,\cdot\,;\,s)\right)
&=\frac{1}{2}
\sum_a \bigl|\pi_\theta(a \,;\, s) - \pi_{\theta'}(a \,;\, s)\bigr|\\
&\le\frac{1}{2} \sum_a \int_0^1 
| \nabla \pi_{\varphi(t)}(a \,;\, s)^\top (\theta-\theta') | \, dt \\
&\le\frac{1}{2} \int_0^1 \underbrace{\sum_a 
\pi_{\varphi(t)}(a \,;\, s)}_{=1} \Pi_\ast |\theta-\theta'| \, dt= \frac{1}{2}\Pi_\ast |\theta - \theta'|,
\end{split}
\end{align}
where $\varphi(t)= (1-t) \theta+t\theta'$.
\end{proof}

\section{Properties of the parametrized value functions}
In this section, we collect basic properties of the value function, most importantly the $L$-smoothness. Although smoothness of the parametrized value function is well known in the literature, existing proofs typically rely on  slightly stronger assumptions or are given for either infinite-time horizon or finite-time non-discounted MDPs; see, for example, \cite{agarwal, Yuan, Papini, fisher}. For the reader's convenience, we provide self-contained proofs that differ from those in the cited works. The technique developed here will also be used below to prove the estimates for the gradient bias of PPO. 

First, we recall the standard policy gradient theorem for discounted finite-time MDPs:
\begin{proposition}\label{prop:pgt}
Under Assumptions \ref{ass:boundedrewards} and \ref{ass:score} the gradient of the value function with respect to the policy parameter is given by
    \begin{align} \label{eq:policygradient1}
        \nabla_\theta J(\theta)=\sum_{t=0}^{T-1} \gamma^t \E^{\pi_\theta}\Big[\nabla_\theta \log\big(\pi_\theta(A_t\,;\,S_t)\big) \mathbb A_{t}^{\pi_\theta}(S_t,A_t)\Big].
    \end{align}
\end{proposition}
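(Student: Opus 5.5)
We prove the policy gradient theorem for the discounted finite-horizon setting by differentiating the trajectory distribution directly, then converting rewards-to-go into advantages.

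\emph{Step 1: trajectory representation.} Write
\[
J(\theta)=\sum_{\tau}\P^{\pi_\theta}(\tau)\sum_{t=0}^{T-1}\gamma^t r_t(\tau),
\]
where the sum runs over the finitely many trajectories $\tau=(s_0,a_0,\dots,s_{T-1},a_{T-1})$. Here
\[
\P^{\pi_\theta}(\tau)=\mu(s_0)\prod_{i}\pi_\theta(a_i;s_i)\,p(s_{i+1}\mid s_i,a_i).
\]
If rewards are random given $(s,a)$, replace them by their bounded conditional means; Assumption~\ref{ass:boundedrewards} makes all sums finite. Since state and action spaces are finite and $T<\infty$, we may differentiate term by term. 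Only the policy factors depend on $\theta$, and the full support \eqref{eq:full-support} allows the log-derivative trick $\nabla\P^{\pi_\theta}(\tau)=\P^{\pi_\theta}(\tau)\sum_{i}\nabla\log\pi_\theta(a_i;s_i)$. This yields the REINFORCE form
\[
\nabla J(\theta)=\E^{\pi_\theta}\Big[\sum_{i}\nabla\log\pi_\theta(A_i;S_i)\sum_{t}\gamma^tR_t\Big].
\]
Assumption~\ref{ass:score} guarantees that all quantities are bounded and well defined.

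\emph{Step 2: causality.} For $t<i$, condition on the history up to $(S_i)$: we have $\E[\nabla\log\pi_\theta(A_i;S_i)\mid S_i]=\sum_a\nabla\pi_\theta(a;S_i)=\nabla 1=0$, while $R_t$ is measurable with respect to that history. These terms therefore vanish, leaving
\[
\sum_i\gamma^i\,\E\big[\nabla\log\pi_\theta(A_i;S_i)\,R_i^T\big].
\]

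\emph{Step 3: replace rewards-to-go by the advantage.} Condition on $(S_i,A_i)$ and use the Markov property: $\E[R_i^T\mid S_i,A_i]=Q_i^{\pi_\theta}(S_i,A_i)$. This requires that $\E^\pi[\cdot\mid S_i=s,A_i=a]$ does not depend on the past, which holds for a stationary policy and Markov dynamics. Then subtract the baseline $V_i^{\pi_\theta}(S_i)$, which is allowed by the same zero-mean score identity used in Step 2. This gives \eqref{eq:policygradient1}.

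\emph{Main obstacle.} The only delicate points are bookkeeping. Conditioning on $S_i=s$ must be restricted to $s$ with $\P(S_i=s)>0$; the other states contribute zero. One must also confirm that interchanging gradient and expectation is legitimate, and it is, because all sums are finite.
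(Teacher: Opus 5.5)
Your argument is correct: the likelihood-ratio identity over the finitely many trajectories yields the REINFORCE form, the causality step removes the terms with $t<i$, and conditioning on $(S_i,A_i)$ together with the zero-mean baseline $V^{\pi_\theta}_i(S_i)$ gives exactly \eqref{eq:policygradient1}. The paper does not prove this proposition (it is recalled as standard), but the route it indicates in Section~2 is precisely yours: the likelihood-ratio identity to the rewards-to-go form, then averaging to $Q^{\pi_\theta}_t$ and subtracting a baseline.
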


\begin{lemma}[Lipschitz continuity of $J$]
\label{lem:value}
Under Assumptions \ref{ass:boundedrewards} and \ref{ass:score}, one has for all $t \in \{0,\ldots,T\}$,
$s \in \mathcal{S}$, and $\theta,\theta' \in \mathbb R^d$,
\begin{align} \label{eq:5634956926}
\big|V_t^{\pi_\theta}(s) - V_t^{\pi_{\theta'}}(s)\big|
\leq 
\frac{1-(T-t+1)\gamma^{T-t}+(T-t)\gamma^{T-t+1}}{(1-\gamma)^2}\,\Pi_\ast R_\ast\, |\theta - \theta'|.
\end{align}
In particular, 
\[
\big|J(\theta) - J(\theta')\big|
\le
\frac{1-(T+1)\gamma^T+T\gamma^{T+1}}{(1-\gamma)^2}\,\Pi_\ast R_\ast\,
|\theta - \theta'|.
\]
\end{lemma}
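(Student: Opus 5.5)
The plan is a backward induction in $t$ on the quantity
\[
e_t:=\sup_{s\in\mathcal S}\big|V_t^{\pi_\theta}(s)-V_t^{\pi_{\theta'}}(s)\big|,
\]
using the one-step Bellman recursion. The key observation is that the claimed constant is a closed form of a finite sum. With $n:=T-t$, differentiating the geometric series gives
\[
\frac{1-(n+1)\gamma^{n}+n\gamma^{n+1}}{(1-\gamma)^2}=\sum_{m=0}^{n-1}(m+1)\gamma^{m}.
\]
So it suffices to prove $e_t\le \Pi_\ast R_\ast |\theta-\theta'|\sum_{m=0}^{T-t-1}(m+1)\gamma^m$.

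For the recursion step, write
\[
V_t^{\pi}(s)=\sum_a \pi(a;s)\,Q_t^{\pi}(s,a),\qquad Q_t^{\pi}(s,a)=\E[R_t\mid s,a]+\gamma\sum_{s'}p(s'\mid s,a)V_{t+1}^{\pi}(s').
\]
Adding and subtracting $\sum_a\pi_\theta(a;s)Q_t^{\pi_{\theta'}}(s,a)$ splits the difference into two parts:
\[
V_t^{\pi_\theta}(s)-V_t^{\pi_{\theta'}}(s)=\sum_a\big(\pi_\theta(a;s)-\pi_{\theta'}(a;s)\big)Q_t^{\pi_{\theta'}}(s,a)+\gamma\sum_a\pi_\theta(a;s)\sum_{s'}p(s'\mid s,a)\big(V_{t+1}^{\pi_\theta}-V_{t+1}^{\pi_{\theta'}}\big)(s').
\]

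\emph{First term.} Bounded rewards give $|Q_t^{\pi_{\theta'}}|\le R_\ast\sum_{i=0}^{T-t-1}\gamma^i$. Lemma \ref{lem:TVPOLICY} gives $\sum_a|\pi_\theta(a;s)-\pi_{\theta'}(a;s)|=2\,\mathrm{TV}\le \Pi_\ast|\theta-\theta'|$. Hence the first term is at most $\Pi_\ast R_\ast|\theta-\theta'|\sum_{i=0}^{T-t-1}\gamma^i$.

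\emph{Second term.} It is a convex combination of values of $V_{t+1}^{\pi_\theta}-V_{t+1}^{\pi_{\theta'}}$, so it is bounded by $\gamma e_{t+1}$.

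Together this yields
\[
e_t\le \Pi_\ast R_\ast|\theta-\theta'|\sum_{i=0}^{T-t-1}\gamma^i+\gamma e_{t+1},\qquad e_T=0,
\]
where $e_T=0$ holds because $V_T\equiv 0$.

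Unrolling the recursion gives
\[
e_t\le \Pi_\ast R_\ast|\theta-\theta'|\sum_{j=0}^{n-1}\gamma^j\sum_{i=0}^{n-1-j}\gamma^i .
\]
The double sum runs over pairs $(i,j)$ with $i+j\le n-1$. Grouping by $m=i+j$, each $m$ occurs $m+1$ times, so the double sum equals $\sum_{m=0}^{n-1}(m+1)\gamma^m$. This is exactly the desired constant, which also covers $t=T$ trivially. Formally I would phrase this as an induction: the inductive hypothesis at $t+1$ plugged into the recursion reproduces the identity $\sum_{i<n}\gamma^i+\gamma\sum_{m<n-1}(m+1)\gamma^m=\sum_{m<n}(m+1)\gamma^m$.

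For the statement about $J$, note $J(\theta)=\sum_s\mu(s)V_0^{\pi_\theta}(s)$, so $|J(\theta)-J(\theta')|\le e_0$, which is the case $t=0$.

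The only step requiring care is the bookkeeping of the constant. The analytic content is just the TV bound of Lemma \ref{lem:TVPOLICY} together with the sup bound on $Q$, so I expect no real obstacle. One should also check that rewards possibly depending on $(s,a)$, or being random, cause no issue: the policy enters only through the action choice, and the bound $|R_t|\le R_\ast$ is used only inside the $Q$ bound.
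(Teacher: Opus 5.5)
Your proof is correct and follows the paper's argument: a backward induction in $t$ via the one-step Bellman recursion. The difference splits into a policy-mismatch term, bounded by Lemma \ref{lem:TVPOLICY} times the sup bound on $Q_t^{\pi_{\theta'}}$, and a propagation term, bounded by $\gamma\|V_{t+1}^{\pi_\theta}-V_{t+1}^{\pi_{\theta'}}\|_\infty$. Your identification of the constant as $\sum_{m=0}^{T-t-1}(m+1)\gamma^m$ simply spells out the ``straightforward calculation'' the paper leaves implicit, and that form also remains valid at $\gamma=1$.
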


\begin{proof}
The proof proceeds by backward induction on $t$.
For $t=T$, the claim holds as $V_T \equiv 0$.

Assume that the bound holds at time $t+1$. To use the induction hypothesis, we apply the (finite-time) Bellman recursion
\[
V_t^{\pi_\theta}(s) - V_t^{\pi_{\theta'}}(s)
=
(T_{\pi_\theta} V_{t+1}^{\pi_\theta})(s)
-
(T_{\pi_{\theta'}} V_{t+1}^{\pi_{\theta'}})(s),
\]
where $(T_{\pi} V)(s) = \E_{a \sim \pi(\,\cdot \,;\, s), s' \sim p(\,\cdot \,;\, s,a)}[r(s,a) + \gamma  V(s')]$. We now decompose the difference
\[
|V_t^{\pi_\theta}(s) - V_t^{\pi_{\theta'}}(s)|
\le
\underbrace{|T_{\pi_\theta} (V_{t+1}^{\pi_\theta} - V_{t+1}^{\pi_{\theta'}})(s)|}_{\text{(A)}}
+
\underbrace{|(T_{\pi_\theta} - T_{\pi_{\theta'}}) V_{t+1}^{\pi_{\theta'}}(s)|}_{\text{(B)}}.
\]
Since $T_{\pi_\theta}$ is a max-norm contraction, we get
\[
\text{(A)} \le \gamma \|V_{t+1}^{\pi_\theta} - V_{t+1}^{\pi_{\theta'}}\|_\infty.
\]
To deal with term (B), we use the Bellman operator explicitly. By definition
\[
(T_{\pi} V)(s)
=
\sum_{a \in \mathcal A}
\pi(a \,;\, s)
\big(
\E[r(s,a)] + \gamma \E_{s' \sim p(\,\cdot \,;\, s,a)}[V(s')]
\big).
\]
Therefore,
\[
\begin{aligned}
(T_{\pi_\theta} - T_{\pi_{\theta'}}) V_{t+1}^{\pi_{\theta'}}(s)
\sum_a
\bigl(
\pi_\theta(a \,;\, s) - \pi_{\theta'}(a \,;\, s)
\bigr)
\big(
\E[r(s,a)] + \gamma\E_{s' \sim p(\,\cdot \,;\, s,a)}[V_{t+1}^{\pi_{\theta'}}(s')]
\big).
\end{aligned}
\]
Since the rewards are bounded, one gets for all $a$
$$
    \big|\E[r(s,a)] + \gamma\E_{s' \sim p(\,\cdot \,;\, s,a)}[V_{t+1}^{\pi_{\theta'}}(s')]\big| \le R_\ast \sum_{j=0}^{T-t-1} \gamma^j.
$$
Recalling from \eqref{eq:lipschitzpolicy} that $\sum_a \bigl|\pi_\theta(a \,;\, s) - \pi_{\theta'}(a \,;\, s)\bigr|\leq \Pi_\ast|\theta-\theta'|$, then gives
\[
\text{(B)} \le \Big(
\sum_a
|
\pi_\theta(a \,;\, s) - \pi_{\theta'}(a \,;\, s)
|
\Big) \Big( R_\ast \sum_{j=0}^{T-t-1} \gamma^j \Bigr) \le \Pi_\ast |\theta - \theta'| \Big( R_\ast \sum_{j=0}^{T-t-1} \gamma^j \Big).
\]
Combining these into the recurrence relation yields the statement for $t$, where the identity \eqref{eq:5634956926} follows from a straight-forward calculation using the formula for finite geometric series. Plugging $t=0$ into the formula then gives the result for $\lvert J(\theta)-J(\theta')\rvert$.
\end{proof}

Since MDPs are stochastic processes defined on the state–action product space, it is natural to ask for decompositions of associated quantities into components that depend solely on the state and components that depend on the action conditioned on the state. For the total variation distance, such a decomposition can be carried out as follows.

\begin{lemma}[Marginal decomposition of total variation]\label{lemma:marginal}
Let $\nu,\nu'$ be probability measures on $\mathcal S\times\mathcal A$ of the form
\[
\nu(s,a)=d(s)\pi(a\,;\, s),
\qquad
\nu'(s,a)=d'(s)\pi'(a\,;\, s).
\]
Then
\[
\mathrm{TV}(\nu,\nu')
\;\le\;
\mathrm{TV}(d,d')
+
\sup_{s\in\mathcal S}
\mathrm{TV}\!\left(\pi(\,\cdot\,;\, s),\pi'(\,\cdot\,;\, s)\right).
\]
\end{lemma}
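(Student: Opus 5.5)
The plan is to work directly with the $\ell^1$ form of total variation from \eqref{eq:TVdefinition}. We write $2\,\mathrm{TV}(\nu,\nu')=\sum_{s,a}|d(s)\pi(a\,;\,s)-d'(s)\pi'(a\,;\,s)|$ and insert the intermediate measure $d'(s)\pi(a\,;\,s)$, or equivalently $d(s)\pi'(a\,;\,s)$. This is the usual coupling-free triangle-inequality argument.

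Concretely, we split
\[
d(s)\pi(a\,;\,s)-d'(s)\pi'(a\,;\,s)=\big(d(s)-d'(s)\big)\pi(a\,;\,s)+d'(s)\big(\pi(a\,;\,s)-\pi'(a\,;\,s)\big),
\]
and then apply the triangle inequality inside the double sum. For the first term, summing over $a$ first and using $\sum_a\pi(a\,;\,s)=1$ gives $\sum_s|d(s)-d'(s)|=2\,\mathrm{TV}(d,d')$. For the second term, summing over $a$ first gives
\[
\sum_s d'(s)\sum_a|\pi(a\,;\,s)-\pi'(a\,;\,s)|=2\sum_s d'(s)\,\mathrm{TV}\big(\pi(\cdot\,;\,s),\pi'(\cdot\,;\,s)\big).
\]
Since $d'$ is a probability measure, this is at most $2\sup_s\mathrm{TV}(\pi(\cdot\,;\,s),\pi'(\cdot\,;\,s))$. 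Dividing by $2$ yields the claim.

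There is no real obstacle here, since all spaces are finite and the sums are absolutely convergent. The only points to track are the normalization factor $\tfrac12$ in the definition of $\mathrm{TV}$ and which marginal carries the weight in the second term. As a by-product we also get the slightly sharper bound with the $d'$-average of the conditional TV distances in place of the supremum, which may be useful in later applications such as the bias estimate of Theorem~\ref{Thm:1}.
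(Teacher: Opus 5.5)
Your argument is correct and is essentially the paper's proof: both insert the mixed measure $d'(s)\pi(a\,;\,s)$ and apply the triangle inequality. The only difference is that you use the $\ell^1$ form of $\mathrm{TV}$, whereas the paper works with the supremum over events $B\subset\mathcal S\times\mathcal A$ and must bound $\sum_s (d(s)-d'(s))\sum_{a\in B_s}\pi(a\,;\,s)$ using that the inner weights lie in $[0,1]$, a step your version makes exact. Your remark that the second term is the $d'$-average of the conditional TV distances, which is bounded by the supremum rather than equal to it as the paper's display writes, is also accurate.
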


\begin{proof}
By definition of the total variation distance between measures on $\mathcal S\times\mathcal A$,
\[
\mathrm{TV}(\nu,\nu')
=
\sup_{B\subset \mathcal S\times\mathcal A}
\Big|\sum_{(s,a)\in B}
\big(d(s)\pi(a\,;\, s)-d'(s)\pi'(a\,;\, s)\big)
\Big|.
\]

For a given set $B\subset\mathcal S\times\mathcal A$, define $B_s:=\{a\in\mathcal A:(s,a)\in B\}$.
Then
\[
\sum_{(s,a)\in B} d(s)\pi(a\,;\, s)
=
\sum_{s\in\mathcal S} d(s)\sum_{a\in B_s}\pi(a\,;\, s),
\]
and with a similar decomposition existing for $\nu'$.

Add and subtract the mixed term
$\sum_{s\in\mathcal{S}} d'(s)\sum_{a\in B_s}\pi(a\,;\, s)$ to obtain
\begin{align*}
&\sum_{s\in\mathcal{S}} d(s)\sum_{a\in B_s}\pi(a\,;\, s)
-
\sum_{s\in\mathcal{S}} d'(s)\sum_{a\in B_s}\pi'(a\,;\, s)\\ 
&=
\sum_{s\in\mathcal{S}} \bigl[d(s)-d'(s)\bigr]\sum_{a\in B_s}\pi(a\,;\, s)
+
\sum_{s\in\mathcal{S}} d'(s)\sum_{a\in B_s}\bigl[\pi(a\,;\, s)-\pi'(a\,;\, s)\bigr].
\end{align*}

Taking absolute values and using the triangle inequality yields
\[
\mathrm{TV}(\nu,\nu')
\le \sup_B
\Big|
\sum_{s\in\mathcal{S}} (d(s)-d'(s))\sum_{a\in B_s}\pi(a\,;\, s)
\Big| + \sup_B
\Big|
\sum_{s\in\mathcal{S}} d'(s)\sum_{a\in B_s}\big(\pi(a\,;\, s)-\pi'(a\,;\, s)\big)
\Big|.
\]
For the first summand, note that $0\le\sum_{a\in B_s}\pi(a\,;\, s)\le 1$ for all $s$.
Therefore, an upper bound is $
\sup_{C\subset\mathcal S}
\left|
\sum_{s\in C} \bigl[d(s)-d'(s)\bigr]
\right|
=
\mathrm{TV}(d,d').$ For the second term, we use the upper bound
\begin{align*}
\sum_{s\in\mathcal S} d'(s)
\sup_{C\subset\mathcal A}
\Big|
\sum_{a\in C}\bigl[\pi(a\,;\, s)-\pi'(a\,;\, s)\bigr]
\Big| 
=
\sup_{s\in\mathcal S}
\mathrm{TV}\!\big(\pi(\,\cdot\,;\, s),\pi'(\,\cdot\,;\, s)\big),
\end{align*}
where the last equality follows from the definition of total variation distance
on $\mathcal A$.
\end{proof}

\begin{lemma}\label{lem:marginal-tv-decomposition}
    The TV distance between the marginal state distributions can be decomposed as \[\mathrm{TV}\big(d_t^{\pi'},d_t^\pi\big)\leq \sum_{i=0}^{t-1} \E^\pi\left[\mathrm{TV}\big(\pi'(\,\cdot\,;\,S_i),\pi(\,\cdot\,;\,S_i)\big)\right].\]
\end{lemma}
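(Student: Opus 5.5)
We argue by induction on $t$, using a coupling/telescoping argument on the one-step transition of the state chain. For $t=0$ both marginals equal $\mu$, so $\mathrm{TV}(d_0^{\pi'},d_0^\pi)=0$ and the (empty) sum on the right vanishes. The whole proof then reduces to showing the one-step recursion
\[
\mathrm{TV}\big(d_{t+1}^{\pi'},d_{t+1}^\pi\big)\le \mathrm{TV}\big(d_t^{\pi'},d_t^\pi\big)+\E^\pi\big[\mathrm{TV}\big(\pi'(\,\cdot\,;\,S_t),\pi(\,\cdot\,;\,S_t)\big)\big],
\]
which, iterated from $t=0$, gives the claim.

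For the recursion, write $d_{t+1}^\pi(s')=\sum_{s,a} d_t^\pi(s)\pi(a\,;\,s)p(s'\,;\,s,a)$, and similarly for $\pi'$. For a test set $B\subseteq\mathcal S$ put $h(s,a):=\sum_{s'\in B}p(s'\,;\,s,a)\in[0,1]$. Then
\[
d_{t+1}^{\pi'}(B)-d_{t+1}^{\pi}(B)=\sum_{s}\big(d_t^{\pi'}(s)-d_t^{\pi}(s)\big)\sum_a\pi'(a\,;\,s)h(s,a)+\sum_s d_t^\pi(s)\sum_a\big(\pi'(a\,;\,s)-\pi(a\,;\,s)\big)h(s,a).
\]
This is the same add-and-subtract as in Lemma~\ref{lemma:marginal}, but with the mixed term chosen so that the policy difference is weighted by $d_t^\pi$ rather than $d_t^{\pi'}$.

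Both terms are then bounded directly. In the first, the inner factor $f(s):=\sum_a\pi'(a\,;\,s)h(s,a)$ lies in $[0,1]$. Hence the term is at most $\sup_{C\subseteq\mathcal S}|d_t^{\pi'}(C)-d_t^\pi(C)|=\mathrm{TV}(d_t^{\pi'},d_t^\pi)$; one can see this by splitting $\mathcal S$ according to the sign of $d_t^{\pi'}-d_t^\pi$. In the second term, for each fixed $s$ the inner sum is bounded in absolute value by $\mathrm{TV}(\pi'(\cdot\,;\,s),\pi(\cdot\,;\,s))$, since $h(s,\cdot)\in[0,1]$ and the same sign-splitting argument applies. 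Averaging against $d_t^\pi$ then gives $\E^\pi[\mathrm{TV}(\pi'(\cdot\,;\,S_t),\pi(\cdot\,;\,S_t))]$. Taking the supremum over $B$ yields the recursion.

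There is no serious obstacle here. The one point requiring care is the choice of mixed term: it must keep the expectation under $\pi$, not $\pi'$, to match the statement. One must also use the $[0,1]$-valued form of the TV bound, rather than the $\|f\|_\infty\le1$ form, to avoid picking up a spurious factor of $2$.
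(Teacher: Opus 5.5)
Your proposal is correct and follows the paper's proof: the same one-step recursion, obtained from the identical add-and-subtract with mixed term $d_t^\pi(s)\pi'(a\,;\,s)$, iterated from $d_0^{\pi'}=d_0^\pi=\mu$. The only cosmetic difference is that the paper bounds the $\ell^1$ sum $\sum_s|d_{t}^{\pi'}(s)-d_t^\pi(s)|$ directly, using the triangle inequality and $\sum_{s}p(s\,;\,s',a')=1$, whereas you work with the sup-over-test-sets form of TV and $[0,1]$-valued test functions.
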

\begin{proof}
    For all $t\in \{1, \dots, T\}$, we have $d_t^\pi(s) = \sum_{s'\in\mathcal{S}}d_{t-1}^\pi(s')\sum_{a'\in\mathcal{A}}\pi(a'\, ; \,s')p(s\,;\,s',a')$ and thus
    \begin{align*}
        \sum_{s\in\mathcal{S}} \big| d_t^{\pi'}(s) - d_t^\pi(s) \big| 
        &= \sum_{s\in\mathcal{S}} \Big|\sum_{s'\in\mathcal{S}}\Big( d_{t-1}^{\pi'}(s')\sum_{a'\in\mathcal{A}} \pi'(a'\,;\,s')p(s\,;\,s',a') - d_{t-1}^\pi(s')\sum_{a'\in\mathcal{A}}\pi(a'\,;\,s')p(s\,;\,s',a')\Big) \Big| \\
        &=\begin{multlined}[t]
            \sum_{s\in\mathcal{S}} \Big| \sum_{s'\in\mathcal{S}} \big(d_{t-1}^{\pi'}(s') - d_{t-1}^\pi(s')\big) \sum_{a'\in\mathcal{A}}\pi'(a'\,;\,s')p(s\,;\,s',a') \\
            + \sum_{s'\in\mathcal{S}} d_{t-1}^\pi(s') \sum_{a'\in\mathcal{A}}\big(\pi'(a'\,;\,s')-\pi(a'\,;\,s')\big)p(s\,;\,s',a') \Big|
        \end{multlined} \\
        &\leq \begin{multlined}[t]
            \sum_{s\in\mathcal{S}} \Big( \sum_{s'\in\mathcal{S}} \big\lvert d_{t-1}^{\pi'}(s') - d_{t-1}^\pi(s')\big\rvert \sum_{a'\in\mathcal{A}}\pi'(a'\,;\,s')p(s\,;\,s',a') \\
            + \sum_{s'\in\mathcal{S}} d_{t-1}^\pi(s') \sum_{a'\in\mathcal{A}}\big\lvert\pi'(a'\,;\,s')-\pi(a'\,;\,s')\big\rvert p(s\,;\,s',a') \Big)
        \end{multlined} \\
        &= \begin{multlined}[t]
            \sum_{s'\in\mathcal{S}} \big\lvert d_{t-1}^{\pi'}(s') - d_{t-1}^\pi(s')\big\rvert \sum_{a'\in\mathcal{A}}\pi'(a'\,;\,s') \sum_{s\in\mathcal{S}} p(s\,;\,s',a') \\
            + \sum_{s'\in\mathcal{S}} d_{t-1}^\pi(s') \sum_{a'\in\mathcal{A}}\big\lvert\pi'(a'\,;\,s')-\pi(a'\,;\,s')\big\rvert \sum_{s\in\mathcal{S}} p(s\,;\,s',a')
        \end{multlined} \\
        &= \sum_{s'\in\mathcal{S}} \big\lvert d_{t-1}^{\pi'}(s') - d_{t-1}^\pi(s')\big\rvert + \sum_{s'\in\mathcal{S}} d_{t-1}^\pi(s') \sum_{a'\in\mathcal{A}}\big\lvert\pi'(a'\,;\,s')-\pi(a'\,;\,s')\big\rvert,
    \end{align*}
    i.e. $\mathrm{TV}\big(d_t^{\pi'},d_t^\pi\big) \leq \mathrm{TV}\big(d_{t-1}^{\pi'},d_{t-1}^\pi\big) + \E^\pi\left[\mathrm{TV}\big(\pi'(\,\cdot\,;\,S_{t-1}),\pi(\,\cdot\,;\,S_{t-1})\big)\right]$. Using $d_0^{\pi'}=d_0^\pi$ and recursively applying this inequality gives the statement.
\end{proof}

\begin{proposition}[$L$-Smoothness of $J$]
\label{lem:smooth}
Under Assumption \ref{ass:boundedrewards}, \ref{ass:score}, and \ref{ass:Lipschitz}, one has for all $\theta,\theta' \in \R^d$,
\begin{align*}
\big|\nabla J(\theta) - \nabla J(\theta')\big|
&\le
R_\ast
\sum_{t=0}^{T-1}
\gamma^t
\Big(
L \sum_{k=0}^{T-t-1} \gamma^k
+
\Pi_\ast^2 \gamma
\sum_{k=0}^{T-t-2} \gamma^k \Big( \sum_{j=0}^{T-t-2-k} \gamma^j \Big)
+
(t+1)\Pi_\ast^2
\sum_{k=0}^{T-t-1} \gamma^k
\Big)
\big|\theta - \theta'\big|\\
&= \underbrace{R_\ast
\Bigg( \frac{L_\text{s}(1-\gamma^T)}{(1-\gamma)^2}
-\frac{L_\text{s}T\gamma^T}{1-\gamma}
+
\Pi_\ast^2
\Big( \frac{1+\gamma-2\gamma^T}{(1-\gamma)^3}
-\frac{(2T-1)\gamma^T}{(1-\gamma)^2}
-\frac{T^2\gamma^T}{1-\gamma}
\Big) \Bigg)}_{=:L} |\theta-\theta'|.
\end{align*}
\end{proposition}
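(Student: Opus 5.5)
We prove the smoothness bound by differentiating the policy gradient representation of Proposition~\ref{prop:pgt} term by term and splitting the difference into three pieces, one for each summand in the claimed constant. Rather than using the advantage form, we work with the equivalent state-action form
\[
\nabla J(\theta)=\sum_{t=0}^{T-1}\gamma^t\sum_{s,a} d_t^{\pi_\theta}(s)\,\pi_\theta(a;s)\,\nabla\log\pi_\theta(a;s)\,Q_t^{\pi_\theta}(s,a).
\]
The baseline $V_t$ integrates to zero against the score, so either form may be used. The $Q$-form gives the bound $|Q_t|\le R_\ast\sum_{k=0}^{T-t-1}\gamma^k$, which matches the structure of the claimed constant.

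For fixed $t$, we telescope the difference $\nabla J(\theta)-\nabla J(\theta')$ into three terms.
\begin{itemize}
\item \textbf{(i) Score term.} Change only the score $\nabla\log\pi_\theta\to\nabla\log\pi_{\theta'}$. Assumption~\ref{ass:Lipschitz} and $|Q_t|\le R_\ast\sum_k\gamma^k$ give
\[
L_s R_\ast\sum_{k=0}^{T-t-1}\gamma^k\,|\theta-\theta'|.
\]
\item \textbf{(ii) Q-function term.} Change only $Q_t^{\pi_\theta}\to Q_t^{\pi_{\theta'}}$. Write $Q_t(s,a)=r(s,a)+\gamma\E[V_{t+1}(S')]$ and apply Lemma~\ref{lem:value} at time $t+1$. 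Its bound is a double geometric sum in closed form, which is exactly the middle sum $\gamma\sum_{k=0}^{T-t-2}\gamma^k\sum_{j=0}^{T-t-2-k}\gamma^j$. Multiplying by the score bound $\Pi_\ast$ gives the $\Pi_\ast^2\gamma(\dots)$ contribution.
\item \textbf{(iii) Distribution term.} Change only the state-action law $d_t^{\pi_\theta}\pi_\theta\to d_t^{\pi_{\theta'}}\pi_{\theta'}$. The integrand $\nabla\log\pi_{\theta'}\,Q_t^{\pi_{\theta'}}$ is bounded by $\Pi_\ast R_\ast\sum_k\gamma^k$, so this term is at most $2\,\mathrm{TV}$ times that bound. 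Lemma~\ref{lemma:marginal} bounds the TV distance by
\[
\mathrm{TV}(d_t^{\pi_\theta},d_t^{\pi_{\theta'}})+\sup_s\mathrm{TV}(\pi_\theta(\cdot;s),\pi_{\theta'}(\cdot;s)).
\]
Lemma~\ref{lem:marginal-tv-decomposition} with Lemma~\ref{lem:TVPOLICY} bounds the first part by $t\cdot\tfrac12\Pi_\ast|\theta-\theta'|$, and Lemma~\ref{lem:TVPOLICY} alone bounds the second by $\tfrac12\Pi_\ast|\theta-\theta'|$. Together this gives $(t+1)\Pi_\ast^2R_\ast\sum_k\gamma^k\,|\theta-\theta'|$.
\end{itemize}

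Summing over $t$ with weights $\gamma^t$ yields the first displayed inequality. The closed form for $L$ then follows from routine finite geometric series identities:
\[
\sum_t\gamma^t\sum_k\gamma^k,\qquad \sum_t(t+1)\gamma^t(1-\gamma^{T-t}),\qquad \sum_t\gamma^t\sum_k\gamma^k(1-\gamma^{T-t-1-k}).
\]
These can be checked by induction on $T$ or by differentiating geometric series.

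The main obstacle is keeping the telescoping order consistent, so that each term pairs a difference with an available bound. In particular, in term (ii) the Q-difference must be bounded through Lemma~\ref{lem:value} with exactly the right index shift to produce the stated nested sum. A secondary risk is the algebra of the closed form for $L$, where sign and power errors are easy to make. We would verify it against small cases such as $T=1$ and $T=2$ before trusting it.
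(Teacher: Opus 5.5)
Your proposal is correct and follows the paper's proof essentially step for step. The paper also works with the $Q$-form of the policy gradient and splits each time-$t$ term into a same-measure part (your (i) and (ii), bounded via Assumption~\ref{ass:Lipschitz} and Lemma~\ref{lem:value} at time $t+1$) and a distribution-shift part (your (iii)), bounded by twice the sup-norm of the integrand times $\mathrm{TV}$ using Lemmas~\ref{lemma:marginal}, \ref{lem:marginal-tv-decomposition} and \ref{lem:TVPOLICY}; it then sums finite geometric series to obtain the closed form for $L$.
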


\begin{proof}
    
We write the policy gradient in the score-function form
\[
\nabla J(\theta)
=
\sum_{t=0}^{T-1}
\gamma^t
\E_{s \sim d_t^{\pi_\theta},\, a \sim \pi_\theta(\,\cdot \,;\, s)}
\bigl[
\nabla_\theta \log \pi_\theta(a \,;\, s)
\, Q_{t}^{\pi_\theta}(s,a)
\bigr].
\]

For $t=0, \dots, T-1$ we write,
\[
\phi_t^\theta(s,a)
:=
\gamma^t
\nabla_\theta \log \pi_\theta(a \,;\, s)
\, Q_{t}^{\pi_\theta}(s,a), 
\]
so that $\nabla J(\theta) = \sum_{t=0}^{T-1} \E_{s \sim d_t^{\pi_\theta},\, a \sim \pi_\theta(\,\cdot \,;\, s)}[\phi_t^\theta(s,a)]$. For a fixed $t \in \{0, \dots, T-1\}$ we decompose

\begin{align*}
&\quad
\E_{s \sim d_t^{\pi_\theta},\, a \sim \pi_\theta(\,\cdot \,;\, s)}[\phi_t^\theta(s,a)]
-
\E_{s \sim d_t^{\pi_{\theta'}},\, a \sim \pi_\theta'(\,\cdot \,;\, s)}[\phi_t^{\theta'}(s,a)]\\
&=
\underbrace{
\E_{s \sim d_t^{\pi_\theta},\, a \sim \pi_\theta(\,\cdot \,;\, s)}[\phi_t^\theta(s,a) - \phi_t^{\theta'}(s,a)]
}_{(A)}+
\underbrace{
\E_{s \sim d_t^{\pi_\theta},\, a \sim \pi_\theta(\,\cdot \,;\, s)}[\phi_t^{\theta'}(s,a)]
-
\E_{s \sim d_t^{\pi_{\theta'}},\, a \sim \pi_{\theta'}(\,\cdot \,;\, s)}[\phi_t^{\theta'}(s,a)]
}_{(B)}.
\end{align*}

We first compute a bound for (A). For this, rewrite
\begin{align*}
\phi_t^\theta(s,a)- \phi_t^{\theta'}(s,a)
=
&\gamma^t
\bigl(
\nabla_\theta \log \pi_\theta(a \,;\, s)
-
\nabla_\theta \log \pi_{\theta'}(a \,;\, s)
\bigr)
Q_{t}^{\pi_\theta}(s,a) \\
&+
\gamma^t
\nabla_\theta \log \pi_{\theta'}(a \,;\, s)
\bigl(
Q_{t}^{\pi_\theta}(s,a)
-
Q_{t}^{\pi_{\theta'}}(s,a)
\bigr).
\end{align*}
Note that
\[
\sum_{k=0}^{T-t-2} \gamma^k \Big( \sum_{j=0}^{T-t-2-k} \gamma^j \Big)
= \frac{1-(T-t)\,\gamma^{\,T-t-1}+(T-t-1)\,\gamma^{\,T-t}}{(1-\gamma)^2},
\]
so that, by Lemma~\ref{lem:value}, one has
\[
\|Q_{t}^{\pi_\theta}
-
Q_{t}^{\pi_{\theta'}}
\|_\infty
\le
\gamma\| V_{t+1}^{\pi_\theta}-V_{t+1}^{\pi_{\theta'}} \|_\infty 
\le \big|\theta - \theta'\big|
\Pi_\ast R_\ast \gamma
\sum_{k=0}^{T-t-2} \gamma^k \Big( \sum_{j=0}^{T-t-2-k} \gamma^j \Big).
\]
Together with the Lipschitz and boundedness assumptions on the score function and the fact that $\|Q_{t}^{\pi_\theta}
\|_\infty
\le
R_\ast
\sum_{k=0}^{T-t-1} \gamma^k$, due to boundedness of the rewards,
we get
\[
\|(A)\|_\infty
\le
\gamma^t R_\ast
\Big(
L_\text{s} \sum_{k=0}^{T-t-1} \gamma^k
+
\Pi_\ast^2 \gamma
\sum_{k=0}^{T-t-2} \gamma^k \Big( \sum_{j=0}^{T-t-2-k} \gamma^j \Big)
\Big)
\big|\theta - \theta'\big|.
\]
We now turn to (B), the distribution shift. First, note that
\begin{align}\label{eqn:1}
\|(B)\|_\infty
\le 2
\|\phi_t^{\theta'}\|_\infty
\,
\mathrm{TV}\!\left(\nu(\theta)\,,\,\nu(\theta')\right),
\end{align}
where $\nu(\theta)$ is the measure on $\mathcal S \times \mathcal A$ that satisfies $\nu(s,a)=d_t^{\pi_\theta}(s)\pi_\theta(a \mid s)$. This can be seen using the dual characterization of total variation,
$
\mathrm{TV}(\mu,\nu)
\;=\;
\frac12
\sup_{\|f\|_\infty \le 1}
\left|
\int f \, d\mu
-
\int f \, d\nu
\right|$. Let $\phi:\mathcal S\times\mathcal A\to\mathbb R$ be a bounded measurable
function and define $f := \frac{\phi}{\|\phi\|_\infty}$ so that $\|f\|_\infty \le 1$. Then
\begin{align*}
\left|
\E_{\nu}[\phi]
-
\E_{\nu'}[\phi]
\right|
=
\Big|
\int \phi \, d\nu-
\int \phi \, d\nu'
\Big|
=
\|\phi\|_\infty
\Big|
\int f \, d\nu-
\int f \, d\nu'
\Big|
\le
2 \|\phi\|_\infty
\,\mathrm{TV}\!\left(\nu,\nu'\right).
\end{align*}
We now estimate the right-hand side of \eqref{eqn:1}. First, using Assumptions~\ref{ass:boundedrewards} and~\ref{ass:score} we get
\begin{align*}
\|\phi_t^{\theta'}\|_\infty
\le
\gamma^t
\Pi_\ast
R_\ast
\sum_{k=0}^{T-t-1} \gamma^k.
\end{align*}
Next, using Lemma~\ref{lemma:marginal} and Lemma~\ref{lem:TVPOLICY} gives
\begin{align*}
    \mathrm{TV}\!\left(\nu(\theta)\,,\,\nu(\theta')\right) & \le \mathrm{TV}\!\big(d_t^{\pi_\theta}\,,\,d_t^{\pi_\theta'}\big) + \frac{1}{2}\Pi_\ast |\theta-\theta'|.
\end{align*}
Moreover, Lemma~\ref{lem:marginal-tv-decomposition} together with Lemma \ref{lem:TVPOLICY} implies
that for all $t \in \{0, \dots, T\}$
\[
\mathrm{TV}\big(d_t^{\pi'},d_t^\pi\big)\leq \sum_{i=0}^{t-1} \E^\pi\left[\mathrm{TV}\big(\pi'(\,\cdot\,;\,S_i),\pi(\,\cdot\,;\,S_i)\big)\right] \le  \frac{t}{2}\Pi_\ast |\theta-\theta'|.
\]

Combining the above estimates yields
\[
\|(B)\|_\infty
\le 
\gamma^t
(t+1)\Pi_\ast^2
R_\ast 
\sum_{k=0}^{T-t-1} \gamma^k  |\theta - \theta'|
.
\]

Summing the bounds for $(A)$ and $(B)$ over $t=0,\dots,T-1$ yields the result, where the final identity in the statement of the proposition can be deduced by a careful computation, applying the formula for geometric series.
\end{proof}
In this work. we focus on discounted finite-time MDPs. However, it is natural to ask what the proof yields in the limiting infinite-horizon discounted case ($T\to\infty$, $\gamma<1$) and in the finite-horizon undiscounted case ($T<\infty$, $\gamma=1$).

\begin{remark} While our proof technique differs from the one used in~\cite{Zhang} for the infinite discounted setting, it recovers the exact same smoothness constant in the limit $T \to \infty$. In this regime, the smoothness estimate simplifies to
    \begin{align*}
        \big|\nabla J(\theta) - \nabla J(\theta')\big| \le R_\ast \Big( \frac{ L_\text{s}}{(1-\gamma)^2} + \frac{ \Pi_\ast^2 (1+\gamma)}{(1-\gamma)^3} \Big) \big|\theta - \theta'\big|
    \end{align*}
    which coincides with the bound stated in Lemma~3.2 of~\cite{Zhang}.
\end{remark}
\begin{remark}
    In the non-discounted finite-time setting ($\gamma=1$ and $T<\infty$) the same arguments work (the geometric sums simplify) and one gets
    \begin{align*}
            \big|\nabla J(\theta) - \nabla J(\theta')\big| &\le R_\ast\Big(\frac{L_\text{s}T(T+1)}{2}+\frac{\Pi_\ast^2T(2T^2+3T+1)}{6}\Big)\big|\theta - \theta'\big|,
    \end{align*}
    which is a bit smaller than the upper bound
    $$
    R_\ast\big(L_\text{s} T^2 +  \Pi_\ast^2 T^3\big) \big|\theta - \theta'\big|
    $$
   that was derived in \cite{Papini} under slightly stronger assumptions.
\end{remark}

The two remarks reflect the well-known correspondence between infinite-horizon discounted MDPs and finite-horizon undiscounted MDPs with effective horizon $T=(1-\gamma)^{-1}$. In particular, recall that the value function of an infinite-horizon discounted MDP coincides with that of an undiscounted MDP whose time horizon is an independent geometric random variable with expectation $(1-\gamma)^{-1}$.

\section{Policy gradient bias theory}\label{appendix:C}
We now come to one of the main contributions of this work: the bounds of the surrogate gradient bias used in PPO. In the next two sections we prove Theorem \ref{Thm:1}.

\subsection{Unclipped surrogate gradient bias}
In this section, we estimate the difference between the true policy gradient and the surrogate gradient 
\begin{align*}
g_{\text{PPO}}(\theta,\theta_\text{old})
&=
\sum_{t=0}^{T-1} \gamma^t \E^{\pi_{\theta_\mathrm{old}}}_\mu\Big[\frac{\nabla_\theta \pi_\theta(A_t\,;\,S_t)}{\pi_{\theta_\mathrm{old}}(A_t\,;\,S_t)} \mathbb A_t^{\pi_{\theta_\mathrm{old}}}(S_t,A_t)\Big].
\end{align*}
In the next section we transfer the bias bound to the clipped gradient $g^\text{clip}_{\text{PPO}}$ from PPO.

The estimates are based on a variant of the performance difference lemma (see \citep[Lemma 6.1]{aoarl} and \citep[Eqn. (1)]{TRPO} for infinite-time discounted MDPs) for discounted finite-time MDPs. We will add a proof for the convenience of the reader.
\begin{proposition}[Performance difference identity]\label{prop:performancedifference}
    For two arbitrary policies $\pi, \tilde{\pi}$,
    \begin{equation*}
        V^{\tilde{\pi}}(\mu) - V^\pi(\mu) = \sum_{t=0}^{T-1}\gamma^t \E^{\tilde{\pi}}_\mu\big[\mathbb A_t^\pi(S_t,A_t)\big].
    \end{equation*}
    In particular, for any two parametrized policies $\pi_\theta,\pi_{\theta_\mathrm{old}}$,
    \begin{equation}\label{eqn:true-gradient}
        \nabla_\theta J(\theta) = \nabla_\theta \left(V^{\pi_\theta}(\mu) - V^{\pi_{\theta_\mathrm{old}}}(\mu)\right) = \nabla_\theta \sum_{t=0}^{T-1}\gamma^t \E^{\pi_\theta}_\mu\big[\mathbb A_t^{\pi_{\theta_\mathrm{old}}}(S_t,A_t)\big].
    \end{equation}
\end{proposition}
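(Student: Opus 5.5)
We prove the performance difference identity by a telescoping argument along trajectories generated by $\tilde\pi$. We then obtain the gradient identity by differentiating in $\theta$ with $\theta_\mathrm{old}$ held fixed.

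\emph{Step 1 (telescoping).} Fix the trajectory $(S_t,A_t,R_t)_{t\le T-1}$ under $\tilde\pi$, and use the conventions $V_T^\pi\equiv 0$ and $Q_T^\pi\equiv 0$ from the appendix. We write
\[
\sum_{t=0}^{T-1}\gamma^t R_t - V_0^\pi(S_0)=\sum_{t=0}^{T-1}\gamma^t\big(R_t+\gamma V_{t+1}^\pi(S_{t+1})-V_t^\pi(S_t)\big),
\]
where we set $V_T^\pi(S_T):=0$. This is a pure telescoping identity: the terms $\gamma^{t+1}V^\pi_{t+1}(S_{t+1})$ cancel, the boundary term at $t=T$ vanishes, and only $-V_0^\pi(S_0)$ survives. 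Taking $\E^{\tilde\pi}_\mu$ turns the left-hand side into $V^{\tilde\pi}(\mu)-V^\pi(\mu)$. For the last point we use that $S_0\sim\mu$ regardless of the policy, so $\E^{\tilde\pi}_\mu[V_0^\pi(S_0)]=V^\pi(\mu)$.

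\emph{Step 2 (conditioning).} We identify each summand on the right with an advantage by conditioning on $(S_t,A_t)$. By the Markov property, the transition kernel does not depend on the policy. The finite-time Bellman equation for $Q^\pi_t$ then gives
\[
\E^{\tilde\pi}\big[R_t+\gamma V^\pi_{t+1}(S_{t+1})\mid S_t,A_t\big]=Q_t^\pi(S_t,A_t).
\]
Hence the conditional expectation of the $t$-th summand equals $Q^\pi_t(S_t,A_t)-V^\pi_t(S_t)=\mathbb A^\pi_t(S_t,A_t)$. The tower property then yields the first claim. This is the one place needing care: the Bellman step must be applied with the policy $\pi$ inside $V^\pi_{t+1}$, while the outer law is $\tilde\pi$. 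This is legitimate because, given $(S_t,A_t)$, the next reward and the next state do not depend on which policy generated the past. All sums are finite and rewards are bounded, so there are no integrability issues.

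\emph{Step 3 (gradient).} We apply Step 2 with $\tilde\pi=\pi_\theta$ and $\pi=\pi_{\theta_\mathrm{old}}$, treating $\theta_\mathrm{old}$ as a constant. Then $V^{\pi_{\theta_\mathrm{old}}}(\mu)$ does not depend on $\theta$, so
\[
\nabla_\theta J(\theta)=\nabla_\theta\big(V^{\pi_\theta}(\mu)-V^{\pi_{\theta_\mathrm{old}}}(\mu)\big),
\]
and the right-hand side equals the gradient of the performance difference sum. Differentiability follows from the finite state and action spaces and the continuous differentiability of $\theta\mapsto\pi_\theta$. Each expectation $\E^{\pi_\theta}_\mu[\cdot]$ is a finite sum of products of policy probabilities and fixed quantities, so it is smooth in $\theta$.

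We expect no real obstacle. The only subtle point is the mixed-policy conditioning in Step 2.
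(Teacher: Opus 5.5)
Your proof is correct and uses the same ingredients as the paper's: the one-step Bellman identity $\E\big[R_t+\gamma V^\pi_{t+1}(S_{t+1})\mid S_t,A_t\big]=Q^\pi_t(S_t,A_t)$, which holds under $\tilde\pi$ because the transition kernel does not depend on the policy, a telescoping sum with $V^\pi_T\equiv 0$, and the common initial law $\mu$. The difference is only bookkeeping. The paper expands both $\E^{\tilde\pi}[R_t]$ and $\E^{\pi}[R_t]$ through the marginals $d_t$, uses $\E_{S\sim d_t^\pi,A\sim\pi(\cdot;S)}[\mathbb A_t^\pi(S,A)]=0$, and telescopes twice; your single pathwise telescoping under $\tilde\pi$, together with $\E^{\tilde\pi}_\mu[V^\pi_0(S_0)]=V^\pi(\mu)$, reaches the same identity a little more directly.
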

\begin{proof}
   First recall that $Q_t^\pi(s,a) = \E^\pi_{S_t=s,A_t=a}\left[R_t + \gamma V_{t+1}^\pi(S_{t+1})\right]$ and $\E_{S\sim d_t^\pi, A \sim \pi(\cdot\;;S)}\left[A_t^\pi(S,A)\right] = 0$. Thus,
    \begin{align*}
        V^{\tilde{\pi}}(\mu) - V^\pi(\mu)
        = & \sum_{t=0}^{T-1} \gamma^t \left(\E^{\tilde{\pi}}[R_t] - \E^{\pi}[R_t]\right)\\
        = & \sum_{t=0}^{T-1} \gamma^t \Big(\E_{S\sim d_t^{\tilde{\pi}}, A\sim \tilde{\pi}(\cdot\;;S)}\big[\underbrace{\E^\pi_{S_t=S,A_t=A}[R_t]}_{\mathclap{=Q_t^\pi(S,A) - \gamma \E_{S'\sim p(\cdot\;;S,A)}[V_{t+1}^\pi(S')]}}\big] - \E_{S\sim d_t^{\pi}, A\sim \pi(\cdot\;;S)}\big[\E^\pi_{S_t=S,A_t=A}[R_t]\big]\Big) \\
        = & \sum_{t=0}^{T-1} \begin{aligned}[t]
                                 \gamma^t \Big( & \E_{S\sim d_t^{\tilde{\pi}}, A\sim \tilde{\pi}(\cdot\;;S)}\big[A_t^\pi(S,A) - \gamma \E_{S'\sim p(\cdot\;;S,A)}[V_{t+1}^\pi(S')] + V_t^\pi(S)\big] \\
                                 -& \E_{S\sim d_t^{\pi}, A\sim \pi(\cdot\;;S)}\big[A_t^\pi(S,A) - \gamma \E_{S'\sim p(\cdot\;;S,A)}[V_{t+1}^\pi(S')] + V_t^\pi(S)\big] \Big)
                             \end{aligned} \\
        = & \sum_{t=0}^{T-1} \gamma^t \E_{S\sim d_t^{\tilde{\pi}}, A\sim \tilde{\pi}(\cdot\;;S)}[A_t^\pi(S,A)] \\
          & +\sum_{t=0}^{T-1} \gamma^t \left(\gamma \E_{S\sim d_t^\pi,A\sim\pi(\cdot\;;S)}\big[\E_{S'\sim p(\cdot\;;S,A)}[V_{t+1}^\pi(S')]\big] - \E_{S\sim d_t^\pi,A\sim\pi(\cdot\;;S)}[V_t^\pi(S)]\right) \\
          & - \sum_{t=0}^{T-1} \gamma^t \left(\gamma \E_{S\sim d_t^{\tilde{\pi}},A\sim\tilde{\pi}(\cdot\;;S)}\big[\E_{S'\sim p(\cdot\;;S,A)}[V_{t+1}^\pi(S')]\big] - \E_{S\sim d_t^{\tilde{\pi}},A\sim\tilde{\pi}(\cdot\;;S)}[V_t^\pi(S)]\right). \\
    \end{align*}
    For the second sum, we calculate, using $V_T^\pi \equiv 0$,
    \begin{align*}
          & \sum_{t=0}^{T-1} \gamma^t \left(\gamma \E_{S\sim d_t^\pi,A\sim\pi(\cdot\;;S)}\big[\E_{S'\sim p(\cdot\;;S,A)}[V_{t+1}^\pi(S')]\big] - \E_{S\sim d_t^\pi,A\sim\pi(\cdot\;;S)}[V_t^\pi(S)]\right) \\
        = & \sum_{t=0}^{T-1} \gamma^t \left(\gamma \E^\pi[V_{t+1}^\pi(S_{t+1})] - \E^\pi[V_t^\pi(S_t)]\right)                                                                                              \\
        = &\, \gamma^T \E^\pi[V_T^\pi(S_T)] - \E^\pi[V_0^\pi(S_0)] + \sum_{t=0}^{T-2} \gamma^{t+1} \E^\pi[V_{t+1}^\pi(S_{t+1})] - \sum_{t=1}^{T-1} \gamma^t \E^\pi[V_t^\pi(S_t)]            \\
        = & - \E^\pi[V_0^\pi(S_0)],
    \end{align*}
    and analogously for the third sum. Because $d_0^\pi = \mu = d_0^{\tilde{\pi}}$, we have $\E^\pi[V_0^\pi(S_0)] = \E^{\tilde{\pi}}[V_0^\pi(S_0)]$, meaning these sums cancel, which finishes the proof.
\end{proof}
Using the performance difference identity allows us to deduce the following lemma on the difference of the true policy gradient and the (non-clipped) surrogate. From now on we use $\theta$ and $\theta_\text{old}$ for arbitrary parameters, as this will be used in the later analysis.
\begin{lemma}\label{lemma:gradient}
    \begin{align*}
  \nabla_\theta J(\theta) - g_{\text{PPO}}(\theta,\theta_\text{old})
  = \sum_{t=0}^{T-1} \gamma^t \,\nabla_\theta \left(\E^{\pi_\theta}[g_t^\theta(S_t)] - \E^{\pi_{\theta_\mathrm{old}}}[g_t^\theta(S_t)]\right).
\end{align*}
with $g_t^\theta(s)\coloneq \E_{A\sim\pi_\theta(\,\cdot\,;\,s)}[\mathbb A_t^{\pi_{\theta_\mathrm{old}}}(s,A)]$ 
\end{lemma}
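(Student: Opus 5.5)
The approach is to start from the identity \eqref{eqn:true-gradient} of Proposition~\ref{prop:performancedifference} and rewrite both terms on the right-hand side of the lemma by conditioning on the state at time $t$. Write
\[
\E^{\pi_\theta}_\mu\big[\mathbb A_t^{\pi_{\theta_\mathrm{old}}}(S_t,A_t)\big]=\sum_{s}d_t^{\pi_\theta}(s)\sum_a \pi_\theta(a\,;\,s)\,\mathbb A_t^{\pi_{\theta_\mathrm{old}}}(s,a)=\E^{\pi_\theta}\big[g_t^\theta(S_t)\big],
\]
where the advantage $\mathbb A_t^{\pi_{\theta_\mathrm{old}}}$ does not depend on $\theta$. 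The marginal $d_t^{\pi_\theta}$ depends only on the actions chosen up to time $t-1$, so only these enter. Hence
\[
\nabla_\theta J(\theta)=\sum_{t=0}^{T-1}\gamma^t\nabla_\theta\E^{\pi_\theta}\big[g_t^\theta(S_t)\big].
\]
Here the gradient acts both on $d_t^{\pi_\theta}$ and on the $\theta$ inside $g_t^\theta$.

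Next, identify the surrogate. Since $\mathbb A_t^{\pi_{\theta_\mathrm{old}}}$ is $\theta$-free, condition on $S_t=s$ under $\pi_{\theta_\mathrm{old}}$ and sum over actions with weights $\pi_{\theta_\mathrm{old}}(a\,;\,s)$. The likelihood ratio then cancels:
\[
\E^{\pi_{\theta_\mathrm{old}}}\Big[\tfrac{\nabla_\theta\pi_\theta(A_t;S_t)}{\pi_{\theta_\mathrm{old}}(A_t;S_t)}\mathbb A_t^{\pi_{\theta_\mathrm{old}}}(S_t,A_t)\Big]=\sum_s d_t^{\pi_{\theta_\mathrm{old}}}(s)\sum_a\nabla_\theta\pi_\theta(a\,;\,s)\,\mathbb A_t^{\pi_{\theta_\mathrm{old}}}(s,a)=\sum_s d_t^{\pi_{\theta_\mathrm{old}}}(s)\,\nabla_\theta g_t^\theta(s).
\]
The full-support condition \eqref{eq:full-support} is what makes the ratio well defined here. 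Because $d_t^{\pi_{\theta_\mathrm{old}}}$ does not depend on $\theta$, the right-hand side equals $\nabla_\theta\E^{\pi_{\theta_\mathrm{old}}}[g_t^\theta(S_t)]$. Summing with weights $\gamma^t$ gives
\[
g_{\text{PPO}}(\theta,\theta_\text{old})=\sum_{t=0}^{T-1}\gamma^t\nabla_\theta\E^{\pi_{\theta_\mathrm{old}}}\big[g_t^\theta(S_t)\big].
\]
Subtracting this from the expression for $\nabla_\theta J(\theta)$ yields the claim.

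The only point needing care is interchanging $\nabla_\theta$ with the expectations. The state and action spaces are finite, so these are finite sums of differentiable functions, and differentiability of $d_t^{\pi_\theta}$ follows because it is a polynomial in the $\pi_\theta(a\,;\,s)$. The interchange is therefore trivial. I expect no real obstacle beyond being careful about which $\theta$ is differentiated in the first term: it is both the $\theta$ in the marginal and the one inside $g_t^\theta$, and this is exactly what produces the $\nabla_\theta \E^{\pi_\theta}[g_t^\theta]$ structure.
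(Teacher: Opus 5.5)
Your proof is correct and follows essentially the same route as the paper: apply the performance difference identity (Proposition~\ref{prop:performancedifference}) and disintegrate each expectation into the state marginal at time $t$ and the action law at $S_t$. In the surrogate term, the single likelihood ratio only reweights actions at $S_t$, and the marginal $d_t^{\pi_{\theta_\mathrm{old}}}$ does not depend on $\theta$, so the gradient can be pulled outside. Your first step correctly writes the marginal as $d_t^{\pi_\theta}$, whereas the corresponding display in the paper has $d_t^{\pi_{\text{old}}}$ there, which is a typo.
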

\begin{proof}
    The decomposition is a consequence of the performance difference lemma and Fubini's theorem that allows us to disintegrate state- and action distributions. First, from performance difference and Fubini
    \begin{align*}
           \nabla_\theta J(\theta)
           = \nabla_\theta \sum_{t=0}^{T-1}\gamma^t \E^{\pi_\theta}_\mu\big[\mathbb A_t^{\pi_{\theta_\mathrm{old}}}(S_t,A_t)\big]
           &= \nabla_\theta \sum_{t=0}^{T-1}\gamma^t \E_{S\sim d_t^{\pi_\text{old}}}\big[\E_{A\sim \pi_\theta(\cdot\,;\,S)}\big[    \mathbb A_t^{\pi_{\theta_\mathrm{old}}}(S_t,A_t)\big]\big]\\
           &=\sum_{t=0}^{T-1} \gamma^t \,\nabla_\theta \E^{\pi_\theta}[g_t^\theta(S_t)].
    \end{align*}
    Next, 
    \begin{align*}
g_{\text{PPO}}(\theta,\theta')
&=
\sum_{t=0}^{T-1} \gamma^t \E^{\pi_{\theta_\mathrm{old}}}_\mu\Big[\frac{\nabla_\theta \pi_\theta(A_t\,;\,S_t)}{\pi_{\theta_\mathrm{old}}(A_t\,;\,S_t)} \mathbb A_t^{\pi_{\theta_\mathrm{old}}}(S_t,A_t)\Big]\\
&=
\sum_{t=0}^{T-1} \gamma^t \nabla_\theta\E^{\pi_{\theta_\mathrm{old}}}\Big[\frac{ \pi_\theta(A_t\,;\,S_t)}{\pi_{\theta_\mathrm{old}}(A_t\,;\,S_t)} \mathbb A_t^{\pi_{\theta_\mathrm{old}}}(S_t,A_t)\Big]\\
&=\sum_{t=0}^{T-1} \gamma^t \nabla_\theta\E_{S\sim d_t^{\pi_{\theta_\mathrm{old}}}}\big[\E_{A\sim\pi_\theta(\,\cdot\,;\,S)}[\mathbb A_t^{\pi_{\theta_\mathrm{old}}}(S,A)]\big],
\end{align*}
where we note that without the importance ratio product, after Fubini disintegration the importance ratio only reweights actions in state $S_t$. Taking differences gives the claim.
\end{proof}

\begin{theorem}[Unclipped Surrogate Gradient Bias]\label{thm:grad-approx-mean-tv}
    Suppose $\pi_{\theta_\mathrm{old}}$ is a behavior policy with strictly positive weights and define the mean TV distance
    \begin{align*}
        \mathrm{M_{ean}TV}(\pi_{\theta_\mathrm{old}},\pi_\theta)\coloneq \frac{1}{T} \sum_{t=0}^{T-1} \E^{\pi_{\theta_\mathrm{old}}}_\mu\big[\mathrm{TV}\left(\pi_{\theta_\mathrm{old}}(\,\cdot\,;\,S_t),\pi_\theta(\,\cdot\,;\,S_t)\right)\big]
    \end{align*}
    and the max TV distance \[\mathrm{M_{ax}TV}(\pi_{\theta_\mathrm{old}},\pi_\theta)\coloneq\max_{t=0,\dots,T-1}\E^{\pi_\mathrm{old}}_\mu\big[\mathrm{TV}\big(\pi_{\theta_\mathrm{old}}(\,\cdot\,;\,S_t),{\pi_\theta}(\,\cdot\,;\,S_t)\big)\big].\]
    Under the Assumptions \ref{ass:boundedrewards}, \ref{ass:score} it holds that
    \begin{align*}
        \big\|\nabla_\theta J(\theta) - g_{\text{PPO}}(\theta,\theta_\text{old})\big\|_\infty 
        \leq \Pi_\ast \, R_\ast\, \min\big\{ c_1 \, \mathrm{M_{ean}TV}(\pi_{\theta_\mathrm{old}},\pi_\theta), c_2 \, \mathrm{M_{ax}TV}(\pi_{\theta_\mathrm{old}},\pi_\theta)\big\},
    \end{align*}
    with
    \begin{align*}
        c_1&\coloneq \begin{cases}
            8 \frac{1-\gamma^T}{1-\gamma} T \big(\frac{\gamma}{(1-\gamma)^2}+\frac{\gamma}{1-\gamma}\big)\leq \frac{16T\gamma}{(1-\gamma)^3} &:0<\gamma<1\\
            4T^4 &:\gamma=1
        \end{cases},\\
        c_2&\coloneq \begin{cases}
            8 \frac{1-\gamma^T}{1-\gamma}\frac{2\gamma-T(T+1)\gamma^{T-1}+2(T^2-1)\gamma^T-T(T-1)\gamma^{T+1}}{(1-\gamma)^3} \leq \frac{16\gamma}{(1-\gamma)^4}\quad &:0<\gamma<1\\
            8T\frac{(T-1)T(T+1)}{3}\leq \frac{8}{3}T^4 &:\gamma=1
        \end{cases}.
    \end{align*}
\end{theorem}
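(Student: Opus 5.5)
Starting point: Lemma~\ref{lemma:gradient}. The bias equals $\sum_t\gamma^t\nabla_\theta\big(\E^{\pi_\theta}[g_t^\theta(S_t)]-\E^{\pi_{\theta_\mathrm{old}}}[g_t^\theta(S_t)]\big)$. Here the gradient acts on both the state law $d_t^{\pi_\theta}$ and the integrand $g_t^\theta$.

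\textbf{Step 1: product rule.} I would write $\E^{\pi_\theta}[g_t^\theta(S_t)]=\sum_s d_t^{\pi_\theta}(s)g_t^\theta(s)$ and differentiate. This gives two terms:
\begin{itemize}
\item[(i)] $\sum_s \big(d_t^{\pi_\theta}(s)-d_t^{\pi_{\theta_\mathrm{old}}}(s)\big)\nabla_\theta g_t^\theta(s)$;
\item[(ii)] $\sum_s \nabla_\theta d_t^{\pi_\theta}(s)\, g_t^\theta(s)$.
\end{itemize}

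\textbf{Term (i).} We have $\nabla_\theta g_t^\theta(s)=\E_{A\sim\pi_\theta}[\nabla\log\pi_\theta(A;s)\mathbb A_t^{\pi_{\theta_\mathrm{old}}}(s,A)]$, which is bounded by $\Pi_\ast\cdot 2R_\ast\frac{1-\gamma^{T-t}}{1-\gamma}$. Term (i) is therefore at most $2\|\nabla g_t^\theta\|_\infty\mathrm{TV}(d_t^{\pi_\theta},d_t^{\pi_{\theta_\mathrm{old}}})$. Lemma~\ref{lem:marginal-tv-decomposition}, applied with $\pi=\pi_{\theta_\mathrm{old}}$ as reference measure, bounds this TV by $\sum_{i<t}\E^{\pi_{\theta_\mathrm{old}}}[\mathrm{TV}(\pi_\theta(\cdot;S_i),\pi_{\theta_\mathrm{old}}(\cdot;S_i))]$.

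\textbf{Term (ii): the main obstacle.} Its smallness comes from $g_t^\theta$ being small, not from a distribution shift. Since $\E_{A\sim\pi_{\theta_\mathrm{old}}}[\mathbb A_t^{\pi_{\theta_\mathrm{old}}}(s,A)]=0$,
\[
|g_t^\theta(s)|\le 2\|\mathbb A_t^{\pi_{\theta_\mathrm{old}}}\|_\infty\,\mathrm{TV}(\pi_\theta(\cdot;s),\pi_{\theta_\mathrm{old}}(\cdot;s)).
\]
For $\nabla_\theta d_t^{\pi_\theta}(s)$ I would use the likelihood-ratio form $\nabla_\theta\E^{\pi_\theta}[f(S_t)]=\E^{\pi_\theta}\big[f(S_t)\sum_{i<t}\nabla\log\pi_\theta(A_i;S_i)\big]$. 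This bounds (ii) by $t\,\Pi_\ast\,\E^{\pi_\theta}[|g_t^\theta(S_t)|]$, which is a TV term evaluated under $\pi_\theta$ rather than $\pi_{\theta_\mathrm{old}}$.

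To return to the $\pi_{\theta_\mathrm{old}}$ expectation appearing in $\mathrm{M_{ean}TV}$ and $\mathrm{M_{ax}TV}$, I would again pay $\mathrm{TV}(d_t^{\pi_\theta},d_t^{\pi_{\theta_\mathrm{old}}})$, since the integrand is bounded by $1$. Lemma~\ref{lem:marginal-tv-decomposition} then expresses everything as sums $\sum_{i\le t}\E^{\pi_{\theta_\mathrm{old}}}[\mathrm{TV}(\cdot;S_i)]$.

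A subtlety is the integrand $\sum_{i<t}\nabla\log\pi_\theta(A_i;S_i)\,g_t^\theta(S_t)$: it should be bounded in sup norm by $t\Pi_\ast\|g_t^\theta\|_\infty$ only where necessary. I would also check whether the factor $8$ in $c_1,c_2$ arises from the factors $2$ for TV duality, $2$ for the advantage bound, and $2$ for the second change of measure.

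\textbf{Step 2: bookkeeping.} After these steps, each summand $t$ is bounded by $\gamma^t\Pi_\ast R_\ast\cdot\mathrm{const}\cdot\frac{1-\gamma^{T-t}}{1-\gamma}(t+1)\sum_{i\le t}\E^{\pi_{\theta_\mathrm{old}}}[\mathrm{TV}_i]$. Two estimates then finish:
\begin{itemize}
\item Bounding each $\E[\mathrm{TV}_i]\le\mathrm{M_{ax}TV}$ and summing the resulting $\sum_t\gamma^t t(t+1)$-type series gives $c_2$; the numerator polynomial in $\gamma$ is the closed form of $\sum_{t}\gamma^t t(t+1)/2$-type sums.
\item Exchanging sums, so that each $\E[\mathrm{TV}_i]$ carries weight $\sum_{t\ge i}\gamma^t(t+1)\frac{1-\gamma^{T-t}}{1-\gamma}$, bounding that weight uniformly, and using $\sum_i\E[\mathrm{TV}_i]=T\,\mathrm{M_{ean}TV}$ gives $c_1$.
\end{itemize}
The $\gamma=1$ cases follow from the same sums evaluated directly. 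The minimum of the two bounds is then immediate.
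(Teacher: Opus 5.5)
Your plan follows the paper's proof almost step for step. It uses the same identity from Lemma~\ref{lemma:gradient} and the same product-rule split into a state-shift term against $\nabla_\theta g_t^\theta$ and a likelihood-ratio term $\E^{\pi_\theta}\big[g_t^\theta(S_t)\sum_{i<t}\nabla_\theta\log\pi_\theta(A_i;S_i)\big]$. It then uses Lemma~\ref{lem:g-upper-bound} to make $g_t^\theta$ small, one change of measure back to $\pi_{\theta_\mathrm{old}}$ at the price of $\mathrm{TV}(d_t^{\pi_{\theta_\mathrm{old}}},d_t^{\pi_\theta})$, Lemma~\ref{lem:marginal-tv-decomposition} with $\pi_{\theta_\mathrm{old}}$ as reference, and the same two summation routes. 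Your sharper advantage bound $2R_\ast\frac{1-\gamma^{T-t}}{1-\gamma}$ and your change-of-measure factor $1$ only help.

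The one step that does not deliver the statement as written is your Step 2 summary. Let $d_i:=\E^{\pi_{\theta_\mathrm{old}}}[\mathrm{TV}(\pi_{\theta_\mathrm{old}}(\cdot\,;S_i),\pi_\theta(\cdot\,;S_i))]$. Replacing the per-$t$ bound by $(t+1)\sum_{i\le t}d_i$ is valid, but it charges a spurious $t=0$ term, even though the $t=0$ summand of the bias vanishes identically ($d_0^{\pi_\theta}=\mu$ does not depend on $\theta$). This costs you the stated constants:
\begin{itemize}
\item In the MaxTV route you get $\sum_t\gamma^t(t+1)^2$ instead of $\sum_t\gamma^t t(t+1)$, whose $t=0$ term is zero.
\item In the MeanTV route the weight of $d_0$ is at least $\frac{1-\gamma^T}{1-\gamma}\ge 1$, whereas $c_1$ carries an overall factor $\gamma$.
\end{itemize}
So for small $\gamma$ (or small $T$) you would not recover $c_1$ and $c_2$.

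Keep the exact structure the paper keeps: the $t$-th summand is at most $4\Pi_\ast R_\ast\frac{1-\gamma^T}{1-\gamma}\,\gamma^t\big(t\,d_t+(2t+1)\sum_{i<t}d_i\big)$.
\begin{itemize}
\item Bounding $d_i\le\mathrm{M_{ax}TV}$ gives $\sum_t\gamma^t\big(t+(2t+1)t\big)=2\sum_t\gamma^t(t^2+t)$, hence $c_2$.
\item Exchanging sums gives weights $w_i=i\gamma^i+\sum_{t>i}\gamma^t(2t+1)\le\frac{2\gamma}{1-\gamma}+\frac{2\gamma}{(1-\gamma)^2}$, using $i\gamma^i\le\gamma\sum_{k<i}\gamma^k\le\frac{\gamma}{1-\gamma}$. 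Together with $\sum_i d_i=T\,\mathrm{M_{ean}TV}$, this gives $c_1$.
\end{itemize}

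Finally, when you match the closed form in $c_2$, use
\[
\sum_{t=0}^{T-1}\gamma^t(t^2+t)=\frac{2\gamma-T(T+1)\gamma^{T}+2(T^2-1)\gamma^{T+1}-T(T-1)\gamma^{T+2}}{(1-\gamma)^3}.
\]
The exponents printed in the statement are shifted by one: for $T=1$ the printed numerator is $2\gamma-2\neq0$, although the sum is $0$.
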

The formulation of the bias bound looks a bit complicated because it combines at once finite-time discounted, finite-time non-discounted, and infinite-time discounted MDP settings. In our PPO analysis we will only work with the finite time horizon, bounding the gradient bias with the mean-TV policy distance. For discounted infinite time-horizon MDPs the reader should work with the max-TV divergence with quartic constant in the effective time-horizon $\frac{1}{1-\gamma}$.
\begin{lemma}\label{lem:g-upper-bound}
    For any $s\in\mathcal S$ with $\big\lVert \mathbb A_t^{\pi_{\theta_\mathrm{old}}}(s,\,\cdot\,)\big\rVert_\infty \leq \mathbb{A}_{\max}$, 
    \[\big\lvert \E_{A\sim\pi_\theta(\cdot\,;\,s)}\big[\mathbb A_t^{\pi_{\theta_\mathrm{old}}}(s,A)\big] \big\rvert \leq 2 \,\mathrm{TV}\big(\pi_\theta(\,\cdot\,;\,s),{\pi_{\theta_\mathrm{old}}}(\,\cdot\,;\,s)\big) \mathbb{A}_{\max}.
    \]
\end{lemma}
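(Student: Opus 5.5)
The bound rests on a single observation: the advantage of the old policy has zero mean under the old policy. For every state $s$ one has
\[
\sum_{a\in\mathcal A}\pi_{\theta_\mathrm{old}}(a\,;\,s)\,\mathbb A_t^{\pi_{\theta_\mathrm{old}}}(s,a)=\sum_{a}\pi_{\theta_\mathrm{old}}(a\,;\,s)\,Q_t^{\pi_{\theta_\mathrm{old}}}(s,a)-V_t^{\pi_{\theta_\mathrm{old}}}(s)=0 .
\]
This is just the definition $V_t^\pi(s)=\E_{A\sim\pi(\cdot\,;\,s)}[Q_t^\pi(s,A)]$, which follows from the Markov property and the definitions of $V_t^\pi$ and $Q_t^\pi$ given in the appendix. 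So I plan to subtract this vanishing term from the quantity we want to bound:
\[
\E_{A\sim\pi_\theta(\cdot\,;\,s)}\big[\mathbb A_t^{\pi_{\theta_\mathrm{old}}}(s,A)\big]=\sum_{a}\big(\pi_\theta(a\,;\,s)-\pi_{\theta_\mathrm{old}}(a\,;\,s)\big)\,\mathbb A_t^{\pi_{\theta_\mathrm{old}}}(s,a).
\]

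The second step is to bound this difference by total variation. We can either use the triangle inequality directly, or invoke the dual characterization in \eqref{eq:TVdefinition} with $f=\mathbb A_t^{\pi_{\theta_\mathrm{old}}}(s,\cdot)/\mathbb A_{\max}$, which satisfies $\|f\|_\infty\le 1$. Either way we obtain
\[
\Big|\sum_a(\pi_\theta-\pi_{\theta_\mathrm{old}})(a\,;\,s)\,\mathbb A_t^{\pi_{\theta_\mathrm{old}}}(s,a)\Big|\le \mathbb A_{\max}\sum_a\big|\pi_\theta(a\,;\,s)-\pi_{\theta_\mathrm{old}}(a\,;\,s)\big| = 2\,\mathbb A_{\max}\,\mathrm{TV}\big(\pi_\theta(\cdot\,;\,s),\pi_{\theta_\mathrm{old}}(\cdot\,;\,s)\big).
\]
This is exactly the claim. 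The same inequality appears in the proof of Proposition~\ref{lem:smooth}.

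There is no real obstacle here. The only point needing care is the degenerate case $\mathbb A_{\max}=0$, where the advantage vanishes identically and the bound is trivial. Note also that a sharper bound, with the factor $2$ replaced by $1$, would follow from centering by the midrange, but that is not needed for the statement.
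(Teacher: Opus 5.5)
Your argument is correct and matches the paper's proof: subtract the vanishing term $\E_{A\sim\pi_{\theta_\mathrm{old}}(\cdot\,;\,s)}[\mathbb A_t^{\pi_{\theta_\mathrm{old}}}(s,A)]=0$, then apply the total-variation inequality $|\E_P f-\E_Q f|\le 2\,\mathrm{TV}(P,Q)\,\|f\|_\infty$. Drop your closing remark, though, because the factor $2$ is sharp: with two actions, $\pi_{\theta_\mathrm{old}}(\cdot\,;\,s)=(\tfrac12,\tfrac12)$, $\mathbb A_t^{\pi_{\theta_\mathrm{old}}}(s,\cdot)=(\mathbb A_{\max},-\mathbb A_{\max})$ and $\pi_\theta(\cdot\,;\,s)=(p,1-p)$, equality holds for every $p\in(0,1)$; midrange centering only gives the bound $\mathrm{TV}\cdot\big(\max_a\mathbb A_t^{\pi_{\theta_\mathrm{old}}}(s,a)-\min_a\mathbb A_t^{\pi_{\theta_\mathrm{old}}}(s,a)\big)$, and that oscillation can equal $2\mathbb A_{\max}$.
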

\begin{proof}
    Since $\E_{A\sim{\pi_{\theta_\mathrm{old}}}(\,\cdot\,;\,s)}[\mathbb A_t^{\pi_{\theta_\mathrm{old}}}(s,A)] = 0$, the TV distance inequality \citep[Proposition 4.5]{mcmixtimes} implies
    \begin{align*}
    \big|\E_{A\sim\pi_\theta(\,\cdot\,;\,s)}\big[\mathbb A_t^{\pi_{\theta_\mathrm{old}}}(s,A)\big]\big|
        & = \big|\E_{A\sim\pi_\theta(\,\cdot\,;\,s)}\big[\mathbb A_t^{\pi_{\theta_\mathrm{old}}}(s,A)\big] - \E_{A\sim{\pi_{\theta_\mathrm{old}}}(\,\cdot\,;\,s)}\big[\mathbb A_t^{\pi_{\theta_\mathrm{old}}}(s,A)\big]\big| \\
        &\leq 2 \, \mathrm{TV}(\pi_\theta(\,\cdot\,;\,s),{\pi_{\theta_\mathrm{old}}}(\,\cdot\,;\,s)) \mathbb{A}_{\max}.
    \end{align*}
\end{proof}
\begin{proof}[Proof of \cref{thm:grad-approx-mean-tv}]
    We define $g_t^\theta(s)\coloneq \E_{A\sim\pi_\theta(\,\cdot\,;\,s)}[\mathbb A_t^{\pi_{\theta_\mathrm{old}}}(s,A)]$ so that Lemma \ref{lemma:gradient} gives \[\nabla_\theta J(\theta) - g_{\text{PPO}}(\theta,\theta_\text{old})= \sum_{t=0}^{T-1} \gamma^t \,\nabla_\theta \left(\E_{S\sim d_t^{\pi_\theta}}[g_t^\theta(S)] - \E_{S\sim d_t^{\pi_{\theta_\mathrm{old}}}}[g_t^\theta(S)]\right).\] For a single coordinate $\theta_j$ of $\theta$, we first compute
    \begin{align*}
        \partial_{\theta_j} d_t^{\pi_\theta}(s) = \sum_{\tau:s_t=s}\partial_{\theta_j} \P_t^{\pi_\theta}(\tau) = \sum_{\tau:s_t=s} \P_t^{\pi_\theta}(\tau) \,\partial_{\theta_j}\! \log \P_t^{\pi_\theta}(\tau) = \sum_{\tau:s_t=s} \P_t^{\pi_\theta}(\tau) \sum_{i=0}^{t-1} \partial_{\theta_j} \! \log \pi_\theta(a_i;s_i),
    \end{align*}
    where $\tau:s_t=s$ denotes all trajectories of length $t$ ending in $s_t=s$ and the derivatives of the transition probabilities do not appear in the final expression because of their independence of $\theta$. We also have the estimate 
    \begin{align*}
        \left\lvert\partial_{\theta_j} g_t^\theta(s)\right\rvert = \Big\lvert\sum_{a\in\mathcal A} \partial_{\theta_j} \pi_\theta(a\,;\,s) \mathbb A_t^{\pi_{\theta_\mathrm{old}}}(s,a)\Big\rvert = \Big\lvert\sum_{a\in\mathcal A}\pi_\theta(a\,;\,s)\partial_{\theta_j}\!\log\pi_\theta(a;s)\mathbb A_t^{\pi_{\theta_\mathrm{old}}}(s,a)\big]\Big\rvert \leq 2 \Pi_\ast R_\ast \sum_{t=0}^{T-1} \gamma^t,
    \end{align*}
    using that the advantage is bounded above by $2 R_\ast \sum_{t=0}^{T-1} \gamma^t$ by the bounded reward assumption.
   Combining the above with \cref{lem:g-upper-bound} and applying again the TV distance inequality, we have
    \begin{align*}
        &\quad \Big\lvert \partial_{\theta_j} \Big(\E_{S\sim d_t^{\pi_\theta}}[g_t^\theta(S)] - \E_{S\sim d_t^{\pi_{\theta_\mathrm{old}}}}[g_t^\theta(S)]\Big) \Big\rvert \\
        &= \Big\lvert \sum_{s\in\mathcal{S}} \partial_{\theta_j} \!\left(d_t^{\pi_\theta}(s) g_t^\theta(s)\right) - \sum_{s\in\mathcal{S}} d_t^{\pi_{\theta_\mathrm{old}}}(s) \,\partial_{\theta_j} g_t^\theta(s) \Big\rvert \\
        &= \Big\lvert \sum_{s\in\mathcal S} \sum_{\tau:s_t=s} \P_t^{\pi_\theta}(\tau) \sum_{i=0}^{t-1} \partial_{\theta_j} \!\log \pi_\theta(a_i\,;\,s_i) \, g_t^\theta(s) + \sum_{s\in\mathcal S} (d_t^{\pi_\theta}(s) - d_t^{\pi_{\theta_\mathrm{old}}}(s)) \,\partial_{\theta_j} g_t^\theta(s) \Big\rvert \\
        &\leq t \Pi_\ast \E^{\pi_\theta}\big[g_t^\theta(S_t)\big] +4 \Pi_\ast R_\ast \Big(\sum_{i=0}^{T-1} \gamma^i\Big) \mathrm{TV}\big(d_t^{\pi_{\theta_\mathrm{old}}},d_t^{\pi_\theta}\big) \\
        &\leq \underbrace{4 \Pi_\ast R_\ast \Big(\sum_{i=0}^{T-1} \gamma^i\Big)}_{\eqcolon C} \left( t \,\E^{\pi_\theta}\big[\mathrm{TV}\big(\pi_{\theta_\mathrm{old}}(\,\cdot\,;\,S_t),\pi_\theta(\,\cdot\,;\,S_t)\big)\big] + \mathrm{TV}\big(d_t^{\pi_{\theta_\mathrm{old}}},d_t^{\pi_\theta}\big) \right).
    \end{align*}
    Because we want to avoid any expectations w.r.t. $\pi_\theta$, we again use the TV distance inequality to get
    \begin{align*}
        \E^{\pi_\theta} \big[\mathrm{TV} \big(\pi_{\theta_\mathrm{old}}(\,\cdot\,;\,S_t),{\pi_\theta}(\,\cdot\,;\,S_t)\big) \big] &\leq \E^{\pi_{\theta_\mathrm{old}}}\big[\mathrm{TV}\big(\pi_{\theta_\mathrm{old}}(\,\cdot\,;\,S_t),{\pi_\theta}(\,\cdot\,;\,S_t)\big)\big] + 2\,\mathrm{TV}\big(d_t^{\pi_{\theta_\mathrm{old}}},d_t^{\pi_\theta}\big).
    \end{align*}
    This gives
    \begin{align*}
        \Big\lvert \partial_{\theta_j} \Big(\E_{S\sim d_t^{\pi_\theta}}[g_t^\theta(S)] - \E_{S\sim d_t^{\pi_{\theta_\mathrm{old}}}}[g_t^\theta(S)]\Big) \Big\rvert \leq C \Big(&t \, \E^{\pi_{\theta_\mathrm{old}}}\big[\mathrm{TV}\big(\pi_{\theta_\mathrm{old}}(\,\cdot\,;\,S_t),{\pi_\theta}(\,\cdot\,;\,S_t)\big)\big] \\
        &+ \big(2t+ 1\big)\mathrm{TV}\big(d_t^{\pi_{\theta_\mathrm{old}}},d_t^{\pi_\theta}\big)\Big).
    \end{align*}
    Applying \cref{lem:marginal-tv-decomposition} yields
    \begin{equation}\label{eqn:thmc3-key-estimate}
        \begin{aligned}
            &\quad\big\lvert \nabla_\theta J(\theta)_j - g_{\text{PPO}}(\theta,\theta_\text{old})_j\big\rvert\\
            &\leq C \sum_{t=0}^{T-1} \gamma^t \Big(
            t \, \E^{\pi_{\theta_\mathrm{old}}}\big[\mathrm{TV}\big(\pi_{\theta_\mathrm{old}}(\,\cdot\,;\,S_t),{\pi_\theta}(\,\cdot\,;\,S_t)\big)\big] 
            + (2t+ 1)\sum_{i=0}^{t-1} \E^{\pi_{\theta_\mathrm{old}}}\left[\mathrm{TV}\big(\pi_{\theta_\mathrm{old}}(\,\cdot\,;\,S_i),\pi_\theta(\,\cdot\,;\,S_i)\big)\right]\Big).
        \end{aligned}
    \end{equation}
    Now, denoting $d_t\coloneq \E^{\pi_{\theta_\mathrm{old}}}\big[\mathrm{TV}\big(\pi_{\theta_\mathrm{old}}(\,\cdot\,;\,S_t),{\pi_\theta}(\,\cdot\,;\,S_t)\big]$, we can refactor
    \begin{align*}
       \big\lvert \nabla_\theta J(\theta)_j - g_{\text{PPO}}(\theta,\theta_\text{old})_j\big\rvert
        &\leq C \sum_{i=0}^{T-1} d_i\Big(i\gamma^i+\sum_{t=i+1}^{T-1}\gamma^t(2t+1)\Big) \\
        &\eqcolon C\sum_{i=0}^{T-1} d_i w_i(\gamma) = 4 \Pi_\ast R_\ast \sum_{i=0}^{T-1} \gamma^i \sum_{i=0}^{T-1} d_i w_i(\gamma).
    \end{align*}
     We first make the estimate \[\sum_{i=0}^{T-1} d_i w_i(\gamma)\leq \max_{i=0,\dots,T-1}w_i(\gamma)\sum_{i=0}^{T-1}d_i =\max_{i=0,\dots,T-1}w_i(\gamma)T \, \mathrm{M_{ean}TV}(\pi_{\theta_\mathrm{old}},\pi_\theta)\] and now need an upper bound for $\max_{i=0,\dots,T-1}w_i(\gamma)$.
    For $\gamma<1$, $x\mapsto x\gamma^x$ reaches a maximum of $\frac{\exp(-1)}{-\log \gamma} \leq \frac{\exp(-1)}{1-\gamma}$ at $x = (- \log \gamma)^{-1}$ and thus we have $\max_{i=0, \dots, T-1} i \gamma^i \leq \frac{\exp(-1)}{1-\gamma}$. Additionally, using a careful application of the geometric series gives \[\max_{i=0,\dots,T-1} \sum_{t=i+1}^{T-1} \gamma^t (2t+1) = \sum_{t=1}^{T-1} \gamma^t (2t+1) = 2 \frac{\gamma-T\gamma^T+(T-1)\gamma^{T+1}}{(1-\gamma)^2} + \frac{\gamma-\gamma^T}{1-\gamma} \leq \frac{2\gamma}{(1-\gamma)^2} + \frac{\gamma}{1-\gamma}\] and combining these two estimates we find $\max_{i=0,\dots,T-1} w_i(\gamma) \leq \frac{\exp(-1)}{1-\gamma} + \frac{2\gamma}{(1-\gamma)^2} + \frac{\gamma}{1-\gamma}$, which implies the constant $c_1$ from the assertion in the case $\gamma<1$. For $\gamma=1$, $c_1$ is implied by \[\max_{i=0,\dots,T-1} w_i(\gamma) = \max_{i=0,\dots,T-1} i+\sum_{t=i+1}^{T-1} (2t+1) = \max_{i=0,\dots,T-1} T^2 - 1 - i^2 - i \leq T^2.\] 
    Alternatively, for $\gamma<1$, careful application of the formula for geometric series yields
    \[\sum_{t=0}^{T-1}\gamma^t(t^2+t) = \frac{2\gamma-T(T+1)\gamma^{T-1}+2(T^2-1)\gamma^T-T(T-1)\gamma^{T+1}}{(1-\gamma)^3} \leq \frac{2\gamma}{(1-\gamma)^3}\]
    and, for $\gamma=1$, we have $\sum_{t=0}^{T-1} (t^2+t)=\frac{(T-1)T(T+1)}{3}.$
    We can use this together with \cref{eqn:thmc3-key-estimate} to estimate
    \begin{align*}
        \big\lvert \nabla_\theta J(\theta)_j - g_{\text{PPO}}(\theta,\theta_\text{old})_j\big\rvert
        &\leq C \sum_{t=0}^{T-1} \gamma^t \big(t \mathrm{M_{ax}TV}(\pi_{\theta_\mathrm{old}},\pi_\theta) + (2t+1)t\mathrm{M_{ax}TV}(\pi_{\theta_\mathrm{old}},\pi_\theta) \big) \\
        &= 8 \Pi_\ast R_\ast \mathrm{M_{ax}TV}(\pi_{\theta_\mathrm{old},\pi_\theta}) \sum_{t=0}^{T-1} \gamma^t \sum_{t=0}^{T-1} \gamma^t (t^2 + t)\\
        &\leq \Pi_\ast R_\ast c_2 \mathrm{M_{ax}TV}(\pi_{\theta_\mathrm{old}},\pi_\theta)
    \end{align*}
    with $c_2$ from the assertion.
\end{proof}

\subsection{Clipped surrogate gradient bias}
In this section, we establish an upper bound for the difference between the unclipped surrogate gradient and a clipped surrogate gradient that mimics the structure of the clipped loss introduced in the original PPO paper \cite{PPO}. Combining this bound with the upper bound derived in the section before, we can bound the distance between the clipped surrogate gradient and the true policy gradient.
We introduce a clipped surrogate gradient proxy that truncates the contribution of samples whose importance ratio deviates too much from one.
Compared to the original PPO objective \cite{PPO}, the truncation used here is symmetric in the ratio and, therefore, slightly more conservative; see Remark~\ref{rem:ppo-vs-symmetric-truncation} below.

Consider the following surrogate gradient:
\begin{align}\label{eq: clipped surrogate}
        g_{\text{PPO}}^{\text{clip}}(\theta,\theta_{\text{old}})
        :=\sum\limits_{t=0}^{T-1} \gamma^t\E^{\pi_{\theta_\mathrm{old}}}\Big[ \frac{\nabla_{\theta}\pi_\theta(A_t\, ;\, S_t)}{\pi_{\theta_\mathrm{old}}(A_t\, ;\,S_t) }\mathds{1}_{\big |\frac{\pi_\theta(A_t\, ;\, S_t)}{\pi_{\theta_\mathrm{old}}(A_t\, ;\,S_t) }-1 \big|\leq\epsilon}  \mathbb A_t^{\pi_{\theta_\mathrm{old}}}({S_t},{A_t}) \Big]
 \end{align}

\begin{remark}
\label{rem:ppo-vs-symmetric-truncation}
Note that \eqref{eq: clipped surrogate} is a two-sided truncated gradient proxy. In contrast, the original PPO clipped objective \cite{PPO} 
clips asymmetrically depending on the sign of the advantage, while \eqref{eq: clipped surrogate}  truncates both sides regardless of the sign of $\ \mathbb A_t^{\pi_{\theta_\mathrm{old}}}$, which simplifies the analysis at the cost of being more conservative.
\end{remark}

The main result of this section will be the following:

\begin{theorem}\label{thm: clipped grad-approx-mean-tv}
    Suppose $\pi_{\theta_\mathrm{old}}$ is a behavior policy with strictly positive weights.
    Under the Assumptions \ref{ass:boundedrewards}, \ref{ass:score} it holds that
    \begin{align*}
        \big\lVert g_{\text{PPO}}(\theta,\theta_\text{old})-g^{\text{clip}}_{\text{PPO}}(\theta,\theta_\text{old})\big \rVert_\infty 
        \leq C \cdot \mathrm{M_{\text{ean}}TV}(\pi_{\theta_\mathrm{old}},\pi_\theta)
    \end{align*}
   for $C:= 4\Pi_\ast R_\ast\Big(1+\frac{1}{\epsilon}\Big)\,\frac{T}{1-\gamma},$ where $\epsilon$ is the clipping parameter.
\end{theorem}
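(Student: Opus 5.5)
The difference between the unclipped and clipped surrogates only involves samples whose ratio leaves the trust band. I would write
\begin{align*}
g_{\text{PPO}}(\theta,\theta_\text{old})-g^{\text{clip}}_{\text{PPO}}(\theta,\theta_\text{old})
=\sum_{t=0}^{T-1}\gamma^t\,\E^{\pi_{\theta_\mathrm{old}}}\Big[\frac{\nabla_\theta\pi_\theta(A_t;S_t)}{\pi_{\theta_\mathrm{old}}(A_t;S_t)}\,\mathds 1_{|r_t(\theta)-1|>\epsilon}\,\mathbb A_t^{\pi_{\theta_\mathrm{old}}}(S_t,A_t)\Big],
\end{align*}
where $r_t(\theta)=\pi_\theta(A_t;S_t)/\pi_{\theta_\mathrm{old}}(A_t;S_t)$. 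I then disintegrate each summand as in the proof of Lemma~\ref{lemma:gradient}: first condition on $S_t\sim d_t^{\pi_{\theta_\mathrm{old}}}$, then sum over actions. The action sum cancels the importance ratio, so for fixed $s$ the inner term is
\[
\sum_{a\in E_s}\nabla_\theta\pi_\theta(a;s)\,\mathbb A_t^{\pi_{\theta_\mathrm{old}}}(s,a),
\qquad
E_s:=\{a:|\pi_\theta(a;s)/\pi_{\theta_\mathrm{old}}(a;s)-1|>\epsilon\}.
\]

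Coordinatewise, I use $|\partial_{\theta_j}\pi_\theta(a;s)|\le\pi_\theta(a;s)\Pi_\ast$ together with the advantage bound $\|\mathbb A_t^{\pi_{\theta_\mathrm{old}}}\|_\infty\le 2R_\ast\frac{1-\gamma^T}{1-\gamma}\le \frac{2R_\ast}{1-\gamma}$. This bounds the inner term by $\frac{2\Pi_\ast R_\ast}{1-\gamma}\,\pi_\theta(E_s;s)$.

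The key step is to bound $\pi_\theta(E_s;s)$ by TV. On $E_s$ we have $|\pi_\theta(a;s)-\pi_{\theta_\mathrm{old}}(a;s)|>\epsilon\,\pi_{\theta_\mathrm{old}}(a;s)$, which is a Markov-type inequality. It gives
\[
\pi_{\theta_\mathrm{old}}(E_s;s)\le \frac1\epsilon\sum_a|\pi_\theta(a;s)-\pi_{\theta_\mathrm{old}}(a;s)|=\frac{2}{\epsilon}\,\mathrm{TV}\big(\pi_\theta(\cdot;s),\pi_{\theta_\mathrm{old}}(\cdot;s)\big).
\]
Since also $\pi_\theta(E_s;s)\le\pi_{\theta_\mathrm{old}}(E_s;s)+\mathrm{TV}$, it follows that
\[
\pi_\theta(E_s;s)\le 2\Big(1+\frac1\epsilon\Big)\mathrm{TV}\big(\pi_\theta(\cdot;s),\pi_{\theta_\mathrm{old}}(\cdot;s)\big),
\]
with room to spare in the constant.

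Finally, I take the expectation over $S_t\sim d_t^{\pi_{\theta_\mathrm{old}}}$ and sum over $t$, bounding $\gamma^t\le1$. This gives
\[
\frac{4\Pi_\ast R_\ast}{1-\gamma}\Big(1+\frac1\epsilon\Big)\sum_{t=0}^{T-1}\E^{\pi_{\theta_\mathrm{old}}}\big[\mathrm{TV}\big(\pi_{\theta_\mathrm{old}}(\cdot;S_t),\pi_\theta(\cdot;S_t)\big)\big]
=C\,\mathrm{M_{ean}TV}(\pi_{\theta_\mathrm{old}},\pi_\theta),
\]
which is exactly the claimed $C$. For $\gamma=1$ the factor $\frac{1}{1-\gamma}$ should be read as the finite sum $T$.

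I expect the main obstacle to be the Markov-type step, specifically keeping track of the measure under which the exceedance set is weighted. The set $E_s$ is weighted by $\pi_\theta$ after the ratio cancels, while the natural bound controls its $\pi_{\theta_\mathrm{old}}$-mass, so I must not drop the extra TV term when passing between the two. Everything else reuses the disintegration from Lemma~\ref{lemma:gradient} and the bounded-score estimate.
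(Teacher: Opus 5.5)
Your proof is correct and follows the paper's argument. You use the same indicator decomposition of the difference, the bounds $|\partial_{\theta_j}\pi_\theta|\le \pi_\theta\Pi_\ast$ and $\|\mathbb A_t^{\pi_{\theta_\mathrm{old}}}\|_\infty\le 2R_\ast/(1-\gamma)$, and a Markov-inequality bound on the exceedance set (the paper's Lemma~\ref{lem: tv bound for ratio tail exp}), then integrate over $S_t\sim d_t^{\pi_{\theta_\mathrm{old}}}$ and drop $\gamma^t$. The only cosmetic difference is how you bound $\pi_\theta(E_s;s)$: you use $\pi_{\theta_\mathrm{old}}(E_s;s)+\mathrm{TV}$, whereas the paper splits $r\le|r-1|+1$, and your route gives the slightly sharper factor $1+\frac{2}{\epsilon}$ in place of $2+\frac{2}{\epsilon}$.
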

Using the triangle inequality together with Theorem \ref{thm:grad-approx-mean-tv} and Theorem \ref{thm: clipped grad-approx-mean-tv}, yields 
  \begin{align*}
        \big\lVert {\nabla_\theta J(\theta)}-g^{\text{clip}}_{\text{PPO}}(\theta,\theta_\text{old})\big \rVert_\infty 
        \leq \text{const.} \cdot \mathrm{M_{\text{ean}}TV}(\pi_{\theta_\mathrm{old}},\pi_\theta).
    \end{align*}
Estimating the mean total variation by $\frac{\Pi_\ast}{2}|\theta-\theta_\text{old}|$ via Lemma \ref{lem:TVPOLICY} finally gives the main bias bound from the main text:
\begin{theorem}[Theorem \ref{Thm:1} from the main text]\label{Thm:1'}
    Suppose $\pi_{\theta_\mathrm{old}}$ is a behavior policy with strictly positive weights.
    Under the Assumptions \ref{ass:boundedrewards}, \ref{ass:score} it holds that
     \begin{align*}
        \big\lVert {\nabla_\theta J(\theta)}-g^{\text{clip}}_{\text{PPO}}(\theta,\theta_\text{old})\big \rVert_\infty 
        \leq R \,|\theta-\theta_\text{old}|, 
    \end{align*}
    where $R = \Pi_\ast^2 R_\ast \big( \frac{8 T \gamma}{(1-\gamma)^3} + \frac{2T}{1-\gamma} (1+\frac{1}{\epsilon}) \big)$ is the sum of the constants from Theorems \ref{thm:grad-approx-mean-tv} and \ref{thm: clipped grad-approx-mean-tv} multiplied by $\frac{\Pi_\ast}{2}.$
\end{theorem}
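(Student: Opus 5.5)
The plan is to split the bias by the triangle inequality into an unclipped part and a clipping part,
\begin{align*}
\big\lVert \nabla_\theta J(\theta)-g^{\text{clip}}_{\text{PPO}}(\theta,\theta_\text{old})\big\rVert_\infty
\le \big\lVert \nabla_\theta J(\theta)-g_{\text{PPO}}(\theta,\theta_\text{old})\big\rVert_\infty+\big\lVert g_{\text{PPO}}(\theta,\theta_\text{old})-g^{\text{clip}}_{\text{PPO}}(\theta,\theta_\text{old})\big\rVert_\infty .
\end{align*}
Each piece will be bounded by a constant times $\mathrm{M_{ean}TV}(\pi_{\theta_\mathrm{old}},\pi_\theta)$.

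For the first term I invoke Theorem \ref{thm:grad-approx-mean-tv} in its mean-TV form, with constant $\Pi_\ast R_\ast c_1 \le \Pi_\ast R_\ast\frac{16T\gamma}{(1-\gamma)^3}$. For the second term I invoke Theorem \ref{thm: clipped grad-approx-mean-tv}, with constant $4\Pi_\ast R_\ast(1+\frac1\epsilon)\frac{T}{1-\gamma}$. I take that theorem as given, since it is stated earlier. Its proof would go as follows:
\begin{itemize}
\item Write the difference as $\sum_t\gamma^t\E^{\pi_{\theta_\mathrm{old}}}\big[\frac{\nabla\pi_\theta}{\pi_{\theta_\mathrm{old}}}\mathds{1}_{|r_t-1|>\epsilon}\mathbb A_t^{\pi_{\theta_\mathrm{old}}}\big]$.
\item Bound $|\nabla\pi_\theta|/\pi_{\theta_\mathrm{old}}\le \Pi_\ast r_t$.
\item On $\{|r_t-1|>\epsilon\}$, use $\mathds{1}\le |r_t-1|/\epsilon$ and $r_t\le 1+|r_t-1|$, which gives $r_t\mathds{1}\le(1+\frac1\epsilon)|r_t-1|$.
\item Use $\E_{A\sim\pi_{\theta_\mathrm{old}}}|r-1| = 2\,\mathrm{TV}$.
\end{itemize}

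Summing the two constants gives a bound $\big(\Pi_\ast R_\ast\frac{16T\gamma}{(1-\gamma)^3}+4\Pi_\ast R_\ast(1+\frac1\epsilon)\frac{T}{1-\gamma}\big)\mathrm{M_{ean}TV}(\pi_{\theta_\mathrm{old}},\pi_\theta)$.

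Finally, Lemma \ref{lem:TVPOLICY} gives $\mathrm{TV}(\pi_{\theta_\mathrm{old}}(\cdot;s),\pi_\theta(\cdot;s))\le\frac12\Pi_\ast|\theta-\theta_\text{old}|$ uniformly in $s$. Averaging over $t$ and the state distribution then yields $\mathrm{M_{ean}TV}\le\frac{\Pi_\ast}{2}|\theta-\theta_\text{old}|$. Multiplying through produces exactly $R=\Pi_\ast^2R_\ast\big(\frac{8T\gamma}{(1-\gamma)^3}+\frac{2T}{1-\gamma}(1+\frac1\epsilon)\big)$.

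There is no real obstacle at this level, since the heavy lifting sits in the two cited theorems. The only care needed is bookkeeping. For $\gamma<1$ I use the simplified upper bound $c_1\le 16T\gamma/(1-\gamma)^3$, and I check that the factor $\frac{\Pi_\ast}{2}$ halves both constants as claimed. For $\gamma=1$ the analogous statement follows with $c_1=4T^4$.
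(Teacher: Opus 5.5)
Your proposal is correct and is essentially the paper's own proof. The triangle inequality splits the bias into the unclipped part and the clipping error. Both are bounded by multiples of $\mathrm{M_{ean}TV}(\pi_{\theta_\mathrm{old}},\pi_\theta)$ via Theorem~\ref{thm:grad-approx-mean-tv} (with $c_1\le 16T\gamma/(1-\gamma)^3$) and Theorem~\ref{thm: clipped grad-approx-mean-tv}. Lemma~\ref{lem:TVPOLICY} then gives $\mathrm{M_{ean}TV}\le\frac{\Pi_\ast}{2}|\theta-\theta_\text{old}|$, which yields exactly the stated $R$.

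Your sketch of the clipping bound, the pointwise inequality $r\mathds{1}_{|r-1|>\epsilon}\le(1+\frac1\epsilon)|r-1|$, is just the Markov-inequality argument of Lemma~\ref{lem: tv bound for ratio tail exp} written pointwise, and it gives the same constant.
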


For the proof of \autoref{thm: clipped grad-approx-mean-tv}, we need the following two lemmas. 

\begin{lemma}\label{lem: exp ratios minus 1}
    Let $P$ and $Q$ be two discrete probability distributions on $\mathcal A$. Then,
    \begin{equation*}
        \mathbb{E}_{A \sim P}\left [\left | \frac{Q(A)}{P(A)} -1 \right | \right] =\; 2\,  \mathrm{TV}(P,Q).
    \end{equation*}
\end{lemma}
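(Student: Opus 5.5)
The plan is to write the expectation as a finite sum over $\mathcal A$ and cancel the weights $P(a)$. We need $P(a)>0$ wherever $P$ is used, so that the ratio is defined. This holds in our application because of the full-support condition \eqref{eq:full-support}. More generally, one interprets the expectation as a sum over the support of $P$; that case is discussed at the end.

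With full support, the direct computation is
\begin{align*}
\mathbb{E}_{A\sim P}\Big[\Big|\frac{Q(A)}{P(A)}-1\Big|\Big]
=\sum_{a\in\mathcal A} P(a)\,\Big|\frac{Q(a)-P(a)}{P(a)}\Big|
=\sum_{a\in\mathcal A}\big|Q(a)-P(a)\big| .
\end{align*}
The first representation of total variation in \eqref{eq:TVdefinition} identifies the right-hand side as $2\,\mathrm{TV}(P,Q)$, which proves the claim.

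No step is a genuine obstacle; the only care needed is the support issue. If $P$ vanishes at some actions, the sum runs only over $\operatorname{supp}P$. The left-hand side then equals $\sum_{a\in\operatorname{supp}P}|Q(a)-P(a)|$, which differs from $2\,\mathrm{TV}$ by the mass $Q(\mathcal A\setminus\operatorname{supp}P)$. We therefore state explicitly that the lemma is applied with $P=\pi_{\theta_\mathrm{old}}(\cdot\,;\,s)$, which has full support by \eqref{eq:full-support}. In that setting the identity holds exactly, and the lemma can be used in the proof of Theorem \ref{thm: clipped grad-approx-mean-tv} to control the mass of the clipped region via Markov's inequality.
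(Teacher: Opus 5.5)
Your proof is correct and is exactly the paper's argument: write the expectation as $\sum_{a} P(a)\lvert Q(a)/P(a)-1\rvert$, cancel $P(a)$, and recognize $\sum_a \lvert Q(a)-P(a)\rvert = 2\,\mathrm{TV}(P,Q)$. Your support caveat is also right: without $P(a)>0$ for all $a$, the left side falls short by $Q(\mathcal A\setminus\operatorname{supp}P)$. The paper leaves this point implicit and handles it in its application through the standing full-support assumption on $\pi_{\theta_\mathrm{old}}$.
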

\begin{proof}
By the definition of the total variation distance 
\begin{equation*}
    \mathrm{TV}(P,Q)
= \frac{1}{2}\sum_{a\in \mathcal{A}} |P(a) - Q(a)|.
\end{equation*}
Thus, we obtain
\begin{align*}
 \mathbb{E}_{A \sim P}\left [\left | \frac{Q(A)}{P(A)} -1 \right | \right]
 = 
 \sum_{a\in \mathcal{A}} P(a)\left|\frac{Q(a)}{P(a)} - 1\right|
= 
\sum_{a\in \mathcal{A}} |Q(a) - P(a)|
= 2\,\mathrm{TV}(P,Q). \qquad \qedhere
    \end{align*}
\end{proof}

\begin{lemma}\label{lem: tv bound for ratio tail exp}
 Let $P$ and $Q$ be two discrete probability distributions on $\mathcal A$.
 Then,
 \begin{align*}
      \mathbb{E}_{A \sim P}\left [\left | \frac{Q(A)}{P(A)}  \right |\mathds{1}_{\left | \frac{Q(A)}{P(A)} -1 \right |>\epsilon } \right]
       \leq \left (2+\frac{2}{\epsilon}\right) \mathrm{TV}(P,Q). 
 \end{align*}
\end{lemma}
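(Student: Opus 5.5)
Write $r(a)=Q(a)/P(a)$, which is well defined because $P$ has full support on $\mathcal A$ (as in \eqref{eq:full-support}). The plan is to split the indicator event according to which side of the clipping interval $r$ falls on, and to compare $r$ with $|r-1|$ on each side. Since $r\ge 0$, the absolute value in the statement can be dropped. We then write
\[
\mathbb{E}_{A\sim P}\big[r(A)\,\mathds{1}_{|r(A)-1|>\epsilon}\big]
=\mathbb{E}_{P}\big[r\,\mathds{1}_{r>1+\epsilon}\big]+\mathbb{E}_{P}\big[r\,\mathds{1}_{r<1-\epsilon}\big].
\]

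\emph{Upper tail.} On $\{r>1+\epsilon\}$ we have $r=(r-1)+1$ and $1<(r-1)/\epsilon$. Hence $r\le (1+\tfrac1\epsilon)(r-1)=(1+\tfrac1\epsilon)|r-1|$ there.

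\emph{Lower tail.} On $\{r<1-\epsilon\}$ we have $|r-1|>\epsilon$. Combined with $r\le 1$, this gives $r\le 1<|r-1|/\epsilon$. (If $\epsilon\ge1$ the event is empty, and the bound holds trivially.)

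In both cases $r\,\mathds{1}_{|r-1|>\epsilon}\le(1+\tfrac1\epsilon)|r-1|$ pointwise. Taking expectations under $P$ and applying Lemma~\ref{lem: exp ratios minus 1} gives
\[
\mathbb{E}_P\big[r\,\mathds{1}_{|r-1|>\epsilon}\big]\le\Big(1+\frac1\epsilon\Big)\,\mathbb{E}_P\big[|r-1|\big]=\Big(2+\frac2\epsilon\Big)\mathrm{TV}(P,Q),
\]
which is exactly the claim.

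The only step needing care is the upper tail. There $r$ can be large, so the bound has to come from $r-1$ dominating $r$ up to the factor $1+1/\epsilon$. Once that pointwise comparison is in hand, everything reduces to Lemma~\ref{lem: exp ratios minus 1}.
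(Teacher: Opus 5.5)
Your proof is correct and is essentially the paper's argument. The paper uses $r\le |r-1|+1$, drops the indicator on the first term, and applies Markov's inequality to the second; that is exactly your pointwise bound $r\,\mathds{1}_{|r-1|>\epsilon}\le(1+\tfrac1\epsilon)|r-1|$ integrated under $P$. Your split into upper and lower tails is harmless but unnecessary, since $r\le |r-1|+1<(1+\tfrac1\epsilon)|r-1|$ holds on the whole event $\{|r-1|>\epsilon\}$.
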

 \begin{proof}
     Simply applying the triangle inequality $| \frac{Q(A)}{P(A)}| = | \frac{Q(A)}{P(A)}-1 +1| \leq | \frac{Q(A)}{P(A)}-1| + 1$ inside the expectation, yields
\begin{align*}
      \mathbb{E}_{A \sim P}\left [\Big| \frac{Q(A)}{P(A)}  \Big|\mathds{1}_{\big| \frac{Q(A)}{P(A)} -1 \big|>\epsilon } \right]
&\leq 
      \mathbb{E}_{A \sim P}\left [\Big| \frac{Q(A)}{P(A)}  -1 \Big|\mathds{1}_{\big| \frac{Q(A)}{P(A)} -1 \big |>\epsilon } \right]
      + \mathbb{E}_{A \sim P}\big [\mathds{1}_{\big | \frac{Q(A)}{P(A)} -1 \big |>\epsilon } \big] \\
&\leq   
          \mathbb{E}_{A \sim P}\left [\Big | \frac{Q(A)}{P(A)}  -1 \Big|  \right]
      + \mathbb{E}_{A \sim P}\big [\mathds{1}_{\big | \frac{Q(A)}{P(A)} -1 \big |>\epsilon } \big]
      \\
&=    \mathbb{E}_{A \sim P}\left [\Big| \frac{Q(A)}{P(A)}  -1 \Big|  \right]
      + \mathbb{P}_{A \sim P}\Big(\Big | \frac{Q(A)}{P(A)} -1 \Big |>\epsilon  \Big).
\end{align*} 
Applying Markov's inequality,
\begin{align*}
    \mathbb{P}_{A \sim P}\left (\Big| \frac{Q(A)}{P(A)} -1 \Big|>\epsilon  \right) \leq \frac{\mathbb{E}_{A \sim P}\left [\Big| \frac{Q(A)}{P(A)} -1 \Big| \right]}{\epsilon},
\end{align*}
together with Lemma \ref{lem: exp ratios minus 1}, yields the final result
\begin{align*}
  \mathbb{E}_{A \sim P}\left [\Big | \frac{Q(A)}{P(A)}  \Big |\mathds{1}_{\left | \frac{Q(A)}{P(A)} -1 \right |>\epsilon } \right]   
&\leq 
   2 \, \mathrm{TV}(P,Q) + \frac{2\,  \mathrm{TV}(P,Q)}{\epsilon}. \qquad\qedhere
 \end{align*}    
 \end{proof}

 Now we have all ingredients for the proof of \autoref{thm: clipped grad-approx-mean-tv}.
 \begin{proof}[Proof of \autoref{thm: clipped grad-approx-mean-tv}]
Again, we compute the differences of a single coordinate $j \in \{1, \dots, d\}$:    
\begin{align*}
     &\big\lvert g_{\text{PPO}}(\theta,\theta_\text{old})_j- g_{\text{PPO}}^{\text{clip}}(\theta,\theta_\text{old})_j\big\rvert \\
    &=
    \Big\lvert 
    \sum\limits_{t=0}^{T-1} \gamma^t \;
    \E^{\pi_{\theta_\mathrm{old}}}\Big[ \Big(\frac{\partial_{\theta_j}\pi_\theta(A_t\, ;\, S_t)}{\pi_{\theta_\mathrm{old}}(A_t\, ;\,S_t) } 
    -  
    \frac{\partial_{\theta_j}\pi_\theta(A_t\, ;\, S_t)}{\pi_{\theta_\mathrm{old}}(A_t\, ;\,S_t) }  \mathds{1}_{\big |\frac{\pi_\theta(A_t\, ;\, S_t)}{\pi_{\theta_\mathrm{old}}(A_t\, ;\,S_t) }-1 \big|\leq\epsilon}\Big ) \mathbb A_t^{\pi_{\theta_\mathrm{old}}}({S_t},{A_t}) \Big]
    \Big\rvert
    \\
&=
    \Big\lvert 
    \sum\limits_{t=0}^{T-1} \gamma^t \;
    \E^{\pi_{\theta_\mathrm{old}}}\Big[ \frac{\partial_{\theta_j}\pi_\theta(A_t\, ;\, S_t)}{\pi_{\theta_\mathrm{old}}(A_t\, ;\,S_t) } 
    \mathds{1}_{\big |\frac{\pi_\theta(A_t\, ;\, S_t)}{\pi_{\theta_\mathrm{old}}(A_t\, ;\,S_t) }-1 \big|>\epsilon}  \mathbb A_t^{\pi_{\theta_\mathrm{old}}}({S_t},{A_t}) \Big]
    \Big\rvert
     \\
&=
    \Big\lvert 
    \sum\limits_{t=0}^{T-1} \gamma^t \;
    \E^{\pi_{\theta_\mathrm{old}}}\Big[ 
    \frac{\pi_\theta(A_t\, ;\, S_t)}{\pi_{\theta_\mathrm{old}}(A_t\, ;\,S_t) }  
    \mathds{1}_{\big |\frac{\pi_\theta(A_t\, ;\, S_t)}{\pi_{\theta_\mathrm{old}}(A_t\, ;\,S_t) }-1 \big|>\epsilon}  \; (\partial_{\theta_j} \log \pi_\theta(A_t\, ;\, S_t)) \, \mathbb A_t^{\pi_{\theta_\mathrm{old}}}({S_t},{A_t}) \Big]
    \Big\rvert   \\
&\leq 2 \Pi_\ast R_\ast \frac{1-\gamma ^T}{1-\gamma}
    \sum\limits_{t=0}^{T-1} \gamma^t \; 
    \E^{\pi_{\theta_\mathrm{old}}}\Big[ 
        \Big\lvert 
            \frac{\pi_\theta(A_t\, ;\, S_t)}{\pi_{\theta_\mathrm{old}}(A_t\, ;\,S_t) } 
        \Big\rvert  
        \mathds{1}_{\big |\frac{\pi_\theta(A_t\, ;\, S_t)}{\pi_{\theta_\mathrm{old}}(A_t\, ;\,S_t) }-1 \big|>\epsilon}  
        \; 
    \Big]   ,
\end{align*}
where we have used Assumptions \ref{ass:boundedrewards}, \ref{ass:score} to bound $\big|\partial_{\theta_j} \log \pi_\theta(A_t\, ;\, S_t)\big| \leq \Pi_\ast $ and $ \big|\mathbb A_t^{\pi_{\theta_\mathrm{old}}}({S_t},{A_t})\big|\leq 2R_\ast\frac{1-\gamma^T}{1-\gamma}$.
Now, by Lemma~\ref{lem: tv bound for ratio tail exp}, we obtain for fixed $s \in \mathcal{S}$ that
\begin{equation*}
    \E_{A_t \sim \pi_{\theta_\mathrm{old}}(\,\cdot\,;\, s)}\Big[ 
        \Big\lvert 
            \frac{\pi_\theta(A_t\, ;\, s)}{\pi_{\theta_\mathrm{old}}(A_t\, ;\,s) } 
        \Big\rvert  
        \mathds{1}_{\big |\frac{\pi_\theta(A_t\, ;\, s)}{\pi_{\theta_\mathrm{old}}(A_t\, ;\,s) }-1 \big|>\epsilon}  
        \; 
    \Big] 
    \leq \Big (2+\frac{2}{\epsilon}\Big) \mathrm{TV}\left (\pi_{\theta_\mathrm{old}}(\cdot \, ;\, s),\pi_\theta(\cdot\, ;\, s) \right ). 
\end{equation*}
Integrating out the distribution of $S_t$ with the use of Fubini's Theorem, yields 
\begin{align*}
\big\lvert g_{\text{PPO}}(\theta,\theta_\text{old})_j- g_{\text{PPO}}^{\text{clip}}(\theta,\theta_\text{old})_j\big\rvert
    &=2  \Pi_\ast R_\ast\frac{1-\gamma^T}{1-\gamma}
    \sum\limits_{t=0}^{T-1} \gamma^t \; 
    \E^{\pi_{\theta_\mathrm{old}}}\Big[ 
        \Big\lvert 
            \frac{\pi_\theta(A_t\, ;\, S_t)}{\pi_{\theta_\mathrm{old}}(A_t\, ;\,S_t) } 
        \Big\rvert  
        \mathds{1}_{\big |\frac{\pi_\theta(A_t\, ;\, S_t)}{\pi_{\theta_\mathrm{old}}(A_t\, ;\,S_t) }-1 \big|>\epsilon}  
        \; 
    \Big] \\
&\leq
2\Pi_\ast R_\ast\frac{1-\gamma^T}{1-\gamma} \left (2+\frac{2}{\epsilon}\right)
    \sum\limits_{t=0}^{T-1} \gamma^t \; 
    \E^{\pi_{\theta_\mathrm{old}}}\left[ 
\mathrm{TV}\left (\pi_{\theta_\mathrm{old}}(\,\cdot \, ;\, S_t),\pi_\theta(\,\cdot\, ;\, S_t) \right )
    \right] \\
&\leq
    2\Pi_\ast R_\ast\frac{1-\gamma^T}{1-\gamma} \left (2+\frac{2}{\epsilon}\right) 
        \frac{T}{T}\sum\limits_{t=0}^{T-1}  
        \E^{\pi_{\theta_\mathrm{old}}}\left[ 
    \mathrm{TV}\left (\pi_{\theta_\mathrm{old}}(\,\cdot \, ;\, S_t),\pi_\theta(\,\cdot\, ;\, S_t) \right )
        \right] \\
&= 
    2\Pi_\ast R_\ast\frac{1-\gamma^T}{1-\gamma} \left (2+\frac{2}{\epsilon}\right) T \cdot 
         \mathrm{M_{\text{ean}}TV}(\pi_{\theta_\mathrm{old}},\pi_\theta)  .       \qquad \qedhere
\end{align*}
 \end{proof}

\subsection{Surrogate gradients are bounded}
Next, we show that the surrogate gradients are uniformly bounded.
\begin{proposition}[Surrogate gradient bounds]\label{prop:surrogategradbound}
    Under Assumptions \ref{ass:boundedrewards}, \ref{ass:score}, we have 
    \begin{align*}
    \big\lVert g^{\text{clip}}_{\text{PPO}}(\theta,\theta_\text{old})\big\rVert_\infty 
    \leq G:=2\,\Pi_\ast R_\ast\left(\frac{1-\gamma^T}{1-\gamma}\right)^2.
    \end{align*}
\end{proposition}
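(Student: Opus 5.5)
The plan is to bound each coordinate of $g^{\text{clip}}_{\text{PPO}}$ separately. We rewrite the integrand in score-function form, control the importance ratio in expectation, and then sum the time-dependent bounds. Fix a coordinate $j\in\{1,\dots,d\}$. As in the proof of Theorem~\ref{thm: clipped grad-approx-mean-tv}, use $\partial_{\theta_j}\pi_\theta = \pi_\theta\,\partial_{\theta_j}\log\pi_\theta$ to write
\begin{align*}
g^{\text{clip}}_{\text{PPO}}(\theta,\theta_\text{old})_j=\sum_{t=0}^{T-1}\gamma^t\,\E^{\pi_{\theta_\mathrm{old}}}\Big[\frac{\pi_\theta(A_t;S_t)}{\pi_{\theta_\mathrm{old}}(A_t;S_t)}\,\mathds{1}_{\big|\frac{\pi_\theta(A_t;S_t)}{\pi_{\theta_\mathrm{old}}(A_t;S_t)}-1\big|\le\epsilon}\,\partial_{\theta_j}\log\pi_\theta(A_t;S_t)\,\mathbb A_t^{\pi_{\theta_\mathrm{old}}}(S_t,A_t)\Big].
\end{align*}

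Next, move absolute values inside the expectation. Assumption~\ref{ass:score} gives $|\partial_{\theta_j}\log\pi_\theta|\le\Pi_\ast$. Assumption~\ref{ass:boundedrewards} gives the advantage bound $\|\mathbb A_t^{\pi_{\theta_\mathrm{old}}}\|_\infty\le 2R_\ast\frac{1-\gamma^T}{1-\gamma}$, recorded after that assumption. The indicator is at most $1$, so it can simply be dropped.

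What remains is the expected importance ratio. Conditioning on $S_t=s$ and applying Fubini,
\[
\E_{A\sim\pi_{\theta_\mathrm{old}}(\cdot;s)}\Big[\frac{\pi_\theta(A;s)}{\pi_{\theta_\mathrm{old}}(A;s)}\Big]=\sum_a\pi_\theta(a;s)=1 .
\]
This uses the full-support assumption \eqref{eq:full-support} and the nonnegativity of the ratio. Hence the $t$-th summand is at most $\gamma^t\,\Pi_\ast\, 2R_\ast\frac{1-\gamma^T}{1-\gamma}$. Summing over $t$ gives $\sum_{t=0}^{T-1}\gamma^t=\frac{1-\gamma^T}{1-\gamma}$, and therefore the bound $G=2\Pi_\ast R_\ast\big(\frac{1-\gamma^T}{1-\gamma}\big)^2$. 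Since the bound is uniform in $j$, it holds in $\|\cdot\|_\infty$. The same argument without the indicator covers $g_{\text{PPO}}$, and it covers $\nabla_\theta J$ itself via Proposition~\ref{prop:pgt}.

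There is no real obstacle. The one point needing care is not to bound the ratio pointwise: on the clipping event it is at most $1+\epsilon$, but that would add a superfluous factor $1+\epsilon$. Instead, keep the ratio inside the expectation and use that it integrates to one under $\pi_{\theta_\mathrm{old}}$.
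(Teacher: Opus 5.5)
Your proposal is correct and matches the paper's proof essentially step for step. You rewrite the integrand via the score function, drop the indicator, bound the score by $\Pi_\ast$ and the advantage by $2R_\ast\frac{1-\gamma^T}{1-\gamma}$, use that the importance ratio integrates to one under $\pi_{\theta_\mathrm{old}}(\cdot\,;S_t)$, and sum the geometric series. Your remark about keeping the ratio inside the expectation is also right: that is why $G$ here carries no $(1+\epsilon)$ factor, unlike the pathwise bound in Lemma~\ref{lem:bdd_drift}.
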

\begin{proof}
First, recall that for bounded rewards, the true advantage $\mathbb A^\pi_t$ is bounded by $2R_\ast\frac{1-\gamma^T}{1-\gamma}$.
Using the bounded score function assumption, i.e.\ $\|\nabla_\theta\log\pi_\theta(a\,;\, s)\|_\infty\le \Pi_\ast$ we obtain
\begin{align*}
    \big\lVert g^\text{clip}_{\text{PPO}}(\theta,\theta_\text{old})\big\rVert_\infty
    &=
    \Big\lVert \sum_{t=0}^{T-1}\gamma^t
    \E^{\pi_{\theta_\mathrm{old}}}\Big[
        \frac{\pi_{\theta}(A_t\,;\, S_t)}{\pi_{\theta_\mathrm{old}}(A_t\,;\,S_t)}\, \mathds{1}_{\big |\frac{\pi_\theta(A_t\, ;\, S_t)}{\pi_{\theta_\mathrm{old}}(A_t\, ;\,S_t) }-1 \big|\leq\epsilon}\,\nabla_\theta \log\pi_\theta(A_t\,;\, S_t)\,
        \mathbb A_{t}^{\pi_{\theta_\mathrm{old}}}(S_t,A_t)
    \Big]\Big\rVert_\infty
    \\
    &\le
    \sum_{t=0}^{T-1}\gamma^t
    \E^{\pi_{\theta_\mathrm{old}}}\Big[
        \frac{\pi_{\theta}(A_t\,;\, S_t)}{\pi_{\theta_\mathrm{old}}(A_t\,;\,S_t)}\,
        \|\nabla_\theta \log\pi_\theta(A_t\,;\, S_t)\|_\infty\,
        \big|\mathbb A_{t}^{\pi_{\theta_\mathrm{old}}}(S_t,A_t)\big|
    \Big]
    \\
    &\le
    \sum_{t=0}^{T-1}\gamma^t
    \Pi_\ast 2R_\ast\frac{1-\gamma^T}{1-\gamma}\;
    \E^{\pi_{\theta_\mathrm{old}}}\Big[\frac{\pi_{\theta}(A_t\,;\, S_t)}{\pi_{\theta_\mathrm{old}}(A_t\,;\,S_t)}\Big].
\end{align*}
Moreover,
\begin{align}\label{equation:sum1}
\E_{A_t\sim {\pi_{\theta_\mathrm{old}}}(\,\cdot \,;\,s)}\Big[\frac{\pi_{\theta}(A_t\,;\, s)}{\pi_{\theta_\mathrm{old}}(A_t\,;\,s)}\Big] 
= \sum_{a \in \mathcal A}\pi_{\theta_\mathrm{old}}(a\,;\,s)  \frac{\pi_{\theta}(a\,;\,s)}{\pi_{\theta_\mathrm{old}}(a\,;\,s)}  
= \sum_{a \in \mathcal A} \pi_{\theta}(a\,;\,s)= 1.
\end{align}
Hence, conditioning upon $S_t$, and then integrating out, yields 
\[
\big\lVert g^\text{clip}_{\text{PPO}}(\theta,\theta_\text{old})\big\rVert_\infty
\le
2\,\Pi_\ast R_\ast\frac{1-\gamma^T}{1-\gamma}\sum_{t=0}^{T-1}\gamma^t
=
2\,\Pi_\ast R_\ast\left(\frac{1-\gamma^T}{1-\gamma}\right)^2.
\]
\end{proof}
Note that the clipped surrogate gradient norm could be estimated more carefully, bounding the non-clipping probability $\mathbb{P}^{\pi_{\theta_{\mathrm{old}}}} ( | \frac{\pi_\theta(A \mid S)}{\pi_{\theta_{\mathrm{old}}}(A \mid S)} - 1 | \le \epsilon )$. Under strong policy assumptions one can use anti-concentration inequalities to show the clipped gradient norm goes to zero as $\theta$ moves away from $\theta_{\text{old}}$. Since clipping probabilities do not vanish in practice, we work with the coarse bound.


\section{Convergence Proofs}\label{sec:PPOconvergence}\label{app: stoch convergence}
We now come to the convergence proof which is built on the preliminary work of the previous sections. We follow the proof strategy presented in \cite{Mishenko}, where RR for SGD was analyzed in the supervised learning setting and push their ideas into the reinforcement learning framework. To fix suitable notation for the analysis, we slightly reformulate the policy update mechanism of PPO. Recall that we do not focus on the actor-critic aspect of PPO, i.e., for the stochastic setting, we assume access to bounded and biased advantage estimators (c.f. Assumption \ref{ass:unbiased-bounded-adv}). 

At a high level, the PPO algorithm can be described as follows.
\begin{itemize}
    \item PPO samples $n$ new rollouts of length $T$ at the beginning of a cycle $C$, samples are flattened into a state-action transition buffer of length $N:= nT$. The buffer also stores the biased advantage estimates and the corresponding time of the transition, since we include discounting. 
    \item Within a cycle, PPO proceeds with one A2C step, followed by a number of clipped importance sampling steps.
    \item The gradient steps in each cycle are partitioned into epochs. An epoch consists of $m=\frac{N}{B}$ gradient steps, where every gradient step uses $B$ transitions (without replacement) drawn from the transition buffer. Before starting an epoch the transition buffer is reshuffled.
\end{itemize}

\subsection{PPO formalism}
More formally, we consider the following algorithm. Fix 
\begin{itemize}
    \item number of cycles $C$,
    \item number of epochs $K$ per cyle,
    \item number of rollouts $n$,
    \item transition batch size $N=nT$,
    \item mini-batch size $B$ such that $m:=N/B\in\N$ is the number of gradient steps per epoch epoch
    \item constant learning rate $\eta>0$.
\end{itemize}

For each cycle $c=0,1,\dots,C-1$:
\begin{enumerate}
\item Sample a fresh dataset of $n$ rollouts of length $T$ from $\pi_{\theta_{c,0,0}}$ (this is $\pi_\text{old}$) and use these rollouts to compute (possibly biased) advantage estimates (e.g. via GAE under true value function). Flatten the resulting data into transition buffer $\{(s_c^i,a_c^i,r_c^i,\hat{\mathbb A}_c^i,t_c^i)\}_{i=0}^{N-1}$ of size $N$, where $(s_c^i,a_c^i,r_c^i,t_c^i)$ ranges over all state-action-reward-time quartets encountered in the rollouts and $\hat{\mathbb A}_c^i$ denotes the (biased by assumption) advantage estimate for $\mathbb A_{t_{c}^{i}}^{\pi_{\theta_{c,0,0}}}( s_c^{i}, a_c^{i}) $.
\item For each epoch $e=0,1,\dots,K-1$:
\begin{enumerate} 
\item Draw a fresh random permutation $\sigma_{c,e} = (\sigma_{c,e,0},\dots,\sigma_{c,e,N-1})$ of $\{0,\dots,N-1\}$, i.e. reshuffle the transition buffer.
Split it into consecutive disjoint mini-batches
\[
\mathcal B_{c,e,k}:=\{\sigma_{c,e,kB},\dots,\sigma_{c,e,(k+1)B-1}\},\qquad k=0,\dots,m-1.
 \]
\item For each step in the epoch $k=0,\dots,m-1$, compute the mini-batch gradient estimator 
\begin{align*}
\hat g_{c,e,k}:=\hat g^{\text{clip}}(\theta_{c,e,k},\theta_{c,0,0}; \mathcal B_{c,e,k})&:= \frac{T }{B}\sum_{i\in\mathcal B_{c,e,k}} 
\gamma^{t_c^{i}} 
        \frac{\nabla \pi_{\theta_{c,e,k}}( a_c^{i}\,;\, s_c^{i})}{\pi_{\theta_{c,0,0}}( a_c^{i}\,;\, s_c^{i}))}
         \mathds{1}_{\big|\frac{\pi_{\theta_{c,e,k}}( a_c^{i}\,;\, s_c^{i}))}{\pi_{c,0,0}( a_c^{i}\,;\, s_c^{i})}-1\big|\le\epsilon}
        \hat \A_c^i
\end{align*}
and update
\[
\theta_{c,e,t+1}=\theta_{c,e,t}+\eta\, \hat g_{c,e,k}.
\]
\item Set $\theta_{c,e+1,0}:=\theta_{c,e,m}$.
\end{enumerate}
\item Set $\theta_{c+1,0,0}:=\theta_{c,K,0}$.
\end{enumerate}
For the following analysis, we use the clipped surrogate
\begin{align*}
         g^{\text{clip}}_{\text{PPO}}(\theta,\theta_{c,0,0})
         &:=\sum_{t=0}^{T-1} \gamma^t \E^{\pi_{c,0,0}}\Big[
        \frac{\nabla \pi_{\theta}(A_t\,;\,S_t))}{\pi_{\theta_{c,0,0}}(A_t\,;\,S_t))}
    \mathds{1}_{|\frac{\pi_{\theta}(A_t\,;\,S_t))}{\pi_{c,0,0}(A_t\,;\,S_t)}-1|\le\epsilon}
        \mathbb A_{t}^{\pi_{\theta_{c,0,0}}}(S_t,A_t)\Big].    
\end{align*}
and unclipped surrogate
\begin{align*}
         g_{\text{PPO}}(\theta,\theta_{c,0,0})
         &:=\sum_{t=0}^{T-1} \gamma^t \E^{\pi_{c,0,0}}\Big[
        \frac{\nabla \pi_{\theta}(A_t\,;\,S_t))}{\pi_{\theta_{c,0,0}}(A_t\,;\,S_t))} 
        \mathbb A_{t}^{\pi_{\theta_{c,0,0}}}(S_t,A_t)\Big].    
\end{align*}
To link the notation to the previous sections, just recall that $\pi_{\theta_{c,0,0}}=:\pi_{\theta_\text{old}}$. 
Within each cycle, we define the clipped surrogate per-transition contribution
\begin{align*}
     g_{\text{PPO}}^{(i),\text{clip}}(\theta,\theta_{c,0,0}) &:=  T \gamma^{t_c^{i}} 
         \frac{\nabla \pi_{\theta}( a_c^{i}\,;\, s_c^{i})}{\pi_{\theta_{c,0,0}}( a_c^{i}\,;\, s_c^{i}))} 
\mathds{1}_{|\frac{\pi_{\theta}( a_c^{i}\,;\, s_c^{i}))}{\pi_{c,0,0}( a_c^{i}\,;\, s_c^{i})}-1|\le\epsilon}
         \hat \A_c^i 
\end{align*}
and the unclipped surrogate per-transition contribution
\begin{align*}
     g_{\text{PPO}}^{(i)}(\theta,\theta_{c,0,0}) &:=  T \gamma^{t_c^{i}} 
         \frac{\nabla \pi_{\theta}( a_c^{i}\,;\, s_c^{i})}{\pi_{\theta_{c,0,0}}( a_c^{i}\,;\, s_c^{i}))}
        \hat \A_c^i
\end{align*}
for $i=0, \dots, N-1$.
Note that in the first step of each cycle one has
$g_{\text{PPO}}^{(i),\text{clip}}(\theta_{c,0,0},\theta_{c,0,0}) = g_{\text{PPO}}^{(i)}(\theta_{c,0,0},\theta_{c,0,0})$.
\subsection{Proof of the deterministic case, Theorem \ref{thm:convdeterministc}}\label{app:proofdeterministic}
We start by analyzing the deterministic setting. Here, we assume that we have direct access to the clipped surrogate $g_{\text{PPO}}^{\text{clip}}$. 
For each cycle $c=0,...,C-1$ of length $K$, we consider the iterates
\begin{align} \begin{split} \label{eq:detPPOapp}
    \theta_{c,e+1}&=\theta_{c,e}+\eta\, g_\text{PPO}^\text{clip}(\theta_{c,e},\theta_{c,0}), \quad  e = 0, \dots, K-1 \\
    \theta_{c+1,0}&=\theta_{c,K}
    \end{split}
\end{align}
Thus, one surrogate gradient step corresponds to an epoch of mini-batch sample surrogate gradient steps in the stochastic setting.

In the deterministic case, we can directly invoke the bias estimate from Theorem~\ref{Thm:1}. This yields a sharper error bound and allows us to demonstrate an advantage of PPO over standard gradient ascent in many realistic settings. By contrast, in the stochastic case considered below, we must instead develop a pathwise bias bound. This will be carried out in the following subsections.

\begin{proof}[Proof of Theorem~\ref{thm:convdeterministc}]
    In the following, we interpret the clipped surrogate as biased gradient approximation of the exact gradient $\nabla J$, i.e., we define
    \[
    b_{c,e}:= g_{\text{PPO}}^{\text{clip}}(\theta_{c,e},\theta_{c,0})- \nabla J(\theta_{c,e}).
    \]
    we write the updates \eqref{eq:detPPOapp} as approximate gradient ascent scheme
    \[
        \theta_{c,e+1} = \theta_{c,e} + \eta (\nabla J(\theta_{c,e}) + b_{c,e}).
    \]
    
    By $L$-smoothness of $J$, assuming that $\eta \le \frac 1L$, we have for $e=0, \dots, K-1$
    \begin{align*}
        J(\theta_{c,e+1}) &\ge J(\theta_{c,e}) + \langle \nabla J(\theta_{c,e}),\theta_{c.e+1}-\theta_{c,e}\rangle - \frac{L}2|\theta_{c,e+1}-\theta_c|^2\\
        &=J(\theta_{c,e}) + \eta|\nabla J(\theta_{c,e})|^2 + \eta \langle \nabla J(\theta_{c,e}),b_{c,e}\rangle - \frac{L}{2}\eta^2|\nabla J(\theta_{c,e})+b_{c,e}|^2\\
        &=J(\theta_{c,e}) + (1-\frac{L\eta}{2})\eta |\nabla J(\theta_{c,e})|^2-(1-L\eta)\eta\langle \nabla J(\theta_{c,e}),b_{c,e}\rangle - \frac{L}2\eta^2|b_{c,e}|^2\\
        &\ge J(\theta_{c,e}) + \Big(1-\frac{1}{2\delta}-\frac{L}{2}\Big(1-\frac{1}{\delta}\Big)\eta\Big)\eta|\nabla J(\theta_{c,e})|^2 - \Big((1-L\eta)\eta \frac{\delta}{2} + \frac{L}2 \eta^2\Big) |b_{c,e}|^2,
    \end{align*}
    where the last inequality holds for all $\delta>0$ due to Young's inequality. In particular, for $\delta=1$ we deduce 
    \begin{align*}
        J(\theta_{c,e+1}) &\ge J(\theta_{c,e}) + \frac {\eta}{2}|\nabla J(\theta_{c,e})|^2 - \frac{\eta}{2}|b_{c,e}|^2.
    \end{align*}
    Due to Theorem~\ref{Thm:1}, we can control the bias term by
    \[|b_{c,e}| \le R |\theta_{c,e}-\theta_{c,0}|  \]
    for some $R>0$. Moreover, by Proposition~\ref{prop:surrogategradbound} there exists $G >0$ such that 
    \[|g_{\text{PPO}}^{\text{clip}}(\theta_{c,e},\theta_{c,0})| \le G \]
    for any $c=0, \dots, C-1$ and $e=0, \dots, K-1$. This implies that
    \begin{align*}
        |b_{c,e}| \le R |\theta_{c,e}-\theta_{c,0}| \le R\sum_{e'=0}^{e-1} |\theta_{c,e'+1}-\theta_{c,e'}| \le \eta e R G
    \end{align*}
    Thus,
    \[
        J(\theta_{c,e+1}) \ge J(\theta_{c,e}) + \frac {\eta}{2}|\nabla J(\theta_{c,e})|^2 - \frac{\eta^3}{2} e^2R^2 G^2.
    \]
    Rearranging this inequality and taking the sum over all $c=0,\dots,C-1$, $e=0,\dots, K-1$ we obtain
    \begin{align*}
        \frac{\eta}2\sum_{c=0}^{C-1}\sum_{e=0}^{K-1} |\nabla J(\theta_{c,e}|^2 &\le \sum_{c=0}^{C-1} \sum_{e=0}^{K-1}\big(J(\theta_{c,e+1})-J(\theta_{c,e})\big) + \sum_{c=0}^{C-1} \sum_{e=0}^{K-1}\frac{\eta^3}2e^2R^2 G^2\\
        &\le \Delta_0 + \frac{\eta^3}{12}C (K-1)K(2K-1) R^2G^2
    \end{align*}
    where we have applied the telescoping sum $\sum_{c=0}^{C-1} \sum_{e=0}^{K-1}\big(J(\theta_{c,e+1})-J(\theta_{c,e})\big)=J(\theta_{C,K})-J(\theta_{0,0})$ and $\Delta_0:=J_\ast-J(\theta_{0,0})$ denotes the initial optimality gap. Dividing both sides by $\frac{\eta}{2CK}$ yields
    \begin{align*}
        \min_{0\le c\le C-1,\ 0\le e\le K-1} |\nabla J(\theta_{c,e})|^2&\le \frac{1}{CK} \sum_{c=0}^{C-1}\sum_{e=0}^{K-1} |\nabla J(\theta_{c,e}|^2\\ &\le \frac{2\Delta_0}{\eta CK} + \frac 16 \eta^2(K-1)(2K-1) R^2G^2\,.
    \end{align*}
    
\end{proof}
In particular, when optimizing the upper bound 
    \[
    \frac{2\Delta_0}{\eta CK} + \frac 16 \eta^2(K-1)(2K-1) R^2G^2
    \]
    with respect to $\eta$ we get
        \[
            \eta^*=\min\Big( \frac 1L, \Big(\frac{6\,\Delta_0}{C K (K-1)(2K-1)\, R^2 G^2}\Big)^{\frac{1}{3}} \Big)
\]
 which, in the case $\eta^* < \frac 1L$, gives the associated upper bound
    \begin{align*}
        \min_{0\le c\le C-1,\ 0\le e\le K-1} |\nabla J(\theta_{c,e})|^2\le \Big( \frac{9}{2}\;
\frac{\Delta_0^2
(K-1)(2K-1)\,R^2 G^2}{(C K)^2} \Big)^{1/3}\,.
    \end{align*}   
To simplify the constants, we also optimize the weaker upper bound
    $$
        f(\eta):= \frac{2\Delta_0}{\eta CK} + \frac 13 \eta^2K^2 R^2G^2
    $$
    which gives
    $
        \eta^* = \min ( \frac 1L, \frac cK ),
    $
    with
    \begin{align} \label{eq:constantdet}
        c = \Big(\frac{3\,\Delta_0}{C R^2 G^2}\Big)^{\frac{1}{3}}.
    \end{align}
    For $\eta^ \ast < \frac 1L$ we get
    $$
        f(\eta^*) = \left(\frac{3\Delta_0 RG}{C}\right)^{\frac23}. 
    $$
    Similarly, let us optimize the upper bound with respect to the cycle length $K$. First, note that $K=1$ recovers the classical gradient ascent rate 
    $$
    \min_{0\le c\le C-1,\ 0\le e\le K-1} |\nabla J(\theta_{c,e})|^2\le \frac{2\Delta_0}{\eta CK} .
    $$
    However, there are scenarios in which $K > 1 $ outperforms the gradient ascent rate. Assume, for the moment, that $K \in (0,\infty)$ is a continuous variable and, again, optimize the simplified (weaker) upper bound
    $$
        f(K):= \frac{2\Delta_0}{\eta CK} + \frac 13 \eta^2K^2 R^2G^2
    $$
    with respect to $K$. Then, the optimal cycle length is given by 
    $ K^* = \frac c \eta,
    $
    where $c$ is given by \eqref{eq:constantdet}.
    Plugging this back into the convergence rate again yields
    $$
       f(K^*)= \left(\frac{3\Delta_0 RG}{C}\right)^{\frac23} .
    $$
    Thus, for all  $\eta\le \frac1L$ we get
    \begin{align*}
        \min_{0\le c\le C-1,\ 0\le e\le K-1} |\nabla J(\theta_{c,e})|^2\le \min \Big( \frac{2\Delta_0}{\eta CK}, f(\lfloor K^* \rfloor ), f(\lceil K^* \lceil ) \Big)\,.
    \end{align*}
    In conclusion, relative to standard gradient ascent, incorporating multiple biased gradient steps per cycle (i.e., selecting $K > 1$) yields faster convergence in regimes where $\Delta_0$ is large and the parameters $\eta$, $R$, $G$, and $C$ are small. 
    
\subsection{Important properties for the stochastic case}
Let $(\mathcal F_c)_{c=0, \dots, C}$ be the canonical filtration generated by the iterates $\theta$ \underline{before} the current cycle, i.e., 
$$
    \mathcal F_c = \sigma (\theta_{c',e,k} : c' = 0, \dots, c-1, e=0, \dots, K-1, k=0, \dots, m) \quad c=0, \dots, C.
$$
Note that $\theta_{c,0,0}=\theta_{c-1,K-1,m}$ is $\mathcal F_c$-measurable for all $c=1, \dots, C$ and recall that we have 
\begin{align*}
\frac{1}{N}\sum_{i=0}^{N-1}g_{\text{PPO}}^{(i),\text{clip}}(\theta_{c,0,0},\theta_{c,0,0})  =\frac{1}{N}\sum_{i=0}^{N-1} g_{\text{PPO}}^{(i)}(\theta_{c,0,0},\theta_{c,0,0}) 
\end{align*}
since there is no clipping for the first cycle step.

\begin{lemma}[Full Batch Variance]\label{lem:variance}
    Under Assumptions~\ref{ass:unbiased-bounded-adv} and~\ref{ass:score}, there exists $\sigma>0$ such that 
    \[
    \E \Big[\Big|\frac{1}{N}\sum_{i=0}^{N-1}g_{\text{PPO}}^{(i)}(\theta_{c,0,0},\theta_{c,0,0})-g_{\text{PPO}}(\theta_{c,0,0},\theta_{c,0,0})\Big|^2  \,\Big|\, \mathcal F_c\Big] \le 2\frac{\sigma^2}{N}+2T^2\Pi_\ast^2\delta^2\,
    \]
    with $\sigma^2=\frac{1}{T} \Pi_\ast^2  A_\ast^2  \Big(\frac{1-\gamma^T}{1-\gamma}\Big)^2$.
\end{lemma}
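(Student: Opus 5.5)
The plan is a bias--variance split of the empirical full-batch gradient at $\theta_{c,0,0}$, conditionally on $\mathcal F_c$. Throughout, write $\theta_{\mathrm{old}}:=\theta_{c,0,0}$, which is $\mathcal F_c$-measurable. Also write $\hat g_N:=\frac1N\sum_{i}g^{(i)}_{\text{PPO}}(\theta_{\mathrm{old}},\theta_{\mathrm{old}})$ and $g:=g_{\text{PPO}}(\theta_{\mathrm{old}},\theta_{\mathrm{old}})=\nabla J(\theta_{\mathrm{old}})$.

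At $\theta=\theta_{\mathrm{old}}$ the ratio $\nabla\pi/\pi_{\mathrm{old}}$ equals the score, so each summand is $T\gamma^{t^i}\nabla\log\pi_{\theta_{\mathrm{old}}}(a^i;s^i)\hat{\mathbb A}^i$.

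Introduce the ``conditionally debiased'' version $\bar g_N$, which replaces $\hat{\mathbb A}^i$ by $\E[\hat{\mathbb A}^i\mid S,A]$ at the corresponding time. Then use $|x+y|^2\le 2|x|^2+2|y|^2$ on $\hat g_N-g=(\hat g_N-\E[\hat g_N\mid\mathcal F_c])+(\E[\hat g_N\mid\mathcal F_c]-g)$. This split produces the factors $2$ in the claim.

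\textbf{Bias term.} Flattening $n$ rollouts of length $T$ gives each time index exactly $n$ entries. Hence
\[
\E[\hat g_N\mid\mathcal F_c]=\frac{1}{nT}\sum_{j=1}^{n}\sum_{t=0}^{T-1}T\gamma^t\,\E^{\pi_{\mathrm{old}}}\big[\nabla\log\pi(A_t;S_t)\,\E[\hat{\mathbb A}_t\mid S_t,A_t]\big],
\]
and the analogous expression with $\mathbb A_t^{\pi_{\mathrm{old}}}$ is exactly $g$ (Proposition \ref{prop:pgt}). Their difference is $\sum_t\gamma^t\E[\nabla\log\pi\,(\E[\hat{\mathbb A}_t|S_t,A_t]-\mathbb A_t)]$. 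Bound it by Cauchy--Schwarz, the score bound $\Pi_\ast$ (Euclidean norm), $\gamma^t\le1$, and Assumption \ref{ass:unbiased-bounded-adv}. This gives norm at most $T\Pi_\ast\delta$, and squaring yields $T^2\Pi_\ast^2\delta^2$.

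\textbf{Variance term.} Conditionally on $\mathcal F_c$, the $n$ rollouts are i.i.d. Group the buffer by rollout and set $X_j:=\sum_{t}\gamma^{t}\nabla\log\pi(a_t^j;s_t^j)\hat{\mathbb A}^j_t$, so that $\hat g_N=\frac1n\sum_j X_j$. If advantages are computed per rollout (GAE within the rollout), the $X_j$ are i.i.d. The variance of their mean is then $\frac1n\Var(X_1)\le\frac1n\E|X_1|^2\le\frac1n\big(\Pi_\ast A_\ast\frac{1-\gamma^T}{1-\gamma}\big)^2$. Since $\frac1n=\frac TN$, this is $\frac{T}{N}\Pi_\ast^2A_\ast^2(\cdots)^2$.

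\textbf{Main obstacle: the power of $T$ in $\sigma^2$.} The claimed $\sigma^2$ carries $\frac1T$, but the rollout-level bound above gives a factor $T$. Transitions within a rollout are correlated, so one cannot simply treat the $N$ terms as independent to gain a factor $T^{-2}$. I would first try to recheck the normalization, i.e.\ whether the statement intends $\sigma^2$ up to such a factor, since only existence of some $\sigma$ is essential. Failing that, I would seek a sharper bound on $\E|X_1|^2$ using martingale-difference orthogonality of the score terms across $t$: $\E[\nabla\log\pi(A_t;S_t)\mid S_t]=0$. This removes the cross terms when the advantage factors are conditionally adapted. Otherwise I would settle for a constant $\sigma^2=T\Pi_\ast^2A_\ast^2(\frac{1-\gamma^T}{1-\gamma})^2$, which suffices for the downstream use in Theorem \ref{Thm:3}.
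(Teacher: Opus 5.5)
Your route is the paper's route. It uses the same two-term split with the factors $2$, and the same reduction to $n$ conditionally i.i.d.\ rollout sums $X_j$. It also uses the same crude bound $\Var(X_1)\le\E|X_1|^2\le\Pi_\ast^2A_\ast^2\big(\frac{1-\gamma^T}{1-\gamma}\big)^2$ and the same squared bias bound $T^2\Pi_\ast^2\delta^2$. Centering at $\E[\hat g_N\mid\mathcal F_c]$, as you do, is a slightly tidier way to get that bias bound.

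The $T$-power discrepancy you flag is not a defect of your proof but an error in the lemma. In the paper's computation, the step from $\frac{1}{N^2}\Var\big[\sum_j\sum_t T\gamma^t\nabla\log\pi_{\theta_{c,0,0}}(A^j_t;S^j_t)\hat\A^j_t\big]$ to $\frac{1}{N^2}\sum_j\Var\big[\sum_t\gamma^t\nabla\log\pi_{\theta_{c,0,0}}(A^j_t;S^j_t)\hat\A^j_t\big]$ silently drops the factor $T$ carried by $g^{(i)}_{\text{PPO}}$, and so loses $T^2$. Restoring it gives exactly your $\frac1n\Var(X_1)\le\frac TN\Pi_\ast^2A_\ast^2\big(\frac{1-\gamma^T}{1-\gamma}\big)^2$. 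The correct constant is therefore $\sigma^2=T\,\Pi_\ast^2A_\ast^2\big(\frac{1-\gamma^T}{1-\gamma}\big)^2$, which is $T^2$ times the stated value.

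The stated value cannot be rescued, as the following example shows.
\begin{itemize}
\item Take two absorbing states $s_a,s_b$ with $\mu$ uniform, and actions $\{0,1\}$.
\item Let $\pi_\theta(1\,;\,s)=(1+e^{-\theta})^{-1}$ in both states.
\item Set rewards $r(s_a,1)=r(s_b,0)=1$ and $r(s_a,0)=r(s_b,1)=-1$, and use the exact critic, so $\delta=0$.
\end{itemize}
Then $\Pi_\ast=1$ and every advantage is bounded by $2$, so $A_\ast=2$ is admissible; also $J\equiv 0$. At $\theta_{c,0,0}=0$, each transition has $\nabla\log\pi\cdot\A^{\pi_{\theta_{c,0,0}}}_t=+\frac12$ in a rollout started at $s_a$ and $-\frac12$ in one started at $s_b$. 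Hence $X_j=\pm\frac S2$ with $S=\frac{1-\gamma^T}{1-\gamma}$, and the left-hand side equals $\frac{S^2}{4n}$. The claimed bound is $\frac{2\sigma^2}{N}=\frac{8S^2}{T^2n}$, which fails as soon as $T\ge 6$.

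So drop the martingale idea; the example shows why orthogonality cannot help. $\E[\nabla\log\pi(A_t;S_t)\hat\A_t\mid S_t]\neq0$ in general, because the advantage depends on $A_t$; here the summands of a rollout are perfectly correlated. Even with all cross terms removed you would only reach $\E|X_1|^2\le\Pi_\ast^2A_\ast^2\sum_t\gamma^{2t}$. That is still a factor of order $T$ above what the stated constant requires when $\gamma$ is near $1$.

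Settle for your $\sigma^2$: it proves the existence claim, and it is all Theorem~\ref{Thm:3} uses. There the term $\frac{12\sigma^2}{N}$ becomes $\frac{12}{n}\Pi_\ast^2A_\ast^2\big(\frac{1-\gamma^T}{1-\gamma}\big)^2$, a rate in the number of rollouts rather than in the number of transitions.
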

\begin{proof}
    In the full batch setting the situation is simple. By definition of the transition buffer the sum of all transition estimators is equal to the sum of $n$ (independent) rollouts. For clarity, let us first give the argument for the unbiased case ($\delta=0$), where we have 
    \begin{equation} \label{eq: unbiasdness full batch surrogate} 
    \E \Big[\frac{1}{N}\sum_{i=0}^{N-1}g_{\text{PPO}}^{(i),\text{clip}}(\theta_{c,0,0},\theta_{c,0,0})  \,\Big|\, \mathcal F_c\Big]  = \frac{1}{N}\sum_{i=0}^{N-1} \E\Big[g_{\text{PPO}}^{(i)}(\theta_{c,0,0},\theta_{c,0,0})  \,\Big|\, \mathcal F_c\Big] = g_{\text{PPO}}(\theta_{c,0,0},\theta_{c,0,0})\,. 
    \end{equation}
    By the Markov property, at the cycle start we can write
    \begin{align*}
    &\E \Big[\Big|\frac{1}{N}\sum_{i=0}^{N-1}g_{\text{PPO}}^{(i)}(\theta_{c,0,0},\theta_{c,0,0})-g_{\text{PPO}}(\theta_{c,0,0},\theta_{c,0,0})\Big|^2  \,\Big|\, \mathcal F_c\Big] \\
    &  = \frac{1}{N^2}
    \Var^{\pi_{\theta_{c,0,0}}}\Big[\sum_{j=0}^{n-1} \sum_{t=0}^{T-1} 
    T \gamma^{t} 
         \nabla \log \pi_{\theta_{c,0,0}}( A_t^{j}\,;\, S_t^{j})\, 
         \hat \A_t^j 
    \Big],
\end{align*}
where $(S^1,A^1),..., (S^n,A^n)$ are $n$ iid copies of the MDP under $\pi_{\theta_{c,0,0}}$ with advantages estimates $\hat \A^1, \dots, \hat \A^n$. Using independence and $N = n\cdot T$, followed by the bounds on the score function and advantage estimators, one gets
\begin{align*}
&\quad \frac{1}{N^2}
    \sum_{j=0}^{n-1} \Var^{\pi_{\theta_{c,0,0}}}\Big[ \sum_{t=0}^{T-1} 
     \gamma^{t} 
         \nabla \log \pi_{\theta_{c,0,0}}( A_t^{j}\,;\, S_t^{j})\, 
         \hat \A_{t}^j
    \Big]   
\\
    &= \frac{n}{N^2}
    \Var^{\pi_{\theta_{c,0,0}}} \Big[ \sum_{t=0}^{T-1} 
     \gamma^{t} 
         \nabla \log \pi_{\theta_{c,0,0}}( A_t^{1}\,;\, S_t^{1})\, 
         \hat \A_{t}^1
    \Big]    
\\
    &\leq \frac{n}{N^2}
    \E^{\pi_{\theta_{c,0,0}}} \Big[ \Big(\sum_{t=0}^{T-1} 
     \gamma^{t} 
         \nabla \log \pi_{\theta_{c,0,0}}( A_t^{1}\,;\, S_t^{1})\, 
         \hat \A_{t}^1
         \Big)^2
    \Big] 
\\
    &\leq \frac{n}{N^2} (\Pi_\ast A_\ast )^2  \Big( \sum_{t=0}^{T-1} 
     \gamma^{t} \Big)^2
     \\
         &= \frac{1}{TN} \Pi_\ast^2  A_\ast^2  \Big(\frac{1-\gamma^T}{1-\gamma}\Big)^2.
\end{align*}
Finally, by Assumption~\ref{ass:unbiased-bounded-adv} and the Markov property we have
\begin{align}\label{advantagecondition}
    \E[|\E[\hat \A_t^j\mid\mathcal F_c,A_t^{j},S_t^{j}]-\A_t^{\pi_{\theta_{c,0,0}}}(A_t^j,S_t^j)|^2\mid\mathcal F_c] = \E^{\pi_{\theta_{c,0,0}}}[|\E^{\pi_{\theta_{c,0,0}}}[\hat \A_t^j\mid A_t^{j},S_t^{j}]-\A_t^{\pi_{\theta_{c,0,0}}}(A_t^j,S_t^j)|^2]\le \delta^2
\end{align}
so that
    \begin{align*}
&\E[|T \gamma^{t} 
         \nabla\log \pi_{\theta_{c,0,0}}( A_t^{j}\,;\, S_t^{j}) 
         (\E[\hat \A_t^j\mid\mathcal F_c,A_t^{j},S_t^j]-\A_t^{\pi_{\theta_{c,0,0}}}(A_t^j,S_t^j) )|^2 \mid \mathcal F_c]\\ &\le T^2\Pi_\ast^2 \E[|\E[\hat \A_t^j\mid\mathcal F_c,A_t^j,S_t^j]-\A_t^{\pi_{\theta_{c,0,0}}}(A_t^j,S_t^j)|^2\mid\mathcal F_c]\\ &\le T^2\Pi_\ast^2 \delta^2
\end{align*}
and therefore, using $(a+b)^2 \le 2a^2+2b^2$, the claim holds with
\[
    \E \Big[\Big|\frac{1}{N}\sum_{i=0}^{N-1}g_{\text{PPO}}^{(i)}(\theta_{c,0,0},\theta_{c,0,0})-g_{\text{PPO}}(\theta_{c,0,0},\theta_{c,0,0})\Big|^2  \,\Big|\, \mathcal F_c\Big] \le 2\frac{\sigma^2}{N} +2T^2\Pi_\ast^2\delta^2\,. \qedhere
    \]
\end{proof}

\begin{lemma}[Bounded drift]\label{lem:bdd_drift}
Under Assumptions~\ref{ass:unbiased-bounded-adv} and~\ref{ass:score}, one has for all $(c,e,k)$
    \[
    \big|\hat g^{\text{clip}}(\theta_{c,e,k},\theta_{c,0,0})\big| \le G, \quad \text{ almost surely.}
    \]
    with $G = \,T \,(1+\epsilon) \,\Pi_\ast \,A_\ast $.
\end{lemma}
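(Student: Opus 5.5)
The bound is pathwise, so I will bound every per-transition contribution in the minibatch average and then use the triangle inequality. Fix $(c,e,k)$ and write $\theta=\theta_{c,e,k}$ and $\theta_{\mathrm{old}}=\theta_{c,0,0}$. By definition,
\[
\hat g^{\text{clip}}(\theta,\theta_{\mathrm{old}};\mathcal B_{c,e,k})=\frac1B\sum_{i\in\mathcal B_{c,e,k}} g^{(i),\text{clip}}_{\text{PPO}}(\theta,\theta_{\mathrm{old}}).
\]
Hence
\[
\big|\hat g^{\text{clip}}\big|\le \max_{i}\big|g^{(i),\text{clip}}_{\text{PPO}}(\theta,\theta_{\mathrm{old}})\big|,
\]
and it suffices to show that each summand has norm at most $T(1+\epsilon)\Pi_\ast A_\ast$ almost surely. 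This is independent of the random permutation and of the sampled buffer.

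For a single index $i$, rewrite the gradient of the policy as policy times score. With $\nabla_\theta\pi_\theta(a;s)=\pi_\theta(a;s)\nabla_\theta\log\pi_\theta(a;s)$ (well defined by \eqref{eq:full-support}),
\[
g^{(i),\text{clip}}_{\text{PPO}}(\theta,\theta_{\mathrm{old}})=T\gamma^{t^i}\, r_i(\theta)\,\mathds 1_{|r_i(\theta)-1|\le\epsilon}\,\nabla_\theta\log\pi_\theta(a^i;s^i)\,\hat{\mathbb A}^i,
\]
where $r_i(\theta)=\pi_\theta(a^i;s^i)/\pi_{\theta_{\mathrm{old}}}(a^i;s^i)$. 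The key point is that the indicator caps the ratio: on the event $|r_i-1|\le\epsilon$ we have $0<r_i\le 1+\epsilon$. Off that event the summand vanishes. So in all cases
\[
r_i\,\mathds 1_{|r_i-1|\le\epsilon}\le 1+\epsilon.
\]
This is exactly where the clipping matters: without it the ratio is unbounded and no almost-sure bound holds.

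The remaining factors are bounded directly. We have $\gamma^{t^i}\le 1$ since $\gamma\le1$. The score satisfies $|\nabla_\theta\log\pi_\theta(a^i;s^i)|\le\Pi_\ast$ by Assumption~\ref{ass:score}, and $|\hat{\mathbb A}^i|\le A_\ast$ almost surely by Assumption~\ref{ass:unbiased-bounded-adv}. Multiplying the four bounds gives
\[
\big|g^{(i),\text{clip}}_{\text{PPO}}\big|\le T(1+\epsilon)\Pi_\ast A_\ast .
\]
Averaging over the $B$ indices in the minibatch then gives the claim.

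There is one point to watch. Assumption~\ref{ass:score} is phrased through $\|\cdot\|_\infty$, whereas the lemma uses the Euclidean norm. I will read $\Pi_\ast$ as a bound on the Euclidean norm of the score, as in Assumption~\ref{Ass} of the main text. If only the coordinatewise bound were available, the result would still hold with $\|\cdot\|_\infty$ in place of $|\cdot|$, or with an extra factor $\sqrt d$. Apart from this, the argument is a routine chain of inequalities.
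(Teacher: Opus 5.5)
Your proof is correct and follows essentially the same route as the paper's: rewrite $\nabla\pi_\theta=\pi_\theta\nabla\log\pi_\theta$, use the clipping indicator to cap the ratio at $1+\epsilon$, bound the score and the advantage estimate by $\Pi_\ast$ and $A_\ast$, and average over the minibatch. Your explicit use of $\gamma^{t^i}\le 1$ is slightly cleaner than the paper's last step, which writes the minibatch average of $T\gamma^{t_c^i}(1+\epsilon)\Pi_\ast A_\ast$ as equal to $T(1+\epsilon)\Pi_\ast A_\ast$ when only an inequality holds for $\gamma<1$.
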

\begin{proof}
By definition,
\begin{align*}
\big|\hat g^{\text{clip}}(\theta_{c,e,k},\theta_{c,0,0})\big|
&=
\Big|
\frac{T}{B}\sum_{i\in\mathcal B_{c,e,k}} 
\gamma^{t_c^{i}} 
        \frac{\nabla \pi_{\theta_{c,e,k}}( a_c^{i}\,;\, s_c^{i})}{\pi_{\theta_{c,0,0}}( a_c^{i}\,;\, s_c^{i})}
        \mathds{1}_{\big|\frac{\pi_{\theta_{c,e,k}}( a_c^{i}\,;\, s_c^{i})}{\pi_{\theta_{c,0,0}}( a_c^{i}\,;\, s_c^{i})}-1\big|\le\epsilon}
        \hat \A_c^i 
\Big| \\
&=
\Big|
\frac{T}{B}\sum_{i\in\mathcal B_{c,e,k}} 
\gamma^{t_c^{i}} 
        \frac{ \pi_{\theta_{c,e,k}}( a_c^{i}\,;\, s_c^{i})}{\pi_{\theta_{c,0,0}}( a_c^{i}\,;\, s_c^{i})}
        \mathds{1}_{\big|\frac{\pi_{\theta_{c,e,k}}( a_c^{i}\,;\, s_c^{i})}{\pi_{\theta_{c,0,0}}( a_c^{i}\,;\, s_c^{i})}-1\big|\le\epsilon} \nabla\log \pi_{\theta_{c,e,k}}( a_c^{i}\,;\, s_c^{i})
       \hat \A_c^i 
\Big| \\
&\leq
\frac{T}{B}\sum_{i\in\mathcal B_{c,e,k}} 
\gamma^{t_c^{i}} 
        \frac{ \pi_{\theta_{c,e,k}}( a_c^{i}\,;\, s_c^{i})}{\pi_{\theta_{c,0,0}}( a_c^{i}\,;\, s_c^{i})}
        \mathds{1}_{\big|\frac{\pi_{\theta_{c,e,k}}( a_c^{i}\,;\, s_c^{i})}{\pi_{\theta_{c,0,0}}( a_c^{i}\,;\, s_c^{i})}-1\big|\le\epsilon} \big|\nabla\log \pi_{\theta_{c,e,k}}( a_c^{i}\,;\, s_c^{i})\big |
        \big|\hat \A_c^i
        \big| .
\end{align*}
Moreover, the clipping indicator implies
\(
\frac{\pi_{\theta_{c,e,k}}( a_c^{i}\,;\, s_c^{i})}{\pi_{\theta_{c,0,0}}( a_c^{i}\,;\, s_c^{i})}\le (1+\epsilon)
\) a.s.
This together with the assumed bounds on score (c.f. Assumption \ref{ass:score}) and advantage estimates (c.f. Assumption \ref{ass:unbiased-bounded-adv}), i.e.,
$
|\nabla\log\pi_{\theta_{c,e,k}}( a_c^{i}\,;\, s_c^{i})|\le \Pi_\ast,
$ and $
|\hat \A_c^i|
\le A_\ast,
$
implies that almost surely
\[
\big|\hat g^{\text{clip}}(\theta_{c,e,k},\theta_{c,0,0})\big|
\le
\frac{T}{B}\sum_{i\in\mathcal B_{c,e,k}} \gamma^{t_c^{i}}\, 
(1+\epsilon)\,\Pi_\ast\, A_\ast
=
T(1+\epsilon)\,\Pi_\ast\,A_\ast.
\]
\end{proof}

\begin{lemma}[Path-level bias decomposition]\label{lem:stability}
    For $t=0,\dots, T-1$, $s\in\mathcal S$, $a\in\mathcal A$ and any scalar $\A$ with $|\A|\le A_\ast$, we define
    \[ \hat g_{\text{PPO}}^{\text{clip}}(\theta,\theta_{\text{old}},t,a,s, \A) := T\gamma^t
        \frac{\nabla \pi_{\theta}(a\,;\,s)}{\pi_{\theta_{\text{old}}}(a\,;\,s)}\\
 \mathds{1}_{|\frac{\pi_{\theta}(a\,;\,s))}{\pi_{\text{old}}(a\,;\,s)}-1|\le\epsilon}
        \A \, . 
        \]
    Then, under \Cref{ass:score} and \Cref{ass:Lipschitz}, for any $t\le T-1,\ s\in\mathcal S,\ a\in\mathcal A$ it holds that
    \begin{align*}
    \big|\hat g_{\text{PPO}}^{\text{clip}}(&\theta,\theta_{\text{old}},t,a,s, \A) - \hat g_{\text{PPO}}^{\text{clip}}(\theta_{\text{old}},\theta_{\text{old}},t,a,s, \A)\big| \\
    &\le B_1 |\theta-\theta_{\text{old}}| + B_2 \min(\epsilon,|r_{\theta,\theta_{\text{old}}}(s,a)-1|) + B_3 \mathds{1}_{|r_{\theta,\theta_{\text{old}}}(s,a)-1|>\epsilon}\,, 
    \end{align*}
    where $r_{\theta,\theta_{\text{old}}}(s,a) :=\frac{\pi_{\theta}(a\,;\,s)}{\pi_{\text{old}}(a\,;\,s)}$, $B_1 = TA_\ast L_s$, and $B_2 = B_3 =TA_\ast\Pi_\ast$.
\end{lemma}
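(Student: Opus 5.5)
The plan is to rewrite the per-transition surrogate in score-function form, using $\nabla\pi_\theta(a;s)/\pi_{\text{old}}(a;s) = r_{\theta,\theta_{\text{old}}}(s,a)\,\nabla\log\pi_\theta(a;s)$. Abbreviate $r:=r_{\theta,\theta_{\text{old}}}(s,a)$, $u_\theta:=\nabla\log\pi_\theta(a;s)$ and $c:=T\gamma^t\A$, so that $|c|\le TA_\ast$. Then
\[
\hat g_{\text{PPO}}^{\text{clip}}(\theta,\theta_{\text{old}},t,a,s,\A)= c\, r\,\mathds 1_{|r-1|\le\epsilon}\, u_\theta .
\]
At $\theta=\theta_{\text{old}}$ the ratio equals $1$ and the indicator equals $1$, so $\hat g_{\text{PPO}}^{\text{clip}}(\theta_{\text{old}},\theta_{\text{old}},t,a,s,\A)=c\,u_{\theta_{\text{old}}}$. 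The difference to bound is therefore $c\,(r\mathds 1_{|r-1|\le\epsilon}u_\theta - u_{\theta_{\text{old}}})$. I would split into two cases according to the indicator.

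\emph{Case $|r-1|\le\epsilon$.} Write
\[
r u_\theta-u_{\theta_{\text{old}}}=(u_\theta-u_{\theta_{\text{old}}})+(r-1)u_\theta .
\]
For the first term, Assumption~\ref{ass:Lipschitz} gives $|u_\theta-u_{\theta_{\text{old}}}|\le L_s|\theta-\theta_{\text{old}}|$; multiplied by $|c|\le TA_\ast$, this yields the $B_1|\theta-\theta_{\text{old}}|$ contribution. For the second term, Assumption~\ref{ass:score} gives $|(r-1)u_\theta|\le \Pi_\ast|r-1|$. Since we are in the unclipped regime, $|r-1|=\min(\epsilon,|r-1|)$, which gives $B_2\min(\epsilon,|r-1|)$ with $B_2=TA_\ast\Pi_\ast$. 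The $B_3$ term is zero here.

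\emph{Case $|r-1|>\epsilon$.} The clipped surrogate vanishes, so the difference is just $c\,u_{\theta_{\text{old}}}$. Its norm is at most $TA_\ast\Pi_\ast$, again by the bounded score assumption, and this equals $B_3\mathds 1_{|r-1|>\epsilon}$. The other two terms are nonnegative, so the bound holds.

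Combining the two cases gives the claim with $B_1=TA_\ast L_s$ and $B_2=B_3=TA_\ast\Pi_\ast$, using $\gamma^t\le 1$ throughout. No step here is a real obstacle. The only point needing care is the first case: add and subtract $u_\theta$, not $u_{\theta_{\text{old}}}$, so that the ratio factor multiplies a bounded score and no product $r\cdot L_s$ appears. This also keeps $B_1$ free of the factor $(1+\epsilon)$.
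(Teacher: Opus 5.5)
Your proposal is correct and follows essentially the paper's argument. The paper also adds and subtracts $\nabla\log\pi_\theta(a;s)$ to reach $TA_\ast L_s|\theta-\theta_{\text{old}}| + TA_\ast\Pi_\ast\,|r\,\mathds{1}_{|r-1|\le\epsilon}-1|$, and only then splits the scalar factor into the cases $|r-1|\le\epsilon$ and $|r-1|>\epsilon$; you split into cases first, which in the clipped case gives the bound $TA_\ast\Pi_\ast$ directly without the extra $B_1$ term, with the same constants and conclusion.
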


\begin{proof}
    By definition of $\hat g_{\text{PPO}}^{\text{clip}}(\theta,\theta_{\text{old}},t,a,s, \A)$ and the fact $|\mathbb A|\le A_\ast$, 
    we have
\begin{align*} 
     &\quad \big|\hat g_{\text{PPO}}^{\text{clip}}(\theta,\theta_{\text{old}},t,a,s, \A) - \hat g_{\text{PPO}}^{\text{clip}}(\theta_{\text{old}},\theta_{\text{old}},t,a,s, \A)\big| \\ 
    &\le TA_\ast\gamma^t \, |r_{\theta,\theta_{\text{old}}}(s,a) \nabla \log\pi_{\theta}(s\,;\,a) \mathds{1}_{|r_{\theta,\theta_{\text{old}}(s,a)}-1|\le\epsilon}- \nabla \log\pi_{\theta_{\text{old}}}(s\,;\,a)|\\
    &\le TA_\ast | r_{\theta_{\text{old}},\theta_{\text{old}}}(s,a) (\nabla\log\pi_{\theta_{\text{old}}}(s\,;\,a) - \nabla\log\pi_{\theta}(s\,;\,a))|\big)\\
        &\quad+ TA_\ast|\nabla \log\pi_{\theta}(s\,;\, a)(r_{\theta,\theta_{\text{old}}}(s,a)\mathds{1}_{|r_{\theta,\theta_{\text{old}}(s,a)}-1|\le\epsilon}-r_{\theta_{\text{old}},\theta_{\text{old}}}(s,a))|\\
        &= TA_\ast| \nabla\log\pi_{\theta_{\text{old}}}(s\,;\,a) - \nabla\log\pi_{\theta}(s\,;\,a)|\\
        &\quad+ TA_\ast|\nabla \log\pi_{\theta}(s\,;\, a)(r_{\theta,\theta_{\text{old}}}(s,a)\mathds{1}_{|r_{\theta,\theta_{\text{old}}(s,a)}-1|\le\epsilon}-1)|\\
        &\le TA_\ast L_s|\theta-\theta_{\text{old}}| + TA_\ast\Pi_\ast |r_{\theta,\theta_{\text{old}}}(s,a)\mathds{1}_{|r_{\theta,\theta_{\text{old}}(s,a)}-1|\le\epsilon}-1|,
    \end{align*}
    where we have used \Cref{ass:score} and \Cref{ass:Lipschitz}. Finally, we note that
    \[ \big|r_{\theta,\theta_{\text{old}}}(s,a)\mathds{1}_{|r_{\theta,\theta_{\text{old}}(s,a)}-1| \le\epsilon}-1\big| \le \begin{cases}
        \min(\epsilon,|r_{\theta,\theta_{\text{old}}}(s,a)-1|)&: r_{\theta,\theta_{\text{old}}}(s,a)\in [1-\epsilon,1+\epsilon]\\
        1&: r_{\theta,\theta_{\text{old}}}(s,a)\notin [1-\epsilon,1+\epsilon]
    \end{cases}\,,\]
    which finishes the proof with $B_1 = TA_\ast L_s$, $B_2 = B_3 =TA_\ast\Pi_\ast$.
\end{proof}

\begin{remark}
    We will make use of $\min(\epsilon,r_{\theta,\theta_{\text{old}}}(s,a))\le \epsilon^{1-p} \cdot |r_{\theta,\theta_{\text{old}}}(s,a)-1|^p$ for arbitrary $p\in(0,1)$.
\end{remark}

We will now look more closely at the upper bound from the path level bias decomposition. 
\begin{lemma}[Lipschitz policies]\label{lemma:lipschitz}
    Under Assumption~\ref{ass:Lipschitz} we have that $\theta\mapsto\pi_\theta$ is uniformly Lipschitz continuous in the sense that 
    \[\big|\pi_\theta(a\,;\,s)-\pi_{\theta'}(a\,;\, s)\big| \le \Pi_\ast\big|\theta-\theta'\big|,\quad \forall s\in \mathcal S, a\in \mathcal A, \theta,\theta'\in \R^d.\]
\end{lemma}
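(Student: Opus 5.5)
The plan is to reduce the claim to the bounded-score assumption through the identity $\nabla_\theta \pi_\theta(a\,;\,s)=\pi_\theta(a\,;\,s)\nabla_\theta\log\pi_\theta(a\,;\,s)$. This is already used in the paper just after Assumption~\ref{ass:score}, where it gives $|\nabla_\theta\pi_\theta(a\,;\,s)|\le \pi_\theta(a\,;\,s)\,\Pi_\ast\le \Pi_\ast$. Strictly speaking, Lipschitz continuity of the policy needs only Assumption~\ref{ass:score} and differentiability, not Assumption~\ref{ass:Lipschitz}. The proof will therefore use boundedness of the score, and Assumption~\ref{ass:Lipschitz} enters only as part of the standing hypotheses.

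Fix $s\in\mathcal S$, $a\in\mathcal A$ and $\theta,\theta'\in\R^d$. Set $\varphi(u)=(1-u)\theta'+u\theta$ for $u\in[0,1]$, exactly as in the proof of Lemma~\ref{lem:TVPOLICY}. Since $\theta\mapsto\pi_\theta(a\,;\,s)$ is continuously differentiable, the fundamental theorem of calculus along this segment gives
\[
\pi_\theta(a\,;\,s)-\pi_{\theta'}(a\,;\,s)=\int_0^1 \nabla\pi_{\varphi(u)}(a\,;\,s)^\top(\theta-\theta')\,du .
\]
Cauchy--Schwarz then yields $|\pi_\theta(a\,;\,s)-\pi_{\theta'}(a\,;\,s)|\le \sup_{u}|\nabla\pi_{\varphi(u)}(a\,;\,s)|\,|\theta-\theta'|$. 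Inserting the gradient bound $|\nabla\pi_{\varphi(u)}(a\,;\,s)|=\pi_{\varphi(u)}(a\,;\,s)\,|\nabla\log\pi_{\varphi(u)}(a\,;\,s)|\le \Pi_\ast$ finishes the argument, because $\pi_{\varphi(u)}(a\,;\,s)\le 1$.

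The only delicate point is the norm convention. Assumption~\ref{ass:score} is stated in the main text with the Euclidean norm, but in the appendix via $\|\cdot\|_\infty$ over states and actions, which could be read as a sup-norm on $\R^d$. I would state the norm explicitly: the bound $|\nabla\log\pi_\theta(a\,;\,s)|\le\Pi_\ast$ is taken in the same Euclidean norm $|\cdot|$ that appears in the lemma, matching Assumption~\ref{Ass}. If instead only a coordinatewise bound were available, one would pick up a factor $\sqrt d$, and I would note that. Apart from this bookkeeping the proof is a one-line mean-value argument with no real obstacle.
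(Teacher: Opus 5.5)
Your proposal is correct and matches the paper's proof: use $\nabla_\theta\pi_\theta(a\,;\,s)=\pi_\theta(a\,;\,s)\nabla_\theta\log\pi_\theta(a\,;\,s)$ with $\pi_\theta\le 1$ to get $|\nabla_\theta\pi_\theta(a\,;\,s)|\le\Pi_\ast$, then apply the mean-value theorem along the segment. Your side remarks also hold: the argument needs only the bounded-score assumption, so the statement's appeal to the Lipschitz-score assumption is superfluous, and reading $\Pi_\ast$ in the Euclidean norm of Assumption~\ref{Ass} is the right convention.
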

\begin{proof}
By the chain rule, $\nabla \pi_\theta(a; s)
= \pi_\theta(a; s)\,\nabla \log \pi_\theta(a; s)$. 
Note that $0\le \pi_\theta(a; s)\le 1$ and $|\nabla \log(\pi_\theta(a;s))|\le \Pi_\ast$. Thus, $|\nabla \log \pi_\theta(a \,;\, s)|\le \Pi_\ast$ and the Lipschitz continuity follows from the mean-value theorem.
\end{proof}

We estimate the clipping probability, which appears in the upper bound in the path-level bias decomposition within the cycles (Lemma~\ref{lemma:lipschitz}).
\begin{lemma}[Bounded weights]\label{lem:bdd_weights}
Under Assumption~\ref{ass:Lipschitz}, one has for all $q \in (0,1)$
    \begin{enumerate} 
    \item[(i)] For any $(c,e)$ it holds that
    \[ \frac{1}{m}\sum_{k=0}^{m-1}  \mathbb P\big(\frac{1}{B}\sum_{i\in\mathcal B_{c,e,k}}\big  |r_{\theta_{c,e,k},\theta_{c,0,0}}( a_{c}^{i}, s_{c}^{i}) -1\big|>\epsilon\,\big|\, \mathcal F_c\big) \le \frac{|\mathcal A|^q \, \Pi_\ast^q\, \sum_{k=0}^{m-1}\E[| \theta_{c,e,k}-\theta_{c,0,0}|^{\frac{q}{1-q}}\mid \mathcal F_c]^{1-q} }{\epsilon^q}\,.\]
    Similarly, for any $(c,e,k)$ it holds that
    \[ \frac{1}{m}\sum_{k=0}^{m-1}  \E\big[\big(\frac{1}{B}\sum_{i\in\mathcal B_{c,e,k}}\big|r_{\theta_{c,e,k},\theta_{c,0,0}}( a_{c}^{i}, s_{c}^{i}) -1|\big)^q\,\big|\,\mathcal F_c\big] \le |\mathcal A|^{q} \Pi_\ast^q  \sum_{k=0}^{m-1} \E\big[\big| \theta_{c,e,k}-\theta_{c,0,0}|^{\frac{q}{1-q}}\big|\, \mathcal F_c\big]^{1-q}\,.\]
    \item[(ii)] For any $(c,e)$ it holds that
    \[ \frac{1}{N}\sum_{i=0}^{N-1} \mathbb P\big(\big  |r_{\theta_{c,e,k},\theta_{c,0,0}}( a_{c}^{i}, s_{c}^{i}) -1\big|>\epsilon\,\big|\, \mathcal F_c\big) \le \frac{|\mathcal A|^q \, \Pi_\ast^q\, \E[| \theta_{c,e,k}-\theta_{c,0,0}|^{\frac{q}{1-q}}\mid \mathcal F_c]^{1-q} }{\epsilon^q}\,.\]
    Similarly, for any $(c,e,k)$ it holds that
    \[ \frac{1}{N}\sum_{i=0}^{N-1}  \E\big[\big|r_{\theta_{c,e,k},\theta_{c,0,0}}( a_{c}^{i}, s_{c}^{i}) -1|^q\,\big|\,\mathcal F_c\big] \le |\mathcal A|^{q}  \Pi_\ast^q \E\big[\big| \theta_{c,e,k}-\theta_{c,0,0}|^{\frac{q}{1-q}}\big|\, \mathcal F_c\big]^{1-q}\,.\]
    \end{enumerate}
\end{lemma}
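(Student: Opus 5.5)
Fix a cycle $c$ and an iterate index $(e,k)$, and condition throughout on $\mathcal F_c$. The common core of (i) and (ii) is a pointwise bound on the ratio deviation. Fix $(s,a)$ and write $r=\pi_{\theta_{c,e,k}}(a;s)/\pi_{\theta_{c,0,0}}(a;s)$. Then
\[
|r-1| = \frac{|\pi_{\theta_{c,e,k}}(a;s)-\pi_{\theta_{c,0,0}}(a;s)|}{\pi_{\theta_{c,0,0}}(a;s)} \le \frac{\Pi_\ast\,|\theta_{c,e,k}-\theta_{c,0,0}|}{\pi_{\theta_{c,0,0}}(a;s)}
\]
by Lemma \ref{lemma:lipschitz}. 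The factor $1/\pi_{\theta_{c,0,0}}(a;s)$ is unbounded, so it cannot be bounded pointwise. It is, however, harmless in expectation over the sampled action: conditional on the state, $a\sim\pi_{\theta_{c,0,0}}(\cdot;s)$, and hence
\[
\E\big[\pi_{\theta_{c,0,0}}(A;s)^{-\beta}\big]=\sum_a \pi_{\theta_{c,0,0}}(a;s)^{1-\beta}\le |\mathcal A|
\]
for every $\beta\le 1$. This is where $|\mathcal A|$ enters.

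To prove the second (moment) statement in (ii), I raise the pointwise bound to the power $q$, so that $|r-1|^q\le \Pi_\ast^q |\theta_{c,e,k}-\theta_{c,0,0}|^q\,\pi_{\theta_{c,0,0}}(a;s)^{-q}$. I then apply Hölder's inequality with exponents $1/(1-q)$ and $1/q$:
\[
\E\big[|\Delta|^q\,\pi_{\mathrm{old}}^{-q}\big]\le \E\big[|\Delta|^{\frac{q}{1-q}}\big]^{1-q}\,\E\big[\pi_{\mathrm{old}}^{-1}\big]^{q}\le |\mathcal A|^q\,\E\big[|\Delta|^{\frac{q}{1-q}}\big]^{1-q}.
\]
Here $\E[\pi_{\mathrm{old}}(A_i;S_i)^{-1}\mid\mathcal F_c]=\E[\sum_a 1]=|\mathcal A|$, which is exactly the $q$-th power appearing in the claim.

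The first statement in (ii) then follows from Markov's inequality, $\P(|r-1|>\epsilon)\le \epsilon^{-q}\E|r-1|^q$. Averaging over $i$ costs nothing, since each transition in the buffer satisfies the same bound.

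For (i), the minibatch versions follow the same route. First, Jensen's inequality for the concave map $x\mapsto x^q$ gives $(\frac1B\sum_{i\in\mathcal B}|r_i-1|)^q\le \frac1B\sum_{i\in\mathcal B}|r_i-1|^q$. I then apply the per-transition bound to each summand and sum over $k$, using $\frac1m\le 1$ to reach the stated right-hand side with the sum over $k$. The probability statement again follows from Markov's inequality applied to the minibatch average.

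The main obstacle is the dependence between $\theta_{c,e,k}$ and the buffer sample $(s^i,a^i)$ used to evaluate $r$. The iterate is built from the same buffer, so one cannot first condition on $\theta_{c,e,k}$ and then integrate $a^i\sim\pi_{\mathrm{old}}$. This is precisely why Hölder's inequality is used, and not a conditional argument: it decouples the two factors, needing only the marginal law of $(s^i,a^i)$ given $\mathcal F_c$, which is the law under $\pi_{\theta_{c,0,0}}$, together with a moment of $|\theta_{c,e,k}-\theta_{c,0,0}|$. A second point to check is that the minibatch index set is random; it is independent of the buffer, and the per-$i$ bound is uniform in $i$, so conditioning on $\sigma$ first resolves this.
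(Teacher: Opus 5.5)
Your proposal is correct and follows the paper's own chain of steps. That chain is the Lipschitz bound $|r-1|\le \Pi_\ast|\theta_{c,e,k}-\theta_{c,0,0}|/\pi_{\theta_{c,0,0}}(a;s)$, then H\"older with exponents $\frac{1}{1-q}$ and $\frac1q$ to decouple the iterate displacement from the inverse behavior probability, then the identity $\E[1/\pi_{\theta_{c,0,0}}(A_i;S_i)\mid\mathcal F_c]=|\mathcal A|$, and finally Markov's inequality.

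The only variation is in (i). The paper factors $|\theta_{c,e,k}-\theta_{c,0,0}|^q$ out of the minibatch sum and applies H\"older directly to the $\theta$-free average $\frac1B\sum_{i\in\mathcal B_{c,e,k}}\pi_{\theta_{c,0,0}}(a_c^i;s_c^i)^{-1}$, whose conditional mean is $|\mathcal A|$ by independence of the permutation from the buffer, so it never needs to condition on $\sigma$. In your route, note that after conditioning on $\sigma$ you still need concavity of $x\mapsto x^{1-q}$ to pass from $\E[\,\cdot\mid\mathcal F_c,\sigma]^{1-q}$ back to $\E[\,\cdot\mid\mathcal F_c]^{1-q}$.
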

\begin{proof}
    Fix $q \in (0,1)$ and let $k\in\{0,\dots,m-1\}$ be arbitrary. First, we apply Markov's inequality
    \begin{align*} 
    \mathbb P\Big( \frac{1}{B}\sum_{i\in\mathcal B_{c,e,k}}|r_{\theta_{c,e,k},\theta_{c,0,0} }( a_{c}^{i}, s_{c}^{i})-1|>\epsilon\mid \mathcal F_c\Big) &\le \frac{\E[\big(\frac{1}{B}\sum_{i\in\mathcal B_{c,e,k}}|r_{\theta_{c,e,k},\theta_{c,0,0}}( a_{c}^{i}, s_{c}^{i}) -1|\big)^q\mid \mathcal F_c]}{\epsilon^q}\\ 
    &= \frac{\E\Big[\big(\frac{1}{B}\sum_{i\in\mathcal B_{c,e,k}}\frac{| \pi_{\theta_{c,e,k}}( a_{c}^{i}\,;\, s_{c}^{i})-\pi_{\theta_{c,0,0}}( a_{c}^{i}\,;\, s_{c}^{i})|}{\pi_{\theta_{c,0,0}}( a_{c}^{i}\,;\, s_{c}^{i})}\big)^q\mid \mathcal F_c\Big]}{\epsilon^q}.
    \end{align*}
    Using the policy Lipschitz property from \ref{lemma:lipschitz} we have 
    \begin{align*}
       &\quad \frac{\E\Big[\big(\frac{1}{B}\sum_{i\in\mathcal B_{c,e,k}}\frac{| \pi_{\theta_{c,e,k}}( a_{c}^{i}\,;\, s_{c}^{i})-\pi_{\theta_{c,0,0}}( a_{c}^{i}\,;\, s_{c}^{i})|}{\pi_{\theta_{c,0,0}}( a_{c}^{i}\,;\, s_{c}^{i}))}\big)^q\,\Big|\, \mathcal F_c\Big]}{\epsilon^q}\\
       &\le\frac{\E\Big[\big(\frac{1}{B}\sum_{i\in\mathcal B_{c,e,k}}\frac{\Pi_\ast| \theta_{c,e,k}-\theta_{c,0,0}|}{\pi_{\theta_{c,0,0}}( a_{c}^{i}\,;\, s_{c}^{i})}\big)^q\,\Big|\, \mathcal F_c\Big]}{\epsilon^q}\\
        &=\Pi_\ast^q\frac{\E\Big[| \theta_{c,e,k}-\theta_{c,0,0}|^q\big(\frac{1}{B}\sum_{i\in\mathcal B_{c,e,k}}\frac{1}{\pi_{\theta_{c,0,0}}( a_{c}^{i}\,;\, s_{c}^{i})}\big)^q\,\Big|\, \mathcal F_c\Big]}{\epsilon^q}\,,
    \end{align*}
    where we used that $|\theta_{c,e,k}-\theta_{c,0,0}|$ is independent of $B_{c,e,k}$.
    
    Next, we apply (conditional) Hölder's inequality to deduce
    \begin{align*}
        &\quad \E\Big[| \theta_{c,e,k}-\theta_{c,0,0}|^q\big(\frac{1}{B}\sum_{i\in\mathcal B_{c,e,k}}\frac{1}{\pi_{\theta_{c,0,0}}( a_{c}^{i}\,;\, s_{c}^{i})}\big)^q \mid \mathcal F_c\Big] \\ &\le \E[| \theta_{c,e,k}-\theta_{c,0,0}|^{qs}\mid\mathcal F_c]^{1/s} \, \E[\big(\frac{1}{B}\sum_{i\in\mathcal B_{c,e,k}}\frac{1}{\pi_{\theta_{c,0,0}}( a_{c}^{i}\,;\, s_{c}^{i})}\big)^{\frac{qs}{s-1}}\mid\mathcal F_c]^{\frac{s-1}{s}},
    \end{align*}
    where $s = 1+\frac{q}{1-q}>1$,  which gives the relation $qs=\frac{q}{1-q}$, $\frac{1}{s} = 1-q$ and $\frac{s-1}{s} = q$. Hence, 
    \begin{align*}
        &\quad \E\Big[| \theta_{c,e,k}-\theta_{c,0,0}|^q\big(\frac{1}{B}\sum_{i\in\mathcal B_{c,e,k}}\frac{1}{\pi_{\theta_{c,0,0}}( a_{c}^{i}\,;\, s_{c}^{i})}\big)^q \mid \mathcal F_c\Big] \\ &\le \E[| \theta_{c,e,k}-\theta_{c,0,0}|^{\frac{q}{1-q}}\mid\mathcal F_c]^{1-q} \, \E\Big[\frac{1}{B}\sum_{i\in\mathcal B_{c,e,k}}\frac{1}{\pi_{\theta_{c,0,0}}( a_{c}^{i}\,;\, s_{c}^{i})}\mid\mathcal F_c\Big]^{q},
    \end{align*}
    Finally, we use Jensen's inequality, to get
    \begin{align*} 
    \frac 1m \sum_{k=1}^{m-1} \E\Big[\frac{1}{B}\sum_{i\in\mathcal B_{c,e,k}}\frac{1}{\pi_{\theta_{c,0,0}}( a_{c}^{i}\,;\, s_{c}^{i})}\mid\mathcal F_c\Big]^q 
    &\le \E\Big[\frac{1}{m}\sum_{k=1}^{m-1}  \frac{1}{B}\sum_{i\in\mathcal B_{c,e,k}}\frac{1}{\pi_{\theta_{c,0,0}}( a_{c}^{i}\,;\, s_{c}^{i})}\,\Big|\,\mathcal F_c\Big]^q\\ 
    &=  \E\Big[\frac{1}{N}\sum_{i=1}^{N-1}\frac{1}{\pi_{\theta_{c,0,0}}( a_{c}^{i}\,;\, s_{c}^{i})}\,\Big|\, \mathcal F_c\Big]^q .
    \end{align*}
    Since, conditioned on $\mathcal F_c$, $(s_c^i,a_c^i)_{i=1, \dots, N-1}$ are $n$ independent runs of the MDP using the policy $\pi_{\theta_{c,0,0}}$, we get
    $$
        \E\Big[\frac{1}{N}\sum_{i=1}^{N-1}\frac{1}{\pi_{\theta_{c,0,0}}( a_{c}^{i}\,;\, s_{c}^{i})}\,\Big|\, \mathcal F_c\Big] = \frac{1}{T} \sum_{t=0}^{T-1} \sum_{s \in \mathcal S} \E^{\pi_{\theta_{c,0,0}}}\Big[\1_{S_t=s} \frac{1}{\pi_{\theta_{c,0,0}}(A_t\, ; \, S_t)}\Big] = |\mathcal A|.
    $$
    
    The remaining three claims follow by similar arguments.
    \end{proof}

\begin{lemma}[$L^2$-Accumulated drift control for $e$th epochs]\label{lem:bddincrements}
    Under Assumptions~\ref{ass:unbiased-bounded-adv} and~\ref{ass:score},
    one has for all $(c,e,k)$ and $p>0$ that
    \[ \E\big[|\theta_{c,e,0}-\theta_{c,0,0}|^p\,\big|\, \mathcal F_c\big] \le \eta^p e^pm^p G^p \]
    and 
    \[\E\Big[\frac{1}{m}\sum_{k=0}^{m-1} |\theta_{c,e,k}-\theta_{c,0,0}|^p \,\Big|\,\mathcal F_c\Big]\le \eta^p (e+1)^p m^p G^p\,,\]
    where $G = \,T \,(1+\epsilon) \,\Pi_\ast \,A_\ast $.
 \end{lemma}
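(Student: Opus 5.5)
The plan is to derive both bounds from the almost-sure drift bound of Lemma~\ref{lem:bdd_drift}, by a deterministic pathwise estimate followed by taking conditional expectations. Within cycle $c$, every parameter increment has the form $\theta_{c,e',k'+1}-\theta_{c,e',k'}=\eta\,\hat g_{c,e',k'}$. Lemma~\ref{lem:bdd_drift} gives $|\hat g_{c,e',k'}|\le G$ almost surely, with $G=T(1+\epsilon)\Pi_\ast A_\ast$. This bound holds regardless of the permutation, the rollouts, or the current iterate, because the clipping indicator caps the importance ratio at $1+\epsilon$ and the score and advantage estimates are bounded.

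For the first claim, I would use that $\theta_{c,e,0}$ is reached from $\theta_{c,0,0}$ by exactly $em$ minibatch steps: $m$ steps in each of the epochs $0,\dots,e-1$, linked through $\theta_{c,e'+1,0}=\theta_{c,e',m}$. Telescoping and the triangle inequality then give, almost surely,
\[
|\theta_{c,e,0}-\theta_{c,0,0}|\le\sum_{e'=0}^{e-1}\sum_{k'=0}^{m-1}\eta\,|\hat g_{c,e',k'}|\le \eta\, e\, m\, G .
\]
Since $x\mapsto x^p$ is monotone on $[0,\infty)$ for $p>0$, the bound $|\theta_{c,e,0}-\theta_{c,0,0}|^p\le \eta^p e^p m^p G^p$ holds almost surely. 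Taking $\E[\,\cdot\mid\mathcal F_c]$ of a deterministic bound preserves it, which proves the first inequality.

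For the second claim, I would apply the same argument to $\theta_{c,e,k}$ with $0\le k\le m-1$. It is reached after $em+k\le em+m-1<(e+1)m$ steps, so $|\theta_{c,e,k}-\theta_{c,0,0}|\le \eta(e+1)mG$ almost surely. Raising this to the power $p$, averaging over $k$ (an average of terms each bounded by the same constant), and taking conditional expectation gives $\eta^p(e+1)^pm^pG^p$.

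There is no real obstacle here. The only point that needs care is that Lemma~\ref{lem:bdd_drift} holds almost surely and uniformly in $(c,e,k)$. Because of that, no Jensen or H\"older step is needed, and the dependence of the iterates on the samples and permutations plays no role. One should also check the indexing convention that epoch $e$ starts after $e$ complete epochs of $m$ steps each.
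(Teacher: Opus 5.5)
Your proposal is correct and follows the paper's argument: the paper also writes $\theta_{c,e,k}-\theta_{c,0,0}$ as $\eta$ times the sum of the minibatch surrogate gradients from the completed epochs and the partial current epoch, bounds each term by $G$ via Lemma~\ref{lem:bdd_drift}, and gets the first claim by keeping only the completed-epoch sum. Your explicit step count ($em$ steps for $\theta_{c,e,0}$ and $em+k<(e+1)m$ steps for $\theta_{c,e,k}$), together with the almost-sure pathwise bound before taking conditional expectations, is exactly the reasoning behind the paper's one-line estimate.
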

\begin{proof}
By definition of the PPO iteration with constant learning rate (summing gradients of $e-1$ completed epochs and the partial current epoch) we have
    \begin{align*} 
    \E\Big[\frac{1}{m}\sum_{k=0}^{m-1}|\theta_{c,e,k}-\theta_{c,0,0}|^p\,\Big|\,\mathcal F_c\Big]
    &=\frac{\eta^p}{m}\sum_{k=0}^{m-1}\E\Big[\Big|\sum_{e'=0}^{e-1} \sum_{k'=0}^{m-1}\hat g_{c,e',k'} + \sum_{k'=0}^{k-1}\hat g_{c,e,k'}\Big|^p\,\Big|\,\mathcal F_c \Big]\\& \le \eta^p(e+1)^pm^pG^p,
    \end{align*}
    where we have used Lemma~\ref{lem:bdd_drift}. The first claim follows from only considering the first summand in the latter sum.
\end{proof}

\subsection{Proof of the stochastic case (PPO), Theorem \ref{Thm:3}}\label{app:PPOconv}
We start by proving an ascent property within a fixed cycle. A crucial ingredient is the $L$-smoothness of $J$ shown in Proposition~\ref{lem:smooth}. This part of the proof is inspired by the SGD setting studied in \cite{Mishenko}, and we study the ascent effect of all iterations in an epoch combined.
\begin{lemma}[Per-epoch ascent property]\label{lem:perepochdescent}
Let $\eta\le \frac{1}{Lm}$, then for each cycle $c = 0, \dots, C-1$ and each epoch $e=0,\dots,K-1$ it holds almost surely that
\[
J(\theta_{c,e+1,0})
\ge
J(\theta_{c,e,0})
+\frac{\eta m}{2}|\nabla J(\theta_{c,e,0})|^2
-\frac{\eta m}{2}\,\Big|\nabla J(\theta_{c,e,0})-\frac{1}{m}\sum_{k=0}^{m-1}\hat g_{c,e,k}\Big|^2.
\]
\end{lemma}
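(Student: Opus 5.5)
The plan is to treat one full epoch as a single biased gradient step and apply the $L$-smoothness ascent inequality from Proposition~\ref{lem:smooth} once, exactly as in the deterministic proof of Theorem~\ref{thm:convdeterministc}. By the update rule, the epoch increment telescopes:
\[
\theta_{c,e+1,0}-\theta_{c,e,0}=\eta\sum_{k=0}^{m-1}\hat g_{c,e,k}=\eta m\, v,\qquad v:=\frac1m\sum_{k=0}^{m-1}\hat g_{c,e,k}.
\]
This identity holds pathwise, so no expectation is needed and the statement will hold almost surely.

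With $\nabla:=\nabla J(\theta_{c,e,0})$ and effective step $\gamma_{\mathrm{eff}}:=\eta m\le \frac1L$, the smoothness lower bound gives
\[
J(\theta_{c,e+1,0})\ge J(\theta_{c,e,0})+\gamma_{\mathrm{eff}}\langle\nabla,v\rangle-\frac{L}{2}\gamma_{\mathrm{eff}}^2|v|^2.
\]
We then use the polarization identity
\[
\langle\nabla,v\rangle=\tfrac12\bigl(|\nabla|^2+|v|^2-|\nabla-v|^2\bigr).
\]
Substituting, the right-hand side becomes
\[
J(\theta_{c,e,0})+\frac{\gamma_{\mathrm{eff}}}{2}|\nabla|^2-\frac{\gamma_{\mathrm{eff}}}{2}|\nabla-v|^2+\frac{\gamma_{\mathrm{eff}}}{2}\bigl(1-L\gamma_{\mathrm{eff}}\bigr)|v|^2.
\]
The last term is nonnegative precisely because $\eta\le \frac{1}{Lm}$, so we drop it. Since $\gamma_{\mathrm{eff}}=\eta m$, this is exactly the claimed inequality.

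No step here is genuinely difficult; this is the standard approach of Mishchenko et al.\ applied at the epoch level. The only point requiring care is checking that the step-size condition is stated in terms of $\eta m$, which is what makes the $|v|^2$ term vanish with the correct sign. Everything specific to PPO, such as the bias of $v$ relative to $\nabla J(\theta_{c,e,0})$ from clipping, the off-policy drift and the sampling noise, is deliberately kept inside the error term $|\nabla-v|^2$. That term is controlled afterwards via Lemmas~\ref{lem:stability}, \ref{lem:bdd_weights}, \ref{lem:bddincrements} and \ref{lem:variance}.
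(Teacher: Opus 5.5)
Your proposal is correct and is essentially the paper's argument. You write the epoch increment as $\eta m\cdot\frac1m\sum_{k}\hat g_{c,e,k}$, apply the $L$-smoothness ascent inequality once, use the polarization identity, and drop the nonnegative term $\frac{\eta m}{2}(1-L\eta m)\bigl|\frac1m\sum_k\hat g_{c,e,k}\bigr|^2$; your version also avoids the sign slip in the paper's first display, where the inner product appears with a minus sign.
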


\begin{proof}
    By the ascent lemma, under the $L$-smoothness of $J$ (Proposition \ref{lem:smooth} we have
    \begin{align*}
        &\quad J(\theta_{c,e+1,0})\\
        &\ge J(\theta_{c,e,0}) - \langle \nabla J(\theta_{c,e,0}),\theta_{c,e+1,0}-\theta_{c,e,0}\rangle - \frac{L}2|\theta_{c,e+1,0}-\theta_{c,e,0}|^2\\
        &=J(\theta_{c,e,0}) + \eta m \langle \nabla J(\theta_{c,e,0}),\frac{1}{m}\sum_{k=0}^{m-1} \hat g_{c,e,k}\rangle - \frac{\eta^2 m^2 L}{2} \Big|\frac{1}{m}\sum_{k=0}^{m-1} \hat g_{c,e,k}\Big|^2 \\
        &= J(\theta_{c,e,0}) + \frac{\eta m}{2}\Big(|\nabla J(\theta_{c,e,0})|^2 + |\frac{1}{m}\sum_{k=0}^{m-1} \hat g_{c,e,k}|^2 - |\nabla J(\theta_{c,e,0}) - \frac{1}{m}\sum_{k=0}^{m-1} \hat g_{c,e,k}|^2\Big) - \frac{\eta^2 m^2 L}{2} \Big|\frac{1}{m}\sum_{k=0}^{m-1} \hat g_{c,e,k}\Big|^2 \\
        &=J(\theta_{c,e,0}) + \frac{\eta m}{2}|\nabla J(\theta_{c,e,0})|^2 +\frac{\eta m}2 (1-L\eta m) \Big|\frac{1}{m}\sum_{k=0}^{m-1} \hat g_{c,e,k}\Big|^2 -\frac{\eta m}{2} \Big|\nabla J(\theta_{c,e,0}) -\frac{1}{m}\sum_{k=0}^{m-1} \hat g_{c,e,k}\Big|^2\\
        &\ge J(\theta_{c,e,0}) + \frac{\eta m}{2}\big|\nabla J(\theta_{c,e,0})\big|^2 -\frac{\eta m}{2} \Big|\nabla J(\theta_{c,e,0}) - \frac{1}{m}\sum_{k=0}^{m-1} \hat g_{c,e,k}\Big|^2,
    \end{align*}
    where we have used $1-L\eta m\ge 0$ by the assumption on $\eta$.
\end{proof}

In order to derive a convergence rate for PPO, we are left to upper bound 
\[\E\Big[\Big|\nabla J(\theta_{c,e,0}) - \frac{1}{m}\sum_{k=0}^{m-1} \hat g_{c,e,k}\Big|^2\,\Big|\,\mathcal F_c\Big]\,.\]
For this, we decompose
\begin{equation} \label{eq:decomposition}
\begin{split}
\Big|\nabla J(\theta_{c,e,0}) - \frac{1}{m}\sum_{k=0}^{m-1} \hat g_{c,e,k}\Big|^2
&\le 2\Big|\nabla J(\theta_{c,e,0})-\frac{1}{N}\sum_{i=0}^{N-1} g_{\text{PPO}}^{(i),\text{clip}}(\theta_{c,e,0},\theta_{c,0,0})\Big|^2\\
&\quad+2\Big|\frac{1}{N}\sum_{i=0}^{N-1} g_{\text{PPO}}^{(i),\text{clip}}(\theta_{c,e,0},\theta_{c,0,0})-\frac{1}{m}\sum_{k=0}^{m-1} \hat g_{c,e,k}\Big|^2
\end{split}
\end{equation}
and consider both terms separately. 

\begin{lemma}\label{lem:bound1}
Under Assumptions~\ref{ass:unbiased-bounded-adv}, \ref{ass:score}, and~\ref{ass:Lipschitz}, one has for $p,q\in(0,1)$ and any $(c,e)$ that
    \begin{align*} 
    &\E\Big[\Big|\nabla J(\theta_{c,e,0})-\frac{1}{N}\sum_{i=1}^{N-1} g_{\text{PPO}}^{(i),\text{clip}}(\theta_{c,e,0},\theta_{c,0,0})\Big|^2\,\Big|\,\mathcal F_c\Big] \\
    &\le \frac{3\sigma^2}{N} + 3L^2 \E\big[\big|\theta_{c,e,0}-\theta_{c,0,0}|^2\,\big|\,\mathcal F_c\big]\\
    &\quad + 9B_1^2 \E[|\theta_{c,e,0}-\theta_{c,0,0}|^2\mid\mathcal F_c\big] + 9B_2^2 \epsilon^{2(1-p)}\frac{1}{N-1}\sum_{i=0}^{N-1}\E\big[|r_{\theta_{c,e,0},\theta_{c,0,0}}(a_{c}^{i}, s_{c}^{i}) -1|^p\big|\,\mathcal F_c\big]\\
    &\quad+ 9B_3^2\frac{1}{N-1}\sum_{i=0}^{N-1}\mathbb P\big(|r_{\theta_{c,e,0},\theta_{c,0,0}}( a_{c}^{i}, s_{c}^{i}) -1|>\epsilon\,\big|\, \mathcal F_c\big) \,. 
    \end{align*}
    In particular, we have
    \begin{align*} 
    &\E\Big[\Big|\nabla J(\theta_{c,e,0})-\frac{1}{N}\sum_{i=0}^{N-1} g_{\text{PPO}}^{(i),\text{clip}}(\theta_{c,e,0},\theta_{c,0,0})\Big|^2\,\Big|\,\mathcal F_c\Big] \\
    \le &3\eta^2(3B_1^2+L^2)K^2m^2G^2 + 9\eta^{2p}B_2^2\epsilon^{2(1-p)}|\mathcal A|^{2p}\Pi_\ast^{2p} K^{2p} m^{2p}G^{2p} \\
    &+ \frac{9\eta^q B_3^2|\mathcal A|^q \Pi_\ast^q K^q  m^qG^q }{\epsilon^q}+ \frac{6\sigma^2}{N}+6T^2\Pi_\ast^2 \delta^2,
    \end{align*}
    with constants defined in the lemmas above.
\end{lemma}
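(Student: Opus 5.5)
We split the error into three pieces by inserting the full-batch surrogate at the cycle start and the true gradient at the cycle start:
\begin{align*}
\nabla J(\theta_{c,e,0})-\frac1N\sum_{i} g^{(i),\text{clip}}_{\text{PPO}}(\theta_{c,e,0},\theta_{c,0,0})
&=\big(\nabla J(\theta_{c,e,0})-\nabla J(\theta_{c,0,0})\big)\\
&\quad+\Big(\nabla J(\theta_{c,0,0})-\frac1N\sum_i g^{(i)}_{\text{PPO}}(\theta_{c,0,0},\theta_{c,0,0})\Big)\\
&\quad+\frac1N\sum_i\Big(g^{(i),\text{clip}}_{\text{PPO}}(\theta_{c,0,0},\theta_{c,0,0})-g^{(i),\text{clip}}_{\text{PPO}}(\theta_{c,e,0},\theta_{c,0,0})\Big).
\end{align*}
Here we use that clipping is inactive at the cycle start. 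We then apply $|a+b+c|^2\le 3(|a|^2+|b|^2+|c|^2)$ and take conditional expectations given $\mathcal F_c$.

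The first term is bounded by $3L^2\E[|\theta_{c,e,0}-\theta_{c,0,0}|^2\mid\mathcal F_c]$ via $L$-smoothness (Proposition~\ref{lem:smooth}). For the second term we note $g_{\text{PPO}}(\theta_{c,0,0},\theta_{c,0,0})=\nabla J(\theta_{c,0,0})$ by the policy gradient theorem (Proposition~\ref{prop:pgt}). Lemma~\ref{lem:variance} then gives at most $3(2\sigma^2/N+2T^2\Pi_\ast^2\delta^2)$, which is where the $6\sigma^2/N+6T^2\Pi_\ast^2\delta^2$ in the second display comes from. The first display's $3\sigma^2/N$ should be read accordingly.

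For the third term we apply Lemma~\ref{lem:stability} pointwise to each transition, with $\theta=\theta_{c,e,0}$ and $\theta_{\text{old}}=\theta_{c,0,0}$. The average over $i$ is then bounded by $B_1|\theta-\theta_{\text{old}}|+B_2\frac1N\sum_i\min(\epsilon,|r^i-1|)+B_3\frac1N\sum_i\mathds 1_{|r^i-1|>\epsilon}$. Squaring with a second factor $3$ gives the $9B_j^2$ constants. Jensen on the averages and $\mathds 1^2=\mathds 1$ turn the indicator term into the averaged clipping probabilities. For the $B_2$ term, $\min(\epsilon,x)^2\le \epsilon^{2(1-p)}x^{2p}$ yields the $|r-1|^{p}$-type moment. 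Strictly this produces exponent $2p$; this is what feeds the $\eta^{2p}$ rate in the second display. The normalisation $1/(N-1)$ in the statement is harmless up to constants.

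For the ``in particular'' bound, Lemma~\ref{lem:bddincrements} gives $\E[|\theta_{c,e,0}-\theta_{c,0,0}|^2\mid\mathcal F_c]\le\eta^2e^2m^2G^2\le \eta^2K^2m^2G^2$. Lemma~\ref{lem:bdd_weights}(ii) bounds the ratio moments and the clipping probabilities by $|\mathcal A|^{q}\Pi_\ast^q\E[|\theta_{c,e,0}-\theta_{c,0,0}|^{q/(1-q)}\mid\mathcal F_c]^{1-q}$. Plugging the almost sure drift bound $|\theta_{c,e,0}-\theta_{c,0,0}|\le \eta K m G$ into this gives $(\eta KmG)^q$, and with $2p$ in place of $q$ it gives $(\eta KmG)^{2p}$. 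Collecting constants produces the stated bound.

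The main obstacle is the third term. The iterate $\theta_{c,e,0}$ depends on the same buffer samples $(s_c^i,a_c^i)$, so the ratio expectations cannot be evaluated under $\pi_{\theta_{c,0,0}}$ directly. This is exactly why we use the Hölder-based Lemma~\ref{lem:bdd_weights}, which decouples the iterate distance from $\E[1/\pi_{\theta_{c,0,0}}]=|\mathcal A|$. Some care is needed with the exponent bookkeeping: we need $2p<1$, or we must handle $p\ge 1/2$ directly via the deterministic drift bound.
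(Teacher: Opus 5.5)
Your proposal is correct and is essentially the paper's proof: the same three-term split with the factor $3$, $L$-smoothness for $\nabla J(\theta_{c,e,0})-\nabla J(\theta_{c,0,0})$, Lemma~\ref{lem:variance} for the cycle-start term (using $g_{\text{PPO}}(\theta_{c,0,0},\theta_{c,0,0})=\nabla J(\theta_{c,0,0})$, which the paper uses silently), Lemma~\ref{lem:stability} with $\min(\epsilon,x)\le\epsilon^{1-p}x^p$ for the surrogate drift, and Lemmas~\ref{lem:bdd_weights} and~\ref{lem:bddincrements} for the ``in particular'' bound. The discrepancies you flag are genuine imprecisions of the statement, and the paper's own proof shares them: the constant $6\sigma^2/N+6T^2\Pi_\ast^2\delta^2$ rather than $3\sigma^2/N$, the exponent $2p$ rather than $p$, the $1/(N-1)$ normalisation, and the implicit requirement $2p<1$ when Lemma~\ref{lem:bdd_weights} is invoked with exponent $2p$; the only step you leave unsubstantiated is your fallback for $p\ge 1/2$, since with $|r-1|\le \Pi_\ast|\theta-\theta_{\text{old}}|/\pi_{\theta_{\text{old}}}(a;s)$ from Lemma~\ref{lemma:lipschitz} it would need $\E[\pi_{\theta_{c,0,0}}(A_t;S_t)^{-2p}]$, which is not uniformly bounded for $2p>1$, so you would instead have to use that $\log\pi_\theta$ is $\Pi_\ast$-Lipschitz, giving $|r-1|\le e^{\Pi_\ast|\theta-\theta_{\text{old}}|}-1$.
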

\begin{proof}
 We further decompose
    \begin{align*}
        \Big|\nabla J(\theta_{c,e,0})-\frac{1}{N}\sum_{i=0}^{N-1} g_{\text{PPO}}^{(i),\text{clip}}(\theta_{c,e,0},\theta_{c,0,0})\Big|^2
        &\le 3\Big|\frac{1}{N}\sum_{i=0}^{N-1} g_{\text{PPO}}^{(i),\text{clip}}(\theta_{c,e,0},\theta_{c,0,0})- \frac{1}{N}\sum_{i=0}^{N-1} g_{\text{PPO}}^{(i),\text{clip}}(\theta_{c,0,0},\theta_{c,0,0})\Big|^2\\
        &\quad+3 \Big|\frac{1}{N}\sum_{i=0}^{N-1} g_{\text{PPO}}^{(i),\text{clip}}(\theta_{c,0,0},\theta_{c,0,0})-\nabla J(\theta_{c,0,0})\Big|^2\\
        &\quad+3 \Big|\nabla J(\theta_{c,0,0})-\nabla J(\theta_{c,e,0})\Big|^2\,.
    \end{align*}
    For the second term, note that, at the beginning of the cycle the clipping probability is $0$, i.e. $g_{\text{PPO}}^{(i),\text{clip}}(\theta_{c,0,0},\theta_{c,0,0})= g_{\text{PPO}}^{(i)} (\theta_{c,0,0},\theta_{c,0,0})$ for all $i=0, \dots, N-1$.
    Therefore, we apply Lemma~\ref{lem:variance}  to bound
    \[3 \E\Big[\Big|\frac{1}{N}\sum_{i=0}^{N-1} g_{\text{PPO}}^{(i),\text{clip}}(\theta_{c,0,0},\theta_{c,0,0})-\nabla J(\theta_{c,0,0})\Big|^2\,\Big|\,\mathcal F_c\Big] \le \frac{6\sigma^2}{N} +6T^2\Pi_\ast^2 \delta^2\,.\]
    For the third term, we use $L$-smoothness of $J$,
    \[3\E\big[ \big|\nabla J(\theta_{c,0,0})-\nabla J(\theta_{c,e,0})\big|^2\mid \mathcal F_c\big] \le 3L^2\E[|\theta_{c,e,0}-\theta_{c,0,0}|^2 \mid \mathcal F_c]\,.\]
    By Lemma~\ref{lem:bddincrements}, we can further bound this expression by
    \[3\E\big[ \big|\nabla J(\theta_{c,0,0})-\nabla J(\theta_{c,e,0})\big|^2\,\big|\, \mathcal F_c\big] \le 3\eta^2L^2K^2m^2G^2\,.\]
    For the first term, we use Lemma~\ref{lem:stability} with the abstract $\A$ given by the estimate $\hat \A_c^i$ together with Assumption \ref{ass:unbiased-bounded-adv} and the fact that $\min(x,y)\le x^py^{1-p}$ for arbitrary $p\in(0,1)$,
    \begin{align*} 
    & 3\Big|\frac{1}{N}\sum_{i=0}^{N-1} g_{\text{PPO}}^{(i),\text{clip}}(\theta_{c,e,0},\theta_{c,0,0})- \frac{1}{N}\sum_{i=0}^{N-1} g_{\text{PPO}}^{(i),\text{clip}}(\theta_{c,0,0},\theta_{c,0,0})\Big|^2\\
    \le &9B_1^2 \frac{1}{N}\sum_{i=0}^{N-1}|\theta_{c,e,0}-\theta_{c,0,0}|^2 
    +9B_2^2\epsilon^{2(1-p)} \frac{1}{N}\sum_{i=0}^{N-1}|r_{\theta_{c,e,0},\theta_{c,0,0}}(s_c^{i},a_c^i)-1|^{2p} \\
    &+ 9B_3^2 \frac{1}{N}\sum_{i=0}^{N-1}\mathds{1}_{|r_{\theta_{c,e,0},\theta_{c,0,0}}(s_c^i,a_c^i)-1|>\epsilon}\,.
    \end{align*}
    Taking conditional expectation with respect to $\mathcal F_c$ we apply Lemma~\ref{lem:bdd_weights} to deduce
    \begin{align*}
        &\E\Big[3\Big|\frac{1}{N}\sum_{i=0}^{N-1} g_{\text{PPO}}^{(i),\text{clip}}(\theta_{c,e,0},\theta_{c,0,0})- \frac{1}{N}\sum_{i=0}^{N-1} g_{\text{PPO}}^{(i),\text{clip}}(\theta_{c,0,0},\theta_{c,0,0})\Big|^2\,\Big|\, \mathcal F_c\Big]\\
        &\le 9B_1^2 \E\big[\big|\theta_{c,e,0}-\theta_{c,0,0}|^2\,\big|\,\mathcal F_c\big]
         +9B_2^2\epsilon^{2(1-p)} |\mathcal A|^{2p} \Pi_\ast^{2p} E[|\theta_{c,e,0}-\theta_{c,0,0}|^{\frac{2p}{1-2p}}\mid\mathcal F_c]^{1-2p}\\
        &\quad+ 9B_3^2 \frac{|\mathcal A|^q \, \Pi_\ast^q\, \E[| \theta_{c,e,0}-\theta_{c,0,0}|^{\frac{q}{1-q}}\mid \mathcal F_c]^{1-q} }{\epsilon^q}\,.
    \end{align*}
    Now, we can apply Lemma~\ref{lem:bddincrements} to get 
    \begin{align*}
        &\quad \E\Big[\Big|\frac{1}{N}\sum_{i=0}^{N-1} g_{\text{PPO}}^{(i),\text{clip}}(\theta_{c,e,0},\theta_{c,0,0})- \frac{1}{N}\sum_{i=0}^{N-1} g_{\text{PPO}}^{(i),\text{clip}}(\theta_{c,0,0},\theta_{c,0,0})\Big|^2\,\Big|\, \mathcal F_c\Big]\\
        &\le 9\eta^2B_1^2K^2m^2G^2 + 9\eta^{2p}B_2^2\epsilon^{2(1-p)}|\mathcal A|^{2p}\Pi_\ast^{2p} K^{2p}m^{2p}G^{2p}+ \frac{9\eta^q B_3^2|\mathcal A|^q \Pi_\ast^q K^q m^qG^q }{\epsilon^q}\,.
    \end{align*}
\end{proof}

For the second term in \eqref{eq:decomposition} we prove the following upper bound.
\begin{lemma}\label{lem:bound2}
    Under Assumptions~\ref{ass:unbiased-bounded-adv}, \ref{ass:score}, and~\ref{ass:Lipschitz}, one has for $p,q\in(0,1)$ and any $(c,e)$ that
    \begin{align*} 
    &\quad \E\Big[\Big|\frac{1}{N}\sum_{i=0}^{N-1} g_{\text{PPO}}^{(i),\text{clip}}(\theta_{c,e,0},\theta_{c,0,0})-\frac{1}{m}\sum_{k=0}^{m-1} \hat g_{c,e,k}\Big|^2\,\Big|\,\mathcal F_c\Big]\\ 
    &\le 12\eta^2B_1^2K^2m^2G^2 + 12\eta^{2p}B_2^2\epsilon^{2(1-p)}|\mathcal A|^{2p}\Pi_\ast^{2p} K^{2p} m^{2p}G^{2p}+ \frac{12 \eta^q B_3^2|\mathcal A|^q \Pi_\ast^q K^q m^qG^q }{\epsilon^q},
    \end{align*}
    with constants defined in the lemmas above.
\end{lemma}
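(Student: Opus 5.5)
The core observation is that one epoch visits every buffer index exactly once, so
\[
\frac{1}{m}\sum_{k=0}^{m-1}\frac{1}{B}\sum_{i\in\mathcal B_{c,e,k}} g_{\text{PPO}}^{(i),\text{clip}}(\theta_{c,e,0},\theta_{c,0,0}) = \frac{1}{N}\sum_{i=0}^{N-1} g_{\text{PPO}}^{(i),\text{clip}}(\theta_{c,e,0},\theta_{c,0,0}).
\]
Hence the quantity in Lemma~\ref{lem:bound2} can be written as
\[
\frac{1}{m}\sum_{k=0}^{m-1}\frac{1}{B}\sum_{i\in\mathcal B_{c,e,k}}\Big(g_{\text{PPO}}^{(i),\text{clip}}(\theta_{c,e,0},\theta_{c,0,0})-g_{\text{PPO}}^{(i),\text{clip}}(\theta_{c,e,k},\theta_{c,0,0})\Big).
\]
This is a pure within-epoch drift term, as in the RR analysis of \cite{Mishenko}, and no variance term appears. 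Following the path-level strategy, I compare both evaluations with the cycle start rather than with each other. For each $i$ I insert $\pm g_{\text{PPO}}^{(i),\text{clip}}(\theta_{c,0,0},\theta_{c,0,0})$ and bound the difference by the two terms
\[
\big|g^{(i),\text{clip}}(\theta_{c,e,0},\cdot)-g^{(i),\text{clip}}(\theta_{c,0,0},\cdot)\big| \quad\text{and}\quad \big|g^{(i),\text{clip}}(\theta_{c,e,k},\cdot)-g^{(i),\text{clip}}(\theta_{c,0,0},\cdot)\big|.
\]
Comparing directly at $\theta_{c,e,0}$ would not work, because the indicator is anchored at $\theta_{c,0,0}$ and Lemma~\ref{lem:stability} is stated relative to $\theta_{\text{old}}$.

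To each of these two terms I apply Lemma~\ref{lem:stability} with $\A=\hat\A_c^i$. Each is then bounded by $B_1|\theta-\theta_{c,0,0}|+B_2\epsilon^{1-p}|r-1|^p+B_3\mathds 1_{|r-1|>\epsilon}$, using the Remark for the minimum. Squaring the two terms with $(a+b)^2\le 2a^2+2b^2$ and each three-term bound with $(x+y+z)^2\le 3(x^2+y^2+z^2)$ gives a factor $6$ per piece. Jensen over the averages $\frac1m\sum_k\frac1B\sum_{i}$ then moves the square inside. The two pieces together give the constant $12$: the $\theta_{c,e,0}$ piece averages to $\frac1N\sum_i$, and the $\theta_{c,e,k}$ piece stays as a minibatch average.

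Next I take $\E[\,\cdot\mid\mathcal F_c]$. The $B_1$ contributions are bounded by Lemma~\ref{lem:bddincrements} with $p=2$, using $e+1\le K$, which gives $\eta^2K^2m^2G^2$. The $B_2$ contributions are $\E[|r-1|^{2p}]$-type averages. I handle them with Lemma~\ref{lem:bdd_weights}: part (ii) for the $\theta_{c,e,0}$ piece and part (i) for the minibatch piece, with exponent $2p$ in place of $q$. This gives $|\mathcal A|^{2p}\Pi_\ast^{2p}\E[|\theta-\theta_{c,0,0}|^{2p/(1-2p)}]^{1-2p}$, and Lemma~\ref{lem:bddincrements} turns it into $\eta^{2p}K^{2p}m^{2p}G^{2p}$; this mirrors the proof of Lemma~\ref{lem:bound1}. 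The $B_3$ indicator terms become clipping probabilities. Lemma~\ref{lem:bdd_weights} bounds these by $\epsilon^{-q}|\mathcal A|^q\Pi_\ast^q\,\E[|\theta-\theta_{c,0,0}|^{q/(1-q)}]^{1-q}$, and Lemma~\ref{lem:bddincrements} turns that into $\eta^qK^qm^qG^q$.

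I expect the main obstacle to be the minibatch piece. There $\theta_{c,e,k}$ depends on the earlier batches of the same permutation, so the indices in $\mathcal B_{c,e,k}$ are not independent of the iterate. The proof of Lemma~\ref{lem:bdd_weights} used that $|\theta_{c,e,k}-\theta_{c,0,0}|$ is independent of $\mathcal B_{c,e,k}$. I would sidestep this as that lemma does, by bounding $|\pi_{\theta_{c,e,k}}-\pi_{\theta_{c,0,0}}|\le\Pi_\ast|\theta_{c,e,k}-\theta_{c,0,0}|$ pointwise and applying Hölder. After Jensen over $k$, the remaining factor $\frac1N\sum_i 1/\pi_{\theta_{c,0,0}}(a_c^i;s_c^i)$ is batch-free and has conditional mean $|\mathcal A|$. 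A second subtlety is that the per-index averaging must be done before taking conditional expectations, so that part (i) of Lemma~\ref{lem:bdd_weights} applies in the averaged form stated there.
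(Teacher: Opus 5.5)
Your argument is essentially the paper's proof. The paper inserts the cycle-start anchor $\frac1N\sum_i g^{(i),\mathrm{clip}}(\theta_{c,0,0},\theta_{c,0,0})$ at the aggregate level rather than per index, and after Jensen it gets the same two pieces: the $\frac1N\sum_i$ piece at $\theta_{c,e,0}$ and the minibatch piece at $\theta_{c,e,k}$, each with factor $2\cdot 3=6$. It then applies Lemma~\ref{lem:stability}, Lemma~\ref{lem:bdd_weights} and Lemma~\ref{lem:bddincrements} as you describe, and your pointwise-Lipschitz-plus-H\"older route is exactly what makes the independence remark in Lemma~\ref{lem:bdd_weights} unnecessary. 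One caveat applies equally to you and to the paper: using exponent $2p$ in Lemma~\ref{lem:bdd_weights}, i.e.\ the factor $\E[|\theta-\theta_{c,0,0}|^{2p/(1-2p)}\mid\mathcal F_c]^{1-2p}$, only makes sense for $p<\tfrac12$. So the $B_2$ term is actually established for $p\in(0,\tfrac12)$, not for all $p\in(0,1)$.
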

\begin{proof}
We begin by applying Jensen's inequality 
\begin{align*}
    &\quad\Big|\frac{1}{N}\sum_{i=0}^{N-1} g_{\text{PPO}}^{(i),\text{clip}}(\theta_{c,e,0},\theta_{c,0,0})-\frac{1}{m}\sum_{k=0}^{m-1} \hat g_{c,e,k}\Big|^2\\ 
    &= \Big|\frac{1}{N}\sum_{i=0}^{N-1} g_{\text{PPO}}^{(i),\text{clip}}(\theta_{c,e,0},\theta_{c,0,0})-\frac{1}{m}\sum_{k=0}^{m-1} \frac{1}{B}\sum_{i\in\mathcal B_{c,e,k}} g_{\text{PPO}}^{(i),\text{clip}}(\theta_{c,e,k},\theta_{c,0,0})\Big|^2\\
    &\le 2 \Big|\frac{1}{N}\sum_{i=0}^{N-1} g_{\text{PPO}}^{(i),\text{clip}}(\theta_{c,e,0},\theta_{c,0,0})-\frac{1}{N}\sum_{i=0}^{N-1} g_{\text{PPO}}^{(i),\text{clip}}(\theta_{c,0,0},\theta_{c,0,0})\Big|^2\\
    &\quad+2\Big|\frac{1}{m}\sum_{k=1}^{m-1} \frac{1}{B}\sum_{i\in\mathcal B_{c,e,k}} g_{\text{PPO}}^{(i),\text{clip}}(\theta_{c,e,k},\theta_{c,0,0})-\frac{1}{N}\sum_{i=0}^{N-1} g_{\text{PPO}}^{(i),\text{clip}}(\theta_{c,0,0},\theta_{c,0,0})\Big|^2\\
    &\le \frac{2}{N}\sum_{i=0}^{N-1} \Big| g_{\text{PPO}}^{(i),\text{clip}}(\theta_{c,e,0},\theta_{c,0,0})-g_{\text{PPO}}^{(i),\text{clip}}(\theta_{c,0,0},\theta_{c,0,0})\Big|^2\\
    &\quad+\frac{2}{m}\sum_{k=0}^{m-1} \frac{1}{B}\sum_{i\in\mathcal B_{c,e,k}} \Big| g_{\text{PPO}}^{(i),\text{clip}}(\theta_{c,e,k},\theta_{c,0,0})-g_{\text{PPO}}^{(i),\text{clip}}(\theta_{c,0,0},\theta_{c,0,0})\Big|^2\\
\end{align*}
Next, we apply Lemma~\ref{lem:stability} with $\A$ given by the estimate $\hat \A_c^i$ together with Assumption \ref{ass:unbiased-bounded-adv}, take conditional expectation with respect to $\mathcal F_c$ and apply Lemma~\ref{lem:bdd_weights} to derive
\begin{align*}
    &\frac{2}{N}\sum_{i=0}^{N-1}  \E[\big| g_{\text{PPO}}^{(i),\text{clip}}(\theta_{c,e,0},\theta_{c,0,0})-g_{\text{PPO}}^{(i),\text{clip}}(\theta_{c,0,0},\theta_{c,0,0})\big|^2\,|\,\mathcal F_c\big]\\ &\le 6B_1^2 \E\big[\big|\theta_{c,e,0}-\theta_{c,0,0}\big|^2\,\big|\,\mathcal F_c\big]
        + 6B_2^2\epsilon^{2(1-p)} |\mathcal A|^{2p} \Pi_\ast^{2p} E\big[\big|\theta_{c,e,0}-\theta_{c,0,0}\big|^{\frac{2p}{1-2p}}\,\big|\,\mathcal F_c\big]^{1-2p}\\
        &\quad+ 6B_3^2 \frac{|\mathcal A|^q \, \Pi_\ast^q\, \E\big[\big| \theta_{c,e,0}-\theta_{c,0,0}\big|^{\frac{q}{1-q}}\,\big|\, \mathcal F_c\big]^{1-q} }{\epsilon^q}\,\\
        &\le 6\eta^2B_1^2K^2m^2G^2 + 6\eta^{2p}B_2^2\epsilon^{2(1-p)}|\mathcal A|^{2p}\Pi_\ast^{2p} K^{2p} m^{2p}G^{2p}+ \frac{6\eta^q B_3^2|\mathcal A|^q \Pi_\ast^q K^q m^qG^q }{\epsilon^q}\,.
\end{align*}
Similarly, we can apply Lemma~\ref{lem:bdd_weights} to bound 
\begin{align*}
    &\quad \E\big[\frac{2}{m}\sum_{k=0}^{m-1}\sum_{i\in\mathcal B_{c,e,k}}\big| g_{\text{PPO}}^{(i),\text{clip}}(\theta_{c,e,k},\theta_{c,0,0})-g_{\text{PPO}}^{(i),\text{clip}}(\theta_{c,0,0},\theta_{c,0,0})\big|^2\,\big|\,\mathcal F_c\big]\\ &\le 6B_1^2 \frac{1}{m}\sum_{k=0}^{m-1}\E\big[\big|\theta_{c,e,k}-\theta_{c,0,0}|^2\,\big|\,\mathcal F_c\big]
        + 6B_2^2\epsilon^{2(1-p)} |\mathcal A|^{2p} \Pi_\ast^{2p} \frac{1}{m}\sum_{k=0}^{m-1}\E\big[\big|\theta_{c,e,k}-\theta_{c,0,0}\big|^{\frac{2p}{1-2p}}\,\big|\,\mathcal F_c\big]^{1-2p}\\
        &\quad+ 6B_3^2 \frac{1}{m}\sum_{k=0}^{m-1} \frac{|\mathcal A|^q \, \Pi_\ast^q\, \E[| \theta_{c,e,k}-\theta_{c,0,0}|^{\frac{q}{1-q}}\mid \mathcal F_c]^{1-q} }{\epsilon^q}\,\\
        &\le 6\eta^2B_1^2K^2m^2G^2 + 6\eta^{2p}B_2^2\epsilon^{2(1-p)}|\mathcal A|^{2p}\Pi_\ast^{2p} K^{2p}m^{2p}G^{2p}+ \frac{6\eta^q B_3^2|\mathcal A|^q \Pi_\ast^q K^q m^{q}G^q }{\epsilon^q}\,.
\end{align*}
\end{proof}
We are now ready to state and prove our main result, on $L^2$-gradient norms at parameters chosen uniformly at the beginning of epochs. The choice seems arbitrary, but it is an upper bound for the minimum of $L^2$-gradient norms over the learning process, which is often studied in SGD under weak assumptions.
\begin{theorem}
    Assume Assumptions \ref{ass:boundedrewards}, \ref{ass:unbiased-bounded-adv}, \ref{ass:score}, \ref{ass:Lipschitz} and suppose the learning rate $\eta$ is smaller than $\frac{1}{Lm}$. Sample $\tilde\theta$ uniformly from $\{\theta_{c,e,0}\mid c=0,\dots, C-1,\ e=0,\dots, K-1\}$.
Then for arbitrary $p,q \in (0,1)$ it holds that
\begin{align*} 
&\quad \min_{c=0,\dots, C-1,\ e=0,\dots, K-1} \E\big[|\nabla J(\theta_{c,e,0})|^2\big]\\
&\le \E\big[|\nabla J(\tilde \theta)\big|^2]\\
&\le \frac{2\Delta_0}{\eta CKm}+ 6\eta^2(7B_1^2+L^2)K^2m^2G^2 + 42 \eta^{2p}B_2^2\epsilon^{2(1-p)}|\mathcal A|^{2p}\Pi_\ast^{2p} K^{2p}m^{2p}  G^{2p}\\ &\quad + \frac{42\eta^q B_3^2|\mathcal A|^q \Pi_\ast^q K^q m^q  G^q }{\epsilon^q}+ \frac{12\sigma^2}{N} + 12T^2\Pi_\ast^2\delta^2\,,
\end{align*}
with constants defined in the lemmas above and $\Delta_0:=J_\ast-J(\theta_{0,0,0})$. 
\end{theorem}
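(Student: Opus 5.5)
The plan is to telescope the per-epoch ascent inequality of Lemma~\ref{lem:perepochdescent} over all epochs and cycles, after taking expectations. For fixed $(c,e)$ the lemma gives, almost surely,
\[
\frac{\eta m}{2}|\nabla J(\theta_{c,e,0})|^2 \le J(\theta_{c,e+1,0})-J(\theta_{c,e,0}) + \frac{\eta m}{2}\Big|\nabla J(\theta_{c,e,0})-\frac1m\sum_{k=0}^{m-1}\hat g_{c,e,k}\Big|^2 .
\]
Since $\theta_{c,K,0}=\theta_{c+1,0,0}$, summing over $e=0,\dots,K-1$ and $c=0,\dots,C-1$ telescopes the $J$-differences to $J(\theta_{C,0,0})-J(\theta_{0,0,0})\le J_\ast-J(\theta_{0,0,0})=\Delta_0$. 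We then take total expectations, divide by $\frac{\eta m}{2}CK$, and bound the minimum by the average, which equals $\E|\nabla J(\tilde\theta)|^2$. This yields the first term $\frac{2\Delta_0}{\eta CKm}$ plus the averaged expectation of the gradient-estimation error.

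Next we control that error term uniformly in $(c,e)$ by conditioning on $\mathcal F_c$ and using the tower property. We apply the split \eqref{eq:decomposition}, which gives twice the first term plus twice the second. Lemma~\ref{lem:bound1} bounds the first term by
\[
3\eta^2(3B_1^2+L^2)K^2m^2G^2 + 9(\dots)_{B_2} + 9(\dots)_{B_3} + \tfrac{6\sigma^2}{N}+6T^2\Pi_\ast^2\delta^2 ,
\]
using $e\le K$ in Lemma~\ref{lem:bddincrements}. Lemma~\ref{lem:bound2} bounds the second term by
\[
12\eta^2B_1^2K^2m^2G^2 + 12(\dots)_{B_2} + 12(\dots)_{B_3} .
\]
Multiplying each by $2$ and adding, the coefficients are:
\begin{itemize}
\item $B_1^2$: $18+24=42=6\cdot 7$;
\item $L^2$: $6$;
\item $B_2^2$ and $B_3^2$: $18+24=42$ each;
\item variance term: $12\sigma^2/N$;
\item bias term: $12T^2\Pi_\ast^2\delta^2$.
\end{itemize}
This matches the claimed bound exactly. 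Since each bound is deterministic after conditioning, averaging over $(c,e)$ leaves it unchanged.

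The main obstacle is already absorbed into Lemmas~\ref{lem:bound1} and~\ref{lem:bound2}: the iterates within a cycle depend on the same buffer that the gradients are evaluated on. That is why the error is compared pathwise to the cycle-start sampled gradient (Lemma~\ref{lem:stability}), where no clipping occurs, and only there is the unbiasedness and variance bound of Lemma~\ref{lem:variance} invoked. What remains is to check two things carefully. First, the ascent lemma holds pathwise under $\eta\le 1/(Lm)$, so expectations can be taken afterwards. Second, the conditional bounds from Lemmas~\ref{lem:bdd_weights} and~\ref{lem:bddincrements} with exponents $p,q\in(0,1)$ are uniform in $c$, so the tower property applies without extra terms.
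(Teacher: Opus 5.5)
Your proposal is correct and follows the paper's proof essentially step for step. You take expectations in the almost-sure ascent inequality of Lemma~\ref{lem:perepochdescent}, bound the gradient-error term via the split \eqref{eq:decomposition} and Lemmas~\ref{lem:bound1}--\ref{lem:bound2} (your bookkeeping $2\cdot 9+2\cdot 12=42$, $2\cdot 3=6$, $2\cdot 6=12$ reproduces the stated constants), then telescope and average over $(c,e)$; your telescoping endpoint $J(\theta_{C,0,0})$ is in fact the correct one, where the paper writes $\theta_{C,K,0}$, a harmless typo.
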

\begin{proof}
    From Lemma~\ref{lem:perepochdescent} we have, since $\eta \le \frac{1}{Lm}$, that
    \begin{align*} 
    \E\big[J(\theta_{c,e+1,0})\big] \ge \E\big[J(\theta_{c,e,0})\big] + \frac{\eta m}2 \E\big[\big|\nabla J(\theta_{c,e,0})\big|^2\big] -\frac{\eta m}{2}\,\E\Big[\Big|\nabla J(\theta_{c,e,0})-\frac{1}{m}\sum_{k=0}^{m-1}\hat g_{c,e,k}\Big|^2\Big].
    \end{align*}
    By Lemma~\ref{lem:bound1} and Lemma~\ref{lem:bound2} we have 
    \begin{align*} 
    &\quad\E\Big[\Big|\nabla J(\theta_{c,e,0})-\frac{1}{m}\sum_{k=0}^{m-1}\hat g_{c,e,k}\Big|^2\Big]\\ &\le 6\eta^2(7B_1^2+L^2)K^2m^2G^2 + 42 \eta^{2p}B_2^2\epsilon^{2(1-p)}|\mathcal A|^{2p}\Pi_\ast^{2p} K^{2p}m^{2p}  G^{2p}\\ &\quad + \frac{42\eta^q B_3^2|\mathcal A|^q \Pi_\ast^q K^q m^q  G^q }{\epsilon^q}+ \frac{12\sigma^2}{N}+12T^2\Pi_\ast^2\delta^2\,.
    \end{align*}
    We take the average over $c=0,\dots, C-1$ and $e=0,\dots, K-1$ such that for $\tilde \theta$ uniformly sampled from the beginning of epoch $\{\theta_{c,e,0}\mid c=0,\dots, C-1,\ e=0,\dots, K-1\}$ we have
\begin{align*} 
&\quad\min_{c=0,\dots, C-1,\ e=0,\dots, K-1} \E\big[|\nabla J(\theta_{c,e,0})|^2\big]\\
&\le \frac{1}{CK}\sum_{c=0}^{C-1}\sum_{e=0}^{K-1}\E\big[|\nabla J(\theta_{c,e,0})|^2\big]\\
&\le \frac{2\Delta_0}{\eta CKm}+ 6\eta^2(7B_1^2+L^2)K^2m^2G^2 + 42 \eta^{2p}B_2^2\epsilon^{2(1-p)}|\mathcal A|^{2p}\Pi_\ast^{2p} K^{2p}m^{2p}  G^{2p}\\ &\quad + \frac{42\eta^q B_3^2|\mathcal A|^q \Pi_\ast^q K^q m^q  G^q }{\epsilon^q}+ \frac{12\sigma^2}{N}+12T^2\Pi_\ast^2\delta^2\,,
\end{align*}
where we divided both sides by $\frac{\eta m}{2}$ and used the telescoping sum 
\begin{align*}
\sum_{c=0}^{C-1} \sum_{e=0}^{K-1} \big(\E[J(\theta_{c,e+1,0}]-\E[J(\theta_{c,e,0}]\big)
=\E[J(\theta_{C,K,0})-J(\theta_{0,0,0})] \le J_\ast-\E[J(\theta_{0,0,0})] 
=\Delta_0.    
\end{align*}     
\end{proof}

\newpage 

\section{Finite-time GAE}
\label{sec: finite-time-gae}

\subsection{TD Errors, $k$-Step Advantage Estimators, and Standard GAE}
For the reader non-familiar with GAE (for infinite-time MDPs) this section collects the most important definitions. To construct estimators of the advantage function in the actor-critic framework, GAE relies on the notion of temporal-difference (TD) errors \cite{GAE}.
Given a value function approximation $V$ (typically from a value network),
the one-step TD-error at time~$t$ is defined as 
\[
\delta_t := R_t + \gamma  V(S_{t+1}) - V(S_t).
\]
If the value function approximation is the true value function the TD error is an unbiased estimator of the advantage: 
\begin{equation}\label{eq:td-unbiased}
\mathbb E[\delta_t \mid S_t, A_t]
=
\mathbb E\!\left[R_t + \gamma V^\pi(S_{t+1}) - V^\pi(S_t) \,\middle|\, S_t, A_t\right]
=
Q^\pi(S_t,A_t) - V^\pi(S_t)
=
\A^\pi(S_t,A_t),
\end{equation}
due to the Markov property and the  Bellmann-equation. Using TD errors \cite{GAE} define $k$-step advantage estimators that accumulate information from $k$ future steps before bootstrapping with~$V$: 
\begin{equation}\label{eq:kstep-adv-def}
\hat \A_t^{(k)}
:= \sum_{\ell=0}^{k}\gamma^\ell \delta_{t+\ell}
=
\sum_{\ell=0}^{k}\gamma^\ell R_{t+\ell}
+ \gamma^{k+1}  V(S_{t+k+1}) -  V(S_t)
\end{equation}
The second equality follows from a telescopic sum cancellation. Larger $k$ lead to more variance from the stochastic return and less value function approximation bias from the bootstrapping, with $k\approx \infty$ corresponding to the Monte Carlo advantage approximation. Conversely, small $k$ corresponds to less variance but more function approximation bias.

The generalized advantage estimator is an exponential mixture of all $k$-step advantage estimators. Using the geometric weights $(1-\lambda)\lambda^k$, for $\lambda\in(0,1)$ the original GAE estimator is defined as
\begin{equation}\label{eq:gae-infty-mixture}
\hat \A_t^\infty:= (1-\lambda)\sum_{k=0}^{\infty}\lambda^k\,\hat \A_t^{(k)}.
\end{equation}
The prefactor $(1-\lambda)$ normalizes the geometric weights so that $\sum_{k\ge 0}(1-\lambda)\lambda^k=1$.
Hence, \eqref{eq:gae-infty-mixture} is a convex combination of $k$-step estimators, with longer horizons
downweighted exponentially. The hyperparameter $\lambda$ is a continuous parameter that can interpolate between the large variance and large bias regimes. The mixture \eqref{eq:gae-infty-mixture} admits an equivalent compact
representation as a discounted sum of TD errors.
Indeed, inserting $\hat \A_t^{(k)}=\sum_{\ell=0}^{k}\gamma^\ell\delta_{t+\ell}$ and exchanging the order of summation yields
\begin{align}\label{eq:gae-infty-tdsum}
\hat \A_t^\infty
=
\sum_{\ell=0}^{\infty}(\gamma\lambda)^\ell\,\delta_{t+\ell}.
\end{align}
\begin{remark}[Indexing convention vs.\ \cite{GAE}]
The original GAE paper \cite{GAE} defines the $k$-step advantage with bootstrapping at time $t+k$.
In contrast, our definition \eqref{eq:kstep-adv-def} bootstraps at time $t+k+1$.
Equivalently, our $k$-step estimator corresponds to the $(k{+}1)$-step estimator in the indexing used in \cite{GAE}.
This is purely a notational shift chosen so that geometric mixtures take the form $\sum_{k\ge 0}\lambda^k \hat \A_t^{(k)}$.
\end{remark}

\subsection{Tail-Mass Collapse of GAE}
The sequences defined by \eqref{eq:gae-infty-mixture} and \eqref{eq:gae-infty-tdsum} are intrinsically related to infinite-horizon MDPs. They implicitly rely on the fact that $\delta_{t+\ell}$, and hence the MDP, is  defined for all future times. However, the GAE estimator sequence is used in practice for finite-time MDP settings such as PPO implementations. In this section we will point towards a finite-time side effect that we call tail-mass collapse. In the next sections we discuss alternatives to GAE in finite-time that avoid tail-mass collapse. 

Let as assume $T$ is a finite-time horizon and additionally $\tilde \tau := \inf\{\, t \ge 0 \;|\; S_t \text{ is terminal} \,\}$ is a termination time (such as landing in Lunar Lander). We denote by $\tau=\tilde \tau\wedge T$ the minimum of termination and $T$, the effective end of an episode. For instance, in PPO one collects rollouts until the end $\tau$ and then uses a backtracking recursion to compute advantage estimators. Without further justification, PPO in practice takes \eqref{eq:gae-infty-tdsum} and cancels TD-errors after termination:
\begin{align}\label{eq:gaeinf}
\hat \A_t
:=
\sum_{\ell=0}^{\tau-t-1}(\gamma\lambda)^\ell\,\delta_{t+\ell},\quad t\leq \tau.
\end{align}
In accordance with \cite{PPO}, we call this estimator truncated GAE. 
The form of \eqref{eq:gaeinf} is particularly useful as it gives
\begin{align*}
\hat \A_t= \delta_t+ \gamma \lambda \sum_{\ell=1}^{\tau-t-1}(\gamma\lambda)^{\ell-1}\,\delta_{t+\ell}
= \delta_t+ \gamma \lambda \sum_{\ell=0}^{\tau-t-2}(\gamma\lambda)^{\ell}\,\delta_{t+1+\ell}
= \delta_t+  \gamma \lambda \hat \A_{t+1},
\end{align*}
which results in an iterative computation scheme backwards in time. For a collected rollout up to $\tau$, one can directly  backtrack $\hat{\mathbb A}_t=\delta_t+\gamma\lambda \hat{\mathbb A}_{t+1}$ using the terminal condition  $\hat{\mathbb A}^\infty_\tau:=0$.

We now come to the tail-mass collapse of finite-time truncation of the GAE sequences. The scaling does not meet the original purpose. The geometric weights were originally distributed on $\N$, now they are restricted to $\{0,...,\tau\}$. The entire weights past $\tau$ collapse on the longest $k$-step estimator available, the one that is closest to Monte Carlo. It follows that the direct application of a truncated GAE sequence on rollouts of finite length has more variance/less bias then originally intended. Here is a formal proposition.
\begin{proposition}[Tail-mass collapse of GAE]\label{prop:app:tail-collapse}
Fix $t\in\{0,\dots,\tau-1\}$ and assume that the GAE estimator sequence $\hat \A_t ^\infty$ is given by \eqref{eq:gaeinf}. Then, 
\[
\hat \A_t
=
\sum_{k=0}^{\tau-t-2}\underbrace{(1-\lambda)\lambda^k}_{\text{GAE weights}}\,\hat \A_t^{(k)}
\;+\;
\underbrace{\lambda^{\tau-t-1}}_{\text{tail-mass collapse weight}}\,\hat \A_t^{(\tau-t-1)},
\]
with the convention that an empty sum equals zero.
\end{proposition}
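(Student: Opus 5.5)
The plan is to verify the identity directly at the level of TD-error coefficients. Set $h:=\tau-t$, so the available $k$-step estimators are $k\in\{0,\dots,h-1\}$, with $\hat \A_t^{(k)}=\sum_{\ell=0}^{k}\gamma^\ell\delta_{t+\ell}$ by \eqref{eq:kstep-adv-def}. The right-hand side of the proposition is then a finite linear combination of $\delta_t,\dots,\delta_{t+h-1}$. It therefore suffices to show that the coefficient of each $\delta_{t+\ell}$, $0\le \ell\le h-1$, equals $(\gamma\lambda)^\ell$, which is its coefficient in the truncated GAE \eqref{eq:gaeinf}.

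First, I exchange the order of summation, exactly as in the derivation of \eqref{eq:gae-infty-tdsum}. The TD error $\delta_{t+\ell}$ appears in $\hat \A_t^{(k)}$ with factor $\gamma^\ell$ precisely when $k\ge \ell$. Its total coefficient on the right-hand side is therefore
\begin{align*}
\gamma^\ell\Big(\sum_{k=\ell}^{h-2}(1-\lambda)\lambda^k+\lambda^{h-1}\Big).
\end{align*}
The finite geometric sum telescopes:
\begin{align*}
\sum_{k=\ell}^{h-2}(1-\lambda)\lambda^k=\lambda^\ell-\lambda^{h-1}.
\end{align*}
This holds also for $\ell=h-1$, where the sum is empty and both sides vanish. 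Adding the collapsed weight $\lambda^{h-1}$ gives $\lambda^\ell$, so the coefficient is $(\gamma\lambda)^\ell$, as required.

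There are two degenerate cases to record. When $t=\tau-1$ ($h=1$), the GAE sum is empty by convention, and the statement reads $\hat \A_t=\hat \A_t^{(0)}=\delta_t$, which matches \eqref{eq:gaeinf}. Second, $\delta_{t+\ell}$ for $\ell\le h-1$ only involves $S_{t+\ell+1}$ with $t+\ell+1\le\tau$, so every term is well defined. The terminal value convention, $V(S_\tau)=0$ at termination or the bootstrap value at $T$, plays no role, because the identity is purely algebraic in the $\delta$'s.

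An alternative route is backward induction on $t$ using the recursion $\hat \A_t=\delta_t+\gamma\lambda\hat \A_{t+1}$ together with $\hat \A_t^{(k)}=\delta_t+\gamma\hat \A_{t+1}^{(k-1)}$. The coefficient comparison is shorter, so I would use it. The only step needing care is the index bookkeeping: the last estimator $\hat \A_t^{(h-1)}$ is the one receiving the tail mass $\sum_{k\ge h-1}(1-\lambda)\lambda^k=\lambda^{h-1}$. This interpretation also explains the name of the proposition. There is no genuine obstacle beyond getting these index ranges right.
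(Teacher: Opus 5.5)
Your proposal is correct and follows essentially the same route as the paper. Both arguments substitute $\hat \A_t^{(k)}=\sum_{\ell\le k}\gamma^\ell\delta_{t+\ell}$, exchange the order of summation, and use the geometric sum $\sum_{k=\ell}^{\tau-t-2}(1-\lambda)\lambda^k=\lambda^\ell-\lambda^{\tau-t-1}$, so that adding the collapsed weight $\lambda^{\tau-t-1}$ restores each coefficient to $(\gamma\lambda)^\ell$. The only difference is that your coefficient-wise bookkeeping covers $\ell=\tau-t-1$ uniformly, whereas the paper splits that term off and cancels the two partial sums explicitly.
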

Since an infinite number of weighs collapse into one, we call this feature of GAE applied to finite-time settings GAE tail mass collapse. Figure \ref{fig:gae-tail-collapse} of the main text visualizes the weights on different $k$-step estimators for four choices of $t$. The large blue atoms reflect the tail-mass collapse.
\begin{proof}
Fix $t\in\{0,\dots,\tau-1\}$. Let $\tilde \A_t := (1-\lambda)\sum_{k=0}^{\tau-t-2}\lambda^k\,\hat \A_t^{(k)}
\;+\;
\lambda^{\tau-t-1}\,\hat \A_t^{(\tau-t-1)}$.
We prove that the identity for $\tilde \A_t$ exactly yields \eqref{eq:gaeinf}.
Since $\hat \A_t^{(k)}=\sum_{\ell=0}^{k}\gamma^\ell \delta_{t+\ell}$, we obtain
\begin{align*}
\tilde \A_t 
&=(1-\lambda)\sum_{k=0}^{\tau-t-2}\lambda^k \sum_{\ell=0}^{k}\gamma^\ell \delta_{t+\ell}
\;+\;
\lambda^{\tau-t-1}\sum_{\ell=0}^{\tau-t-1}\gamma^\ell \delta_{t+\ell}.
\end{align*}
For the first term we swap the order of summation and use standard formulas for 
\begin{align*}
(1-\lambda)\sum_{k=0}^{\tau-t-2}\lambda^k \sum_{\ell=0}^{k}\gamma^\ell 
&=
(1-\lambda)\sum_{k=0}^{\tau-t-2}\lambda^k \sum_{\ell=0}^{\tau -t-2} \mathds{1}_{\{\ell \le k\}} \gamma^\ell \delta_{t+\ell}\gamma^\ell \delta_{t+\ell} 
= (1-\lambda)\sum_{\ell=0}^{\tau-t-2}\gamma^\ell \delta_{t+\ell}\sum_{k=\ell}^{\tau-t-2}\lambda^k \\
&=(1-\lambda)\sum_{\ell=0}^{\tau-t-2}\gamma^\ell \delta_{t+\ell}\;
\lambda^\ell \sum_{j=0}^{\tau-t-\ell-2}\lambda^j 
=\sum_{\ell=0}^{\tau-t-2}\gamma^\ell \delta_{t+\ell}\;\lambda^\ell\bigl(1-\lambda^{\tau-t-\ell-1}\bigr) \\
&=\sum_{\ell=0}^{\tau-t-2}(\gamma\lambda)^\ell \delta_{t+\ell}
\;-\;
\lambda^{\tau-t-1}\sum_{\ell=0}^{\tau-t-2}\gamma^\ell \delta_{t+\ell}.
\end{align*}
Plugging this back into the formula for $\tilde \A_t$, yields
\begin{align*}
\tilde \A_t
&=\sum_{\ell=0}^{\tau-t-2}(\gamma\lambda)^\ell \delta_{t+\ell}
-\lambda^{\tau-t-1}\sum_{\ell=0}^{\tau-t-2}\gamma^\ell \delta_{t+\ell}
+\lambda^{\tau-t-1}\sum_{\ell=0}^{\tau-t-1}\gamma^\ell \delta_{t+\ell}\\
&=\sum_{\ell=0}^{\tau-t-2}(\gamma\lambda)^\ell \delta_{t+\ell}
+\lambda^{\tau-t-1}\gamma^{\tau-t-1}\delta_{t+\tau-t-1}
=\sum_{\ell=0}^{\tau-t-1}(\gamma\lambda)^\ell \delta_{t+\ell}.
\end{align*}
This is exactly \eqref{eq:gaeinf}.
\end{proof}
Using the standard GAE mixture in finite horizon (or terminating) MDPs thus induces a pronounced weight collapse onto the final non-trivial estimator. At the same time, the original motivation of GAE is to perform a geometric TD$(\lambda)$-style averaging over $k$-step estimators \cite{GAE}. 

To mitigate tail mass collapse we suggest to take the rollout length into consideration when normalizing the geometric weights. We do not normalize with $(1-\lambda)$ but adaptively with $\frac{1-\lambda}{1-\lambda^{T-t}}$ or $\frac{1-\lambda}{1-\lambda^{\tau-t}}$ that gives the desired exponential weight to each summand. The resulting backwards induction will be identical to GAE except different scaling factor. 

\subsection{Fixed-Time GAE}\label{sec:finitetimegae}
We first consider the effect of deterministic truncation at the trajectory horizon $T$.
Even in the absence of early termination, standard GAE implicitly mixes $k$-step estimators over an infinite range of $k$, while only the estimators with $k\le T-t-1$ are supported by the data collected after time $t$. A natural finite-time analogue is therefore obtained by restricting the geometric mixture to the available range and renormalizing the weights to sum to one.
\begin{definition}[Fixed-time GAE]\label{def:fixed-time-gae}
Fix $\lambda\in(0,1)$ and a horizon $T\in\mathbb N$.
For any $t\in\{0,\dots,T-1\}$, the fixed-time GAE estimator is defined as
\begin{equation}\label{eq:fixed-time-gae-def}
\hat \A_t^{T}
:=
\frac{1-\lambda}{1-\lambda^{T-t}}
\sum_{k=0}^{T-t-1}\lambda^k\,\hat \A_t^{(k)}.
\end{equation}
\end{definition}
The normalization factor $\tfrac{1-\lambda}{1-\lambda^{T-t}}$ ensures that the geometric weights sum up to one, making $\hat \A_t^{T}$ a convex combination of the generally observable $k$-step estimators. This formulation yields a consistent fixed-horizon analogue of GAE that aligns with the data available from truncated trajectories.  

Similarly to GAE, this estimator admits a compact TD-sum representation and results in a recursion formula that can be used in practical implementations.

\begin{proposition}[Backward Recursion for fixed-time estimator]\label{prop:fixed-time recursion}
For $t=T-1,\dots,0$,
we have 
\begin{align*}
  \hat \A_t^T=   \sum_{\ell=0}^{T-t-1} 
(\gamma\lambda)^\ell\frac{1-\lambda^{T-t-\ell}}{1-\lambda^{T-t}}\delta_{t+\ell}.
\end{align*}
Moreover, if we set $\hat \A_T^T = 0$ the estimator admits the following recursion formula:
\begin{align*}
 \hat \A_t^{T}
\;=\;
\delta_t
\;+\;
\gamma\lambda\,
\underbrace{\frac{1-\lambda^{\,T-t-1}}{1-\lambda^{\,T-t}}}_{\text{new additional factor}}\;\hat \A_{t+1}^{T}, 
\qquad t = T-1, \dots, 0.
\end{align*}
\end{proposition}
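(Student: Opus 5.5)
We prove the TD-sum representation first and then read off the recursion. Starting from Definition~\ref{def:fixed-time-gae}, we insert $\hat \A_t^{(k)}=\sum_{\ell=0}^{k}\gamma^\ell\delta_{t+\ell}$ from \eqref{eq:kstep-adv-def} and exchange the order of summation, exactly as in the proof of Proposition~\ref{prop:app:tail-collapse}. Write $n:=T-t$. The inner sum over $k$ runs from $\ell$ to $n-1$, and
\[
\sum_{k=\ell}^{n-1}\lambda^k=\lambda^\ell\frac{1-\lambda^{n-\ell}}{1-\lambda}.
\]
Multiplying by the prefactor $\frac{1-\lambda}{1-\lambda^{n}}$ gives the coefficient $(\gamma\lambda)^\ell\frac{1-\lambda^{n-\ell}}{1-\lambda^{n}}$ of $\delta_{t+\ell}$. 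This is the claimed closed form. Unlike in the truncated case, no boundary term appears separately, because the normalization already accounts for the missing tail.

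For the recursion, we split off the $\ell=0$ term of the closed form. Its coefficient is $\frac{1-\lambda^{n}}{1-\lambda^{n}}=1$, so it contributes exactly $\delta_t$. In the remaining terms we shift the index $\ell=j+1$ with $j=0,\dots,n-2$. The coefficient becomes
\[
\gamma\lambda\,(\gamma\lambda)^j\frac{1-\lambda^{(n-1)-j}}{1-\lambda^{n}}
=\gamma\lambda\,\frac{1-\lambda^{n-1}}{1-\lambda^{n}}\cdot(\gamma\lambda)^j\frac{1-\lambda^{(n-1)-j}}{1-\lambda^{n-1}}.
\]
The second factor is precisely the coefficient of $\delta_{(t+1)+j}$ in the closed form for $\hat \A_{t+1}^T$, since $T-(t+1)=n-1$. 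Summing over $j$ therefore yields $\gamma\lambda\frac{1-\lambda^{T-t-1}}{1-\lambda^{T-t}}\hat \A_{t+1}^T$.

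The only point needing care is the boundary $t=T-1$, where $n=1$. There the prefactor $\frac{1-\lambda^{0}}{1-\lambda}=0$, so the recursion gives $\hat \A_{T-1}^T=\delta_{T-1}$. The closed form gives the same value, since only $\ell=0$ remains. Hence the convention $\hat \A_T^T=0$ is consistent, and the factor $1-\lambda^{n-1}$ never requires dividing by zero.

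The main obstacle is simply the index bookkeeping in the shift. We will check that the factored normalization $\frac{1-\lambda^{n-1}}{1-\lambda^n}$ is independent of $j$, which it is, so the whole argument is a direct algebraic identity.
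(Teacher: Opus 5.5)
Your proposal is correct and follows the paper's proof essentially step for step. You exchange the order of summation to obtain the TD-sum form, then split off the $\ell=0$ term and shift the index to recognize $\hat \A_{t+1}^T$, with $t=T-1$ treated separately; your explicit note that dividing by $1-\lambda^{T-t-1}$ is only needed when $T-t\ge 2$ makes precise a point the paper leaves implicit.
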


\begin{proof}
Fix $t \in \{0, \dots , T-1\}.$
\begin{align*}
\hat{A}_t^T :&= \frac{1-\lambda}{1-\lambda^{T-t}} \sum_{k=0}^{T-t-1} \lambda^k \hat{A}_{t}^{(k)}
= \frac{1-\lambda}{1-\lambda^{T-t}}
\sum_{k=0}^{T-t-1} \lambda^k \sum_{\ell=0}^{T-t-1} \mathds{1}_{\{\ell \le k\}} \gamma^\ell \delta_{t+\ell} \\
&= \frac{1-\lambda}{1-\lambda^{T-t}}
\sum_{\ell=0}^{T-t-1} \gamma^\ell \delta_{t+\ell}
\sum_{k=\ell}^{T-t-1} \lambda^k
= \frac{1-\lambda}{1-\lambda^{T-t}}
\sum_{\ell=0}^{T-t-1} \gamma^\ell \delta_{t+\ell} \; \lambda ^l 
\sum_{k=0}^{T-t-l-1} \lambda^k \\
&= \frac{1-\lambda}{1-\lambda^{T-t}}
\sum_{\ell=0}^{T-t-1} \gamma^\ell \delta_{t+\ell}
\; \lambda ^l \frac{1-\lambda^{T-t-l}}{1-\lambda} 
= \frac{1-\lambda}{1-\lambda^{T-t}}
\sum_{\ell=0}^{T-t-1} \gamma^\ell \delta_{t+\ell}
\;  \frac{\lambda ^l-\lambda^{T-t}}{1-\lambda} 
\\
&= \sum_{\ell=0}^{T-t-1} 
\frac{\lambda^\ell - \lambda^{T-t}}{1-\lambda^{T-t}} \gamma^\ell \delta_{t+\ell}
= \sum_{\ell=0}^{T-t-1} 
\frac{1-\lambda^{T-t-\ell}}{1-\lambda^{T-t}}\delta_{t+\ell} (\gamma\lambda)^\ell.
\end{align*}

Thus, we can use above equation to obtain the recursive formula via induction in $t$. The base case follows easily with $\hat{\A}_{T}^T = 0$
as for $t = T-1,$ we have $\hat{\A}_{T-1}^T = \delta_{T-1}$.  
For general $t \in \{0, \dots , T-2\}$, above formula yields
\begin{align*}
\hat{\A}_t^T
= \delta_t
+ \sum_{\ell=1}^{T-t-1} \frac{\lambda^\ell - \lambda^{T-t}}{1-\lambda^{T-t}} \gamma^\ell \delta_{t+\ell}
= \delta_t
+ \sum_{\ell=0}^{T-t-2} \frac{\lambda^{\ell+1} - \lambda^{T-t}}{1-\lambda^{T-t}} \gamma^{\ell+1} \delta_{t+1+\ell}
= \delta_t
+ \frac{1-\lambda^{T-t-1}}{1-\lambda^{T-t}}\, \lambda \gamma \hat{\A}_{t+1}^T. \qquad \qedhere
\end{align*}
\end{proof}

Similar to infinite time-horizon GAE, in the idealized case where the true value function~$V^\pi_t$ of the policy~$\pi$ is used to compute the temporal-difference errors the estimator $\hat \A_t^{T}$ remains unbiased for the time-dependent advantage $\A_t^\pi(S_t,A_t)$.
 
\begin{proposition}
Suppose that the true value function~$V^\pi_t$ of $\pi$ is used in the TD-errors:
\[
    \delta_t := R_t + \gamma V^\pi_t(S_{t+1}) - V^\pi_t(S_t),
    \qquad \gamma \in (0,1).
\]
Then, for any $t \in \{0, \dots T-1\}$:
\[
    \mathbb E\!\left[\hat \A_t^{T} \mid S_t, A_t\right]
    \;=\;
    \A^\pi_t(S_t, A_t)
\]
\end{proposition}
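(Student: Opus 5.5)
Since $\hat \A_t^T$ is a fixed convex combination of the $k$-step estimators, the plan is to reduce the claim to conditional unbiasedness of each $\hat \A_t^{(k)}$ and then use linearity of conditional expectation. By Definition~\ref{def:fixed-time-gae},
\[
\E\big[\hat \A_t^{T}\mid S_t,A_t\big]=\frac{1-\lambda}{1-\lambda^{T-t}}\sum_{k=0}^{T-t-1}\lambda^k\,\E\big[\hat \A_t^{(k)}\mid S_t,A_t\big].
\]
The weights sum to one because $\sum_{k=0}^{T-t-1}\lambda^k=\frac{1-\lambda^{T-t}}{1-\lambda}$. It therefore suffices to show that $\E[\hat \A_t^{(k)}\mid S_t,A_t]=\A_t^\pi(S_t,A_t)$ for every $k\le T-t-1$.

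For that, I would use the telescoped form of the TD-errors with the time-indexed true values. I read the TD-error at time $t+\ell$ as $\delta_{t+\ell}=R_{t+\ell}+\gamma V^\pi_{t+\ell+1}(S_{t+\ell+1})-V^\pi_{t+\ell}(S_{t+\ell})$, which is the natural finite-horizon reading of the statement's notation, and use $V^\pi_T\equiv 0$. With this, $\hat \A_t^{(k)}=\sum_{\ell=0}^k\gamma^\ell R_{t+\ell}+\gamma^{k+1}V^\pi_{t+k+1}(S_{t+k+1})-V^\pi_t(S_t)$.

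The cleanest route is to show directly that every TD-error after time $t$ has zero conditional mean. For $\ell\ge1$, the tower property with conditioning on $(S_{t+\ell},A_{t+\ell})$ gives $\E[\delta_{t+\ell}\mid S_{t+\ell},A_{t+\ell}]=Q^\pi_{t+\ell}(S_{t+\ell},A_{t+\ell})-V^\pi_{t+\ell}(S_{t+\ell})$. This uses the finite-time Bellman identity $Q^\pi_s(s',a)=\E[R_s+\gamma V^\pi_{s+1}(S_{s+1})\mid S_s=s',A_s=a]$ and the Markov property. Conditioning further on $S_{t+\ell}$ only, with $A_{t+\ell}\sim\pi(\cdot\,;S_{t+\ell})$, this becomes zero because $\sum_a\pi(a\,;s)\A^\pi_{t+\ell}(s,a)=0$. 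The Markov property then ensures that conditioning on the earlier pair $(S_t,A_t)$ does not change this. For $\ell=0$, the same Bellman identity gives $\E[\delta_t\mid S_t,A_t]=\A^\pi_t(S_t,A_t)$ directly. Summing over $\ell$ yields unbiasedness of each $\hat \A_t^{(k)}$, and hence the claim.

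The only delicate step is the bookkeeping of time indices. The claim requires time-dependent value functions in each $\delta_{t+\ell}$ together with the terminal convention $V^\pi_T=0$ at $k=T-t-1$. With the literal stationary indexing $V^\pi_t$ written in the statement, the identity would generally fail. So I would state the indexing convention explicitly, and note that the same argument covers the termination-time variant if terminal states are given value zero.
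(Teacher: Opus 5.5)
Your proposal is correct and follows the paper's proof. Both write $\hat{\A}_t^T$ as a normalized geometric mixture of the $k$-step estimators, show each $\hat{\A}_t^{(k)}$ is conditionally unbiased via the finite-time Bellman equation and the Markov property, and conclude by linearity because the weights sum to one. Your TD-error-by-TD-error check simply unrolls the paper's one-line appeal to the Bellman equation, and your time-indexed reading $\delta_{t+\ell}=R_{t+\ell}+\gamma V^\pi_{t+\ell+1}(S_{t+\ell+1})-V^\pi_{t+\ell}(S_{t+\ell})$ with $V^\pi_T\equiv 0$ is exactly what the paper's proof needs implicitly, even though it writes $V^\pi_t(S_{t+k+1})$.
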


\begin{proof}
For a fixed starting time~$t$, recall that the fixed-time estimator is defined as the geometrically weighted average of the $k$-step estimators. 
For any $k \le T-t-1$, we have due to the Bellman equation
\begin{align*}
    \mathbb E\left[\hat \A_t^{(k)} \mid S_t, A_t\right] 
     &= 
    \mathbb E\Big[\sum_{\ell=0}^{k}\gamma^\ell R_{t+\ell}
    + \gamma^{k+1}  V^\pi_t(S_{t+k+1})  -  V^\pi_t(S_t) \mid S_t, A_t \Big]= \A_t^\pi(S_t, A_t).
\end{align*}
As the fixed-time estimator is a normalized, geometrically weighted average of 
the $k$-step estimators using aboves identiy, yields
\begin{align*}
\mathbb E[\hat \A_t^{T} \mid S_t, A_t]
&= 
\frac{1-\lambda}{1-\lambda^{T-t}}
\sum_{k=0}^{T-t-1} \lambda^k
\, \mathbb E[\hat \A_t^{(k)} \mid S_t, A_t]
=
\frac{1-\lambda}{1-\lambda^{T-t}}
\sum_{k=0}^{T-t-1} \lambda^k
\, \A^\pi_t(S_t, A_t) 
= \A^\pi_t(S_t, A_t),
\end{align*}
since the geometric weights sum to one.
\end{proof}

So far, the fixed-time estimator $\hat \A_t^{T}$ was introduced as a principled way to account for truncation at the trajectory horizon $T$.
We now analyze what happens when the episode terminates before $T$. 
In this case, we have $\tilde\tau < T$ and thus $\tau < T$ and adopting the usual terminal state convention, this implies that all $k$-step estimators that would extend beyond the termination time coincide with the last nontrivial one, so that the fixed-time geometric mixture implicitly reallocates its remaining tail mass onto that final estimate.
The following proposition makes this weight collapse precise.

\begin{proposition}[Weight collapse of fixed-time GAE]
\label{prop:fixed-time-soft-collapse}
Assume that for all $\tau \leq u \leq T$ we have
\[
S_u = S_{\tilde\tau}, \qquad R_u = 0 \quad \text{and} \quad V(S_{\tilde\tau})= 0 .
\]
Then, the fixed time GAE estimator \eqref{eq:fixed-time-gae-def} admits the decomposition
\begin{equation}\label{eq:fixed-time-soft-collapse}
\hat \A_t^{T}
=
\frac{1-\lambda}{1-\lambda^{T-t}}
\sum_{k=0}^{\tau-t-2}\lambda^k \hat \A_t^{(k)}
\;+\;
\frac{\lambda^{\tau-t-1}-\lambda^{T-t}}{1-\lambda^{T-t}}  \,
\hat \A_t^{(\tau-t-1)}
\end{equation}
for any $t\in\{0,\dots,\tau-1\}$.
\end{proposition}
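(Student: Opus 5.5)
The key observation is that under the stated terminal-state convention, every $k$-step estimator that reaches past the termination time coincides with the last nontrivial one. Concretely, I will show that $\hat \A_t^{(k)}=\hat \A_t^{(\tau-t-1)}$ for all $k\in\{\tau-t-1,\dots,T-t-1\}$. Then I split the sum in \eqref{eq:fixed-time-gae-def} at $k=\tau-t-2$ and sum the geometric weights of the collapsed block.

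\emph{Step 1 (vanishing TD errors).} For $u$ with $\tau\le u\le T-1$, the assumptions give $S_u=S_{u+1}=S_{\tilde\tau}$ and $R_u=0$. They also give $V(S_u)=V(S_{u+1})=V(S_{\tilde\tau})=0$, so
\[
\delta_u=R_u+\gamma V(S_{u+1})-V(S_u)=0 .
\]
Since $\hat \A_t^{(k)}=\sum_{\ell=0}^{k}\gamma^\ell\delta_{t+\ell}$ by \eqref{eq:kstep-adv-def}, every term with $t+\ell\ge\tau$ vanishes. Hence for $k\ge\tau-t-1$ we get $\hat \A_t^{(k)}=\sum_{\ell=0}^{\tau-t-1}\gamma^\ell\delta_{t+\ell}=\hat \A_t^{(\tau-t-1)}$.

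There is one subtlety when $\tau=T$. In that case the assumption concerns only $u=T$, the collapsed block is the single index $k=\tau-t-1$, and nothing needs to vanish. The formula then reduces trivially, because $\lambda^{\tau-t-1}-\lambda^{T-t}=\lambda^{T-t-1}(1-\lambda)$.

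\emph{Step 2 (splitting and summing weights).} Write
\[
\hat \A_t^{T}=\frac{1-\lambda}{1-\lambda^{T-t}}\Big(\sum_{k=0}^{\tau-t-2}\lambda^k\hat \A_t^{(k)}+\hat \A_t^{(\tau-t-1)}\sum_{k=\tau-t-1}^{T-t-1}\lambda^k\Big).
\]
The finite geometric sum is
\[
\sum_{k=\tau-t-1}^{T-t-1}\lambda^k=\frac{\lambda^{\tau-t-1}-\lambda^{T-t}}{1-\lambda}.
\]
Multiplying by the prefactor cancels the $(1-\lambda)$, which gives exactly \eqref{eq:fixed-time-soft-collapse}. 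The convention that an empty sum is zero covers the case $t=\tau-1$.

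The only delicate point is the first step: verifying that $\delta_u=0$ on the whole index range $\tau\le u\le T-1$, including the boundary index $u=\tau$, and confirming that the $\tau=T$ case is consistent. The rest is bookkeeping with geometric series.
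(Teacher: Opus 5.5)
Your proposal is correct and follows essentially the same route as the paper: you show $\delta_u=0$ for $u\ge\tau$, so every $\hat \A_t^{(k)}$ with $k\ge\tau-t-1$ equals $\hat \A_t^{(\tau-t-1)}$, and then you sum the geometric weights of this collapsed block. The differences are cosmetic. You split at $\tau-t-2$ and sum $k=\tau-t-1,\dots,T-t-1$ in one step, whereas the paper splits at $\tau-t-1$ and afterwards folds the $k=\tau-t-1$ weight into the tail coefficient; you also check the $\tau=T$ case explicitly, which the paper leaves implicit.
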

\begin{proof}
Fix $t\in\{0,\dots,\tau-1\}$.
By assumption, for any $u\ge \tau$ we have $R_u=0$ and $S_u=S_\tau$ and $ V_u(S_\tau)=0$.
Hence, for all $u\ge \tau$,
\[
\delta_u = R_u + \gamma V_{u+1}(S_{u+1})- V_u(S_u)=0+\gamma\cdot 0-0=0.
\]
Therefore, for any $\tau-t\leq k \leq T-t$,
\[
\hat \A_t^{(k)}
=\sum_{\ell=0}^{k}\gamma^\ell \delta_{t+\ell}
=\sum_{\ell=0}^{\tau-t-1}\gamma^\ell \delta_{t+\ell}
=\hat \A_t^{(\tau-t-1)},
\]
which again implies that all $k$-step advantage estimators that would require information beyond $\tau$ coincide with the last nontrivial one estimator $\hat \A_t^{(\tau-t-1)}.$
Thus, splitting the geometric sum at the last observable index $\tau-t-1$ yields
\begin{align*}
\sum_{k=0}^{T-t-1}\lambda^k \hat \A_t^{(k)}
&=
\sum_{k=0}^{\tau-t-1}\lambda^k \hat \A_t^{(k)}
\;+\;
\sum_{k=\tau-t}^{T-t-1}\lambda^k \hat \A_t^{(k)} \\
&=
\sum_{k=0}^{\tau-t-1}\lambda^k \hat \A_t^{(k)}
\;+\;
\hat \A_t^{(\tau-t-1)}\sum_{k=\tau-t}^{T-t-1}\lambda^k.
\end{align*}
The tail sum is a geometric series:
\[
\sum_{k=\tau-t}^{T-t-1}\lambda^k
=
\lambda^{\tau-t}\sum_{j=0}^{T-\tau-1}\lambda^j
=
\lambda^{\tau-t}\frac{1-\lambda^{T-\tau}}{1-\lambda}.
\]
Multiplying by the prefactor $(1-\lambda)/(1-\lambda^{T-t})$ yields
\begin{align*}
\hat \A_t^{T}
&=
\frac{1-\lambda}{1-\lambda^{T-t}}
\sum_{k=0}^{\tau-t-1}\lambda^k \hat \A_t^{(k)}
\;+\;
\frac{1-\lambda}{1-\lambda^{T-t}}
\lambda^{\tau-t}\frac{1-\lambda^{T-\tau}}{1-\lambda} \hat \A_t^{(\tau-t-1)}
\\
&= 
\frac{1-\lambda}{1-\lambda^{T-t}}
\sum_{k=0}^{\tau-t-1}\lambda^k \hat \A_t^{(k)}
\;+\;
\frac{\lambda^{\tau-t}-\lambda^{T-t}}{1-\lambda^{T-t}}\,
\hat \A_t^{(\tau-t-1)}
\\&=
\frac{1-\lambda}{1-\lambda^{T-t}}
\sum_{k=0}^{\tau-t-2}\lambda^k \hat \A_t^{(k)}
\;+\;
\Big(\frac{1-\lambda}{1-\lambda^{T-t}} \lambda^{\tau-t-1} + \frac{\lambda^{\tau-t}-\lambda^{T-t}}{1-\lambda^{T-t}} \Big) \,
\hat \A_t^{(\tau-t-1)}
\\&=
\frac{1-\lambda}{1-\lambda^{T-t}}
\sum_{k=0}^{\tau-t-2}\lambda^k \hat \A_t^{(k)}
\;+\;
\frac{\lambda^{\tau-t-1}-\lambda^{T-t}}{1-\lambda^{T-t}}  \,
\hat \A_t^{(\tau-t-1)},
\end{align*}
which is exactly \eqref{eq:fixed-time-soft-collapse}.
\end{proof}
If termination occurs before the end of an episode, i.e. $\tau<T$, equation \eqref{eq:fixed-time-soft-collapse} shows that the fixed-time estimator no longer performs a purely geometric averaging over genuinely distinct $k$-step estimators. Instead, the geometric tail mass that would be assigned to unobservable indices $\tau-t\leq k\leq T-t$ is effectively reallocated to the last nontrivial estimate $\hat \A_t^{(\tau-t-1)}$, again causing a similar weight collapse effect onto this term, though in a weaker form than when considering the standard estimator. The earlier the termination (i.e., the smaller $\tau-t$) and the larger $\lambda$, the larger the corresponding tail coefficient becomes, and hence the more $\hat \A_t^{T}$ concentrates on $\hat \A_t^{(\tau-t-1)}$ rather than distributing weight across the observed range of $k$. 

However, this motivates a termination-adaptive variant in which the geometric mixture is truncated at the effective end $\tau$ and renormalized accordingly, so that the estimator depends only on rewards and TD errors observed up to time $\tau-1$.

\subsection{Termination-Time GAE}\label{sec:stoppingtimegae}
As mentioned above we restrict the geometric averaging to the range of steps actually available before termination.
This leads to an estimator that depends on a random horizon, given by the episode’s termination-time.
For any $t\in\{0,\dots,\tau-1\}$, only the $k$-step estimators with $k\le \tau-t-1$ are fully supported by the observed rollout segment.
We therefore define the following renormalized geometric mixture.

\begin{definition}[Termination-time GAE]\label{def:termination-time-gae}
For any $t\in\{0,\dots,\tau-1\}$, the termination-time GAE estimator is defined as
\begin{equation}\label{eq:termination-time-gae-def}
\hat \A_t^{\tau}
:=
\frac{1-\lambda}{1-\lambda^{\tau-t}}
\sum_{k=0}^{\tau-t-1}\lambda^k\,\hat \A_t^{(k)}.
\end{equation}
\end{definition}

By construction, $\hat \A_t^{\tau}$ uses only information up to the effective end $\tau$. It depends solely on the rewards $\{R_{t},\dots,R_{\tau-1}\}$
and value-function evaluations along the states $\{S_t,\dots,S_\tau\}$.
When $\tau=T$, the estimator coincides with the fixed-time estimator $\hat \A_t^{T}$.
When $\tau<T$, it automatically adapts to the shorter available trajectory length and avoids mass collapse from the indices $k\geq \tau-t$ to $k=\tau-t-1$.
\begin{proposition}[Backward recursion for termination-time GAE]\label{prop:stoptime-recursion}
For any $t\in\{0,\dots,\tau-1\}$, the termination-time estimator admits the TD-sum representation
\begin{equation}\label{eq:stoptime-tdsum}
\hat \A_t^{\tau}
=
\sum_{\ell=0}^{\tau-t-1}
(\gamma\lambda)^\ell\,
\frac{1-\lambda^{\tau-t-\ell}}{1-\lambda^{\tau-t}}\,
\delta_{t+\ell}.
\end{equation}
Moreover, if we set $\hat \A_\tau^{\tau}=0$, then $\hat \A_t^{\tau}$ satisfies the backward recursion
\begin{equation}\label{eq:stoptime-backward-recursion}
\hat \A_t^{\tau}
=
\delta_t
+
\gamma\lambda\,
\frac{1-\lambda^{\tau-t-1}}{1-\lambda^{\tau-t}}\,
\hat \A_{t+1}^{\tau},
\qquad t=\tau-1,\dots,0.
\end{equation}
\end{proposition}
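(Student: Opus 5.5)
The plan mirrors the proof of Proposition~\ref{prop:fixed-time recursion}, with the deterministic horizon $T$ replaced by the effective end $\tau$. The key observation is that, for a fixed realization of the rollout, $\tau$ is just a number. Moreover, Definition~\ref{def:termination-time-gae} only involves $\delta_t,\dots,\delta_{\tau-1}$. So the whole argument is pathwise algebra, and no probabilistic property of the stopping time is needed.

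\emph{Step 1: TD-sum representation.} Fix $t\le\tau-1$ and insert $\hat\A_t^{(k)}=\sum_{\ell=0}^{k}\gamma^\ell\delta_{t+\ell}$ from \eqref{eq:kstep-adv-def} into \eqref{eq:termination-time-gae-def}. Write the inner sum with the indicator $\mathds{1}_{\{\ell\le k\}}$ and swap the two finite sums over $k,\ell\in\{0,\dots,\tau-t-1\}$. This leaves the geometric tail
\[
\sum_{k=\ell}^{\tau-t-1}\lambda^k=\frac{\lambda^\ell-\lambda^{\tau-t}}{1-\lambda}.
\]
Multiplying by the prefactor $\frac{1-\lambda}{1-\lambda^{\tau-t}}$ gives the coefficient $\frac{\lambda^\ell-\lambda^{\tau-t}}{1-\lambda^{\tau-t}}\gamma^\ell=(\gamma\lambda)^\ell\frac{1-\lambda^{\tau-t-\ell}}{1-\lambda^{\tau-t}}$ for $\delta_{t+\ell}$, which is \eqref{eq:stoptime-tdsum}.

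\emph{Step 2: backward recursion.} First check the base case $t=\tau-1$. The TD-sum has the single term $\ell=0$ with coefficient $1$, so $\hat\A_{\tau-1}^\tau=\delta_{\tau-1}$. The right-hand side of \eqref{eq:stoptime-backward-recursion} equals $\delta_{\tau-1}$ as well, because the factor $1-\lambda^{0}=0$ kills the term involving $\hat\A_\tau^\tau=0$. So the base case holds regardless of the convention for $\hat\A_\tau^\tau$.

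For $t\le\tau-2$, separate the $\ell=0$ term, which is $\delta_t$, and shift the index $\ell\mapsto\ell+1$ in the rest. The remaining coefficients are $\gamma^{\ell+1}\frac{\lambda^{\ell+1}-\lambda^{\tau-t}}{1-\lambda^{\tau-t}}$. Factor out $\gamma\lambda\frac{1-\lambda^{\tau-t-1}}{1-\lambda^{\tau-t}}$ to get
\[
\gamma^\ell\,\frac{\lambda^\ell-\lambda^{\tau-t-1}}{1-\lambda^{\tau-t-1}},
\]
which is exactly the coefficient of $\delta_{(t+1)+\ell}$ in the TD-sum \eqref{eq:stoptime-tdsum} at time $t+1$. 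This yields the recursion.

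Every step is routine index bookkeeping. The only points needing care are the factoring step, where the two normalizations $1-\lambda^{\tau-t}$ and $1-\lambda^{\tau-t-1}$ must be matched correctly, and confirming that $\lambda\in(0,1)$ keeps all denominators nonzero for $t\le\tau-1$.
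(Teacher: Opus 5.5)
Your proposal is correct and takes the same approach as the paper. The paper's proof simply says to repeat the proof of Proposition~\ref{prop:fixed-time recursion} pathwise with $T$ replaced by $\tau$, and your Steps 1--2 are exactly that computation: swap the sums and use the geometric tail, then split off $\ell=0$, shift the index, and factor out $\gamma\lambda\frac{1-\lambda^{\tau-t-1}}{1-\lambda^{\tau-t}}$. You also observe that the base case holds whatever value is assigned to $\hat\A_\tau^\tau$, because $1-\lambda^0=0$; this is a small extra the paper does not state.
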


\begin{proof}
The proof is analogous to the proof of proposition \ref{prop:fixed-time recursion} by replacing the deterministic horizon $T$ with the (random) effective horizon $\tau$ and proceeding path-wise.
\end{proof}
Algorithm \ref{alg:gae} gives pseudocode for the termination-time GAE.

\subsection{Relation of the Estimators and Bias-Variance Tradeoff}
We now use Propositions~\ref{prop:app:tail-collapse} and~\ref{prop:fixed-time-soft-collapse} to relate the three estimators and then discuss some heuristics regarding their bias-variance tradeoff.
\begin{proposition}[Relations between standard, fixed-time, and termination-time GAE]
\label{prop:relations-three-gae}
Under the same assumption as in  \ref{prop:app:tail-collapse} and \ref{prop:fixed-time-soft-collapse}, the following identities hold for  $t\in\{0,\dots,\tau-1\}$:
\begin{align}
\hat \A_t
&=
(1-\lambda^{\tau-t})\,\hat \A_t^{\tau}
\;+\;
\lambda^{\tau-t}\,\hat \A_t^{(\tau-t-1)},
\label{eq:relation-infty-tau}
\\[3pt]
\hat \A_t^{T}
&=
\frac{1-\lambda^{\tau-t}}{1-\lambda^{T-t}}\,\hat \A_t^{\tau}
\;+\;
\frac{\lambda^{\tau-t}-\lambda^{T-t}}{1-\lambda^{T-t}}\,\hat \A_t^{(\tau-t-1)}.
\label{eq:relation-T-tau}
\end{align}
 In particular, for $\lambda\in(0,1)$ and $\tau\le T$, both $\hat \A_t$ and $\hat \A_t^T$ are convex
combinations of $\hat \A_t^\tau$ and the last nontrivial $k$-step estimator $\hat \A_t^{(\tau-t-1)}$.
\end{proposition}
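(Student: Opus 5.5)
The plan is to obtain both identities by pure algebra from the two collapse decompositions already proved, without going back to TD-errors. Write $h:=\tau-t\ge 1$ and $X:=\hat \A_t^{(h-1)}$ for the last nontrivial $k$-step estimator.

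\emph{Step 1: rewrite the termination-time estimator.} Splitting off the last term in Definition~\ref{def:termination-time-gae} gives
\[
(1-\lambda^{h})\,\hat \A_t^{\tau}=(1-\lambda)\sum_{k=0}^{h-2}\lambda^k\hat \A_t^{(k)}+(1-\lambda)\lambda^{h-1}X .
\]
This isolates the common ``body'' $S:=(1-\lambda)\sum_{k=0}^{h-2}\lambda^k\hat \A_t^{(k)}$ shared by all three estimators.

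\emph{Step 2: identity \eqref{eq:relation-infty-tau}.} By Proposition~\ref{prop:app:tail-collapse}, $\hat \A_t=S+\lambda^{h-1}X$. Substituting $S=(1-\lambda^h)\hat \A_t^\tau-(1-\lambda)\lambda^{h-1}X$ from Step 1 yields
\[
\hat \A_t=(1-\lambda^h)\hat \A_t^\tau+\bigl(\lambda^{h-1}-(1-\lambda)\lambda^{h-1}\bigr)X=(1-\lambda^h)\hat \A_t^\tau+\lambda^hX .
\]

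\emph{Step 3: identity \eqref{eq:relation-T-tau}.} By Proposition~\ref{prop:fixed-time-soft-collapse}, $(1-\lambda^{T-t})\hat \A_t^T=S+(\lambda^{h-1}-\lambda^{T-t})X$. Substituting $S$ again gives the coefficient of $X$:
\[
\lambda^{h-1}-\lambda^{T-t}-(1-\lambda)\lambda^{h-1}=\lambda^h-\lambda^{T-t}.
\]
Dividing by $1-\lambda^{T-t}$, which is positive since $T-t\ge1$ and $\lambda\in(0,1)$, gives the claim. The assumption of Proposition~\ref{prop:fixed-time-soft-collapse} (absorbing terminal state with zero reward and zero value) is exactly what makes that proposition applicable, so it is the only hypothesis used.

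\emph{Step 4: convexity.} In \eqref{eq:relation-infty-tau} the coefficients $1-\lambda^h$ and $\lambda^h$ are nonnegative and sum to one. In \eqref{eq:relation-T-tau} the coefficients are $\frac{1-\lambda^h}{1-\lambda^{T-t}}$ and $\frac{\lambda^h-\lambda^{T-t}}{1-\lambda^{T-t}}$. Their numerators sum to $1-\lambda^{T-t}$, and both are nonnegative because $h\le T-t$ (from $\tau\le T$) and $\lambda<1$.

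The only delicate point is the edge case $h=1$, where the sum defining $S$ is empty. With the empty-sum convention the formulas still hold: $\hat \A_t^\tau=X=\delta_t$ in that case. Otherwise the argument is bookkeeping of geometric coefficients, and I expect no real obstacle.
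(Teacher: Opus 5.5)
Your proposal is correct and follows the paper's proof essentially step for step. You isolate the common partial sum $(1-\lambda)\sum_{k=0}^{\tau-t-2}\lambda^k\hat \A_t^{(k)}$ from the definition of $\hat \A_t^{\tau}$ and substitute it into the collapse decompositions of $\hat \A_t$ and $\hat \A_t^{T}$, as the paper does; your explicit nonnegativity check for the convex-combination claim and your treatment of the edge case $\tau-t=1$ add details the paper leaves implicit.
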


\begin{proof}
We start with the relationship of the standard estimator $\hat \A_t$ and the termination-time-estimator $\hat \A_t^\tau$. 
By Proposition \ref{prop:tail-collapse-kstep} we have
\begin{equation}\label{eq: weight collaps std}
\hat \A_t
=
(1-\lambda)\sum_{k=0}^{\tau-t-2}\lambda^k\,\hat \A_t^{(k)}
\;+\;
\lambda^{\tau-t-1}\,\hat \A_t^{(\tau-t-1)}.
\end{equation}

By the definition of $\hat \A_t^\tau$ we can rewrite the partial geometric sum up to $\tau-t-2$ as
\begin{align*}
\hat \A_t^{\tau}
&=
\frac{1-\lambda}{1-\lambda^{\tau-t}}
\left(\sum_{k=0}^{\tau-t-2}\lambda^k\,\hat \A_t^{(k)}+\lambda^{\tau-t-1}\hat \A_t^{(\tau-t-1)}\right),
\end{align*}
and thus \begin{align}
(1-\lambda)\sum_{k=0}^{\tau-t-2}\lambda^k\,\hat \A_t^{(k)}
&=
(1-\lambda^{\tau-t})\,\hat \A_t^{\tau}
\;-\;
(1-\lambda)\lambda^{\tau-t-1}\,\hat \A_t^{(\tau-t-1)}.
\label{eq:sum-elim}
\end{align}

Substituting \eqref{eq:sum-elim} into \eqref{eq: weight collaps std} yields
\begin{align*}
\hat \A_t
&=
\Big((1-\lambda^{\tau-t})\,\hat \A_t^{\tau}-(1-\lambda)\lambda^{\tau-t-1}\hat \A_t^{(\tau-t-1)}\Big)
\;+\;
\lambda^{\tau-t-1}\hat \A_t^{(\tau-t-1)}
\\
&=
(1-\lambda^{\tau-t})\,\hat \A_t^{\tau}
\;+\;
\lambda^{\tau-t-1}\bigl(1-(1-\lambda)\bigr)\hat \A_t^{(\tau-t-1)}
\\
&=
(1-\lambda^{\tau-t})\,\hat \A_t^{\tau}
\;+\;
\lambda^{\tau-t}\hat A_t^{(\tau-t-1)},
\end{align*}
which proves \eqref{eq:relation-infty-tau}.
Similarly, for the relationship of the fixed-time estimator $\hat A_t^T$ and the termination-time estimator $\hat \A_t^\tau$, we have by proposition \ref{prop:fixed-time-soft-collapse}
\[
\hat \A_t^{T}
=
\frac{1-\lambda}{1-\lambda^{T-t}}
\sum_{k=0}^{\tau-t-2}\lambda^k\,\hat \A_t^{(k)}
\;+\;
\frac{\lambda^{\tau-t-1}-\lambda^{T-t}}{1-\lambda^{T-t}}\,\hat \A_t^{(\tau-t-1)}.
\]
Multiplying \eqref{eq:sum-elim} by $(1-\lambda^{T-t})^{-1}$ and substituting, yields
\begin{align*}
\hat \A_t^{T}
&=
\frac{1}{1-\lambda^{T-t}}
\Big((1-\lambda^{\tau-t})\,\hat \A_t^{\tau}-(1-\lambda)\lambda^{\tau-t-1}\hat \A_t^{(\tau-t-1)}\Big)
\;+\;
\frac{\lambda^{\tau-t-1}-\lambda^{T-t}}{1-\lambda^{T-t}}\,\hat \A_t^{(\tau-t-1)}
\\
&=
\frac{1-\lambda^{\tau-t}}{1-\lambda^{T-t}}\,\hat \A_t^{\tau}
\;+\;
\frac{-(1-\lambda)\lambda^{\tau-t-1}+\lambda^{\tau-t-1}-\lambda^{T-t}}{1-\lambda^{T-t}}\,\hat \A_t^{(\tau-t-1)}
\\
&=
\frac{1-\lambda^{\tau-t}}{1-\lambda^{T-t}}\,\hat \A_t^{\tau}
\;+\;
\frac{\lambda^{\tau-t}-\lambda^{T-t}}{1-\lambda^{T-t}}\,\hat \A_t^{(\tau-t-1)}. \qedhere
\end{align*}
\end{proof}

Proposition~\ref{prop:relations-three-gae} makes explicit that, once early termination occurs $\{\tau<T$\},
both $\hat \A_t$ and $\hat \A_t^T$ can be viewed as reweighted extensions of the termination-time mixture $\hat \A_t^\tau$.
The second component in \eqref{eq:relation-infty-tau} and \eqref{eq:relation-T-tau} is always the largest nontrivial $k$-step estimator
$\hat \A_t^{(\tau-t-1)}$, i.e.\ the estimator that uses the longest available lookahead before bootstrapping.
This term typically exhibits the smallest bootstrap bias (since it relies least on $\hat V$), but also the largest variance, as it aggregates the longest (discounted) sum of TD errors.

The termination-time estimator $\hat \A_t^\tau$ avoids assigning any additional mass to the tail beyond the observable range:
it averages only over the genuinely distinct $k$-step estimators supported by the data up to the effective end $\tau$.
For this reason, it is the most conservative choice from a variance perspective, and one should expect it to exhibit the smallest variance among the three
(holding $\gamma,\lambda$ fixed).
On the event $\{\tau=T\}$ (no termination within the rollout), the termination-time and fixed-time estimators coincide by definition, $\hat \A_t^\tau=\hat \A_t^T$.

When $\tau\ll T$, the fixed-time estimator $\hat \A_t^T$ still allocates additional geometric mass to the last nontrivial term
through the coefficient $\frac{\lambda^{\tau-t}-\lambda^{T-t}}{1-\lambda^{T-t}}$ in \eqref{eq:relation-T-tau}.
Compared to $\hat \A_t^\tau$, this increases emphasis on $\hat \A_t^{(\tau-t-1)}$, which heuristically decreases bias but increases variance.
The standard estimator $\hat \A_t$ exhibits the strongest form of this effect:
it assigns the full tail mass $\lambda^{\tau-t}$ to $\hat \A_t^{(\tau-t-1)}$ in \eqref{eq:relation-infty-tau},
and therefore should be expected to have the smallest bootstrap bias but the largest variance.

Finally, the differences between the estimators become most pronounced when $\tau-t$ is small,
i.e.\ when the effective trajectory suffix available after time $t$ is short (either due to very short episodes, or because $t$ lies close to $\tau$).
In this regime, even moderate values of $\lambda$ lead to substantial relative tail weights, and the convex combinations in
\eqref{eq:relation-infty-tau}-\eqref{eq:relation-T-tau} can differ significantly.
We further quantify this effect explicitly under toy assumptions on the TD-errors in \autoref{sec: toy gae}.

\begin{figure}[t]
    \centering
    \includegraphics[width=\linewidth]{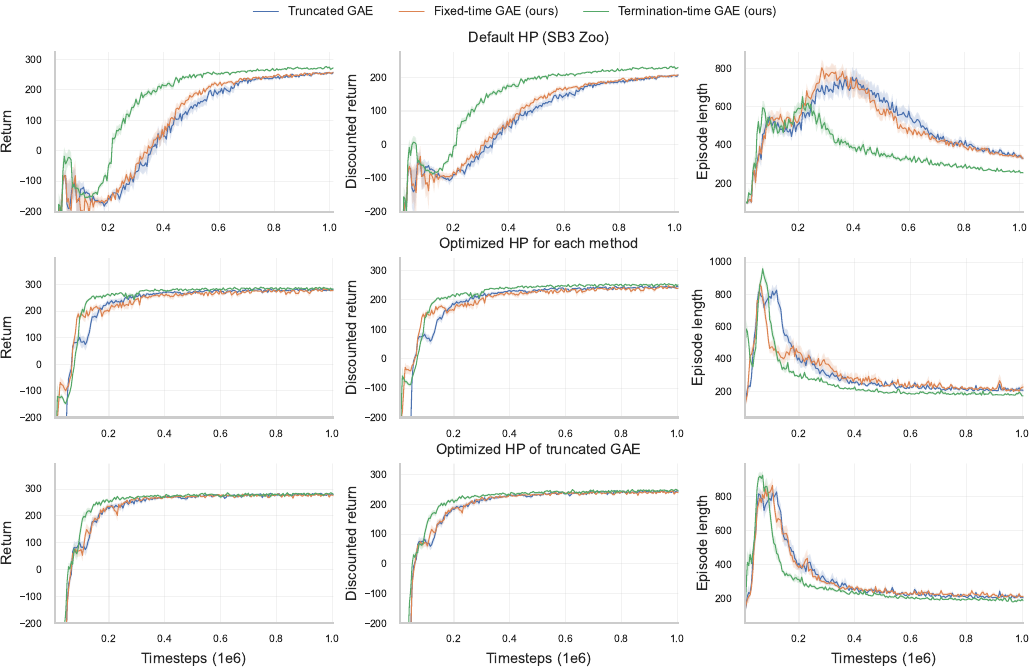}
    \caption{
    LunarLander-v3 evaluation learning curves under three hyperparameter (HP) regimes. Columns report evaluation return (sum of rewards), discounted evaluation return (sum of discounted rewards), and evaluation episode length. Curves are averaged over 20 seeds with standard errors of seeds as the shaded region. Rows correspond to hyperparameter regimes: Top: default PPO hyperparameters from the Stable-Baselines3 Zoo. Middle: best hyperparameters found separately for each method via hyperparameter optimization (100 trials, 3 seeds per trial). Bottom: all methods evaluated using the hyperparameters obtained from the truncated-GAE hyperparameter optimization (same hyperparameters for all methods).
    }
    \label{fig:gae_eval_overview}
\end{figure}

\begin{figure}[t]
    \centering
    \includegraphics[width=\linewidth]{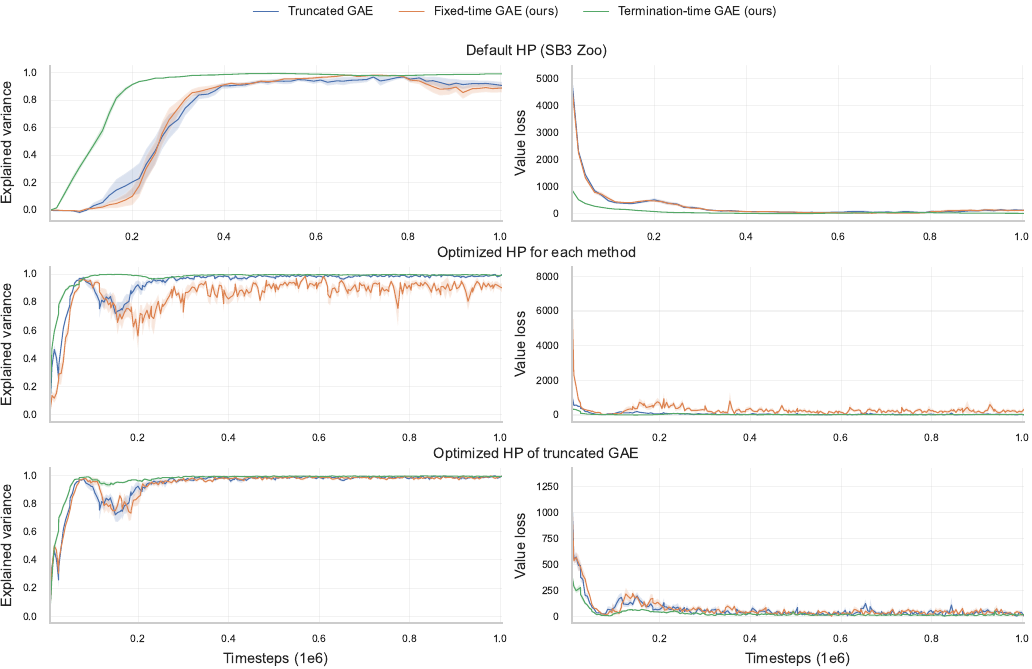}
    \caption{
    Training diagnostics on LunarLander-v3 for PPO with truncted GAE and our modified GAE variants. Rows correspond to hyperparameter (HP) regimes: Top: default PPO hyperparameters from the Stable-Baselines3 Zoo. Middle: best hyperparameters found separately for each method via hyperparameter optimization (100 trials, 3 seeds per trial). Bottom: all methods evaluated using the hyperparameters obtained from the truncated-GAE hyperparameter optimization (same hyperparameters for all methods). Columns show explained variance of the value function fit (left) and value loss (right). Curves are averaged over 20 seeds with standard errors of seeds as the shaded regions. }
    \label{fig:gae_train_overview}
\end{figure}
\subsection{Implementation Notes}
Even though this article has a strong focus on the mathematical foundations of PPO, we performed some experiments to highlight the usefulness of clean formulations, in particular for the finite-time use of infinite-time GAE estimator. We performed experiments on LunarLander-v3, working with Stable-Baselines3 \cite{stablebaslines3}.

Our theoretical definitions treat $(S_t,A_t,R_t)_{t\ge 0}$ as a stochastic process and define advantage estimators as random variables.
In an implementation, however, we only have access to finite realizations of this process, i.e.\ a collection of ordered transitions sampled under the current policy.
A rollout buffer therefore stores a finite set of ordered realizations, which we index by a single global index $n\in\{0,\dots,N-1\}$, even if the buffer contains multiple episodes:
\[
\mathcal B  
=
\bigl\{(s_n,a_n,r_n,v_n, t_n,d_n)_{n=0}^{N-1}\bigr\},
\]
where $n$ is a global buffer index (spanning multiple rollouts).
Here $s_n\in\mathcal S$ denotes state, $a_n\in\mathcal A$ the action,
$r_n$ the observed reward, $v_n = \hat V(s_n)$ the stored time-independent value estimate,
and $t_n\in\{0,\dots, T\}$ the within-episode time stamp of transition $n$. 
Furthermore, $d_n\in\{0,1\}$ is a done-mask indicating whether the next buffer entry belongs to the same episode, i.e. $d_n=1$ if transition $n+1$ continues the episode of transition $n$ and $d_n=0$ if an episode boundary occurs between $n$ and $n{+}1$ (either because the episode terminates at $n$, or because the rollout is truncated and a new episode starts at $n{+}1$).

Thus, the only additional information required beyond a standard PPO buffer is the within-episode time stamp $t_n$ for each transition. This is necessary because the recursion weights of Propositions~\ref{prop:fixed-time recursion} and~\ref{prop:stoptime-recursion} depend on the remaining distance to $T$ for the fixed-time estimator and $\tau$ for the termination-time estimator.

Building upon this propositions, both estimators can be computed with a single backward sweep over the buffer, $n=N-1,\dots,0$.
Define the one-step TD residual on the buffer by
\[
\delta_n := r_n + \gamma\, d_n\, v_{n+1} - v_n,
\]
where $v_{n+1}$ is understood as the bootstrap value for the next state.
Then, algorithmically, we iterate the buffer backwards, $n=N-1,\dots,0$, and maintain an effective end time $\tau_{\mathrm{eff}}$ for the episode segment to which the current transition belongs. Then in order to estimate advantages $\hat \A_n$  we use the following update scheme:
\begin{equation}\label{eq:impl-backward-update}
\hat \A_n
=
\delta_n
+
\gamma\lambda\,
\frac{1-\lambda^{\tau_{\mathrm{eff}} - t_n -1}}{1-\lambda^{\tau_{\mathrm{eff}} - t_n}}\; d_n\;\hat \A_{n+1}.
\end{equation}
In the fixed-time case, $\tau_{\mathrm{eff}}\equiv T$ is constant.
In the termination-time case, $\tau_{\mathrm{eff}}$ is updated whenever the backward sweep crosses an episode boundary, i.e.\ whenever $d_n=0$:
then $\tau_{\mathrm{eff}}$ is set to the effective end of the episode segment, which in buffer time corresponds to $\tau_{\mathrm{eff}}=t_n+1$ for the last transition of that segment.
The mask $d_n$ guarantees that the recursion resets across episode boundaries (since $d_n=0$ implies $\hat \A_n=\delta_n$),
so no information is propagated between different episodes stored in $\mathcal B$.
Finally, return targets used for value-function regression are obtained pointwise as $G_n = v_n + \hat \A_n$.

\subsection{Lunar Lander experiment}
We empirically compare the Stable-Baselines3 (SB3) PPO implementation \cite{stablebaslines3} with standard truncated GAE against our finite-time variants of GAE on LunarLander-v3, using a fixed discount factor $\gamma=0.999$. Throughout, we keep the PPO algorithm and architecture unchanged and modify only the advantage estimation (and thus the induced value targets), so that differences can be attributed to the estimator.

All comparisons are reported under three hyperparameter (HP) regimes.
First, we run each method with the SB3-Zoo default PPO hyperparameters.
Second, for each GAE method separately (trucnated, fixed-time, termination-time), we performed an hyperparameter optimization (HPO) with $100$ trials using a TPE sampler and no pruning. Each trial is evaluated on $3$ random seeds, and the objective is the final discounted evaluation return, aggregated across the $3$ seeds. The best HP configuration found for each method is then used for the learning curves.
Third, to test robustness and to rule out that gains are purely due to improved tuning, we additionally evaluate all methods using the HP configuration obtained from the truncated-GAE HPO (i.e., a single shared HP set for all estimators).
This yields a controlled comparison under default out-of-the-box settings, best achievable tuning per method, and a shared baseline-tuned configuration. The hyperparameter search spaces and the best configurations found by HPO for each advantage estimator are summarized in \autoref{tab:hpo_best_hp}.

\begin{table}[h]
\centering
\small
\setlength{\tabcolsep}{4pt}
\begin{tabular}{llccc}
\toprule
\textbf{Hyperparameter} & \textbf{Search space for HPOs} &
\textbf{Standard} & \textbf{Fixed-time} & \textbf{Termination-time}\\
\midrule
\multicolumn{5}{l}{\textbf{Rollout}} \\
n\_steps &
$2^{p}$, $p \sim \mathrm{Unif}\{5,\dots,12\}$ 
& 256 & 256 & 256 \\
\midrule
\multicolumn{5}{l}{\textbf{GAE}} \\
gae\_lambda &
$\lambda = 1-\epsilon$, $\epsilon \sim \mathrm{LogUnif}[10^{-4},10^{-1}]$
& 0.94748 & 0.99989 & 0.95827 \\
\midrule
\multicolumn{5}{l}{\textbf{Optimization}} \\
batch\_size &
$2^{p}$, $p \sim \mathrm{Unif}\{4,\dots,10\}$ 
& 128 & 256 & 16 \\
learning\_rate &
$\mathrm{LogUnif}[10^{-5},\,2\cdot 10^{-3}]$
& $1.037\!\times\!10^{-4}$ & $5.967\!\times\!10^{-4}$ & $1.687\!\times\!10^{-4}$ \\
n\_epochs &
$\mathrm{Cat}\{1,5,10,20\}$
& 20 & 20 & 10 \\
max\_grad\_norm &
$\mathrm{Unif}[0.3,\,2.0]$
& 0.3589 & 1.3625 & 1.8185 \\
\midrule
\multicolumn{5}{l}{\textbf{PPO objective / regularization}} \\
clip\_range &
$\mathrm{Cat}\{0.1,0.2,0.3,0.4\}$
& 0.3 & 0.2 & 0.1 \\
ent\_coef &
$\mathrm{LogUnif}[10^{-8},\,10^{-1}]$
& $1.323\!\times\!10^{-5}$ & $2.324\!\times\!10^{-4}$ & $5.482\!\times\!10^{-3}$ \\
target\_kl &
$\mathrm{LogUnif}[10^{-3},\,5.0]$
& 3.06 & $3.86\!\times\!10^{-2}$ & 1.75 \\
\midrule
\multicolumn{5}{l}{\textbf{Network}} \\
net\_arch &
$\mathrm{Cat}\{\texttt{[64]},\texttt{[64,64]},\texttt{[256,256]}\}$
& \texttt{[256,256]} & \texttt{[256,256]} & \texttt{[256,256]} \\
activation\_fn &
$\mathrm{Cat}\{\tanh,\mathrm{ReLU}\}$
& ReLU & ReLU & ReLU \\
\bottomrule
\end{tabular}
\caption{Hyperparameter search space used for TPE-based HPO (100 trials, 3 seeds per trial) and best configurations found for each estimator. The discount factor is fixed to $\gamma=0.999$. 
$\mathrm{Cat}\{\cdot\}$ denotes a categorical sampling (uniform over the listed values),
$\mathrm{Unif}[a,b]$ denotes continuous uniform sampling on $[a,b]$,
$\mathrm{Unif}\{a,\dots,b\}$ denotes uniform uniform sampling on $\{a,\dots,b\}$,
and $\mathrm{LogUnif}[a,b]$ log-uniform sampling on $[a,b]$.
}
\label{tab:hpo_best_hp}
\end{table}

During training, we interrupt learning every $5000$ environment steps and evaluate the current policy over $5$ episodes.
Evaluation metrics are reported in \autoref{fig:gae_eval_overview}, and training diagnostics in \autoref{fig:gae_train_overview}.
For each method and each HP regime, curves are averaged over $20$ independent training seeds. Shaded regions indicate standard errors across seeds.

Overall, the termination-time estimator consistently yields the fastest learning dynamics and the shortest episode lengths, indicating faster and more reliable landings. The performance gaps between estimators are most pronounced under the SB3-Zoo default PPO hyperparameters.
In this regime, the termination-time estimator learns substantially faster: it reaches high returns earlier and achieves shorter episode lengths throughout training.
The fixed-time estimator sometimes improves early learning compared to truncated GAE, but typically does not match the termination-time variant in either speed of return improvement or sustained reduction in episode length.

After optimizing HPs separately for each estimator, the qualitative ranking remains similar.
The termination-time estimator still shows the fastest increase in evaluation returns and achieves the smallest episode lengths.
The fixed-time estimator exhibits a steep initial improvement, but its learning curve later becomes similar to the truncated GAE variant. 

When all methods are evaluated using the hyperparameters obtained from optimizing truncated GAE, the termination-time estimator continues to learn faster.
In particular, both returns and landing speed (episode length) improve earlier than for truncated GAE and fixed-time GAE.
This suggests that the observed gains are not solely an artifact of per-method tuning, but reflect a more robust learning behavior induced by the termination-adaptive renormalization.

Figure~\autoref{fig:gae_train_overview} reports value-function explained variance and value loss during training, diagnostics for crtitic estimation.
Under default HPs, the termination-time estimator achieves an explained variance close to $1$ substantially earlier than the other estimators and maintains a markedly smaller value loss.
A plausible interpretation is that termination-time renormalization reduces the variance of the advantage labels and, consequently, the variance of the value targets
$G_n = v_n + \hat \A_n$ used for critic regression.
In this sense, the critic faces a better-conditioned supervised learning problem with less label noise, which allows faster stabilization of the value fit and, in turn, provides more reliable advantage estimates for the policy update.

Using the truncated-optimized HP configuration, the explained variance for all methods typically increases during early training and and then decreases for a short period before rising again.
Notably, this decrease occurs around the same time that evaluation episode lengths drop sharply, suggesting a training regime change in which the policy transitions from coarse control to consistently successful landings.
Such a transition can induce a pronounced shift in the visited state distribution and in the structure of returns, temporarily degrading the critic fit.
The termination-time estimator exhibits a substantially weaker drop in explained variance and maintains a smaller value loss during this phase, consistent with improved stability of the regression targets.

With per-method optimized HPs, termination-time and truncated GAE exhibit broadly similar critic diagnostics, whereas fixed-time can show a noticeably lower explained variance.
A likely contributing factor is that the HPO for fixed-time has selected a values of $\lambda$ closer to $1$, which increase emphasis on long-horizon components and may inflate the variance of both advantages and value targets, thereby making the critic fit more difficult even if the policy initially improves quickly.

\paragraph{Summary.}
Across all hyperparameter regimes, our termination-time GAE improves learning speed and landing efficiency on LunarLander-v3. The training diagnostics indicate that these gains coincide with faster and more stable critic learning (higher explained variance and lower value loss), which is consistent with the hypothesis that termination-adaptive renormalization reduces variance induced by finite-horizon truncation and early termination.

\subsection{Continuous Control Experiment}\label{sec: continuous control}
As our empirical evaluations on Lunar Lander indicate that the proposed termination-time GAE weight correction can yield substantial gains in environments with pronounced terminal effects, we further assess whether these improvements extend to continuous-control tasks. Since fixed-time GAE differs only marginally from standard truncated GAE when the effective horizon is large, we focus on the variant with the strongest empirical impact and compare termination-time GAE against standard truncated GAE on the MuJoCo \cite{MuJoCo} benchmark Ant-v4. 

Results are reported in Figure \ref{fig:ant_eval}. All runs use the default PPO hyperparameters from SB3-Zoo \cite{stablebaslines3}. The learning curves show that termination-time GAE remains competitive and can improve learning speed relative to truncated GAE.
We emphasize that this Ant-v4 study is only a minimal continuous-control check under default hyperparameters and short training time. A more comprehensive evaluation across MuJoCo tasks and tuning regimes is deferred to future work.

\begin{figure}[h]
    \centering
    \includegraphics[width=0.95\textwidth]{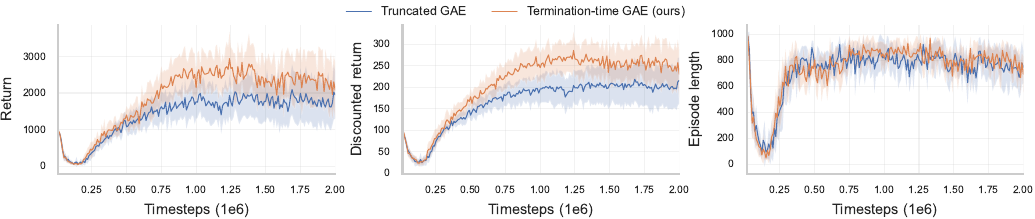}
    \caption{
    Ant-v4 evaluation learning curves comparing truncated GAE and ourtermination-time GAE under the default PPO hyperparameters from the Stable-Baselines3 Zoo. Columns report evaluation return (sum of rewards), discounted evaluation return (sum of discounted rewards), and evaluation episode length. Curves are averaged over $10$ seeds with standard errors across seeds shown as the shaded region.
  }
    \label{fig:ant_eval}
\end{figure}
\subsection{A Toy Model: Covariance Structure under iid TD-Errors}
\label{sec: toy gae}
This section introduces a deliberately simplified toy model designed to isolate the variance mechanism induced by the exponentially decaying TD-error aggregation of GAE.
We assume centered iid \ TD errors and derive closed-form expressions for the covariance structure of the resulting advantage sequence.
Within this controlled setting, we compare truncated GAE to our finite-time estimator by contrasting their covariance functions and, consequently, the variance patterns they induce across time. The assumptions are intentionally strong so that all quantities admit closed-form expressions and can be plotted.

Fix a finite horizon $T\in\mathbb N$ and parameters $\gamma,\lambda\in(0,1)$.
We model the temporal-difference errors $(\delta_t)_{t=0}^{T-1}$ as the random input driving GAE, and study the covariance structure induced on the resulting advantage estimates across time.
We use the following independence model. 
\begin{assumption}[Centered iid\ TD errors]
\label{ass:td-iid-centered}
The sequence $(\delta_t)_{t=0}^{T-1}$ is iid with
\[
\mathbb E[\delta_t]=0,
\qquad
\mathrm{Var}(\delta_t)=\sigma_\delta^2<\infty.
\]
\end{assumption}

We consider the advantage sequence produced by standard (finite-horizon truncated) GAE,
\begin{equation}
\label{eq: toy gae_std_def}
\hat \A_t
=
\sum_{j=0}^{T-t-1}(\gamma\lambda)^j\,\delta_{t+j}, \quad t\in\{0,\dots,T-1\}.
\end{equation}
and compare it to our finite-time renormalized variant (here in the fixed-horizon case $\tau=T$),
\begin{equation}
\label{eq:toy gae_fix_def}
\hat \A_t^{T}=
\sum_{j=0}^{T-t-1}(\gamma\lambda)^j\,
\frac{1-\lambda^{T-t-j}}{1-\lambda^{T-t}}\,\delta_{t+j}, , \quad t\in\{0,\dots,T-1\}.
\end{equation}
Our goal is to compare the temporal dependence induced by these two estimators.
To this end, under Assumption~\ref{ass:td-iid-centered} we derive closed-form expressions  for their covariance functions
and visualize the resulting covariance matrices and their differences via heatmaps.

The key step is an overlap decomposition. As the TD errors are independent, only shared TD-error terms contribute to the covariance. This yields closed-form formulas and a simple dominance argument for finite-time vs.\ truncated GAE.
\begin{lemma}[Covariance Function of truncated GAE]\label{lem: cov_std}
Under assumption \ref{ass:td-iid-centered} the sequence $(\hat \A_t)_{t=0,\dots,T-1}$ given by \eqref{eq: toy gae_std_def} satisfies for any $t  \in \{0, \dots, T-1\}$ and $k \in \{ 0, \dots ,T-t-1\}$:
    \begin{equation}\label{eq: pairwise covariance standard gae}
        \mathrm{Cov}[\hat \A_t, \hat \A_{t+k}] = \sigma^2_\delta (\gamma \lambda) ^ {k} \frac{1- (\gamma \lambda)^{2(T-t-k)}}{1- (\gamma \lambda)^2}.
    \end{equation}
\end{lemma}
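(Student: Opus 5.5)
The plan is to expand both estimators as linear combinations of the TD errors, using \eqref{eq: toy gae_std_def}, and exploit independence. By bilinearity of covariance,
\[
\mathrm{Cov}[\hat \A_t,\hat \A_{t+k}] = \sum_{j=0}^{T-t-1}\sum_{i=0}^{T-t-k-1}(\gamma\lambda)^{j}(\gamma\lambda)^{i}\,\mathrm{Cov}[\delta_{t+j},\delta_{t+k+i}].
\]
Under Assumption~\ref{ass:td-iid-centered} the TD errors are independent with common variance $\sigma_\delta^2$. Hence $\mathrm{Cov}[\delta_{t+j},\delta_{t+k+i}]=\sigma_\delta^2\mathds 1_{\{j=k+i\}}$, and only the overlapping indices survive.

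Next I would carry out the overlap reduction. The constraint $j=k+i$ with $0\le i\le T-t-k-1$ automatically gives $k\le j\le T-t-1$, so it is compatible with the range of $j$. The double sum therefore collapses to the single sum
\[
\sigma_\delta^2\sum_{i=0}^{T-t-k-1}(\gamma\lambda)^{k+i}(\gamma\lambda)^{i} = \sigma_\delta^2(\gamma\lambda)^k\sum_{i=0}^{T-t-k-1}\big((\gamma\lambda)^2\big)^i .
\]

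Finally, I would evaluate the finite geometric series with ratio $(\gamma\lambda)^2<1$ and $T-t-k$ terms. This gives $\frac{1-(\gamma\lambda)^{2(T-t-k)}}{1-(\gamma\lambda)^2}$, which is exactly the claimed expression.

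I do not expect a genuine obstacle. The only point requiring care is the index bookkeeping in the overlap step: one has to check that the shared TD errors are exactly $\delta_{t+k},\dots,\delta_{T-1}$, that is, $T-t-k$ of them. For $k=T-t-1$ this leaves a single term, and the formula correctly reduces to $\sigma_\delta^2(\gamma\lambda)^{T-t-1}$.
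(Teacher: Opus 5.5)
Your proposal is correct and follows the paper's proof essentially step for step: expand both estimators in the TD errors, use independence so that only the overlapping indices $j=k+i$ survive, and sum the resulting geometric series in $(\gamma\lambda)^2$. The only cosmetic difference is that you apply bilinearity of covariance directly, whereas the paper first uses centering to rewrite the covariance as $\mathbb E$ of the product; your version does not need the centering assumption.
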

\begin{proof}
Since the the TD-erros are assumed to be centered by assumption \ref{ass:td-iid-centered}, we have
\begin{align*}
  \mathrm{Cov}[\hat \A_t, \hat \A_{t+k}]
  &=
  \mathrm{Cov}\left [\sum_{l=0}^{T-t-1}(\gamma\lambda)^l\,\delta_{t+l}, \sum_{m=0}^{T-t-k-1}(\gamma\lambda)^m\,\delta_{t+k+m}
    \right]
\\&= 
\mathbb E \left[\left(\sum_{l=0}^{T-t-1} (\gamma \lambda)^l\,\delta_{t+l} \right) \left(\sum_{m=0}^{T-t-k-1} (\gamma \lambda)^m\,\delta_{t+k+m}\right)\right].    
\end{align*}
Moreover, as $(\delta_t)_{t \in \{0, \dots, T-1\}}$ are independent, $\mathbb E \left[\delta_{t+l} \,\delta_{t+k+m} \right] \neq 0$, only if $l = k+m$. Thus, 
\begin{align*}
\mathrm{Cov}[\hat \A_t, \hat \A_{t+k}]
&= 
 \sum_{l=0}^{T-t-1} \sum_{m=0}^{T-t-k-1} (\gamma \lambda)^{l+m}\;  \mathbb E \left[\delta_{t+l} \,\delta_{t+k+m} \right]
\\
&=
 \sum_{m=0}^{T-t-k-1} (\gamma \lambda)^{k+m+m}\;  \sigma^2_\delta 
\\
&=
\sigma^2_\delta (\gamma \lambda)^k \sum_{m=0}^{T-t-k-1} (\gamma \lambda)^{2m}\;  
\\
&= \sigma^2_\delta (\gamma \lambda) ^ {k} \frac{1- (\gamma \lambda)^{2(T-t-k)}}{1-(\gamma \lambda)^2}. \qedhere
\end{align*}
\end{proof}

Next, we compute the formula for the pairwise covariances estimators given by the finite-time GAE.
\begin{lemma}[Covariance Function of Finite-Time GAE]
\label{lem:cov_fixedtime}
Under assumption \ref{ass:td-iid-centered} the sequence $(\hat \A_t^T)_{t=0,\dots,T-1}$ given by \eqref{eq:toy gae_fix_def} satisfies for any $t  \in \{0, \dots, T-1\}$ and $k \in \{ 0, \dots ,T-t-1\}$:
\begin{align}
\label{eq:pairwise_cov_fixedtime_gae}
 \mathrm{Cov}[\hat \A_t^T, \hat \A_{t+k}^T]
&=
\frac{\sigma_\delta^2 (\gamma \lambda)^{k}}{(1-\lambda^{T-t})(1-\lambda^{T-t-k})}
\Bigg[
\frac{1-(\gamma\lambda)^{2(T-t-k)}}{1-(\gamma\lambda)^2}
- 2\lambda^{T-t-k}\frac{1-(\gamma^2\lambda)^{T-t-k}}{1-\gamma^2\lambda}
\\
&\hspace{6.4em}
+ \lambda^{2(T-t-k)}\frac{1-\gamma^{2(T-t-k)}}{1-\gamma^2}
\Bigg].
\nonumber
\end{align}
\end{lemma}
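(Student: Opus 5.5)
We follow the same overlap decomposition as in the proof of Lemma~\ref{lem: cov_std}, now carrying the extra renormalization weights. Write $n:=T-t$ and $n':=T-t-k$, so that by \eqref{eq:toy gae_fix_def}
\[
\hat \A_t^T=\frac{1}{1-\lambda^{n}}\sum_{l=0}^{n-1}(\gamma\lambda)^l(1-\lambda^{n-l})\delta_{t+l},
\qquad
\hat \A_{t+k}^T=\frac{1}{1-\lambda^{n'}}\sum_{m=0}^{n'-1}(\gamma\lambda)^m(1-\lambda^{n'-m})\delta_{t+k+m}.
\]
Since the TD errors are centered and iid (Assumption~\ref{ass:td-iid-centered}), the covariance equals the expectation of the product. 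In the double sum only the terms with $l=k+m$ survive, and each contributes $\sigma_\delta^2$. The constraint $l=k+m\le n-1$ is automatic for $m\le n'-1$. We therefore obtain
\[
\mathrm{Cov}[\hat \A_t^T,\hat \A_{t+k}^T]=\frac{\sigma_\delta^2(\gamma\lambda)^k}{(1-\lambda^{n})(1-\lambda^{n'})}\sum_{m=0}^{n'-1}(\gamma\lambda)^{2m}\,(1-\lambda^{n-k-m})(1-\lambda^{n'-m}).
\]

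The key observation is that $n-k=n'$, so both weights coincide and equal $(1-\lambda^{n'-m})$. The summand becomes $(\gamma\lambda)^{2m}(1-\lambda^{n'-m})^2$, which we expand as
\[
(\gamma\lambda)^{2m}-2\lambda^{n'}\gamma^{2m}\lambda^{m}+\lambda^{2n'}\gamma^{2m},
\]
using $(\gamma\lambda)^{2m}\lambda^{n'-m}=\lambda^{n'}(\gamma^2\lambda)^m$ and $(\gamma\lambda)^{2m}\lambda^{2(n'-m)}=\lambda^{2n'}\gamma^{2m}$. Each of the three pieces is a finite geometric series over $m=0,\dots,n'-1$, with ratios $(\gamma\lambda)^2$, $\gamma^2\lambda$ and $\gamma^2$ respectively. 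Summing them gives exactly the three bracketed terms of \eqref{eq:pairwise_cov_fixedtime_gae}. All three ratios are strictly below $1$ because $\gamma,\lambda\in(0,1)$, so no degenerate case arises.

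The only real pitfall is the index bookkeeping: one must correctly identify that the weight of $\delta_{t+k+m}$ in $\hat\A_t^T$ is $(1-\lambda^{n-k-m})=(1-\lambda^{n'-m})$, i.e.\ the same as its weight in $\hat\A_{t+k}^T$. Once that is noted, the rest is routine geometric-series algebra.
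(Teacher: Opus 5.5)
Your proposal is correct and follows essentially the same route as the paper: the overlap decomposition $\sigma_\delta^2\sum_{m} w^{t}_{k+m}w^{t+k}_{m}$, the observation that both renormalization factors reduce to $(1-\lambda^{T-t-k-m})$, and the expansion of the square into three finite geometric series with ratios $(\gamma\lambda)^2$, $\gamma^2\lambda$ and $\gamma^2$. The index check $k+m\le T-t-1$ and the remark that all ratios are below one are small details the paper leaves implicit.
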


\begin{proof}
By \eqref{eq:toy gae_fix_def}, we have the TD-error representation
\[
\hat \A_t^{T}
=
\sum_{\ell=0}^{T-t-1} w_\ell^{\,t}\,\delta_{t+\ell},
\qquad
w_\ell^{\,t}:=(\gamma\lambda)^\ell\frac{1-\lambda^{T-t-\ell}}{1-\lambda^{T-t}}.
\]
As the td-erros are centered by Assumption \ref{ass:td-iid-centered}, we obtain
\begin{align*}
\Cov[\hat \A_t^{T},\hat \A_{t+k}^{T}]
&=
\Cov\!\left[\sum_{\ell=0}^{T-t-1} w_\ell^{\,t}\delta_{t+\ell},\;
           \sum_{m=0}^{T-t-k-1} w_m^{\,t+k}\delta_{t+k+m}\right] \\
&= \E\!\left[\left(\sum_{\ell=0}^{T-t-1} w_\ell^{\,t}\delta_{t+\ell} \right)\left(
           \sum_{m=0}^{T-t-k-1} w_m^{\,t+k}\delta_{t+k+m}\right)\right]
\end{align*}

and again $\E[\delta_{t+\ell}\,\delta_{t+k+m}]=0$ unless $\ell=k+m$, in which case it equals $\sigma_\delta^2$.
Therefore,
\begin{align*}
\Cov[\hat \A_t^{T},\hat \A_{t+k}^{T}]
&=
\sum_{\ell=0}^{T-t-1}\sum_{m=0}^{T-t-k-1} w_\ell^{\,t}\,w_m^{\,t+k}\,\E[\delta_{t+\ell}\,\delta_{t+k+m}]
\\
&=
\sigma_\delta^2\sum_{m=0}^{T-t-k-1} w_{k+m}^{\,t}\,w_m^{\,t+k}.
\end{align*}
Plugging in the explicit weights yields
\begin{align*}
\Cov[\hat \A_t^{T},\hat \A_{t+k}^{T}]
&=  \sigma_\delta^2\sum_{m=0}^{T-t-k-1}  
    (\gamma \lambda)^{k+m} \,  \frac{1-\lambda^{T-t-k-m}} {1-\lambda ^{T-t}} \,
    (\gamma \lambda)^{m} \,  \frac{1-\lambda^{T-t-k-m}} {1-\lambda ^{T-t-k}}  \\
 &= \frac{\sigma_\delta^2 (\gamma \lambda)^k }{(1-\lambda ^{T-t})(1-\lambda ^{T-t-k})} \,
    \sum_{m=0}^{T-t-k-1}  (\gamma \lambda)^{2m} \,  (1-\lambda^{T-t-k-m})^2 
\\
&= \frac{\sigma_\delta^2 (\gamma \lambda)^k }{(1-\lambda ^{T-t})(1-\lambda ^{T-t-k})} \, \sum_{m=0}^{T - t - k-1} (\gamma \lambda)^{2m} \Bigl(1 - 2\lambda^{T - t - k-m} + \lambda^{2(T - t - k-m)}\Bigr) \\
&= \frac{\sigma_\delta^2 (\gamma \lambda)^k }{(1-\lambda ^{T-t})(1-\lambda ^{T-t-k})} \Bigg [ 
    \underbrace{\sum_{m=0}^{T - t - k-1} (\gamma \lambda)^{2m}}_{=:S_1}
    \;-\;
    2 \underbrace{\sum_{m=0}^{T - t - k-1} (\gamma \lambda)^{2m} \lambda^{T - t - k-m}}_{=:S_2}
     \\&
      \qquad + \underbrace{\sum_{m=0}^{T - t - k-1} (\gamma \lambda)^{2m} \lambda^{2(T - t - k-m)}}_{=:S_3}
    \Bigg],
\end{align*}
with
\begin{align*}
S_1&=\sum_{m=0}^{T-t-k-1}(\gamma\lambda)^{2m}
=\frac{1-(\gamma\lambda)^{2(T-t-k)}}{1-(\gamma\lambda)^2},
\\
S_2&=\sum_{m=0}^{T-t-k-1}(\gamma\lambda)^{2m}\lambda^{T-t-k-m}
=\lambda^{T-t-k}\sum_{m=0}^{T-t-k-1}\gamma^{2m}\lambda^{m}
=\lambda^{T-t-k}\frac{1-(\gamma^2\lambda)^{T-t-k}}{1-\gamma^2\lambda},
\\
S_3&=\sum_{m=0}^{T-t-k-1}(\gamma\lambda)^{2m}\lambda^{2(T-t-k-m)}
=\lambda^{2(T-t-k)}\sum_{m=0}^{T-t-k-1}\gamma^{2m}
=\lambda^{2(T-t-k)}\frac{1-\gamma^{2(T-t-k)}}{1-\gamma^2}.
\end{align*}
Inserting $S_1,S_2,S_3$ into the covariance expression yields exactly \eqref{eq:pairwise_cov_fixedtime_gae}.
\end{proof}


\begin{lemma}[Truncated GAE Covariances are bigger]
\label{lem:cov_dom_fix_vs_std}
Let $(\hat \A_t)_{t=0}^{T-1}$ and $(\hat \A_t^{T})_{t=0}^{T-1}$ be the sequences given by \eqref{eq: toy gae_std_def} and \eqref{eq:toy gae_fix_def}.
Under Assumption \ref{ass:td-iid-centered}, for any $t  \in \{0, \dots, T-1\}$ and $k \in \{ 0, \dots ,T-t-1\}$:
\[
0\le \Cov[\hat \A_t^{T},\hat \A_{t +k}^{T}]
\;\le\;
\Cov[\hat \A_t,\hat \A_{t +k}].
\]
\end{lemma}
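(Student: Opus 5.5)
The plan is to avoid the closed-form expressions of Lemmas \ref{lem: cov_std} and \ref{lem:cov_fixedtime} and return to the weighted-sum representations used in their proofs. Write $M:=T-t-k$. The proof of Lemma \ref{lem:cov_fixedtime} gives
\[
\Cov[\hat \A_t^{T},\hat \A_{t+k}^{T}]=\sigma_\delta^2\sum_{m=0}^{M-1} w_{k+m}^{\,t}\,w_m^{\,t+k},
\]
with $w_\ell^{\,t}=(\gamma\lambda)^\ell\frac{1-\lambda^{T-t-\ell}}{1-\lambda^{T-t}}$. The proof of Lemma \ref{lem: cov_std} gives the same overlap sum for truncated GAE,
\[
\Cov[\hat \A_t,\hat \A_{t+k}]=\sigma_\delta^2\sum_{m=0}^{M-1}(\gamma\lambda)^{k+m}(\gamma\lambda)^m .
\]
The claim then reduces to comparing the two sums term by term, since both run over the same index set $m=0,\dots,M-1$.

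For the lower bound, I will note that $\lambda\in(0,1)$ and $T-t-\ell\ge1$ for every $\ell\le T-t-1$. Hence $0<1-\lambda^{T-t-\ell}$, and likewise $0<1-\lambda^{T-t}$, so every weight $w_\ell^{\,t}$ is nonnegative. A sum of products of nonnegative weights times $\sigma_\delta^2\ge0$ is nonnegative.

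For the upper bound, it is enough to show that each normalizing factor lies in $[0,1]$, that is
\[
\frac{1-\lambda^{T-t-\ell}}{1-\lambda^{T-t}}\le 1
\quad\text{for } 0\le\ell\le T-t-1 .
\]
This holds because $\lambda^{T-t-\ell}\ge\lambda^{T-t}$ when $\ell\ge0$ and $\lambda\in(0,1)$. So $0\le w_\ell^{\,t}\le(\gamma\lambda)^\ell$ for all $t,\ell$. It follows that each product satisfies $w_{k+m}^{\,t}w_m^{\,t+k}\le(\gamma\lambda)^{k+m}(\gamma\lambda)^m$, and summing over $m$ gives the upper bound.

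The only step that needs care is the bookkeeping of index ranges. The second weight $w_m^{\,t+k}$ has horizon $T-t-k$, so I must check that $m\le T-t-k-1$ keeps its exponent $T-t-k-m\ge1$. With that exponent in range, the factor bound above applies uniformly and no real obstacle remains.
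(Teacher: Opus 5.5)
Your proposal is correct and follows essentially the same route as the paper: you write both covariances as overlap sums over $m=0,\dots,T-t-k-1$, obtain nonnegativity from the nonnegative weights, and bound each renormalization factor $\frac{1-\lambda^{T-t-\ell}}{1-\lambda^{T-t}}$ by $1$ so that each term is compared with $(\gamma\lambda)^{k+2m}$. Your version also justifies the factor bound and checks the index ranges, which the paper only asserts.
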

Insights into the structural origin of the variance behavior are provided by the covariance heatmaps in \autoref{fig:gae-heatmaps}, which visualize the covariance matrices induced by truncated finite-horizon GAE and our finite-time (renormalized) GAE variant under our toy assumptions.
Across all $(T,\lambda)$ configurations, finite-time exhibits uniformly smaller variances and covariances, i.e., $\mathrm{Cov}[\hat \A_s^{T},\hat \A_t^{T}] \le \mathrm{Cov}[\hat \A_s,\hat \A_t]$ entrywise.
This agrees with the domination result of Lemma \ref{lem:cov_dom_fix_vs_std} implied by expressing each advantage estimate as a weighted sum of iid TD-errors: fixed-time introduces an additional horizon-dependent attenuation of late TD-errors by multiplicative factors bounded by $1$, which can only reduce second moments. 
Varying $\lambda$ reveals how temporal correlations emerge from the the exponentially decaying TD-error aggregation. 
As $\lambda$ increases, the effective weights $(\gamma\lambda)^k$ decay more slowly with the temporal offset $k=|t-s|$, so advantage estimates at different times share a larger fraction of common TD-error terms.
In the heatmaps, this appears as a widening covariance band around the diagonal: for large $\lambda$, substantial covariance persists across larger time separations, whereas for smaller $\lambda$ the covariance is concentrated near the diagonal.

\begin{proof}[Proof of Lemma \ref{lem:cov_dom_fix_vs_std}]
Fix $0\le  t\le T-1$ and $0 \leq k \leq  T-t-1$.
From the TD-error representation of the finite-time estimator (see the proof of Lemma~\ref{lem:cov_fixedtime}), we have 
\[
\Cov[\hat \A_t^{T},\hat \A_{t+k}^{T}]
= \sigma_\delta^2\sum_{m=0}^{T-t-k-1}  
    (\gamma \lambda)^{k+m} \,  \frac{1-\lambda^{T-t-k-m}} {1-\lambda ^{T-t}} \,
    (\gamma \lambda)^{m} \,  \frac{1-\lambda^{T-t-k-m}} {1-\lambda ^{T-t-k}} .
\]
All summands are nonnegative, hence $\Cov[\hat \A_s^{T},\hat \A_t^{T}]\ge 0$.
Moreover, the weights are bounded,
\[
(\gamma \lambda)^{k+m} \,  \frac{1-\lambda^{T-t-k-m}} {1-\lambda ^{T-t}} \,
    (\gamma \lambda)^{m} \,  \frac{1-\lambda^{T-t-k-m}} {1-\lambda ^{T-t-k}}
\le (\gamma\lambda)^{k+m}(\gamma\lambda)^m
=(\gamma\lambda)^{k+2m}
\]
and therefore,
\[
\Cov[\hat \A_t^{T},\hat \A_{t+k}^{T}]
\le \sigma_\delta^2\sum_{m=0}^{T-t-k-1}(\gamma\lambda)^{k+2m}
= \Cov[\hat \A_t,\hat \A_{t+k}],
\]
where the last equality follows from the explicit overlap formula for truncated GAE 
(cf.\ proof of Lemma~\ref{lem: cov_std}).
\end{proof}

The discrepancy between truncated and fixed-time covariances is strongly localized near the end of the rollout (upper-right region of the matrices).
This localization follows directly from the fixed-time reweighting, which replaces standard geometric weighting by a renormalized scheme that downweights TD-errors close to the horizon by factors of the form $(1-\lambda^{L-j})/(1-\lambda^{L})$, where $L=T-t$ is the remaining horizon.
When $t$ is far from the terminal boundary (large $L$), these factors are close to $1$ over most of the relevant TD-errors, so the covariance structure matches truncated GAE in the bulk of the matrix.
When $t$ is near the boundary (small $L$), late TD-errors are substantially suppressed, yielding a pronounced covariance reduction that is visually strongest in the upper-right corner.
As $T$ increases, the fraction of indices that are close to the horizon shrinks, so the region where fixed-time materially differs from truncated becomes relatively smaller, even though entrywise domination continues to hold.
\newpage
\begin{figure}[h]
    \centering
    \includegraphics[]{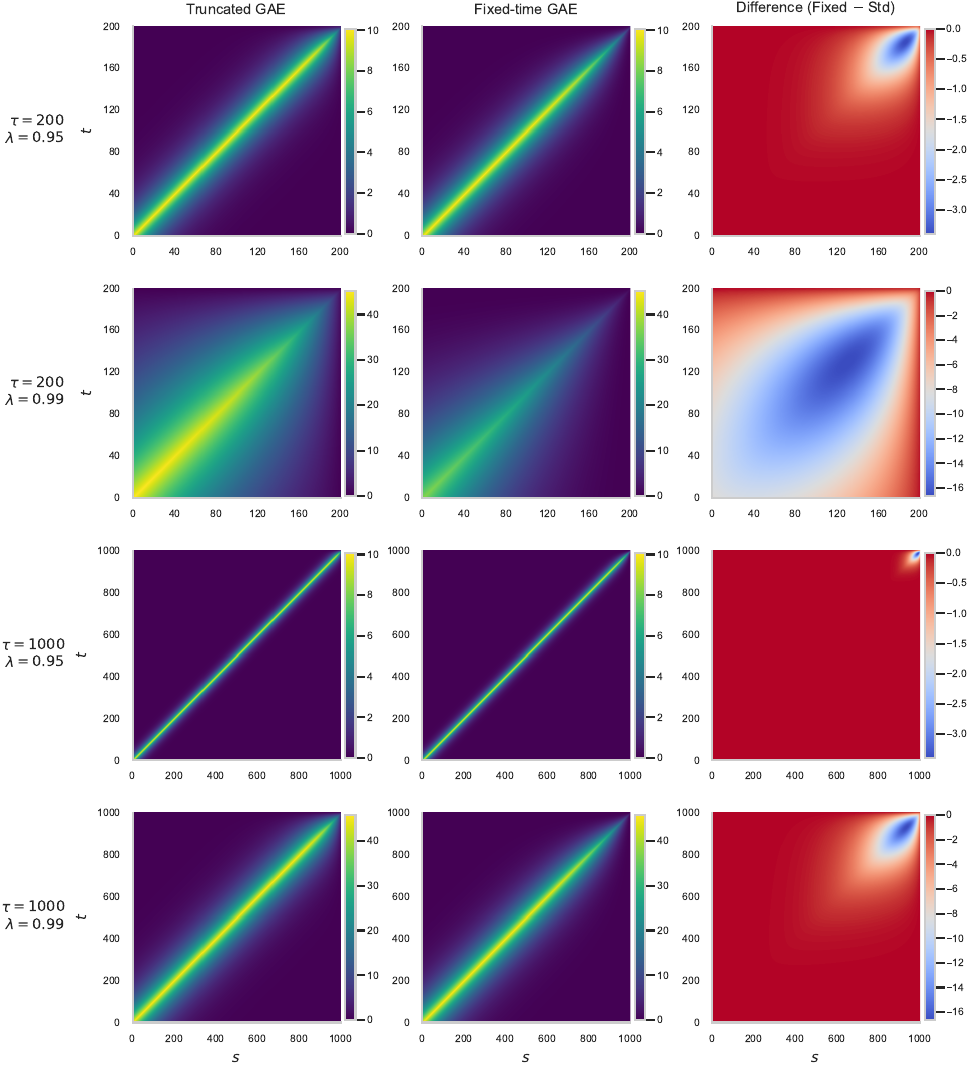}
    \caption{
        Covariance structure for trucnated vs.\ fixed-time GAE advantages for fixed $\gamma = 0.999$ and $\sigma_\delta^2 = 1.0.$.  
        For each $(T,\lambda)$ configuration (rows), we plot $\mathrm{Cov}[\hat \A_s, \hat \A_t]$ for truncated GAE (left) and finite-time GAE (middle), and the entrywise difference (right). 
    As predicted by the weight domination property, fixed-time covariances are uniformly smaller, with the largest reductions occurring near the end of the horizon.
    }
    \label{fig:gae-heatmaps}
\end{figure}

\end{document}